\pgfplotsset{compat=1.15}
\newtheorem{theorem}{Theorem}[section]
\newtheorem{lemma}[theorem]{Lemma}
\newtheorem*{remark*}{Remark}
\newtheorem{proposition}[theorem]{Proposition}
\newcommand{\bigA}{\mbox{\normalfont\Large\bfseries A}}
\newcommand{\bigB}{\mbox{\normalfont\Large\bfseries B}}
\newcommand{\bigC}{\mbox{\normalfont\Large\bfseries C}}
\newcommand{\bigD}{\mbox{\normalfont\Large\bfseries D}}
\newcommand{\nindep}{\not\!\perp\!\!\!\perp}
\newcommand{\alignnewline}{\textcolor{white}{newline}}
\newcommand{\indep}{\raisebox{0.05em}{\rotatebox[origin=c]{90}{$\models$}}}
\begin{document}
\frontmatter

\pagestyle{empty}
\title{{Nonlinearity, Feedback and Uniform Consistency in Causal Structural Learning}}

\author{Shuyan Wang}
\date{August 2023}
\Year{2023}
\trnumber{}

\committee{
Peter Spirtes (Chair)\\ 
Joseph Ramsey\\ 
Kun Zhang \\ 
Greg Cooper\\ 
}

\support{}
\disclaimer{}

\keywords{Causal Discovery, TREK Rule}

\maketitle

\begin{dedication}
To My Mom, Hui Liu.
\end{dedication}

\pagestyle{plain} 


\begin{abstract}
The goal of Causal Discovery is to find automated search methods for learning causal structures from observational data.  In some cases all variables of the interested causal mechanism are measured, and the task is to predict the effects one measured variable has on another.  In contrast, sometimes the variables of primary interest are not directly observable but instead inferred from their manifestations in the data. These are referred to as latent variables. One commonly known example is the psychological construct of intelligence, which cannot directly measured so researchers try to assess through various indicators such as IQ tests. In this case, casual discovery algorithms can uncover underlying patterns and structures to reveal the causal connections between the latent variables and between the latent and observed variables. \\ 
\textcolor{white}{abc}This thesis focuses on two questions in causal discovery: providing an alternative definition of \textbf{$k$-Triangle Faithfulness} that (i) is weaker than strong faithfulness when applied to the Gaussian family of distributions,  (ii) can be applied to non-Gaussian families of distributions, and (iii) under the assumption that the modified version of \textbf{Strong Faithfulness} holds, can be used to show the uniform consistency of a modified causal discovery algorithm; relaxing the \textit{sufficiency} assumption to learn causal structures with latent variables.  Given the importance of inferring cause-and-effect relationships for understanding and forecasting complex systems, the work in this thesis of relaxing various simplification assumptions is expected to extend the causal discovery method to be applicable in a wider range with diversified causal mechanism and statistical phenomena.\\ 
\textcolor{white}{abc}Chapter 2 will present the Generalized version of $k$-Triangle Faithfulness, which can be applied to any smooth distribution; I will then return to the linear Gaussian model, provide an investigation of the probability of the violation of $k$-Triangle-Faithfulness and compared the $k$-Triangle-Faithfulness with the Strong Faithfulness assumption\cite{JMLR:v8:kalisch07a} to quantify how much weaker the $k$-Triangle-Faithfulness assumption is, both by a mathematical analysis and a simulation study. 
Chapter 3 will show various ways of applying two constraints, GIN and rank constraints, to learn causal structure with latent variables and with some cycles and some nonlinearity. Chapter 4 will focus on the tensor constraint, a generalization of rank constraint under linear but non-Gaussian distribution\cite{robeva2020multitrek} and show that there is a hitherto unnoticed limitation of when tensor constraints are implied by causal structure but
the implication of tensor constraints by causal structure is preserved with linearity only between the observed variables and their latent common causes.
\end{abstract}
\newpage
\begin{acknowledgments}
First and foremost, I am immensely grateful to my esteemed advisor, Peter Spirtes, for his exceptional guidance, mentorship, and unwavering support throughout the entire research process. It is incredible that he managed to bear with me for such a long time without kicking me out, given I am notorious for being outrageous. I extend my heartfelt appreciation to the members of my thesis committee, Kun Zhang, Joseph Ramsey, and Greg Cooper, for their thoughtful critiques and invaluable suggestions that have significantly strengthened the quality of this work.\\ 
\textcolor{white}{abc}Special thanks to my two coffee drinking buddies: Joseph Ramsey and Fernando Larrain Langlois. Joe has provided me with so many espressos and always tolerated my dramatic and judgy comments on many of them. Fernando always managed to pull the chitchats I started while making coffee back to down-to-earth academic discussions. They provided encouragement, camaraderie, and a stimulating academic environment.\\
\textcolor{white}{abc}Additionally, I wish to acknowledge the dedication of the administrative staff at the Philosophy Department for their assistance in various aspects of my grad student life: Mary Grace Joseph, Patrick Doyle, Lisa Everett, Rosemarie Commisso, and Jacqueline DeFazio. Mary Grace helped me graduate on time through various roadblocks, was always available for support and on top of things. Patrick managed to match me with my favorite class to grade every semester. Lisa responded to me about my reimbursement request so promptly, she makes being a penniless grad student much more enjoyable.\\
\textcolor{white}{abc}I would also like to thank my best friend from the department, Biwei Huang. She not only provided sincere and heart-warming friendship but also showed me what a good researcher should be. I hope I can sustain even a fraction of her drive and dedication in my own career.\\
\textcolor{white}{abc}Furthermore, I am deeply grateful to my Master thesis advisor, Clark Glymour, whose invitation to read a statistics book opened the whole Causal Discovery area to me. He has always been a cool, wise, fun friend, and mentor. There is not a single time when I reflected on my Ph.D. journey without feeling extremely lucky to have been advised by him. His belief in me and his support throughout this journey have made a significant impact on my life.\\ 
\textcolor{white}{abc}None of this would have been possible without the firmest love of my boyfriend, Peter Oostema. Your continuous support, cheer, and belief in my abilities have been the cornerstone of my academic achievements. It might happen that one day you are not the unique Peter Oostema in Google search anymore, but you will always be my forever and unique love.\\ 
\textcolor{white}{abc}Lastly, I would like to dedicate this thesis to my parents, Hui Liu, and Anbiao Wang, without whom I wouldn’t have existed. They might not fully grasp my research, but their full understanding of me made my Ph.D. journey go smoother.\\ \textcolor{white}{abc}
\end{acknowledgments}
\tableofcontents
\listoffigures

\mainmatter
\chapter{Introduction}
Efficient and accurate data analysis techniques are more and more important as the volume of data continues to expand exponentially. This technique has proven to be instrumental in numerous fields, such as bioinformatics, finance, climate studies, and social sciences.\\ \textcolor{white}{abc} 
In some cases all variables of the interested causal mechanism are measured, and the task is to predict the effects one measured variable has on another.  In contrast, sometimes the variables of primary interest are not directly observable but instead inferred from their manifestations in the data. These are referred to as latent variables. One commonly known example is the psychological construct of intelligence, which cannot directly measured so researchers try to assess through various indicators such as IQ tests. In this case, casual discovery algorithms can uncover underlying patterns and structures to reveal the causal connections between the latent variables and between the latent and observed variables. \\ \textcolor{white}{abc}  
This thesis focuses on two questions in causal discovery: providing an alternative definition of \textbf{$k$-Triangle Faithfulness} that (i) is weaker than strong faithfulness when applied to the Gaussian family of distributions,  (ii) can be applied to non-Gaussian families of distributions, and (iii) under the assumption that the modified version of \textbf{Strong Faithfulness} holds, can be used to show the uniform consistency of a modified causal discovery algorithm; relaxing the \textit{sufficiency} assumption to learn causal structures with latent variables.  Given the importance of inferring cause-and-effect relationships for understanding and forecasting complex systems, the work in this thesis of relaxing various simplificity assumptions is expected to extend the causal discovery method to be applicable in a wider range with diversified causal mechanism and statistical phenomena.\\ \textcolor{white}{abc}
In this chapter I will briefly review the definitions and assumptions in causal discovery.  Most of the assumptions and terms will occur frequently throughout the thesis.  I will then outline the structure of the thesis.  

\section{Background for Causal Discovery}
\subsection{DAG and Causal Graph}
Directed graphs are used to represent causal relations between variables.  A directed graph $G=\langle \mathbf{V,E}\rangle$ consists of a set of \textit{nodes} \textbf{V} and a set of \textit{edges} $\mathbf{E\subset V\times  V}$. If there is an edge $\langle A,B\rangle \in \mathbf{E}$ , we write $A\rightarrow B$. $A$ is a \textbf{parent} of $B$, and $B$ is a \textbf{child} of $A$, the edge is \textbf{out} of $A$ and \textbf{into} $B$, and $A$ is the \textbf{source} and $B$ is the \textbf{target}.  A \textbf{directed path} from $X$ to $Y$ is a 
sequence of edges where the source of the first edge is $X$, the target of the last edge is $Y$, and if there are $n$ edges in the sequence, for $1\leq i <n$, the target of the $i$th edge is the source of the $i+1$st edge. If there is a directed path from $X$ to $Y$, then $X$ is an \textbf{ancestor}  of $Y$, and $Y$ is a \textbf{descendant} of $X$.  A directed path is \textit{acyclic} if no vertex occurs more than once on the path.  A directed \textit{acyclic} graph (DAG) is a directed graph if all directed paths are acyclic.  Notice that $X$ cannot be its own ancestor nor descendant in a DAG. \\ \textcolor{white}{abc}  
If a variable $Y$ is in a structure $X\rightarrow Y\leftarrow Z$, and there is no edge between $X$ and $Z$, we call $\langle X,Y,Z\rangle$ an \textit{unshielded collider}; if there is also an edge between $X$ and $Z$, then $\langle X,Y,Z\rangle$ is a \textit{triangle} and we call $\langle X,Y,Z\rangle$ a \textit{shielded collider}.  If $\langle X,Y,Z\rangle$ is a triangle but $Y$ is not a child of both $X$ and $Z$, we call $\langle X,Y,Z\rangle$ a \textit{shielded non-collider}; if there is no edge between $X$ and $Z$, then $\langle X,Y,Z\rangle$
is an \textit{unshielded non-collider.}\\ \textcolor{white}{abc}
 A \textit{Bayesian network} is an ordered pair $\langle P,G\rangle$ where $P$ is a probability distribution over a set of variables $\textbf{V}$ in $G$. A distribution $P$ over a set of variables $\mathbf{V}$ satisfies the \textbf{(local) Markov condition} for $G$ if and only if each variable in $\mathbf{V}$ is independent of the set of its non-parents and non-descendants, conditional on its parents. A distribution $P$ is $faithful$ to a $DAG$ $G$ if every conditional independence true in $P$ is entailed by the local Markov condition for $G$.   \\ \textcolor{white}{abc}
Given $M=\langle P,G\rangle$, $P_M$ denotes $P$ and $G_M$ denotes $G$.  Two acyclic directed graphs (DAG) $G_1$ and $G_2$ are \textit{Markov equivalent} if conditional independence relations entailed by the local Markov condition in $G_1$ are the same as in $G_2$.  It has been proven that two $DAG$s are \textit{Markov equivalent} if and only if they have the same variables, same adjacencies and same \textit{unshielded colliders} \cite{Verma}. A \textit{CPDAG} $O$ is an undirected graph that represents a set $M$ of Markov equivalent DAGs: an edge $X\rightarrow Y$ is in $O$ if it is in every DAG in $M$; if $X\rightarrow Y$ is in some DAG and $Y\rightarrow X$ in some other DAG in $M$, then $X-Y$ in $O$ \cite{Spirtes2016CausalDA}. \\ \textcolor{white}{abc}
Interpreting a directed graph in a causal sense, $X\rightarrow Y$ in a directed graph $G$ means $X$ is a direct cause of $Y$ (relative to the variables in the graph).  Intuitively, to accurately predict the effects of interventions on a variable, it is essential to know facts about the causal structure.  In reality what the researchers have assess to is the data generated by a causal graph, and in order to study the state of any interested variable, it is necessary to learn the causal graph structures from the data.

\subsection{Causal Markov, Faithfulness and Causal Sufficiency Assumption}

Given a \textit{Bayesian network} $\langle P,G\rangle$ that satisfies \textbf{Markov Condition}, we say that $P$ is \textit{faithful} to $G$ if any conditional independence relation that holds in $P$ is entailed by $G$ by the \textbf{Markov Condition}.  Formally the Causal Sufficiency, Causal Markov, and Faithfulness assumption can be defined as  
\cite{CausationPredictionSearch}:\\ 
\textbf{Causal Sufficiency Assumption:} A set of variables $\mathbf{V}$ is causally sufficient if $\textbf{V}$ contains all the variables $X$ s.t. there exist at least two variables $Y_i$ and $Y_j$ in $\mathbf{V}$ s.t. $X\rightarrow Y_i$ and $X\rightarrow Y_j$ are in the true causal graph $G_M$.\\ 
\textbf{Causal Markov Assumption:} If a set of variables in a causal model $M = \langle P_M, G_M\rangle$ of a population is causally sufficient, each variable in $\mathbf{V}$ is independent of the set of its non-parents and non-descendants, conditional on its parents in $G_M$ \cite{Spirtes2016CausalDA}.\\ 
\textbf{Causal Faithfulness Assumption: } all conditional independence relations that hold in the population are consequences of the Markov condition from the underlying true causal DAG.\\ \textcolor{white}{abc}

\subsection{SEM and Linear Gaussian Models}
\textbf{\textit{Fixed Parameter Structral Equation Model (SEM) $S$}}\label{def:SEM}: The causal relation and effect can be quantified by structural equation model \cite{doi:10.1177/0049124198027002004SEM}. In a SEM we divide random variables into two sets, the variables of interest and error/noise terms.  A fixed parameter SEM $S$ has two parts $\langle \phi, \theta\rangle$, where $\phi$ is a set of equations in which each random variable of interest $V$ is written as a function of other substantive random variables and a unique error variable, together with $\theta$, the joint distributions over the error variables\cite{LAchoke}.\\ \textcolor{white}{abc}
 A common parameterization of a causal graph $G$ is linear structural equation model.  Specifically, given a model $M = \langle P_M, G_M\rangle$, where $M$ is a Bayesian network with $P_M$ over $\mathbf{V}$ for $G=\langle \mathbf{V,E}\rangle$, the $M$ is a linear model if for any $X_i\in\mathbf{V}$ can be written as:
\begin{center}
    $X_i =  \underset{X_j\in Pa_M(X_i)}{\sum} a_{j,i}X_j+\epsilon_i$  
\end{center}
where $Pa_M(X)$ denotes the set of parents of $X$ in $G_M$ and $\epsilon_i$ is independent of each $X_j$. Based on the equation above, we define in the linear case the \textit{edge strength} $e_M(X_j\rightarrow X_i)$ as the corresponding coefficient $a_{j,i}$.  The error/noise terms are jointly independent.  A linear Gaussian Model is a linear model when the vector of noises $\epsilon$ follows multivariate Gaussian distribution.   Having $\mathbf{A}$ as the coefficient matrix such that $a_{ji} = e_M(X_j\rightarrow X_i)$ and $\mathbf{\epsilon} = \left(\epsilon_1,...,\epsilon_{\mathbf{|V|}}\right)\sim\mathcal{N}\left(0, \mathbf{\Sigma_\epsilon}\right)$, we rewrite the linear equation as:
\begin{center}
    $\mathbf{(I-A)}^T\mathbf{X}=\epsilon$
\end{center}
and have the $\mathbf{X}$ follows the distribution:
\begin{center}
    $\mathbf{X}\sim\mathcal{N}\left(0,\left[\mathbf{(I-A)\mathbf{\Sigma_\epsilon}^{-1}(I-A)}^T\right]^{-1}\right)$
\end{center}
 We assume the model $M = \langle P_M, G_M\rangle$, where $P_M$ over $\mathbf{V}$ for $G=\langle \mathbf{V,E}\rangle$ respects the \textbf{Causal Markov Assumption}.
\subsection{Trek, T-Separation and D-Separation}

In a directed graph $G$, a \textit{trek} $\tau$ in $\mathfrak{G}$ between $X$ and $Y$ is a pair of directed paths $(\pi_1,\pi_2)$ such that $\pi_1$ ends at $X$ and $\pi_2$ ends at $Y$ and  $\pi_1$ and $\pi_2$ begin at some common vertex, $S$, denoted as $top(\tau)$. A \textit{trek} $\tau$ is \textit{simple} if $\pi_1$ and $\pi_2$ only intersect at one $S$.\\ \textcolor{white}{abc}
For sets $\mathbf{A, B, C_A, C_B}\subset \mathbf{V}$, $\mathbf{A,B}$ are \textbf{\textit{t-separated}} by $(\mathbf{C_A,C_B})$ if for every trek $(\pi_1,\pi_2)$ from some $A\in\mathbf{A}$ to some $B\in\mathbf{B}$, either $\pi_1$ includes some $C\in \mathbf{C_A}$ or $\pi_2$ includes some $C\in\mathbf{C_B}$.  We say $\mathbf{(C_A,C_B)}$ are \textit{choke sets}.
\begin{figure}[H]
    \centering
    \includegraphics[width=100mm]{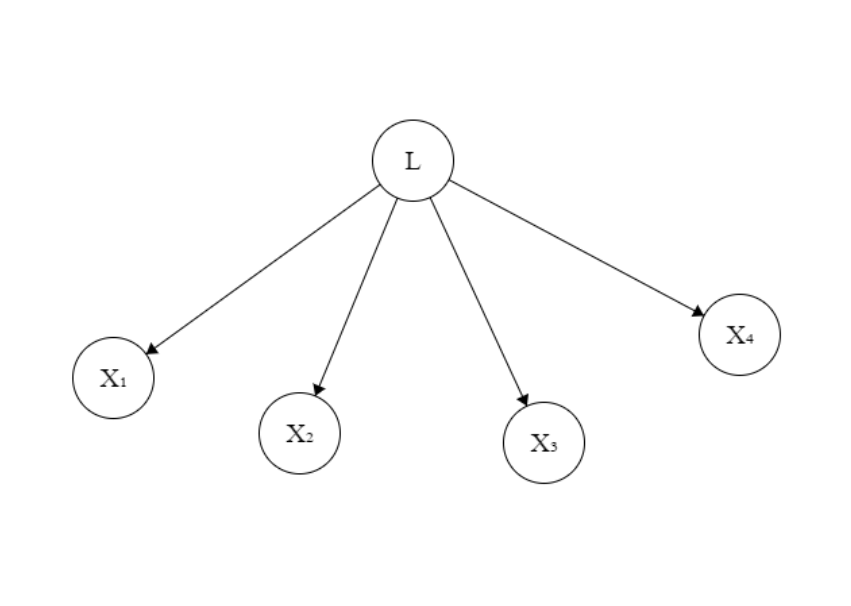}
    \caption{a 4-trek on $\{X_1, X_2, X_3, X_4\}$}
     \label{fig:FOFC}
\end{figure}
For three disjoint sets $\mathbf{A, B, C}\subset \mathbf{V}$,  $\mathbf{A}$ and $\mathbf{B}$ are \textbf{\textit{d-connected}} by $\mathbf{C}$ in $G$ if and only if there exists an undirected path $\pi$ between some vertex in $\mathbf{A}$ and some vertex in $\mathbf{B}$ such that for every collider $W$ on $\pi$, either $W$ or a descendent of $W$ is in $\mathbf{C}$, and no non-collider on $\pi$ is in $\mathbf{C}$.\\ \textcolor{white}{abc}
 $\mathbf{A}$ and $\mathbf{B}$ are \textit{\textbf{d-separated}} by $\mathbf{C}$ in $G$ if and only if they are not \textit{\textbf{d-connected}} by $\mathbf{C}$ in $G$.\\ \textcolor{white}{abc}
If $M = \langle P, G\rangle$ respects the \textbf{Causal Markov Assumption}, then $\mathbf{A}$ and $\mathbf{B}$ being \textit{\textbf{d-separated}} by $\mathbf{C}$ in $G$ implies that $\mathbf{A}$ and $\mathbf{B}$ are independent conditioning on $\mathbf{C}$. Assuming faithfulness, the conditional independences learnt from the data be used to derive \textbf{\textit{d-separation}} in the causal structure, which can then be used to search for the causal graph\cite{CausationPredictionSearch}.\\ \textcolor{white}{abc} 
\textbf{\textit{D-separation}} is a special case of \textbf{\textit{t-separation}}.  Later in this thesis we are going to introduce that for linear models, \textit{\textbf{t-separation}} implies rank constraints on submatrices of covariance matrix of $\mathbf{V}$ and can also be used for learning the true causal structure.

\section{Overview of the thesis}
This thesis is organized as follows:\\ \textcolor{white}{abc}
Chapter 2 presents the Generalized version of $k$-Triangle Faithfulness, which can be applied to any smooth distribution. In addition, under the Generalized $k$-Triangle Faithfulness Assumption, we describe the Edge Estimation Algorithm that provides uniformly consistent estimators of causal effects and the \textit{Very Conservative }$SGS$ Algorithm that is a uniformly consistent estimator of the Markov equivalence class of the true DAG.  We further return to the linear Gaussian model, provide an investigation of the probability of the violation of $k$-Triangle-Faithfulness and compared the $k$-Triangle-Faithfulness with the Strong Faithfulness assumption\cite{JMLR:v8:kalisch07a} to quantify how much weaker the $k$-Triangle-Faithfulness assumption is, both by a mathematical analysis and a simulation study.\\ \textcolor{white}{abc}
Chapter 3 seeks various ways of applying two constraints used to learn causal structure with latent variables with linear acyclic model to models with cycles and nonlinearity. In particular, we will show that with linearity and acyclicity between the observed variables between their latent common causes and nonlinearity between latents, the Generalized Independence Noise (GIN) can be used to cluster observed variables that are sharing the same set of latent variables\cite{xie2020generalized}.  We will also show that with only linearity between the observed variables and their latent common causes (in most of the cases represented by \textit{choke sets}), rank constraints can be used to cluster variables in a similar fashion\cite{LAchoke}\cite{Sullivant_2010}. Assuming the noises of variables following non-Gaussianity distribution, we will first present a method for identifying cycles under \textit{choke sets} using GIN and rank constraint and design an causal clustering algorithm with this method.  We further implement this algorithm and show its simulation result.  Finally we will look into the cases where in an acyclic latent causal structure, the edges connecting latent variables from two different clusters are with the opposite directions (cycles between `latent blocks').  We will describe the pseudo-code identifying the cycles between latent blocks using GIN and rank constraints and an example of how the pseudo-algorithm works. \\ \textcolor{white}{abc}
Chapter 4 focuses on the tensor constraint, a generalization of rank constraint under non-Gaussian distribution\cite{robeva2020multitrek}.  Similar to Chapter 3, we will show that the validity of tensor constraint is preserved with only linearity between the observed variables and their latent common causes. We will then show a mathematical property of the hyperdeterminant and the limitation of the tensor constraint. \\ \textcolor{white}{abc}
 Chapter 5 will summarize the thesis and discuss future research directions.
\newpage

\chapter{The Uniform Consistency of \textit{k}-Triangle-Faithfulness Assumption}
\section{Introduction}
It has been proved that under the Causal Markov, Faithfulness assumptions and Causal Sufficiency Assumption, there are no uniformly consistent estimators of Markov equivalence classes of causal structures represented by directed acyclic graphs (DAG)\cite{pointconsisitencyrobins}. It has been shown that for linear Gaussian models, under the Causal Markov Assumption, the Strong Causal Faithfulness Assumption, and the assumption of causal sufficiency, the PC algorithm is a uniformly consistent estimator of the Markov Equivalence Class of the true causal DAG for linear Gaussian models; it follows from this that for the identifiable causal effects in the Markov Equivalence Class, there are uniformly consistent estimators of causal effects as well\cite{JMLR:v8:kalisch07a}. The \textbf{\textit{k-}Triangle-Faithfulness Assumption} is a strictly weaker assumption for some values of $k$ is strictly weaker than than the strong faithfulness assumption that avoids some implausible implications of the Strong Causal Faithfulness Assumption and also allows for uniformly consistent estimates of Markov Equivalence Classes (in a weakened sense), and of identifiable causal effects.\\ \textcolor{white}{abc} 
Our work about faithfulness assumption consists of two parts: since the original \textbf{\textit{k-}Triangle-Faithfulness Assumption} is restricted to linear Gaussian models, we first propose the Generalized $k$-Triangle Faithfulness, which can be applied to any smooth distribution and describe the Edge Estimation Algorithm that provides uniformly consistent estimates of causal effects in some cases (and otherwise outputs ``can't tell"); we then provide upper bounds and lower bounds of the violation of \textbf{\textit{k-}Triangle-Faithfulness Assumption} in the linear case, and compare it with the upper and lower bounds of the violation of the Strong Causal Faithfulness validated with simulation results \cite{Uhler_2013}.

\subsection{The Faithfulness Assumptions for Causal Discovery}

It has been proved that under the Causal Markov and Faithfulness assumptions, there are no uniformly consistent estimators of Markov equivalence classes of causal structures represented by DAG \cite{pointconsisitencyrobins}.  It has been shown that such uniform consistency is achieved by the $PC$ algorithm if the underlying DAG is sparse relative to the sample size under a strengthened version of Faithfulness Assumption\cite{JMLR:v8:kalisch07a}.  This \textbf{Strong Causal Faithfulness Assumption} in the linear Gaussian case bounds the absolute value of any partial correlation not entailed to be zero by the true causal DAG away from zero by some positive constants. It has the implausible consequence that it puts a lower bound on the strength of edges: it is often implicitly assumed that there may be many weak causes of a variable, but even if they exist, that they can be safely ignored in estimating causal effects; the \textbf{$k$-Triangle Faithfulness Assumption} basically formalizes that assumption, whereas the Kalisch-B\"{u}hlmann results require the stronger assumption that there are no such weak causes. This assumption is not necessary to obtain uniform consistency, and limits the applicability of their results.\\ \textcolor{white}{abc} 
However, the \textbf{Strong Causal Faithfulness Assumption} can be weakened to the strictly weaker (for some values of $k$)  \textbf{\textit{k-}Faithfulness Assumption} while still achieving uniform consistency. Furthermore, at the cost of having a smaller subset of edges oriented, the \textbf{\textit{k-}Faithfulness Assumption} can be weakened to the \textbf{\textit{k-}Triangle-Faithfulness Assumption}, while still achieving uniform consistency and can be relaxed while preserving the uniform consistency: the \textbf{\textit{k-}Triangle-Faithfulness Assumption} \cite{Spirtes_2014} only bounds the conditional correlation between variables in a triangle structure from below by some functions of the corresponding edge strength:\\ 
\textbf{\textit{k-}Triangle-Faithfulness Assumption:}  
Given a set of variables \textbf{V}, suppose the true causal model over \textbf{V} is $M=\langle P,G\rangle$, where $P$ is a Gaussian distribution over \textbf{V}, and \textit{G} is a DAG with vertices \textbf{V}. For any variables \textit{X}, \textit{Y}, \textit{Z} that form a triangle in \textit{G}:
\begin{itemize}
    \item if \textit{Y} is a non-collider on the path $\langle X,Y,Z\rangle$, then $|\rho_M(X,Z|\mathbf{W})|\geq k\times |e_M(X-Z)|$ for all $\mathbf{W\subset V}$ that do not contain $Y$; and
    \item if \textit{Y} is a collider on the path $\langle X,Y,Z\rangle$, then $|\rho_M(X,Z|\mathbf{W})|\geq k\times |e_M(X-Z)|$ for all $\mathbf{W\subset V}$ that do contain $Y$.
\end{itemize}
where the $X-Z$ represents the edge between $X$ and $Z$ but the direction is not determined\cite{Spirtes_2014}.\\ \textcolor{white}{abc}
The \textbf{\textit{k-}Triangle-Faithfulness Assumption} is strictly weaker than the \textbf{Strong Faithfulness Assumption }in several respects: the \textbf{Strong faithfulness Assumption }does not allow edges to be weak any where in a graph, while the \textbf{\textit{k-}Triangle-Faithfulness Assumption} only excludes conditional correlations $\rho(X,Z|\mathbf{W})$ from being too small if $X$ and $Z$ are in some triangle structures $\langle X,Y,Z\rangle$ and $X-Z$ is not a weak edge; for every $\epsilon$ used in the \textbf{Strong Faithfulness Assumption }as the lower bound for any partial correlation, there is a $k$ for the \textbf{\textit{k-}Triangle-Faithfulness Assumption} that gives a lower bound smaller than $\epsilon$.

\subsubsection{VCSGS Algorithm}
The algorithm we use to infer the structure of the underlying true causal graph is \textit{Very Conservative SGS} ($VCSGS$) algorithm, which takes uniformly consistent tests of conditional independence as input:
\RestyleAlgo{ruled}
\SetKwComment{Comment}{/* }{ */}
\begin{algorithm}
\caption{VCSGS}\label{alg:VCSGS}
\KwIn{Data from the set of variables $\mathbf{V}$; }
\KwOut{Graph $\mathcal{H}$}
 $\mathcal{H}\gets$ Complete undirected graph over the given $\mathbf{V}$;
 
 \For{$\{X, Y\}\subset\mathbf{V}$}{
 \For{ $\mathbf{S} \subset\mathbf{V}\setminus\{X, Y \}$}{\If{$X\indep Y|\mathbf{S}$}{Remove $X-Y$ from $\mathcal{H}$}}
 }
 \For{unshielded triple $\langle X, Y, Z\rangle$ in $\mathcal{H}$ \Comment*[r]{$X$ and $Z$ are not adjacent}}{
 \eIf{$\forall \mathbf{S\subset V}\setminus\{X, Z\}$ s.t. $Y\in\mathbf{S}$, $X\nindep Z|\mathbf{S}$}{Orient $X-Y-Z$ in $\mathcal{H}$ as $X \rightarrow Y \leftarrow Z$}{
 \eIf{$\forall \mathbf{S\subset V}\setminus\{X, Y, Z\}$, $X\nindep Z|\mathbf{S}$}{Mark $X-Y-Z$ in $\mathcal{H}$ as \textit{noncollider}}{Mark $X-Y-Z$ in $\mathcal{H}$ as \textit{ambiguous}}
 }
 }
 {$\#$\textit{Executing More Orientation Rules}}\\ \textcolor{white}{abc}
 \For{ $X \rightarrow Y-Z$ in $\mathcal{H}$}{\If{ $\langle X, Y, Z\rangle$ is marked as \textit{noncollider} }{Orient $Y-Z$ as $ Y\rightarrow Z$ in $\mathcal{H}$}
 }
 \For{ $X \rightarrow Y\rightarrow Z$ in $\mathcal{H}$}{\If{ $X-Z$ exists in $\mathcal{H}$}{Orient $X-Z$ as $ X\rightarrow Z$ in $\mathcal{H}$}
 }
 \For{ $X \rightarrow Y\leftarrow Z$ in $\mathcal{H}$}{\If{ $\langle X, W, Z\rangle$ is marked as \textit{noncollider} in $\mathcal{H}$ and $W-Y$ exists in $\mathcal{H}$}{Orient $W-Y$ as $ W\rightarrow Y$ in $\mathcal{H}$}
 }
 \For{ \textit{ambiguous triple} in $\mathcal{H}$}{\If{ 
 $\forall$ pattern consistent with $\mathcal{H}$ with the triple orientated, \textit{Markov Condition} is satisfied}{Mark all the `apparently non-adjacent’ pairs as
`definitely non-adjacent’}
 }
\textbf{Return} $\mathcal{H}$
\end{algorithm}

\newpage
It has been proved that under the \textbf{\textit{k-}Triangle-Faithfulness Assumption}, $VCSGS$ algorithm is uniformly consistent (while allowing the possibility of missing weak edges as correct) in the inference of graph structure that is a subgraph of the true causal graph.  Notice that $VCSGS$ is a very conservative algorithm, with complexity of $\mathcal{O}(|\mathbf{V}|^22^{|\mathbf{V}|-2})$ with $|\mathbf{V}|$ being the cardinality of $\mathbf{V}$ or the number of variables in the dataset.  A follow-up algorithm that estimates edge strength given the output of $VCSGS$ also reaches uniform consistency. We are going to prove that the uniform consistency of the estimation of the causal influences under the \textbf{\textit{k-}Triangle-Faithfulness Assumption} can be extended to discrete and nonparametric cases as long as there are uniformly consistent tests of conditional independence (which in the general case requires a smoothness assumption), by showing that missed edges in the inference of causal structure are so weak that the estimations of the causal influences are still uniformly consistent. 
\section{Nonparametric Case}
For nonparametric case, we consider variables supported on $[0,1]$.
We first define the strength of the edge $X\rightarrow Y$ as the maximum change in $L1$ norm of the probability of $Y$ when we condition on different values of $X$ while holding everything else constant:

\begin{center}
    
If $x\in Pa(Y):$
$e(X\rightarrow Y) :=max_{pa_{\setminus\{x\}}(Y)\in [Pa(Y)\setminus\{X\}]}max_{x_1,x_2\in[X]}||p_{Y|x_1,pa_{\setminus\{x\}}(Y)}- p_{Y|x_2,pa_{\setminus\{x\}}(Y)}||_1$

\end{center}
where $[X]$ denotes the set of values that $X$ takes,  $[Pa(Y)]\subset [0,1]^{|Pa(Y)|}$ the set of values that parents of Y take, $p_{Y|x_1,pa_{\setminus\{x\}}(Y)}$ the probability distribution of $Y|X=x_1, Pa(Y)\setminus\{X\}=pa_{\setminus\{x\}}(Y)$ and $p_{y|x_1,pa_{\setminus\{x\}}(Y)}$ the density of $p_{Y|x_1,pa_{\setminus\{x\}}(Y)}$ 
for $Y=y$. Since we are conditioning on the set of parents, the conditional probability is equal to the manipulated probability. 

Now we can make the \textbf{\textit{k-}Triangle-Faithfulness Assumption}: given a set of variables \textbf{V}, where the true causal model over \textbf{V} is $M=\langle P,G\rangle$, $P$ is a distribution over \textbf{V}, and \textit{G} is a DAG with vertices \textbf{V}, for any variables \textit{X}, \textit{Y}, \textit{Z} that form a triangle in \textit{G}:
\begin{itemize}
    \item if \textit{Z} is a non-collider on the path $\langle X,Z,Y\rangle$, given any subset $\mathbf{W\subset V}\setminus \{X,Y,Z\}$,\newline $\max_{\mathbf{w\in [W]}}\max_{x_1, x_2\in[X]}||p_{Y|\mathbf{w},x_1}- p_{Y|\mathbf{w},x_2}||_1\geq K_Y e(X\rightarrow Y)$ for some $K_Y>0$
    \item if \textit{Z} is a collider  on the path $\langle X,Z,Y\rangle$, then for every $y\in [Y]$, given any subset $\mathbf{W\subset V}\setminus \{X,Y,Z\}$,$\max_{\mathbf{w\in [W]}}\max_{z\in [Z]}\max_{x_1,x_2\in[X]}||p_{Y|\mathbf{w}, x_1, z}-p_{Y=y|\mathbf{w},x_2,z}||_1\geq K_Y e(X\rightarrow Y)$ for some $K_Y>0$
\end{itemize}
In this assumption we only set restrictions on single manipulation (focusing on one parent).  If we instead want to post similar restrictions for multiple manipulations, we just need to replace the $x_1$ and $x_2$ with different values a set of variables take and replace the strength of an edge with a strength of a collection of edges through a similar replacement in the definition of strength of an edge.\\ \textcolor{white}{abc}
The \textbf{\textit{max-max}} version is the weakest version of the \textbf{\textit{k-}Triangle-Faithfulness Assumption} for the nonparametric case, which may be too weak for practical uses considering the conditional independence test sometimes tests dependence on intervals of values.  If the \textbf{\textit{max-max}} versions of the \textbf{\textit{k-}Triangle-Faithfulness Assumption} satisfies uniform consistency, then all other version of this assumption, \textit{i.e.}, versions stricter than \textbf{\textit{max-max}} automatically satisfy the uniform consistency.\\ \textcolor{white}{abc}
In order to have uniformly consistent tests of conditional independence, we make smoothness assumption for continuous variables with the support on $[0,1]$:\newline
 \textbf{ TV (Total Variation) Smoothness(L): } Let $\mathcal{P}_{[0,1],TV(L)}$ be the collection of distributions $p_{Y,\mathbf{A}}$, such that for all $\mathbf{a},\mathbf{a'}\in [0,1]^{|\mathbf{A}|}$, we have:
\begin{center}
    
    $||p_{Y|\mathbf{A=a}}-p_{Y|\mathbf{A=a'}}||_1\leq L||\mathbf{a-a'}||_1$
\end{center}

 The smoothness assumption prevents the case where the probability measure over $Y$ dramatically changes while the value of the conditioned sets change only slightly.  However, it does not necessarily prevent the case when $\mathbf{A}$ takes a slightly different value, the observed value of $Y$ changes a lot.  In finite sample sizes, it can still leads to a missed edge in the reference of causal structure, if we failed to manipulate $\mathbf{A}$ within the small internal and observed the extraordinary change of $Y$, but not in the limit.
    
Given the TV smoothness(L), $p$ is continuous.  Furthermore, since $[0,1]^d$ ($d\in \mathbb{N}$) is compact, for any $\mathbf{W,U\subset V}$ (the set of all variables in the true causal graph) , $p_{U|W}$ attains its max and min on its support.  Since $\mathbf{|V|}$ is finite, we can further assume conditional densities are non-zero (NZ(T)):\\ 
 \textbf{ Non-Zero Assumption (NZ(T)): }
for any $X\in \mathbf{V}$, there exists some $1>T>0$, s.t. $\mathbf{U\subset V}$, $p_{X|\mathbf{U}}\geq T$ for all values of $X$.

Notice that by TV Smoothness(L) and that variables have support on $[0,1]$, we can derive an upper bound for probability of any variable given its parents: 

$p_{y|{Pa(Y)=pa(Y)}} \leq ||p_{Y|Pa(Y)=pa(Y)}||_1\leq ||p_{Y|Pa(Y)=pa'(Y)}||_1 + L||pa(Y)-pa'(Y)||_1\leq (1+L|Pa(Y)|) $ 
 
Although the discrete probability case does not have support on $[0,1]$, and its probability is not continuous, it still satisfies the TV smoothness(L) assumption: for instance, if the discrete variables have support only on integers, we can set $L=1$. By replacing the density $p_{X|\mathbf{U}}$ with the probability $P(X|\mathbf{U})$ in NZ(T) assumption, we have a NZ(T) assumption for the discrete case.  Therefore the proof of uniform consistency for the nonparametric case in the rest of the paper also works for the discrete case.

Since the nonparametric version can be viewed as a generalized \textbf{$k$-Triangle-Faithfulness}, we can plug in the linear Gaussian case into the generalized triangle faithfulness and see what implied about $K_Y$ and $L$. $T$ as a lower bound of a p.d.f is only possible in the linear Gaussian case if we restrict the value of variables in an interval.  We can restrict the value of variables in an interval then rescale all the probability density with a constant to have the integral of density to be 1. In the following discussion we are not considering such restriction, since the result can be easily generalized.

For the common parameteric case with the linear Gaussian model
    $\mathbf{(I-B)^TX}=\epsilon$\footnote{It was $\mathbf{(I-A)^TX}=\epsilon$ in chapter 1, but $\mathbf{A}$ has been used to denote the set of variables being conditioned on in this chapter.}, 
it is easy to check that \textit{TV Smoothness(L)} is satisfied. For instance, given $p_{Y,\mathcal{A}}$ as Gaussian and $\Sigma$ the covariance matrix of $\mathbf{X}$, we can find sufficient conditions for the values of $L$ such as:
\begin{center}
$L\geq\dfrac{2\phi(0)||\Sigma_{Y,\mathbf{A}}\Sigma_{\mathbf{A}\mathbf{A}}^{-1}||_1}{\sqrt{var(Y|\mathbf{A})}}$
\end{center}
The detailed derivation of this condition can be found in the Appendix.
Notice that by setting a value for $L$, we directly bound $var(Y|\mathbf{A})$ away from 0, which rules out determinism.\\ \textcolor{white}{abc} 
Now we apply the nonparametric, or generalized definition of the edge strength to the linear Gaussian model.  Quantifying causal influence is a question open to discussion, and different ways of defining causal strength may have their own pros and cons\cite{Janzing_2013}.  Here we are not trying to compare which definition, the maximum $L_1$ distance change or the linear coefficient, is better, but to provide a direct demonstration of our setup in this paper.  Recall that $L_1$ distance between two Gaussian distribution with only different means is:
\begin{center}
    $||Y_1-Y_2||_1 = 2\left(2\Phi(\dfrac{|\mu_1-\mu_2|}{2\sqrt{var(Y)}})-1\right)$
\end{center}
where $Y_1\sim\mathcal{N}\left(\mu_1,var(Y)\right)$ and $Y_2\sim\mathcal{N}\left(\mu_2,var(Y)\right)$. Therefore when plugging the linear Gaussian case into the definition of edge strength of the nonparametric version, we have for every $X_i$, $X_j$ such that in a graph $G = \langle \mathbf{V}, \mathbf{E}\rangle$ such that $X_i\rightarrow X_j\in\mathbf{E}$ with the linear coefficient $a_{ij}$, the edge strength in terms of $L_1$ distance manipulating only $X_i$ is:

\begin{center}
    $2\left(2\Phi(\dfrac{a_{ij}\mathbf{\delta}{X_i}}{2\sqrt{var(X_j|\mathbf{Pa(X_j)})}})-1\right)$
\end{center}
where $\mathbf{\delta}{X_i}$ is the change of value of $X_i$ that maximizes the $L_1$ probability change of $X_j$. 
 \\ \textcolor{white}{abc}
Even though for the general nonparametric case, the change of value of $X_i$ that maximizes the $L_1$ distance varies case by case, in the linear case apparently the distance gets bigger as the change of value of the corresponding parent gets bigger.  Similarly in the \textbf{$k$-Triangle Faithfulness Assumption}, when conditioning on a set where $X_i$ is included, the $L_1$ distance gets bigger when the change of value of $X_i$ gets larger.  Therefore, instead of directly applying the \textbf{$k$-Triangle Faithfulness Assumption}, we can first fix the range of manipulation of $X_i$, then compare the resulting distribution change conditioning on a set and on the parent set.  For instance, when $\langle X_i, Z, X_j\rangle$ is a triangle, the first condition in the \textbf{$k$-Triangle Faithfulness Assumption} is:
\begin{center}
     if \textit{Z} is a non-collider on the path $\langle X_i,Z,X_j\rangle$, given any subset $\mathbf{W\subset V}\setminus \{X_i,X_j,Z\}$,\newline$\left(2\Phi(\dfrac{\Sigma_{X_j,\mathbf{W},X_i}\Sigma^{-1(i)}_{\mathbf{W},X_i}\mathbf{\delta}{X_i}}{2\sqrt{var(X_j|\mathbf{W},X_i)}})-1\right)\geq K\left(2\Phi(\dfrac{a_{ij}\mathbf{\delta}{X_i}}{2})-1\right)$ for some $K>0$
\end{center}
where $\Sigma^{-1(i)}_{\mathbf{W},X_i}$ is the $i$th column of $\Sigma^{-1}_{\mathbf{W},X_i}$.
Setting $\delta X_i = 1$, we are looking for a $K$ such that:
\begin{center}
    $\left(2\Phi(\dfrac{\Sigma_{X_j,\mathbf{W},X_i}\Sigma^{-1(i)}_{\mathbf{W},X_i}}{2\sqrt{var(X_j|\mathbf{W},X_i)}})-1\right)\geq K\left(2\Phi(\dfrac{a_{ij}}{2})-1\right)$
\end{center}
similar to the \textbf{$k$-Triangle Faithfulness Assumption} in the linear Gaussian case, the stronger the edge is ($|a_{ij}|$ is large), the stronger the correlated $X_i$ and $X_j$ should be to satisfy the assumption.

\subsection{Uniform Consistency in the inference of structure}
We use $L1$ norm to characterize dependence :$\epsilon_{X,Y|\mathcal{A}}=||p_{X,Y,\mathcal{A}}-p_{X|\mathcal{A}}p_{Y|\mathcal{A}}p_{\mathcal{A}}||_1$.We want a test $\psi$ of $H_0: \epsilon = 0$ versus $H_1: \epsilon>0 $. $\psi$ is a family of functions: $\psi_0...\psi_n...$ one for each sample size, that takes an i.i.d sample $V_n$ from the joint distribution over $\mathbf{V}$.  Then the test is uniformly consistent w.r.t. a set of distributions $\Omega$ for :
\begin{itemize}
    \item $lim_{n\rightarrow \infty} sup_{P\in\mathcal{P}_{[0,1],TV(L)},\epsilon(P)=0}P^n(\psi_n(V_n)=1)=0$
    \item for every $\delta>0$, $lim_{n\rightarrow \infty} sup_{\epsilon(P)\geq\delta}P^n(\psi_n(V_n)=0)=0$
\end{itemize}

With the TV Smoothness(L) assumption, there are uniformly consistent tests of conditional independence, such as a minimax optimal conditional independence test proposed by Neykov et al.(2020).

Given any causal model $M=\langle P,G \rangle$ over $\mathbf{V}$, let $C(n,M)$ denote the (random) output of the $VCSGS$ algorithm given an$i.i.d.$ sample of size $n$ from the distribution $P_M$, then there are three types of errors that it can contain that will mislead the estimation of causal influences:
\begin{enumerate}
    \item $C(n,M)$ \textit{errs in kind I} if it has an adjacency not in $G_M$;
    \item  $C(n,M)$ \textit{errs in kind II} if every adjacency it has is in $G_M$, but it has a marked non-collider not in $G_M$;
    \item  $C(n,M)$ \textit{errs in kind III} if every adjacency and marked non-collider it has is in $G_M$, but it has an orientation not in $G_M$
\end{enumerate}

If $C(n.M)$ errs in either of these three way, there will be variables $X$ and $Y$ in $C(n,M)$ such that $X$ is treated as a parent of $Y$ but is not in the true graph $G_M$; if there is no undirected edge connecting $Y$ in this $C(n,M)$, the algorithm will estimate the causal influence of ``parents'' of $Y$ on $Y$, but such estimation does not bear useful information since intervening on $X$ does not really influence $Y$.  Notice that missing an edge is not listed as an mistake here, and we are going to prove later that the estimation of causal influence can still be used to correctly predict the effect of intervention even if the algorithm misses edges.

Let $\phi^{k,L,T}$ be the set of causal models over \textbf{V} under \textbf{\textit{k-}Triangle-Faithfulness Assumption}, TV smoothness(L) and the assumptions of NZ(T). 

We will prove that under the causal sufficiency of the measured variables \textbf{V}, causal Markov assumption, $k$-Triangle-Faithfulness, TV smoothness(L) assumption and NZ(T) assumption,
\begin{center}
    $\underset{n\rightarrow\infty}{lim}\underset{M\in \phi^{k,L,C}}{sup}P^n_M(C(n,M) errs)=0$
\end{center}

 We begin by proving a useful lemma that bounds $\epsilon_{X,Y|\mathbf{A}}$ with strengths of the edge $X\rightarrow Y$:
\begin{lemma}

Given an ancestral set $\mathbf{A\subset V}$ that contains the parents of $Y$ but not $Y$:

If $X$ is a parent of $Y$: 

$ T^{|\mathbf{A}|} e(X\rightarrow Y)\leq\epsilon_{X,Y|\mathbf{A}\setminus\{X\}}\leq e(X\rightarrow Y)$

\end{lemma}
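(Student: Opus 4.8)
Write $\mathbf{A}' := \mathbf{A}\setminus\{X\}$. The plan is to first rewrite $\epsilon_{X,Y\mid\mathbf{A}'}$ in a form that isolates the conditional distribution of $Y$, then sandwich it. Factor the joint density as $p(x,y,\mathbf{a}') = p(\mathbf{a}')\,p(x\mid\mathbf{a}')\,p(y\mid x,\mathbf{a}')$; the signed density whose $L_1$ norm defines $\epsilon_{X,Y\mid\mathbf{A}'}$ is then $p(\mathbf{a}')\,p(x\mid\mathbf{a}')\bigl(p(y\mid x,\mathbf{a}') - p(y\mid\mathbf{a}')\bigr)$, so that $\epsilon_{X,Y\mid\mathbf{A}'} = \int p(\mathbf{a}')\,p(x\mid\mathbf{a}')\,\|p_{Y\mid x,\mathbf{a}'} - p_{Y\mid\mathbf{a}'}\|_1\,dx\,d\mathbf{a}'$, with $p_{Y\mid\mathbf{a}'} = \int p(x'\mid\mathbf{a}')\,p_{Y\mid x',\mathbf{a}'}\,dx'$. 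The key structural input is the Causal Markov Assumption: since $\mathbf{A}$ is ancestral and $Y\notin\mathbf{A}$, no element of $\mathbf{A}$ can be a descendant of $Y$, so $\mathbf{A}$ consists of non-descendants of $Y$; combined with $\mathrm{Pa}(Y)\subseteq\mathbf{A}$ this gives $Y\perp(\mathbf{A}\setminus\mathrm{Pa}(Y))\mid\mathrm{Pa}(Y)$. Hence $p_{Y\mid x,\mathbf{a}'} = p_{Y\mid \mathrm{pa}(Y)}$ depends on $(x,\mathbf{a}')$ only through the parent coordinates $(x,\mathrm{pa}_{\setminus X}(Y))$, and every difference of conditional densities appearing above is of exactly the type bounded by $e(X\rightarrow Y)$.

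For the upper bound, use the triangle inequality inside $p_{Y\mid\mathbf{a}'} = \int p(x'\mid\mathbf{a}')p_{Y\mid x',\mathbf{a}'}\,dx'$ to get $\|p_{Y\mid x,\mathbf{a}'} - p_{Y\mid\mathbf{a}'}\|_1 \le \int p(x'\mid\mathbf{a}')\,\|p_{Y\mid x,\mathbf{a}'} - p_{Y\mid x',\mathbf{a}'}\|_1\,dx'$. By the Markov reduction each integrand equals $\|p_{Y\mid x,\mathrm{pa}_{\setminus X}(Y)} - p_{Y\mid x',\mathrm{pa}_{\setminus X}(Y)}\|_1$, which is $\le e(X\rightarrow Y)$ by definition of the edge strength; since $\int p(x'\mid\mathbf{a}')\,dx' = 1$ and $\int p(\mathbf{a}')p(x\mid\mathbf{a}')\,dx\,d\mathbf{a}' = 1$, the whole integral collapses to $e(X\rightarrow Y)$.

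For the lower bound, apply NZ$(T)$ to the density weights: writing $p(\mathbf{a}')$ as a product of conditionals, $p(\mathbf{a}')\ge T^{|\mathbf{A}'|}$, and $p(x\mid\mathbf{a}')\ge T$, so $\epsilon_{X,Y\mid\mathbf{A}'} \ge T^{|\mathbf{A}|}\int\!\int \|p_{Y\mid x,\mathbf{a}'} - p_{Y\mid\mathbf{a}'}\|_1\,dx\,d\mathbf{a}'$ (using $|\mathbf{A}| = |\mathbf{A}'|+1$). It then suffices to show the remaining integral is at least $e(X\rightarrow Y)$. Fix a maximizer $(\mathrm{pa}^{*},x_1^{*},x_2^{*})$ in the definition of $e(X\rightarrow Y)$; with $p_{Y\mid\mathbf{a}'}$ as an intermediate reference, the triangle inequality gives $\|p_{Y\mid x_1^{*},\mathrm{pa}^{*}} - p_{Y\mid\mathbf{a}'}\|_1 + \|p_{Y\mid\mathbf{a}'} - p_{Y\mid x_2^{*},\mathrm{pa}^{*}}\|_1 \ge \|p_{Y\mid x_1^{*},\mathrm{pa}^{*}} - p_{Y\mid x_2^{*},\mathrm{pa}^{*}}\|_1 = e(X\rightarrow Y)$, so restricting the $\mathbf{a}'$-integral to the event $\mathrm{pa}(\mathbf{a}') = \mathrm{pa}^{*}$ and the $x$-integral to $\{x_1^{*},x_2^{*}\}$ recovers $e(X\rightarrow Y)$ with the leftover $T^{|\mathbf{A}|}$ in front. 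In the discrete case this restriction is literal (both $\mathbf{a}^{*}$ and $x_i^{*}$ carry positive probability, again bounded below by powers of $T$), and the proof is complete.

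The hard part is this final localization in the continuous setting: a point maximizer $(\mathrm{pa}^{*},x_1^{*},x_2^{*})$ has Lebesgue measure zero, so one cannot simply ``evaluate at the argmax'' inside an integral. The remedy is to thicken $\{x_i^{*}\}$ and $\{\mathrm{pa}^{*}\}$ to small neighbourhoods and invoke TV Smoothness$(L)$ — which makes $(x,\mathrm{pa})\mapsto p_{Y\mid x,\mathrm{pa}}$ Lipschitz (hence uniformly continuous on the compact cube) as a map into $L_1$ — so that on a neighbourhood of positive measure the relevant deviations remain arbitrarily close to $e(X\rightarrow Y)$; letting the neighbourhood shrink recovers the stated bound in the limit. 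This smoothness/compactness argument, together with checking that the parent coordinates of $\mathbf{a}'$ can indeed range over a neighbourhood of $\mathrm{pa}^{*}$ (which is where the non-parent coordinates of $\mathbf{A}'$ must be integrated out harmlessly, using that $p_{Y\mid x,\mathbf{a}'}$ ignores them), is the only genuinely delicate point; the factorization, the reduction from $\mathbf{A}$ to $\mathrm{Pa}(Y)$ via the Markov condition, and the two triangle-inequality estimates are routine.
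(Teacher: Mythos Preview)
Your overall strategy and factorisation are exactly the paper's: rewrite $\epsilon_{X,Y\mid\mathbf{A}'}$ as a $p_{\mathbf{A}}$-weighted average of $\|p_{Y\mid x,\mathbf{a}'}-p_{Y\mid\mathbf{a}'}\|_1$, bound this quantity pointwise by $e(X\to Y)$ for the upper inequality, and invoke NZ$(T)$ for the lower one. Your upper-bound argument is correct and in fact more explicit than the paper's (you spell out the Causal Markov reduction from $\mathbf{a}'$ to the parent coordinates, which the paper leaves implicit).

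The lower bound, however, has a genuine gap, and it is precisely the step you yourself flag as ``the hard part.'' After stripping out the density weights you need
\[
\int\!\!\int \|p_{Y\mid x,\mathbf{a}'}-p_{Y\mid\mathbf{a}'}\|_1\,dx\,d\mathbf{a}'\;\ge\; e(X\to Y),
\]
and this inequality is false in general. A two-variable example already breaks it: take $X$ uniform on $[0,1]$ and $p_{Y\mid x}(y)=1+c\,(x-\tfrac12)(y-\tfrac12)$ for small $c>0$; then $p_Y\equiv 1$, $e(X\to Y)=c/4$, but $\int_0^1\|p_{Y\mid x}-p_Y\|_1\,dx=c/16$. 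Here every conditional density is at least $1-c/4$, so NZ$(T)$ holds with $T$ arbitrarily close to $1$, and the claimed bound $T\cdot c/4\le c/16$ fails.

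Your proposed remedy does not close this gap. Thickening the maximiser to a $\delta$-neighbourhood and invoking TV smoothness does keep the \emph{integrand} within $O(\delta)$ of its supremum there, but the restricted integral also acquires the \emph{Lebesgue volume} $O(\delta^{|\mathrm{Pa}(Y)|})$ of that neighbourhood as a factor; ``letting the neighbourhood shrink'' therefore sends the contribution to $0$, not to $e(X\to Y)$. The discrete localisation works precisely because a single atom already carries mass $\ge T^{|\mathbf{A}|}$; a pointwise density lower bound does not manufacture point masses in the continuous case. (The paper's own one-line justification ``$\ge T^{|\mathbf{A}|}e(X\to Y)$ by NZ$(T)$'' passes over the same step without comment, so the difficulty is intrinsic to the continuous statement rather than an artifact of your write-up.)
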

\begin{proof}
\begin{flalign}
\epsilon_{X,Y|\mathbf{A}\setminus\{X\}}&=\int_{\mathbf{A}\setminus\{X\}}\int_X\int_Y|p_{x,y,\mathbf{a}\setminus\{x\}}-p_{y|\mathbf{a}\setminus\{x\}}p_{x|\mathbf{a}\setminus\{x\}}p_{\mathbf{a}\setminus\{x\}}|dydxd\mathbf{a}\setminus\{x\}\\ \textcolor{white}{abc}
&= \int_{\mathbf{A}} p_{\mathbf{a}}||p_{y|x,\mathbf{A}\setminus\{X\}}-p_{y|\mathbf{A}\setminus\{X\}}||_1d\mathbf{a}\\ \textcolor{white}{abc}
&\leq \mathbb{E}_{{\mathbf{A}}\sim p_{\mathbf{A}}}[e(X\rightarrow Y)]\\ \textcolor{white}{abc}
&= e(X\rightarrow Y)
\end{flalign}
\begin{flalign}
\epsilon_{X,Y|\mathbf{A}\setminus\{X\}}&=\int_{\mathbf{A}\setminus\{X\}}\int_X\int_Y|p_{x,y|\mathbf{a}\setminus\{x\}}-\newline p_{y|\mathbf{a}\setminus\{x\}}p_{x|\mathbf{a}\setminus\{x\}}|dydxd\mathbf{a}\setminus\{x\}\\ \textcolor{white}{abc}
&= \int_{\mathbf{A}\setminus\{X\}}p_{\mathbf{a}\setminus\{x\}}\int_x p_{x|\mathbf{a}\setminus\{x\}}||p_{Y|x,\mathbf{a}\setminus\{x\}}-\newline p_{Y|\mathbf{a}\setminus\{x\}}||_1dxd\mathbf{a}\setminus\{x\}\\ \textcolor{white}{abc}
&\geq T^{|\mathbf{A}|} e(X\rightarrow Y)
\end{flalign}

The step (7) is derived using a direct conclusion from NZ(T): 

for any $\mathbf{V\supset W}=\{W_1,W_2...W_n\}$,
by the Chain Rule:
$p_{\mathbf{W}}=\prod_{i=1}^{n}p_{W_i|W_{i+1}...W_n}\geq T^n$

\end{proof}
We are going to prove for each case that the probability for $C(n,M)$ to make each of the three kinds of mistakes uniformly converges to zero.  Since the proofs for the \textit{kind I} and \textit{kind III} errors are almost the same as the proof for the linear Gaussian case (Spirtes and Zhang, 2014), we are only going to prove \textit{kind II} here.

\begin{lemma}
Given causal sufficiency of the measured variables $\mathbf{V}$, the Causal Markov,$k$-Triangle-Faithfulness, TV smoothness(L) assumption and NZ(T) assumption:
\begin{center}
    
    $\underset{n\rightarrow\infty}{lim}\underset{M\in \phi^{k,L,C}}{sup}P^n_M(C(n,M) \textit{ errs in kind II})=0$\end{center}

\end{lemma}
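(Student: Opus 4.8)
The plan is to bound $P^{n}_{M}(C(n,M)\text{ errs in kind II})$ by a union bound over the finitely many ordered triples $\langle X,Y,Z\rangle$ of distinct elements of $\mathbf{V}$ and, inside each, over the finitely many conditioning subsets of $\mathbf{V}$; since $|\mathbf{V}|$ is fixed it suffices to drive each term to $0$ uniformly over $M\in\phi^{k,L,C}$. Fix a triple and work on the event that $C(n,M)$ has no adjacency outside $G_{M}$ and that $\langle X,Y,Z\rangle$ is a marked non-collider of $C(n,M)$ that is not in $G_{M}$. Then $X-Y$ and $Y-Z$ are edges of $G_{M}$, $X$ and $Z$ are non-adjacent in $C(n,M)$, and $Y$ is a collider on $\langle X,Y,Z\rangle$ in $G_{M}$, i.e.\ $X\rightarrow Y\leftarrow Z\in G_{M}$. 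I would split on whether $X$ and $Z$ are adjacent in $G_{M}$.

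If $X,Z$ are non-adjacent in $G_{M}$, then $\langle X,Y,Z\rangle$ is an unshielded collider of $G_{M}$. Because $G_{M}$ is acyclic one of $X,Z$ is not a descendant of the other, say $Z$ is not a descendant of $X$, so $X\indep Z$ given $Pa(X)$ (the parents of $X$ in $G_{M}$) by the local Markov property. Moreover $Pa(X)$ contains neither $Z$ (non-adjacency) nor $Y$ (a child of $X$) nor any descendant of $Y$ (that would give a cycle $X\rightarrow Y\rightarrow\cdots\rightarrow X$), so $Pa(X)\subseteq\mathbf{V}\setminus\{X,Y,Z\}$ and $\epsilon_{X,Z|Pa(X)}(P_{M})=0$. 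But $VCSGS$ takes the ``non-collider'' branch only when the sample test reports dependence for \emph{every} subset of $\mathbf{V}\setminus\{X,Y,Z\}$, in particular for $Pa(X)$; hence the probability of a kind-II error through this triple in this case is at most $\sup_{P:\epsilon(P)=0}P^{n}(\psi_{n}=1)$, which $\to 0$ uniformly by the type-I clause of the uniform consistency of the conditional independence test (the subcase ``$X$ not a descendant of $Z$'' is symmetric, using $Pa(Z)$).

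If instead $X,Z$ are adjacent in $G_{M}$, then $\{X,Y,Z\}$ is a triangle with $Y$ a collider and, say, chord $X\rightarrow Z$ of strength $\eta:=e(X\rightarrow Z)>0$. For $\langle X,Y,Z\rangle$ to be an unshielded triple of $C(n,M)$ at all, $X-Z$ must have been deleted, i.e.\ the sample test reported $X\indep Z|\mathbf{S}_{0}$ for some $\mathbf{S}_{0}$; and since the ``non-collider'' branch again forces dependence for every subset of $\mathbf{V}\setminus\{X,Y,Z\}$, we must have $Y\in\mathbf{S}_{0}$, say $\mathbf{S}_{0}=\mathbf{W}\cup\{Y\}$ with $\mathbf{W}\subseteq\mathbf{V}\setminus\{X,Y,Z\}$. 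Applying the collider clause of the generalized $k$-Triangle-Faithfulness Assumption to the edge $X\rightarrow Z$ and the collider vertex $Y$, the $L_{1}$ change of the conditional law of $Z$ under a change of $X$ (for a suitable fixing of $\mathbf{W}$ and $Y$) is at least $K_{Z}\eta$; combining this with the $NZ(T)$-based estimate used to prove the lower bound of the preceding lemma (where $p_{\mathbf{W}}\ge T^{|\mathbf{W}|}$ was used) yields $\epsilon_{X,Z|\mathbf{S}_{0}}(P_{M})\ge c\,\eta$ for a constant $c=c(T,|\mathbf{V}|,K_{Z})>0$. So the deletion of $X-Z$ was a power failure of the test against an alternative of strength $\ge c\eta$, whence the probability of a kind-II error through this triple in this case is at most $2^{|\mathbf{V}|}\sup_{P:\epsilon(P)\ge c\eta}P^{n}(\psi_{n}=0)$.

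The hard part is the uniformity of this last bound, since $c\eta$ depends on $M$ and can be arbitrarily small; this is exactly where the ``allow missing weak edges'' character of the theorem enters. I would fix a threshold $\delta_{0}$ (eventually taken $\downarrow 0$ slowly in $n$, using a quantitative power bound for the test): for models with $\eta\ge\delta_{0}$ the displayed bound already $\to 0$ uniformly, while for models with $\eta<\delta_{0}$ the chord $X-Z$ is genuinely weak — by the upper bound of the preceding lemma, $\epsilon_{X,Z|\mathbf{A}\setminus\{X\}}(P_{M})\le\eta<\delta_{0}$ for any ancestral $\mathbf{A}\supseteq Pa(Z)$ with $Y,Z\notin\mathbf{A}$ — so its deletion is not a fault and the resulting mis-mark of $\langle X,Y,Z\rangle$ is tolerated by the weakened consistency the theorem asserts, in the same spirit as the earlier remark that a missed edge is not counted as a mistake. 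Summing the $O(|\mathbf{V}|^{3})$ triple bounds, each a sum of $O(2^{|\mathbf{V}|})$ test-error bounds, and using that $|\mathbf{V}|$ is fixed, gives $\lim_{n}\sup_{M\in\phi^{k,L,C}}P^{n}_{M}(C(n,M)\text{ errs in kind II})=0$. The only input beyond the linear-Gaussian argument is the triangle-case lower bound $\epsilon_{X,Z|\mathbf{S}_{0}}\ge c\eta$, and this is precisely where the collider clause of $k$-Triangle-Faithfulness together with the $NZ(T)$ estimate of the preceding lemma do the work.
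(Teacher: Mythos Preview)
There is a genuine gap in your shielded-collider case. When $\eta<\delta_{0}$ you claim the mis-mark of $\langle X,Y,Z\rangle$ is ``tolerated by the weakened consistency the theorem asserts, in the same spirit as the earlier remark that a missed edge is not counted as a mistake.'' This is not so: a kind-II error is defined purely structurally --- a marked non-collider of $C(n,M)$ that is not a non-collider of $G_{M}$ --- with no exemption for weak chords. The earlier remark says missed \emph{adjacencies} are not among the three kinds of error; a mis-marked non-collider is exactly a kind-II error, and the lemma as stated requires its probability to vanish uniformly. Letting $\delta_{0}\downarrow 0$ with $n$ does not help, since you still give no bound on the error probability for the models with $\eta<\delta_{0}(n)$.

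The piece you are missing is that the very bound you already wrote down, $\epsilon_{X,Z|\mathbf{A}\setminus\{X\}}\le\eta$ from the upper half of the preceding lemma, should be used against the \emph{non-collider marking} itself, not to excuse the edge deletion. The non-collider branch of VCSGS fires only if the test reports dependence for \emph{every} $\mathbf{S}\subset\mathbf{V}\setminus\{X,Y,Z\}$, in particular for $\mathbf{S}=\mathbf{A}\setminus\{X\}$ (which automatically excludes $Y$, since an ancestral $\mathbf{A}\supseteq Pa(Z)$ with $Z\notin\mathbf{A}$ cannot contain the child $Y$ of $Z$). The paper plays the two required events off against each other, by contradiction on a sequence $M(n)$: the test accepting independence on $\mathbf{S}_{0}\ni Y$ with probability $>\lambda$ forces, via the power clause of uniform consistency, $\epsilon_{X,Z|\mathbf{S}_{0}}<\delta_{n}\to 0$; your $k$-Triangle-Faithfulness/NZ(T) lower bound then yields $\eta<\delta_{n}/(T^{|\mathbf{S}_{0}|+1}K_{Z})$; the upper half of the lemma gives $\epsilon_{X,Z|\mathbf{A}\setminus\{X\}}\le\eta\to 0$; and this is incompatible with the test reporting dependence on $\mathbf{A}\setminus\{X\}$ with probability bounded below by the same $\lambda>0$. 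So the two events needed for the kind-II mis-mark cannot jointly occur with probability bounded away from zero --- no threshold-splitting is needed, and no appeal to ``toleration'' is available.
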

\begin{proof}

For any $M\in \phi^{k,L,C}$, if it errs in kind II then it contains a marked non-collider $\langle X,Z,Y\rangle$ that is not in $G_M$.  Since it's been proved  (Spirtes and Zhang, 2014):
\begin{center}
    
    $\underset{n\rightarrow\infty}{lim}\underset{M\in \phi^{k,L,C}}{sup}P^n_M(C(n,M) \textit{ errs in kind I})=0$\end{center}

the errors of kind II can be one of the two cases:

$(I)$ $\langle X,Z,Y\rangle$ is an unshielded collider in $G_M$;

$(II)$ $\langle X,Z,Y\rangle$ is a shielded collider in $G_M$;

the proof for case (I) is the same as the proof for the $C(n,M)$ errs in kind I \cite{Spirtes_2014}, so we are going to prove here that the probability of case (II) uniformly converges to zero as sample size increases.

We are going to prove this by contradiction.  Suppose that the probability that $VCSGS$ making a mistake of kind II does not uniformly converge to zero.  Then there exists $\lambda >0$, such that for every sample size $n$, there is a model $M(n)$ such that the probability of $C(n,M(n))$ containing an unshielded non-collider that is  a shielded collider in $M(n)$ is greater than $\lambda$. Let that triangle be $\langle X^{M(n)}, Z^{M(n)}, Y^{M(n)}\rangle$ with $X^{M(n)}$ being the parent of $Y^{M(n)}$ in $M(n)$.

The algorithm will identify the triple as an unshielded non-collider only if:

$(i)$ there is a set $\mathbf{U}^{M(n)}\subset \mathbf{V}^{M(n)}$ containing $Z^{M(n)}$, such that the test of $\epsilon_{X^{M(n)},Y^{M(n)}|\mathbf{U}^{M(n)}} =0$ returns 0, call this test $\psi_{n0}$; 

$(ii)$  there is an ancestral set $\mathbf{W}^{M(n)}$ that contains $X^{M(n)}$ and $Y^{M(n)}$ but not $Z^{M(n)}$, such that for set $\mathbf{A}^{M(n)} = \mathbf{W}^{M(n)}\setminus \{X^{M(n)}, Y^{M(n)}\}$, the test $\epsilon_{X^{M(n)},Y^{M(n)}|\mathbf{A}^{M(n)} }=0$ returns 1; call this test $\psi_{n1}$.

If what we want to prove does not hold for the algorithm,  for all $n$ there is a model $M(n)$:\newline
(1)$P^n_{{M(n)}}(\psi_{n0}=0)>\lambda$\newline
(2)$P^n_{{M(n)}}(\psi_{n1}=1)>\lambda$\newline

(1) tells us that there is some $\delta_n$ such that $|\epsilon_{X^{M(n)},Y^{M(n)}|\mathbf{U}^{M(n)}} |<\delta_n$ and $\delta_n\rightarrow 0$ as $n\rightarrow \infty$ since the test is uniformly consistent.  So we have:
\begin{flalign}
\delta_n&>\epsilon_{X^{M(n)},Y^{M(n)}|\mathbf{U}^{M(n)}  }\\ \textcolor{white}{abc}
        &=\mathbb{E}_{\mathbf{U}^{M(n)}}\sim p_{\mathbf{U}^{M(n)}}[\int_{X^{M(n)}} p_{x^{M(n)}|\mathbf{U}^{M(n)}}||p_{Y^{M(n)}|x^{M(n)},\mathbf{U}^{M(n)}}-p_{Y^{M(n)}|\mathbf{U}^{M(n)}}||_1dx^{M(n)}]\\ \textcolor{white}{abc}
        &\geq \mathbb{E}_{\mathbf{U}^{M(n)}} \sim p_{\mathbf{U}^{M(n)}}\max_{x_3^{M(n)},x_4^{M(n)}\in [X^{M(n)}]} T||p_{Y^{M(n)}|x_3^{M(n)},\mathbf{U}^{M(n)}}-p_{Y^{M(n)}|x_4^{M(n)},\mathbf{U}^{M(n)}}||_1\\ \textcolor{white}{abc}
        &\geq  T^{|\mathbf{U}|+1}K_{Y^{M(n)}} e_M(X^{M(n)}\rightarrow Y^{M(n)})\newline 
\end{flalign}
The last step is by $k$-Triangle-Faithfulness and NZ(T).

So  $e_M(X^{M(n)}\rightarrow Y^{M(n)}) <\dfrac{\delta_n}{T^{|\mathbf{U}|+1}K_{Y^{M(n)}}}$.

By Lemma 2, $\epsilon_{X^{M(n)},Y^{M(n)}|\mathbf{A}^{M(n)}}<e_M(X^{M(n)}\rightarrow Y^{M(n)})$.  

Therefore, $\epsilon_{X^{M(n)},Y^{M(n)}|\mathbf{A}^{M(n)}}<\dfrac{\delta_n}{T^{|\mathbf{U}|+1}K_{Y^{M(n)}}}\rightarrow0$ as $n\rightarrow \infty$, which violates the condition $(ii)$, which says that the uniformly consistency test will reject that $\epsilon_{X^{M(n)},Y^{M(n)}|\mathbf{A}^{M(n)} }=0$. Contradiction.
\end{proof}
\begin{theorem}
Given causal sufficiency of the measured variables $\mathbf{V}$, the Causal Markov, $k$-Triangle-Faithfulness, TV smoothness(L) and NZ(T) assumptions:
\begin{center}
    
    $\underset{n\rightarrow\infty}{lim}\underset{M\in \phi^{k,L,C}}{sup}P^n_M(C(n,M) \textit{ errs })=0$\end{center}

\end{theorem}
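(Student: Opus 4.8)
The plan is to derive the theorem from the three already-controlled error events by a finite union bound. By the way the three error kinds are defined, each presupposes the absence of the lower-numbered ones, so the event ``$C(n,M)$ errs'' is the disjoint union of ``errs in kind I'', ``errs in kind II'' and ``errs in kind III''; in particular, for every $M$,
\[
P^n_M(C(n,M)\text{ errs})\;\le\;\sum_{i\in\{\mathrm{I},\mathrm{II},\mathrm{III}\}} P^n_M\bigl(C(n,M)\text{ errs in kind }i\bigr).
\]
Taking $\sup_{M\in\phi^{k,L,C}}$ on both sides and then $\lim_{n\to\infty}$, it suffices to show that each of the three suprema tends to $0$. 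The kind II term is precisely the content of the preceding lemma (the one asserting $\lim_n\sup_M P^n_M(C(n,M)\text{ errs in kind II})=0$), which already invokes $k$-Triangle-Faithfulness, NZ($T$), and the $\epsilon_{X,Y|\mathbf{A}}$-bounding lemma. So the remaining work is to push the kind I and kind III arguments of Spirtes and Zhang (2014) through the nonparametric setting, the only structural change being that the uniformly consistent nonparametric conditional independence test (available under TV Smoothness($L$), e.g.\ Neykov et al.\ 2020) replaces the partial-correlation test, and that the $L_1$-based dependence functional $\epsilon$ replaces partial correlation.

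For kind I: if $X$ and $Y$ are non-adjacent in the DAG $G_M$, then by acyclicity at least one of them is a non-descendant of the other, so $X$ and $Y$ are d-separated by the parent set of one of them; the Causal Markov assumption upgrades this d-separation to the exact statement $\epsilon^{P_M}_{X,Y\mid \mathbf{S}^\star}=0$ for that separating set $\mathbf{S}^\star$. Since $VCSGS$ keeps the edge $X-Y$ only if the independence test rejects for \emph{every} conditioning set, in particular for $\mathbf{S}^\star$, we bound $P^n_M(X-Y\in C(n,M))$ by the probability of a Type I error of a uniformly consistent test, which the uniform Type I guarantee forces below a vanishing bound uniformly over $\mathcal{P}_{[0,1],TV(L)}$ --- a class that contains every conditional distribution arising from a model in $\phi^{k,L,C}$ by the TV Smoothness($L$) assumption. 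As $\mathbf{V}$ is fixed and finite there are only finitely many pairs $\{X,Y\}$ and finitely many candidate conditioning sets, so a finite union bound gives $\sup_M P^n_M(C(n,M)\text{ errs in kind I})\to 0$.

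For kind III: on the event that all adjacencies and all marked non-colliders of $C(n,M)$ are correct, any spurious directed edge must be introduced by one of the deterministic orientation steps of $VCSGS$ (the unshielded-collider rule, the Meek-style propagation rules, and the ambiguous-triple step), and these rules are sound relative to a partially oriented graph whose adjacency and (non-)collider information is a subgraph of the truth; moreover the ambiguous-triple step only ever removes edges, so it creates no new error. Hence a kind III error forces a wrong collider/non-collider determination, whose probability is again controlled by the uniform consistency of the conditional independence test exactly as in kind I, and a finite union bound over triples yields $\sup_M P^n_M(C(n,M)\text{ errs in kind III})\to 0$. Combining the three vanishing suprema proves the theorem. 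I expect the genuinely new difficulty to be entirely inside the kind II lemma --- translating a near-zero $\epsilon_{X,Y\mid\mathbf{U}}$ into a small edge strength via $k$-Triangle-Faithfulness and then back into a small $\epsilon_{X,Y\mid\mathbf{A}}$ via the $\epsilon$-bounding lemma and NZ($T$) --- while the main obstacle in \emph{this} theorem is the bookkeeping check that the ``same as Spirtes and Zhang (2014)'' claims for kinds I and III really do transfer with $\epsilon$ in place of partial correlation: specifically, that every non-edge of a DAG is certified by an exactly-zero $\epsilon$ and that the orientation rules remain sound, which should be routine but is the step requiring the most care to write in full.
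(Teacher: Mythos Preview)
Your proposal is correct and takes essentially the same approach as the paper: decompose the error event into the three kinds and invoke that each supremum vanishes. The paper's own proof is in fact a single sentence (``Since we have proved that the probability for $C(n,M)$ to make any of the three kinds of mistakes uniformly converges to 0, the theorem directly follows''), deferring the kind I and kind III arguments entirely to Spirtes and Zhang (2014); your sketches of those two cases are thus more detailed than what the paper itself provides, but fully consistent with its line of reasoning.
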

\begin{proof}
Since we have proved that the probability for $C(n,M)$ to make any of the three kinds of mistakes uniformly converges to 0, the theorem directly follows.
\end{proof}

\subsection{Uniform consistency in the inference of causal effects}
\subsubsection{Edge Estimation Algorithm}
In this section we are going to prove the uniform consistency in the causal effect inference.  Notice that since this proof is built based on the uniform consistency of the inference of the causal structure, which has been proved to be satisfied for any version of  
\textbf{\textit{k-}Triangle-Faithfulness Assumption}, we are not going to distinguish versions of \textbf{\textit{k-}Triangle-Faithfulness Assumption} in this section.\\ \textcolor{white}{abc}
\begin{algorithm}
\caption{Edge Estimation Algorithm}\label{alg:eea}
\KwIn{Data from the set of variables $\mathbf{V}$; Graph $\mathcal{H}$ from $VCSGS$;}
\KwOut{$\hat P(Y|Pa(Y))$ where $Pa(Y)$ are the set of parents of $Y$ according to $\mathcal{H}$;}
\For{$Y\in\mathbf{V}$}{\eIf{$\forall X-Y $ in $\mathcal{H}$, $X-Y$ is oriented}{$\hat p(y_i|{Pa(Y)=pa(Y)})\gets$\footnote{we denote the density of $p_{Y|Pa(Y)=pa(Y)}$ at $Y=y$ as $p(y|{Pa(Y)=pa(Y)})$ in this section to match with the result of estimation.}the histogram estimation for  $y_i\in[Y]$ and ${pa(Y)\in [Pa(Y)]}$}{Mark $\hat P(Y|Pa(Y))$ as \textit{Unknown}}}
\For{estimated $\hat P(Y|Pa(Y))$}{\If{$\hat P(Y|Pa(Y))$ violates TV smoothness(L)}{Mark $\hat P(Y|Pa(Y))$ as \textit{Unknown}}}
\textbf{Return} $\hat P(Y|Pa(Y))$ where $Pa(Y)$ are the set of parents of $Y$ according to $\mathcal{H}$
\end{algorithm}
\subsubsection{Defining the distance between $M_1$ and $M_2$}

The method for estimation for $p(y|{Pa(Y)=pa(Y)})$ is:
we first get $\hat p(Y=y,{Pa(Y)=pa(Y)})$ and  $\hat p({Pa(Y)=pa(Y)})$ by histogram, then we get:

\begin{center}
     $\hat p(y|{Pa(Y)=pa(Y)})=\dfrac{\hat p(Y=y,{Pa(Y)=pa(Y)})}{\hat p({Pa(Y)=pa(Y)})}$
\end{center}

Let $M_1$ be the output of the Edge Estimation Algorithm\ref{alg:eea}, and $M_2$ be a causal model, we define the \textit{conditional probability distance}, $d[M_1,M_2]$, between $M_1$ and $M_2$ to be:
\begin{center}
$d[M_1,M_2]=\underset{\substack{{Y\in\mathbf{V}},\\ \textcolor{white}{abc}{y_i\in[Y]},\\ \textcolor{white}{abc}{pa_{M_1}(Y)}\\ \textcolor{white}{abc} {\in [Pa_{M_1}(Y)]},\\ \textcolor{white}{abc}{pa_{M_2}(Y)}\\ \textcolor{white}{abc}{\in [Pa_{M_2}(Y)]},\\ \textcolor{white}{abc}{pa_{M_1}\subset pa_{M_2}}}}{max}|\hat p_{M_1}(y_i|pa_{M_1}(Y))-p_{M_2}(y_i|pa_{M_2}(Y))|$
\end{center}where $Pa_{M}(Y)$ denotes the parent set of $Y$ in causal model $M$. By convention $|\hat P_{M_1}(y_i|pa_{M_1}(Y))-P_{M_2}(y_i|pa_{M_2}(Y))|=0$ if $\hat P_{M_1}(y_i|pa_{M_1}(Y))$ is ``Unknown".\\ \textcolor{white}{abc}
Now we want to show, the edge estimation algorithm is uniformly consistent in the sense that for every $\delta >0$,
\begin{center}
    $\underset{n\rightarrow\infty}{lim}$ $\underset{M\in \phi^{k,L,C}} {sup}$ $P^n_M(d[\hat O(M),M]>\delta)=0$
\end{center}
Here $M$ is any causal model satisfying causal sufficiency of the measured variables $\mathbf{V}$, the Causal Markov, $k$-Triangle-Faithfulness, TV smoothness(L) and NZ(T) assumptions and $\hat O(M)$ is the output of the algorithm given an iid sample from $P_M$.

\begin{proof}
Let $\mathcal{O}$ be the set of possible graphs of $VCSGS$.
Since given $\mathbf{V}$, there are only finitely many outputs in $\mathcal{O}$, it suffices to prove that for each output $O\in \mathcal{O}$,
\begin{center}
    
    $\underset{n\rightarrow\infty}{lim}\underset{M\in \phi^{k,L,C}}{sup}P^n_M(d[\hat O(M),M]>\delta|C(n,M)=O)P_M^n(C(n,M)=O)=0$
 
\end{center}
Now partition all the $M$ into three sets given $O$ :\newline
\begin{itemize}
    \item $\Psi_1=\{M|$ all adjacencies, non-adjacencies and orientations in O are true in$ M\}$;
    \item $\Psi_2=\{M|$  only some adjacencies, or orientations in O are not true in $M\}$;
    \item $\Psi_3=\{M|$ only some non-adjacencies in O are not true in $M\}$.
\end{itemize}
It suffices to show that for each $\Psi_i$,
\begin{center}
    $\underset{n\rightarrow\infty}{lim}$ $\underset{M\in \Psi_i} {sup}$ $P^n_M(d[\hat O(M),M]>\delta|C(n,M)=O)P_M^n(C(n,M)=O)=0$
\end{center}
$\Psi_1$:

For any $M\in \Psi_1$, if the conditional probabilities of a vertex $Y$  in $O$ can be estimated (so not ``Unknown"), it means that $Pa_O(Y)=Pa_M(Y)$.  Recall that the histogram estimator is close to the true density with high probability:
\begin{center}
for any $\lambda<1$, $sup_{P\in \mathcal{P}_{TV(L)}}P^n(||\hat p_h(x)-p(x)||_{\infty}\leq f(n,\lambda)\leq O((\dfrac{\log n}{n})^{\frac{1}{2+d}}))\geq 1-\lambda$
\end{center}
where $f(n,\lambda)$ is continuous and monotonically decreasing with respect to $n$ and $\epsilon$ and $h\propto n^{2/ (2+d)}$ (the number of bins) where $d$ is the dimentionality of the $x$.  For instance, $d=|Pa(Y)|+1$ when estimating $P(Y,Pa(Y))$.

Given a $\delta>0$, $f(n,\epsilon)=\delta$ entails that $sup_{P\in \mathcal{P}_{[0,1],TV(L)}}P^n(||\hat p_h(x)-p(x)||_{\infty}>\delta)<\lambda$. 

By monotonicity of $f(n,\lambda)$, when $n>n_f $ s.t. $f(n_f,\lambda)=\delta$, $sup_{P\in \mathcal{P}_{[0,1],TV(L)}}P^n(||\hat p_h(x)-p(x)||_{\infty}>\delta)<\lambda$.

Therefore the histogram estimators of $p_M(y,Pa_M(Y)=pa_M(Y))$  and $p_M(Pa_M(Y)=pa_M(Y))$ are uniformly consistent.  Next we are going to use the lemma below: 
\begin{lemma}
 If $\hat p(Y=y,{Pa(Y)=pa(Y)})$and $\hat p({Pa(Y)=pa(Y)})$ are uniformly consistent estimators of  $ p(Y=y,{Pa(Y)=pa(Y)})$and  $ p({Pa(Y)=pa(Y)})$,
then
\begin{center}
    $\hat p(y|{Pa(Y)=pa(Y)})=\dfrac{\hat p(Y=y,{Pa(Y)=pa(Y)})}{\hat p({Pa(Y)=pa(Y)})}$
\end{center}is a uniformly consistent estimator for $p(y|{Pa(Y)=pa(Y)})$.

\end{lemma}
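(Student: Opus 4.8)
The plan is to reduce the statement to the elementary fact that a quotient of two uniformly consistent estimators is uniformly consistent, provided the denominator is bounded away from zero uniformly and the quotient itself is bounded above uniformly. The point is that NZ(T) and TV smoothness(L) supply exactly these two facts, uniformly over the whole family $\phi^{k,L,C}$ of causal models, so the single threshold chosen in the quotient argument will work for every $M$ at once.

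To lighten notation, fix $Y$ and write $a := p(Y=y,Pa(Y)=pa(Y))$, $b := p(Pa(Y)=pa(Y))$, and $\hat a,\hat b$ for the corresponding histogram estimators, so that $p(y\mid Pa(Y)=pa(Y)) = a/b$ and $\hat p(y\mid Pa(Y)=pa(Y)) = \hat a/\hat b$. I would first record two deterministic bounds. The lower bound on the denominator is $b \ge T^{|\mathbf{V}|} =: \beta > 0$, valid for every value of $pa(Y)$ and every $M\in\phi^{k,L,C}$; this is exactly the Chain-Rule consequence of NZ(T) already used in the proof of Lemma 2.1, namely $p_{\mathbf{W}} = \prod_i p_{W_i\mid W_{i+1}\cdots W_n}\ge T^{|\mathbf{W}|}\ge T^{|\mathbf{V}|}$. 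The upper bound on the quotient is $a/b = p(y\mid Pa(Y)=pa(Y)) \le 1 + L|Pa(Y)| =: C_1$, which is precisely the bound on conditional densities derived earlier in this chapter from TV smoothness(L) and the support assumption. Both $\beta$ and $C_1$ depend only on $T,L,|\mathbf{V}|$, and not on $M$ or on the arguments $(y,pa(Y))$.

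The remaining step is the routine quotient-continuity argument. On the event $E_\eta := \{\|\hat a - a\|_\infty \le \eta\}\cap\{\|\hat b - b\|_\infty \le \eta\}$ with $\eta \le \beta/2$ we have $\hat b \ge \beta - \eta \ge \beta/2$, hence
\[
\Bigl|\frac{\hat a}{\hat b}-\frac{a}{b}\Bigr|
\le \frac{|\hat a - a|}{\hat b} + \frac{a}{b}\cdot\frac{|b-\hat b|}{\hat b}
\le \frac{\eta}{\beta/2} + C_1\cdot\frac{\eta}{\beta/2}
= \frac{2(1+C_1)}{\beta}\,\eta,
\]
a bound uniform in $(y,pa(Y))$ and in $M$. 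Given $\delta>0$, set $\eta := \min\{\beta/2,\ \delta\beta/(2(1+C_1))\}$; then on $E_\eta$ we get $\|\hat p(\cdot\mid\cdot) - p(\cdot\mid\cdot)\|_\infty \le \delta$, so
\[
\{\|\hat p(\cdot\mid\cdot) - p(\cdot\mid\cdot)\|_\infty > \delta\}\ \subset\ E_\eta^{\,c}\ \subset\ \{\|\hat a - a\|_\infty > \eta\}\cup\{\|\hat b - b\|_\infty > \eta\}.
\]
Applying $\sup_{M\in\phi^{k,L,C}}P_M^n$ to both sides and invoking the assumed uniform consistency of $\hat a$ and of $\hat b$ separately, with the single threshold $\eta$ which does not depend on $M$, each term on the right tends to $0$ as $n\to\infty$; hence so does the left-hand probability, which is the claim.

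The only genuine subtlety, and the step I would be most careful with, is the uniformity of the two deterministic bounds over the entire class $\phi^{k,L,C}$ and over all argument values: it is exactly this uniformity that permits a single $\eta$ to serve for every $M$ simultaneously, and it is what NZ(T) (for the lower bound $\beta$) and TV smoothness(L) together with the compact support $[0,1]^d$ (for the upper bound $C_1$) are there to guarantee. Without the NZ(T) lower bound, the ratio estimator could behave arbitrarily badly wherever $b$ is near zero, so that hypothesis is essential rather than cosmetic; everything else is the elementary algebra displayed above.
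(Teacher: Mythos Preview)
Your proof is correct and takes essentially the same approach as the paper: both invoke NZ(T) for the lower bound on the denominator, TV smoothness(L) for the upper bound on the conditional density, and the standard quotient decomposition. The paper additionally plugs in the explicit histogram rates to obtain a rate $O((\log n/n)^{1/(3+d)})$ for the ratio, whereas your abstract $\epsilon$--$\delta$ argument matches the stated hypothesis more directly and handles the lower bound on $\hat b$ (via the event $E_\eta$) more carefully than the paper, which simply writes $1/T^d$ for $1/\hat b$ without separately justifying why the \emph{estimate} inherits the NZ(T) bound.
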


\begin{proof}
Recall that:
\begin{center}
for any $\lambda<1$, $sup_{P\in \mathcal{P}_{TV(L)}}P^n(||\hat p_h(x)-p(x)||_{\infty}\leq f(n,\lambda)= O((\dfrac{\log n}{n})^{\frac{1}{2+d}}))\geq 1-\lambda$ $(*)$
\end{center}where $f(n,\lambda)$ is continuous and monotonically decreasing wrt $n$ and $\lambda$ and $h\propto n^{2/ (2+d)}$ (the number of bins) where $d$ is the dimentionality of the $x$.  For instance, $d=|Pa(Y)|+1$ when estimating $P(Y,Pa(Y))$.\\ \textcolor{white}{abc}
For any $\delta>0$, $f(n,\lambda)=\delta$ entails that $sup_{P\in \mathcal{P}_{[0,1],TV(L)}}P^n(||\hat p_h(x)-p(x)||_{\infty}>\delta)<\lambda$. \\ \textcolor{white}{abc}
By monotonicity of $f(n,\lambda)$, when $n>n_f $ s.t. $f(n_f,\lambda)\leq\delta$, $sup_{P\in \mathcal{P}_{[0,1],TV(L)}}P^n(||\hat p_h(x)-p(x)||_{\infty}>\delta)<\lambda$.\\ \textcolor{white}{abc}
Let $d = |Pa(Y)|$, notice that for any $\lambda$, with probability at least $1-\lambda$,\footnote{Here we use $\hat p (x)$ instead of $\hat p_h (x)$ because $h$ is dependent on the dimension of $x$.}\begin{flalign}
&|\frac{ \hat p(Y=y,{Pa(Y)=pa(Y)})}{ \hat p({Pa(Y)=pa(Y)})}-p(y|{Pa(Y)=pa(Y)})|\\ \textcolor{white}{abc}
&=\frac{| \hat p(Y=y,{Pa(Y)=pa(Y)})-p(y|{Pa(Y)=pa(Y)})\hat p({Pa(Y)=pa(Y)})|}{\hat p({Pa(Y)=pa(Y)})}\\ \textcolor{white}{abc}
&\leq \dfrac{1}{T^{d}}|O((\dfrac{\log n}{n})^{\frac{1}{3+d}})+O((\dfrac{\log n}{n})^{\frac{1}{2+d}})|\\ \textcolor{white}{abc}
&= O((\dfrac{\log n}{n})^{\frac{1}{3+d}})
\end{flalign}
Step (14) is derived according to $(*)$, the fact that the estimation result can only be valid if it satisfies TV smoothness(L))\footnote{Recall that $\mathbf{V}$ denotes the set of variables in the true graph.} and $p(y|{Pa(Y)=pa(Y)})$ is upper bounded by $(1+L|Pa(Y)|) $ by TV smoothness.\\ \textcolor{white}{abc}
We have:
\begin{center}
    $\underset{P\in \mathcal{P}_{TV(L)}}{sup}P^n(|\dfrac{ \hat p(Y=y,{Pa(Y)=pa(Y)})}{ \hat p({Pa(Y)=pa(Y)})}-p(y|{Pa(Y)=pa(Y)})|\leq O((\dfrac{\log n}{n})^{\frac{1}{3+d}}))\geq 1-\lambda$
\end{center}
which leads to the conclusion that $\hat p(y|{Pa(Y)=pa(Y)})$ is a uniform consistent estimator for $p(y|{Pa(Y)=pa(Y)})$.
\end{proof}
By lemma 3.4, we conclude that the $\hat p_M(y|Pa_M(Y)=pa_M(Y))$ is a uniformly consistent estimator for
\begin{center}
$p_M(y|Pa_M(Y)=pa_M(Y))=\dfrac{p_M(y,Pa_M(Y)=pa_M(Y))}{p_M(Pa_M(Y)=pa_M(Y))}$,  
\end{center}
So:

    $\underset{n\rightarrow\infty}{lim}\underset{M\in \Psi_1} {sup}P^n_M(d[\hat O(M),M]>\delta|C(n,M)=O)P_M^n(C(n,M)=O)\newline\leq\underset{n\rightarrow\infty}{lim}\underset{M\in \Psi_1} {sup}P^n_M(d[\hat O(M),M]>\delta|C(n,M)=O)=0$\\ \textcolor{white}{abc}
$\Psi_2:$ the proof is exactly the same as for the linear Gaussian case\cite{Spirtes_2014}.\newline
$\Psi_3:$Let $O(M)$ be the population version of $\hat O(M)$.  Since the histogram estimator is uniformly consistent over $||\hat p_h-p||_\infty$ and there are finitely many parent-child combinations, for every $\lambda >0$ there is a sample size $N_1$, such that for $n>N_1$, and all $M\in \Psi_3$,
\begin{center}
     $P^n_M(d[\hat O(M),O(M)]>\delta/2|C(n,M)=O)<\lambda$
\end{center}

Since only some non-adjacencies in $O$ are not true in $M$, we know that for any vertex $Y$ that has some estimated conditional probabilities given its parents in $O$, $Pa_{O(M)}(Y)\subset Pa_M(Y)$ where $Pa_{O(M)}(Y)$ denotes the parent set of $Y$ in the $O$ when the underlying probability is $P_M$(i.e., $M$ is the true causal model).  Since $Pa_M(Y)\not\subset Pa_{O(M)}(Y)$, for any $y_i\in [Y]$ and $pa_{O(M)}(Y)\in [Pa_{O(M)}(Y)]$, $P_O(y_i|Pa_{O(M)}(Y)=pa_{O(M)}(Y))$ is a marginalization of $p_M(y_i|Pa_M(Y)=pa_M(Y))$. Therefore, the distance between $O(M)$ and $M$ is:
\begin{center}
    
    $d[O(M),M]=\underset{\substack{{Y\in\mathbf{V}},\\ \textcolor{white}{abc}{y_i\in[Y]},\\ \textcolor{white}{abc} {pa_{O(M)}(Y)\in}\\ \textcolor{white}{abc}{[Pa_{O(M)}(Y)]},\\ \textcolor{white}{abc} {pa_{M}(Y)\in}\\ \textcolor{white}{abc}{ [Pa_{M}(Y)]},\\ \textcolor{white}{abc} {pa_{O(M)}\subset pa_{M}}}}{max}| p_{O(M)}(y_i|pa_{O(M)}(Y))-p_{M}(y_i|pa_{M}(Y))|$\end{center}

Given the $Y$ corresponding to the equation above, let $Pa_M(Y)=\{A_1...A_g..A_{g+h}\}$ and $Pa_{O(M)}(Y)=\{A_1...A_g\}$.
Since $P_{O(M)}(y_i|pa_{O(M)}(Y))$ is the marginalization of all $P_M(y_i|pa_M(Y))$, we have:
\begin{flalign}
|p_{O(M)}(y_i|pa_{O(M)}(Y))-p_{M}(y_i|pa_{M}(Y))|&\leq\underset{\substack{pa_{M}(Y)_1,\\ \textcolor{white}{abc}pa_{M}(Y)_2 \\ \textcolor{white}{abc} \in
    [Pa_{M}(Y)], \\ \textcolor{white}{abc}s.t. {pa_{O(M)}\subset} \\ \textcolor{white}{abc}{ pa_{M}(Y)_1\cap pa_{M}(Y)_2}}}{max}| p_{M}(y_i|pa_{M}(Y)_{1})-p_{M}(y_i|pa_{M}(Y)_{2})|\\ \textcolor{white}{abc}
    &<\sum_{j=1}^{j=h}e_M(A_{g+j}\rightarrow Y)
\end{flalign}

If $ \sum_{j=1}^{j=h}e_M(A_{g+j}\rightarrow Y)< \delta/2$, then we have:
\begin{center}
    $d[O(M),M]<\delta/2$
\end{center}
For all such $M$, there is a $N_1$, such that for any $n>N_1$:
\begin{flalign}
    P^n_M(d[\hat O(M),M]>\delta|C(n,M)=O)&\leq P^n_M(d[\hat O(M),O(M)]+d[ O(M),M]>\delta|C(n,M)=O)\\ \textcolor{white}{abc}
    &\leq P^n_M(d[\hat O(M),O(M)]>\delta/2|C(n,M)=O)<\epsilon
\end{flalign}

If $\sum_{j=1}^{j=h}e_M(A_{g+j}\rightarrow Y)\geq \delta/2$, then there is at least an $w\in \{1,2...h\}$, $s,t.$ $e_M(A_{g+w}\rightarrow Y)>\dfrac{\delta}{2h}$. By Lemma 3.1:
\begin{center}
    $\epsilon_{A_{g+w},Y|\mathbf{U} }\geq T^{|\mathbf{U}|+1}e_M(A_{g+w}\rightarrow Y)>T^{|\mathbf{U}|+1}\dfrac{\delta}{2h}$. 
\end{center}
where the $\mathbf{U}\cup \{A_{g+w},Y\}$ is some ancestral set not containing any descendant of $Y$. 

Since the density estimation does not turn ``unknown", we know that in step 5 of $VCSGS$ the test of $\epsilon_{A_{g+w},Y|U}=0$ returns 0 while $\epsilon_{A_{g+w},Y|U}\geq T^{|\mathbf{U}|+1}\dfrac{\delta}{2h}$.  Since the test is uniformly consistent, it follows that there is a sample size $N_2$ such that
\begin{center}
$P^{N_2}_M(\epsilon_{A_{g+w},Y|U}=0)<\lambda$
\end{center}
 for any $n>N_2$, and therefore for all such M,
\begin{center}
    $P^n_M(d[\hat O(M),M]>\delta|C(n,M)=O)<\lambda$
\end{center}
Let $N = max(N_1, N_2)$, for $n>N$,
 \begin{flalign}
 &\underset{n\rightarrow\infty}{lim}\underset{M\in \Psi_3} {sup}P^n_M(d[\hat O(M),M]>\delta|C(n,M)=O)P_M^n(C(n,M)=O)\\ \textcolor{white}{abc}
 &\leq \underset{n\rightarrow\infty}{lim}\underset{M\in \Psi_3} {sup} P^n_M(d[\hat O(M),M]>\delta|C(n,M)=O)=0 
 \end{flalign}
\end{proof}
\section{Upper and Lower Bound of the probability of violation of \textit{k-}Triangle-Faithfulness }
In this section we investigate more mathematical properties of the \textbf{\textit{k-}Triangle-Faithfulness Assumption}.  Specifically, we want to assess the relative weakness of \textbf{\textit{k-}Triangle-Faithfulness Assumption} compared to the \textbf{Strong Causal Faithfulness Assumption}.  
In order to assess how weak the \textbf{\textit{k-}Triangle-Faithfulness Assumption} is compared to the \textbf{Strong Causal Faithfulness Assumption}, which bounds the absolute value of any nonzero partial correlation in the linear Gaussian case, we analyze the \textbf{\textit{k-}Triangle-Faithfulness Assumption} in the linear Gaussian case.  Uhler et al. have provided the upper bounds of the probability of violation of \textbf{Strong Faithfulness Assumption} and lower bounds in certain types of graphs\cite{Uhler_2013}.  Similarly, we are going to bound the probability of the violation of \textbf{\textit{k-}Triangle-Faithfulness Assumption} and compare them with the strong faithfulness.

\subsection{Upper bound of the probability of violation of Strong Faithfulness}

We first revisit what the \textbf{Strong Faithfulness Assumption} is and the proof of the upper bound of the probability of its being violated:\\ 
\textbf{Strong Faithfulness Assumption:} Given $\lambda \in (0,1)$, a multivariate Gaussian distribution $\mathbb{P}$ is said to be \textit{$\lambda$-strong-faithful} to a DAG $G=\langle \mathbf{V,E}\rangle$ if for any $i,j\in V$ and for any $\mathbf{S}\in V\setminus \{i,j\}$:
\begin{center}
    $min\{|\rho(X_i, X_j|\mathbf{X_S})|$ $j$ is not $d$-separated from $i|\mathbf{S}$, $\forall \mathbf{S}, i, j\} > \lambda$
\end{center}
Adopting the aforementioned linear SEM 
\begin{center}
    $\mathbf{(I-A)}^TX=\epsilon$ 
\end{center}
and assuming that the strength of edges in a DAG $G$ uniformly distributes between $[-1, 1]$, we have $(a_{ij})_{(i,j)\in\mathbf{E}}\in [-1, 1]^{|\mathbf{E}|}$ (without standardizing).  We first denote the set of parameters that leads to correlations with absolute value smaller than $\lambda$: 
\begin{center}
    $\mathcal{P}^\lambda_{ij|\mathbf{S}} := \{(a_{u,v})\in [-1, 1]^{|\mathbf{E}|}||cov(X_i, X_j|X_{\mathbf{S}})|\leq\lambda\sqrt{var(X_i|X_{\mathbf{S}})var(X_j|X_{\mathbf{S})}}\}$
\end{center} 

Then we denote the distribution over that DAG that is $\lambda$-strong-unfaithful as:

\begin{center}
    $\mathcal{M}_{G,\lambda} := \underset{ \substack{
 i,j\in \mathbf{V}, \mathbf{S}\in V\setminus \{i,j\}\\ \textcolor{white}{abc} \text{$j$ is not $d$-separated from $i|\mathbf{S}$}}}{\bigcup}\mathcal{P}^\lambda_{ij|\mathbf{S}}$
\end{center}

The upper bound of the probability of the violation of \textbf{Strong Faithfulness Assumption} is equivalent to the getting an upper bound of $\dfrac{vol(\mathcal{M}_{G,\lambda})}{2^{|\mathbf{E}|}}$, which uses Crofton's formula to get an upper bound on the surface area of a real algebraic hypersurface defined by a degree $d$ polynomial and Lojasiewicz inequality to get an upper bound for the distance between a point in a compact area and the nearest zero of the given analytic function:\\
CROFTON'S FORMULA. \textit{The volume of a degree $d$ real algebraic hypersurface in the unit m-ball is bounded above by $C(m)d$, where $C(m)$ satisfies }
\begin{center}
    ${m+d \choose d}-1 \leq C(m)d^m$
\end{center}
LOJASIEWICZ INEQUALITY. \textit{Let $f:\mathbb{R}^p \rightarrow \mathbb{R}$ be a real-analytic function and $K\subset\mathbb{R}^p$ compact.  Let $V_f\subset \mathbb{R}^p$ denote the real zero locus of f, which is assumed to be nonempty.  Then there exist positive constants c,q such that for all $x\in K$:}
\begin{center}
    $dist(x,V_f)\leq c|f(x)|^q$
\end{center}
Now we revisit the theorem for the upper bound of $\dfrac{vol(\mathcal{M}_{G,\lambda})}{2^{|\mathbf{E}|}}$  and the proof of it from Ulher et. al \cite{Uhler_2013}.

\begin{theorem}\label{thm:upperboundsf}
Let $G = \langle \mathbf{V, E} \rangle$ be a DAG on p nodes. Then\newline
\begin{center}
    $\dfrac{vol(\mathcal{M}_{G,\lambda})}{2^{|\mathbf{E}|}}\leq \dfrac{C(|\mathbf{E}|)c\kappa^q\lambda^q}{2^{|\mathbf{E}|/2}}\underset{i,j\in\mathbf{V}}{\sum}\underset{\mathbf{ S\subset V}\setminus\{i,j\}}{\sum}deg(cov(X_i,X_j|X_\mathbf{S}))$
\end{center}
where $C(|\mathbf{E}|)$ is a positive constant coming from Crofton's formula, c, q are positive constants, depending on the polynomials characterizing exact unfaithfulness, and $\kappa$ denotes the maximal partial variance over all possible parameter values $(a_{st}) \in [-1, 1]^{|\mathbf{E}|}$, that is,
\begin{center}
    $\kappa = \underset{i,j\in \mathbf{V}, \mathbf{S}\in V\setminus \{i,j\}}{max}\underset{(a_{st}) \in [-1, 1]^{|\mathbf{E}|}}{max} var(X_i|X_{\mathbf{S}})$
\end{center}
\end{theorem}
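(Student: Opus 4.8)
The plan is to reproduce the integral-geometric argument of Uhler et al.: bound the strong-unfaithful region by a union of thin tubes around exact-unfaithfulness hypersurfaces, control each tube radius by the Łojasiewicz inequality, and control the corresponding surface areas by Crofton's formula together with Bézout's theorem. Concretely, by definition $\mathcal{M}_{G,\lambda} = \bigcup \mathcal{P}^{\lambda}_{ij|\mathbf{S}}$, the union over all triples $(i,j,\mathbf{S})$ with $j$ not d-separated from $i$ given $\mathbf{S}$, so subadditivity of volume gives $vol(\mathcal{M}_{G,\lambda}) \le \sum_{i,j}\sum_{\mathbf{S}} vol(\mathcal{P}^{\lambda}_{ij|\mathbf{S}})$. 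Dividing by $2^{|\mathbf{E}|} = vol([-1,1]^{|\mathbf{E}|})$, it suffices to prove the per-triple estimate $vol(\mathcal{P}^{\lambda}_{ij|\mathbf{S}}) \le C(|\mathbf{E}|)\,c\,\kappa^{q}\lambda^{q}\,2^{|\mathbf{E}|/2}\,\deg(cov(X_i,X_j|X_{\mathbf{S}}))$.

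Next I would show each $\mathcal{P}^{\lambda}_{ij|\mathbf{S}}$ sits in a thin tube. Fix a triple. Writing $\mathbf{a} = (a_{st})\in[-1,1]^{|\mathbf{E}|}$ for the parameter vector, the covariance $\Sigma = \Sigma(\mathbf{a})$ depends rationally on $\mathbf{a}$ and stays positive definite (since $\mathbf{I}-\mathbf{A}$ is unitriangular, hence invertible), so the Schur-complement expression $cov(X_i,X_j|X_{\mathbf{S}}) = \Sigma_{ij} - \Sigma_{i\mathbf{S}}\Sigma_{\mathbf{S}\mathbf{S}}^{-1}\Sigma_{\mathbf{S}j}$ is a ratio $g/h$ with $h = \det\Sigma_{\mathbf{S}\mathbf{S}}$ bounded away from $0$ and from above on the compact parameter cube, and with $g$ a polynomial satisfying $V_g = \{cov(X_i,X_j|X_{\mathbf{S}}) = 0\}$ and $\deg g = \deg(cov(X_i,X_j|X_{\mathbf{S}}))$. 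On $\mathcal{P}^{\lambda}_{ij|\mathbf{S}}$ one has $|cov(X_i,X_j|X_{\mathbf{S}})| \le \lambda\sqrt{var(X_i|X_{\mathbf{S}})\,var(X_j|X_{\mathbf{S}})} \le \lambda\kappa$, hence $|g| \le \lambda\kappa\,\max h$. The hypersurface $V_g$ is nonempty — this is exactly where the hypothesis that $i$ is not d-separated from $j$ given $\mathbf{S}$ enters, since that is precisely the regime in which an exact cancellation $cov(X_i,X_j|X_{\mathbf{S}})=0$ is attainable (and if $V_g\cap K$ were empty the estimate would be trivial for small $\lambda$). So the Łojasiewicz inequality applies to $g$ on $K = [-1,1]^{|\mathbf{E}|}$, yielding $c,q>0$ (depending only on $g$, i.e.\ on the polynomials characterizing exact unfaithfulness) with $dist(\mathbf{a},V_g) \le c\,|g(\mathbf{a})|^{q}$; after absorbing $(\max h)^{q}$ into $c$ we get $\mathcal{P}^{\lambda}_{ij|\mathbf{S}} \subseteq T_{r}(V_g)$, the tube of radius $r = c\,\kappa^{q}\lambda^{q}$.

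Then I would estimate the tube volume: $vol(T_r(V_g)\cap[-1,1]^{|\mathbf{E}|})$ is at most $2r$ times the $(|\mathbf{E}|-1)$-dimensional area of $V_g$ inside the $r$-enlarged cube, which for small $r$ is essentially $vol_{|\mathbf{E}|-1}(V_g\cap[-1,1]^{|\mathbf{E}|})$. Since by Bézout a generic line meets $V_g$ in at most $\deg g$ points, Crofton's integral-geometric formula — applied after the rescaling of the cube needed to put it in the form quoted in the statement, which is what produces the factor $C(|\mathbf{E}|)$ and the $2^{|\mathbf{E}|/2}$ bookkeeping — gives $vol_{|\mathbf{E}|-1}(V_g\cap[-1,1]^{|\mathbf{E}|}) \lesssim C(|\mathbf{E}|)\,2^{|\mathbf{E}|/2}\,\deg g$. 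Multiplying by $2r = 2c\kappa^{q}\lambda^{q}$ (folding the $2$ into $c$) yields the per-triple bound, and summing over triples and dividing by $2^{|\mathbf{E}|}$ finishes.

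The delicate step is the tube-volume estimate, i.e.\ bounding $vol(T_r(V_g))$ by (radius)$\times$(surface area): Weyl's tube formula is clean only for \emph{smooth} submanifolds and small $r$, whereas $V_g$ is a real algebraic hypersurface that may be singular and may or may not meet $\partial[-1,1]^{|\mathbf{E}|}$ transversally, so one needs a semialgebraic / integral-geometric argument in which the singular locus (of lower dimension) contributes negligibly and the cube boundary is handled by the $r$-enlargement. A secondary, bookkeeping obstacle is reducing the triple-dependent Łojasiewicz constants to the single pair $(c,q)$ in the statement — harmless because there are only finitely many triples for fixed $G$, but it must be done with the correct monotonicity in $q$ (the smallest exponent is the binding case once $\lambda\kappa$ is small).
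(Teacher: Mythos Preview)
Your proposal is correct and follows essentially the same route as the paper (which is reproducing Uhler et al.): bound $|cov|$ by $\lambda\kappa$, apply \L{}ojasiewicz to land in a tube of radius $c\kappa^{q}\lambda^{q}$ around the exact-unfaithfulness hypersurface, invoke Crofton on the ball of radius $\sqrt{2}$ containing $[-1,1]^{|\mathbf{E}|}$ to get the $2^{|\mathbf{E}|/2}\deg(\cdot)$ surface-area bound, then union-bound and uniformize the \L{}ojasiewicz constants by taking $c=\max c_{ij|\mathbf S}$, $q=\max q_{ij|\mathbf S}$. Your write-up is in fact more careful than the paper's about the points it glosses over (the Schur-complement denominator, nonemptiness of $V_g$, the tube-volume step for possibly singular hypersurfaces), but the architecture is identical.
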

\begin{proof}
By the definition of $\kappa$:
\begin{center}
    $vol(\mathcal{P}^\lambda_{ij|\mathbf{S}}) = vol(\{(a_{st})\in [-1, 1]^{|\mathbf{E}|}||cov(X_i, X_j|X_{\mathbf{S}})|\leq\lambda\kappa\})$
\end{center}
    By LOJASIEWICZ INEQUALITY:
\begin{center}
    $vol(\mathcal{P}^\lambda_{ij|\mathbf{S}})= vol(\{(a_{st})\in [-1,1]^{|\mathbf{E}|}|dist((a_{st}), V_{ij|\mathbf{S}})\leq c_{ij|\mathbf{S}}\lambda^{q_{ij|\mathbf{S}}}\kappa^{q_{ij|\mathbf{S}}}\})$
\end{center}
where the $V_{ij|\mathbf{S}}$ denotes the real algebraic hypersurface that corresponds to zero covariance; $c_{ij|\mathbf{S}}$, $q_{ij|\mathbf{S}}$ are positive constants from LOJASIEWICZ INEQUALITY. Applying Crofton's formula on an $|\mathbf{E}|$-dimensional ball of radius $\sqrt{2}$ to get an upper bound on the surface area of the hypersurface in the hypercube $[-1,1]^{|\mathbf{E}|}$ that vanish on $cov(X_i, X_j|X_{\mathbf{S}})$:
\begin{center}
     $vol(\mathcal{P}^\lambda_{ij|\mathbf{S}})\leq c_{ij|\mathbf{S}}\lambda^{q_{ij|\mathbf{S}}}\kappa^{q_{ij|\mathbf{S}}}2^{\mathbf{E}/2}C(\mathbf{|E|})deg(cov(X_i, X_j|X_{\mathbf{S}}))$
\end{center}
Recall that:
\begin{center}
    $\mathcal{M}_{G,\lambda} := \underset{ \substack{
 i,j\in \mathbf{V}, \mathbf{S}\in \mathbf{V}\setminus \{i,j\}\\ \textcolor{white}{abc} \text{$j$ is not $d$-separated from $i|\mathbf{S}$}}}{\bigcup}\mathcal{P}^\lambda_{ij|\mathbf{S}}$
\end{center}
By union bound, we get:
\begin{center}
    $vol(\mathcal{M}_{G,\lambda})\leq \underset{ \substack{
 i,j\in \mathbf{V}, \mathbf{S}\in \mathbf{V}\setminus \{i,j\}\\ \textcolor{white}{abc} \text{$j$ is not $d$-separated from $i|\mathbf{S}$}}}{\sum}vol(\mathcal{P}^\lambda_{ij|\mathbf{S}})$
\end{center}
Setting $c = \underset{i,j\in \mathbf{V}, \mathbf{S}\in \mathbf{V}\setminus \{i,j\}}{max}c_{ij|\mathbf{S}}$  and  $q =  \underset{i,j\in \mathbf{V}, \mathbf{S}\in \mathbf{V}\setminus \{i,j\}}{max}q_{ij|\mathbf{S}}$, the theorem is obtained.
\end{proof}

\subsection{Upper bound of the probability of violation of \textit{k-}Triangle-Faithfulness}
We now compute the upper bound of \textbf{\textit{k-}Triangle-Faithfulness}.  Similar to the work for \textbf{Strong Faithfulness}, we assume that the strength of edges in a DAG $G$ uniformly distributes between $[-1, 1]$, we have $(a_{ij})_{(i,j)\in\mathbf{E}}\in [-1, 1]^{|\mathbf{E}|}$.  Notice that by assuming that the edge coefficients uniformly distributing across $[-1, 1]^{|\mathbf{E}|}$ we are not covering the set of all correlation matrix.\footnote{The same is true for strong faithfulness.} Such assumption is made for the sake of simpler computation and more straightforward comparison with the probability of violation of \textbf{Strong Faithfulness}.\\ \textcolor{white}{abc} 
We first denote the set of parameters that leads to correlations with absolute value smaller than $|Ke(X_i\rightarrow X_j)|$: 
\begin{center}
    $\mathcal{P}^K_{ij|\mathbf{S}} := \{(a_{u,v})\in [-1, 1]^{|\mathbf{E}|}||cov(X_i, X_j|X_{\mathbf{S}})|\leq K|e(X_i\rightarrow X_j)|\sqrt{var(X_i|X_{\mathbf{S}})var(X_j|X_{\mathbf{S})}}\}$
\end{center} Then we denote the distribution over that DAG that is $k$-Triangle-unfaithful as:

\begin{center}
    $\mathcal{Q}_{G,K} := \underset{ \substack{
 i,j,h\in \mathbf{V}\\ \textcolor{white}{abc} i\rightarrow j \in \mathbf{E} \\ \textcolor{white}{abc} i,j,h \text{ form a triangle in $G$}\\ \textcolor{white}{abc} \mathbf{S}\subset \mathbf{V}\setminus \{i,j\}\\ \textcolor{white}{abc} \text{ $h\in\mathbf{S}$ if and only if $h$ is the collider}}}{\bigcup}\mathcal{P}^K_{ij|\mathbf{S}}$
\end{center}

\begin{theorem}\label{thm:ratiosktf}
Let $G = \langle \mathbf{V, E} \rangle$ be a DAG on p nodes. Then\newline
\begin{center}
    $\dfrac{vol(\mathcal{Q}_{G,K})}{vol(\mathcal{M}_{G,\lambda})}\leq\dfrac{K^q\mathbf{|V|}}{\lambda^q(q_m+1)(|\mathbf{V}|+\dfrac{1}{(2^{n_{max}}-1)})}$
\end{center}
where $n_{max} $ is the maximum neighborhood size\footnote{A variable $Y$ is a neighbor of a variable $X$ in graph $G$, there is an edge between $X$ and $Y$ in $G$; neighborhood size of $X$ is the total number of neighbors of $X$ in $G$.} of the graph $G$, q and $q_m$ are positive constants.
\end{theorem}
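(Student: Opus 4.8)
The plan is to bound the numerator $vol(\mathcal{Q}_{G,K})$ from above by imitating the proof of Theorem~\ref{thm:upperboundsf} line by line, to bound the denominator $vol(\mathcal{M}_{G,\lambda})$ from below using the complementary lower‑bound estimate of Uhler et al.\ \cite{Uhler_2013}, and then to divide. The key observation is that the two estimates share all of their ``large'' factors — the Crofton constant $C(|\mathbf{E}|)$, the normalization $2^{|\mathbf{E}|/2}$, the maximal partial variance $\kappa$, and the sums of polynomial degrees $deg(cov(X_i,X_j|X_{\mathbf{S}}))$ — so that in the quotient only the ratio of the tube widths ($K$ versus $\lambda$) and the ratio of the combinatorial counts of the relevant index sets survive; it is the latter that produce the factors $|\mathbf{V}|$, $q_m+1$, and $2^{n_{max}}-1$.

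For the numerator, fix an ordered edge $i\to j\in\mathbf{E}$, a vertex $h$ completing a triangle with $i,j$, and an admissible conditioning set $\mathbf{S}\subset\mathbf{V}\setminus\{i,j\}$ (so $h\in\mathbf{S}$ exactly when $h$ is the collider). Since all edge coefficients lie in $[-1,1]$ we have $|e(X_i\to X_j)|\le 1$ and $\sqrt{var(X_i|X_{\mathbf{S}})var(X_j|X_{\mathbf{S}})}\le\kappa$, hence
\[
\mathcal{P}^K_{ij|\mathbf{S}}\ \subseteq\ \{(a_{st})\in[-1,1]^{|\mathbf{E}|}:\ |cov(X_i,X_j|X_{\mathbf{S}})|\le K\kappa\},
\]
which is exactly the set estimated in the proof of Theorem~\ref{thm:upperboundsf} with $\lambda$ replaced by $K$; applying the Lojasiewicz inequality to the analytic function $cov(X_i,X_j|X_{\mathbf{S}})$ on the compact cube and then Crofton's formula on the ball of radius $\sqrt 2$ gives
\[
vol(\mathcal{P}^K_{ij|\mathbf{S}})\ \le\ c_{ij|\mathbf{S}}\,K^{q_{ij|\mathbf{S}}}\kappa^{q_{ij|\mathbf{S}}}\,2^{|\mathbf{E}|/2}\,C(|\mathbf{E}|)\,deg(cov(X_i,X_j|X_{\mathbf{S}})).
\]
There are at most $|\mathbf{E}|\cdot n_{max}\cdot 2^{|\mathbf{V}|-3}$ such triples $(i\to j,h,\mathbf{S})$, so summing (with $q=\max q_{ij|\mathbf{S}}$, $c=\max c_{ij|\mathbf{S}}$) bounds $vol(\mathcal{Q}_{G,K})$ by an expression of exactly the shape of the right‑hand side of Theorem~\ref{thm:upperboundsf}, but with $\lambda\mapsto K$ and with the outer double sum restricted to triangle triples and their admissible conditioning sets; in particular the degree sum and the number of such index sets are controlled in terms of $n_{max}$ and $|\mathbf{V}|$.

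For the denominator I would invoke the lower bound of Uhler et al.\ \cite{Uhler_2013} on $vol(\mathcal{M}_{G,\lambda})$: for each edge, with a conditioning set drawn from the neighborhoods, it exhibits a semialgebraic slab inside the corresponding $\mathcal{P}^\lambda_{ij|\mathbf{S}}$ of width $\propto\lambda^{q_m}$ (the exponent $q_m$ and the factor $q_m+1$ arising from integrating the fibrewise length estimate), and a Bonferroni lower bound controls the pairwise overlaps — which are $O(\lambda^{2q_m})$, hence lower order — so that the union over the admissible configurations contributes the combinatorial factor $|\mathbf{V}|+\frac{1}{2^{n_{max}}-1}$ against the same $2^{|\mathbf{E}|/2}C(|\mathbf{E}|)$ normalization. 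Dividing the numerator bound by this denominator bound, the factors $2^{|\mathbf{E}|/2}$, $C(|\mathbf{E}|)$, $\kappa^q$ and the matching degree sums cancel, the tube‑width ratio is $K^q/\lambda^q$, and the ratio of the two combinatorial counts is $|\mathbf{V}|/\big[(q_m+1)(|\mathbf{V}|+\frac{1}{2^{n_{max}}-1})\big]$, which is the asserted bound. The hard part will be the denominator: a genuine \emph{lower} bound on the volume of the unfaithful locus cannot come from Lojasiewicz–Crofton (these bound tube volumes only from above), so one must either transcribe Uhler et al.'s slab construction and verify that it is compatible with the triangle‑restricted index set, or build such slabs directly by showing the partial covariance has a derivative bounded away from degeneracy along a suitable coordinate on a set of full measure; and the bookkeeping that makes the ``large'' factors of the two estimates cancel cleanly — including reconciling the two Lojasiewicz exponents $q$ and $q_m$ — is where the remaining care is required.
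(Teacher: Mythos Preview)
Your proposal diverges from the paper in two essential places, and in one of them you miss the mechanism that actually produces the factor $1/(q_m+1)$.

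First, the numerator. You bound $|e(X_i\to X_j)|\le 1$ and thereby replace the $k$-Triangle threshold $K|e(X_i\to X_j)|$ by $K$, reducing to the Strong-Faithfulness tube estimate with $\lambda\mapsto K$. The paper does \emph{not} throw away the edge strength: after Lojasiewicz the tube radius is $c_{ij|\mathbf S}(K\kappa)^{q_{ij|\mathbf S}}|e(X_i\to X_j)|^{q_{ij|\mathbf S}}$, and because $e(X_i\to X_j)=a_{ij}$ is itself one of the coordinates of the parameter cube, the tube volume is obtained by integrating this radius over that coordinate, yielding
\[
vol(\mathcal{P}^K_{ij|\mathbf S})\ \le\ \mathcal{C}^K_{ij|\mathbf S}\int_0^1 |e|^{q_{ij|\mathbf S}}\,d|e|\ =\ \frac{\mathcal{C}^K_{ij|\mathbf S}}{q_{ij|\mathbf S}+1}.
\]
That integration is the sole source of $1/(q_m+1)$; it lives in the \emph{numerator} bound, not in any denominator estimate. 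By discarding $|e|$ you cannot recover this factor.

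Second, the denominator. The paper does not use a lower bound on $vol(\mathcal{M}_{G,\lambda})$ at all. It simply divides the upper bound just derived for $vol(\mathcal{Q}_{G,K})$ by the upper bound of Theorem~\ref{thm:upperboundsf} for $vol(\mathcal{M}_{G,\lambda})$, so the common factors $C(|\mathbf{E}|)$, $c$, $\kappa^q$, $2^{|\mathbf{E}|/2}$ cancel formally, leaving the ratio $K^q/[\lambda^q(q_m+1)]$ times a ratio of degree sums. (Whether dividing an upper bound by an upper bound legitimately bounds the ratio of volumes is a fair question, but it is what the paper does; your proposed route through Uhler et al.'s slab lower bounds would in any case fail here, since those lower bounds are only established for trees, single cycles, and bipartite graphs, not for arbitrary DAGs.)

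Finally, the combinatorial factor $|\mathbf{V}|/(|\mathbf{V}|+\frac{1}{2^{n_{max}}-1})$ is obtained not by a crude count of triples but by comparing, for each edge $i\to j$, the number of admissible $\mathbf{S}_\Delta$ (at most $2^{|\mathbf{V}|-2}-2^{|\mathbf{V}|-n_{ij}-2}$) to the total number of conditioning sets $2^{|\mathbf{V}|-2}$, together with the degree bound $deg(cov(X_i,X_j|X_{\mathbf{S}_\Delta}))\le 2(|\mathbf{V}|-|\mathbf{S}_\Delta|)$ of Lemma~\ref{lemma:deg_cov}; the ratio of the two degree sums is then bounded by $1/\bigl(1+\frac{1}{(2^{n_{max}}-1)|\mathbf{V}|}\bigr)$.
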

\begin{proof}
By the definition of $\kappa$:
\begin{center}
    $vol(\mathcal{P}^K_{ij|\mathbf{S}}) = vol(\{(a_{st})\in [-1, 1]^{|\mathbf{E}|}||cov(X_i, X_j|X_{\mathbf{S}})|\leq K|e(X_i\rightarrow X_j)|\kappa\})$
\end{center}
    By LOJASIEWICZ INEQUALITY:
\begin{center}
    $vol(\mathcal{P}^K_{ij|\mathbf{S}})= vol(\{(a_{st})\in [-1,1]^{|\mathbf{E}|}|dist((a_{st}), V_{ij|\mathbf{S}})\leq c_{ij|\mathbf{S}}K^{q_{ij|\mathbf{S}}}|e(X_i\rightarrow X_j)|^{q_{ij|\mathbf{S}}}\kappa^{q_{ij|\mathbf{S}}}\})$
\end{center}
Applying Crofton's formula on an $|\mathbf{E}|$-dimensional ball of radius $\sqrt{2}$ to get an upper bound on the surface area of the hypersurface in the hypercube $[-1,1]^{|\mathbf{E}|}$ that vanish on $cov(X_i, X_j|X_{\mathbf{S}})$:
\begin{flalign}
       vol(\mathcal{P}^K_{ij|\mathbf{S}})&\leq \mathcal{C}^K_{ij|\mathbf{S}}\int_0^1 |e(X_i\rightarrow X_j)|^{q_{ij|\mathbf{S}}}\,d|e(X_i\rightarrow X_j)|\\ \textcolor{white}{abc}
    &\leq\dfrac{\mathcal{C}^K_{ij|\mathbf{S}} }{q_{ij|\mathbf{S}}+1}
\end{flalign}
\begin{center}
    where $\mathcal{C}^K_{ij|\mathbf{S}}:=c_{ij|\mathbf{S}}K^{q_{ij|\mathbf{S}}}\kappa^{q_{ij|\mathbf{S}}}2^{\mathbf{E}/2}C(\mathbf{|E|})deg(cov(X_i, X_j|X_{\mathbf{S}}))$.
\end{center}  
Recall that:
\begin{center}
  $\mathcal{Q}_{G,K} := \underset{ \substack{
 i,j,h\in \mathbf{V}\\ \textcolor{white}{abc}  i,j,h \text{ form a triangle in $G$}\\ \textcolor{white}{abc} \mathbf{S_{\Delta}}\subset \mathbf{V}\setminus \{i,j\}}}{\bigcup}\mathcal{P}^K_{ij|\mathbf{S_{\Delta}}}$
 
where for each triangle$\langle i, j, h\rangle$, $h\in \mathbf{S_{\Delta}}$ if and only if $h$ is the collider.\end{center}
By union bound, we get:
\begin{center}
    $vol(\mathcal{Q}_{G,K})\leq \underset{ \substack{\substack{
 i,j,h\in \mathbf{V}\\ \textcolor{white}{abc}  i,j,h \text{ form a triangle in $G$}\\ \textcolor{white}{abc} \mathbf{S_{\Delta}}\in \mathbf{V}\setminus \{i,j\}}}}{\sum}vol(\mathcal{P}^K_{ij|\mathbf{S_{\Delta}}})$
\end{center}
Setting $c = \underset{i,j\in \mathbf{V}, \mathbf{S}\in \mathbf{V}\setminus \{i,j\}}{max}c_{ij|\mathbf{S}}$, $q =  \underset{i,j\in \mathbf{V}, \mathbf{S}\in \mathbf{V}\setminus \{i,j\}}{max}q_{ij|\mathbf{S}}$ and $q_m =  \underset{i,j\in \mathbf{V}, \mathbf{S}\in \mathbf{V}\setminus \{i,j\}}{min}q_{ij|\mathbf{S}}$ , we have:
\begin{center}
$\dfrac{vol(\mathcal{Q}_{G,K})}{2^{|\mathbf{E}|}}\leq \dfrac{C(|\mathbf{E}|)c\kappa^qK^q}{2^{|\mathbf{E}|/2}(q_m+1)}\underset{i\rightarrow j\in\mathbf{E}}{\sum}\underset{\mathbf{ S_{\Delta}\subset V}\setminus\{i,j\}}{\sum}deg(cov(X_i,X_j|X_{S_{\Delta}}))$
\end{center}
By \textbf{Theorem \ref{thm:upperboundsf}}, we get:
\begin{flalign}
    \dfrac{vol(\mathcal{Q}_{G,K})}{vol(\mathcal{M}_{G,\lambda})}&\leq \dfrac{K^q}{\lambda^q(q_m+1)} \dfrac{\underset{i\rightarrow j\in\mathbf{E}}{\sum}\underset{\mathbf{ S_{\Delta}\subset V}\setminus\{i,j\}}{\sum}deg(cov(X_i,X_j|X_{S_{\Delta}}))}{\underset{i,j\in\mathbf{V}}{\sum}\underset{\mathbf{ S\subset V}\setminus\{i,j\}}{\sum}deg(cov(X_i,X_j|X_S))}\\ \textcolor{white}{abc}
    &\leq \dfrac{K^q}{\lambda^q(q_m+1)} \dfrac{\underset{i\rightarrow j\in\mathbf{E}}{\sum}\underset{\mathbf{ S_{\Delta}\subset V}\setminus\{i,j\}}{\sum}deg(cov(X_i,X_j|X_{S_{\Delta}}))}{\underset{i\rightarrow j\in\mathbf{E}}{\sum}\underset{\mathbf{ S\subset V}\setminus\{i,j\}}{\sum}deg(cov(X_i,X_j|X_S))}    
\end{flalign}

We are using the lemma below to proceed. The proof of this lemma can be found in the appendix:
\begin{lemma}\label{lemma:deg_cov}
    Following the definition of $i,j$ and $\mathbf{S_\Delta}$, we have:
    \begin{center}
        $deg(cov(X_i,X_j|X_{S_{\Delta}}))\leq 2(|\mathbf{V}|-|\mathbf{S_\Delta}|)$
    \end{center}
    where $\mathbf{Q} =\{i,j\}\bigcup\mathbf{S_\Delta}$.
\end{lemma}
Notice that if $i\rightarrow j\in\mathbf{E}$ then their conditional covariance on any subset of $\mathbf{V}$ cannot have a degree of zero.  Furthermore, if $i\rightarrow j$ is in $n_{ij}$ many triangles, denoted by $\langle i, j, h_i\rangle$ for $i\in\{1,...n\}$, then there are two sets:
\begin{itemize}
    \item $\mathbf{H^0_\Delta}\subset \mathbf{V}$ such that $\mathbf{H^0_\Delta} = \{h_i|h_i \text{ is not a collider in} \langle i, j, h_i\rangle\}$
    \item $\mathbf{H^1_\Delta}\subset \mathbf{V}$ such that $\mathbf{H^1_\Delta} = \{h_i|h_i \text{ is a collider in} \langle i, j, h_i\rangle\}$
\end{itemize}
such that given any $i\rightarrow j$ that is in a triangle, $\mathbf{S_\Delta}$ should not be a superset of $\mathbf{H^0_\Delta}$ while having an empty intersection with $\mathbf{H^1_\Delta}$ which suggests that there should be at most $2^{|\mathbf{V}|-2}-2^{|\mathbf{V}|-n_{ij}-2}$ many $\mathbf{S_\Delta}$.\\ \textcolor{white}{abc}
Now we can continue deriving an upper bound:


\begin{flalign}
    \dfrac{vol(\mathcal{Q}_{G,K})}{vol(\mathcal{M}_{G,\lambda})}&\leq \dfrac{K^q}{\lambda^q(q_m+1)} \dfrac{\underset{i\rightarrow j\in\mathbf{E}}{\sum}\underset{\mathbf{ S_{\Delta}\subset V}\setminus\{i,j\}}{\sum}deg(cov(X_i,X_j|X_{S_{\Delta}}))}{\underset{i\rightarrow j\in\mathbf{E}}{\sum}\underset{\mathbf{ S\subset V}\setminus\{i,j\}}{\sum}deg(cov(X_i,X_j|X_S))}\\ \textcolor{white}{abc}
    &\leq \dfrac{K^q}{\lambda^q(q_m+1)} \dfrac{\underset{i\rightarrow j\in\mathbf{E}}{\sum}\underset{\mathbf{ S_{\Delta}\subset V}\setminus\{i,j\}}{\sum}deg(cov(X_i,X_j|X_{S_{\Delta}}))}{\underset{i\rightarrow j\in\mathbf{E}}{\sum}\underset{\mathbf{ S_\Delta\subset V}\setminus\{i,j\}}{\sum}deg(cov(X_i,X_j|X_S))+\underset{i\rightarrow j\in\mathbf{E}}{\sum}2^{|\mathbf{V}|-n_{ij}-2}}\\ \textcolor{white}{abc}    
    &\leq\dfrac{K^q}{\lambda^q(q_m+1)} \dfrac{1}{1+\dfrac{1}{(2^{n_{max}}-1)|\mathbf{V}|}}\\ \textcolor{white}{abc}
    &\leq\dfrac{K^q|\mathbf{|V|}}{\lambda^q(q_m+1)(|\mathbf{V}|+\dfrac{1}{(2^{n_{max}}-1)})}
\end{flalign}
\end{proof}
\textbf{Theorem \ref{thm:ratiosktf}} suggests that the \textbf{$k$-Triangle-Faithfulness} is more likely to be satisfied when there are fewer variables in the DAG (smaller $|\mathbf{V}|$) or the DAG is sparse (smaller $n_{max}$).  Such implication aligns with the logic that, when there are no triangles in a graph, the \textbf{$k$-Triangle-Faithfulness} is vacuously satisfied, which is one aspect that makes the \textbf{$k$-Triangle-Faithfulness} weaker.  Another aspect is that, for every $\lambda$ as the lower bound of nonzero conditional covariance, there is a $k$ that gives a lower bound of nonzero conditional covariances smaller than $\lambda$.    We are going to see in the simulation section about how large the $k$ can be to have a faithfulness assumption easier to be satisfied than a strong faithfulness with a small $\lambda$. 

\subsection{Lower bound of the probability of violation of \textit{k-}Triangle-Faithfulness}
In this section we are providing a lower bound of the $k$-triangle-unfaithful distribution in the linear Gaussian case in a parameter cube $[-1, 1]^{|\mathbf{E}|}$ for DAGs where each edge is in at most one triangle.  The lower bounds of the $\lambda$-strong-unfaithful distribution is provided for three special types of DAGs: tree, single cycle and bipartite \cite{Uhler_2013}.  
\begin{figure}[h]
    \centering
    \includegraphics[scale=0.5]{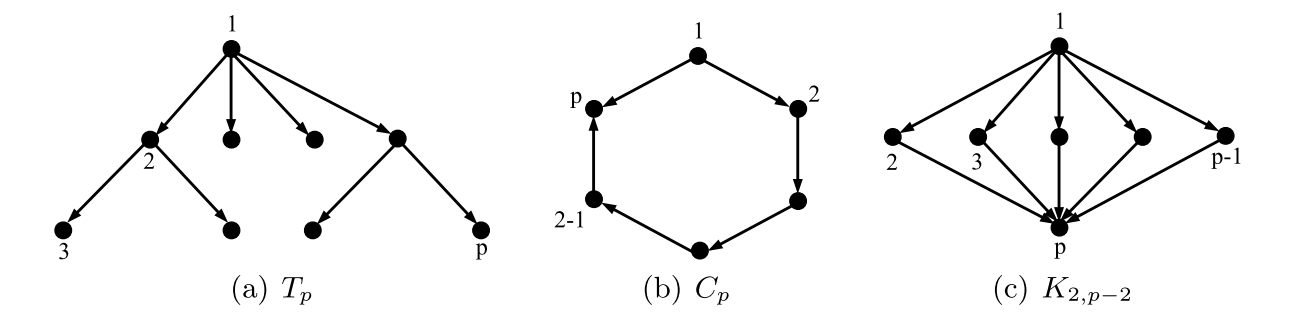}
    \caption{\textit{tree $T_p$, single cycle $C_p$ and bipartite $K_{2,p-2}$ with $p$ many nodes \cite{Uhler_2013}}}
    \label{fig:threetype}
\end{figure}
As shown in Figure \ref{fig:threetype}, only DAGs with the skeleton as a cycle can have triangles.  Therefore, we cannot directly compare the lower bound of $k$-triangle-unfaithful distribution with $\lambda$-strong-unfaithful distribution.  Instead, we are going to apply Uhler's result on a DAG with three nodes and a cycle as the skeleton to get a lower bound for a triangle DAG, then derive a lower bound for a DAG as a function of $|\mathbf{V}|$ and graph density $d = \frac{\mathbf{|E|}}{\mathbf{|V|(|V|-1)}}$ for a more general type of DAG.
\begin{theorem}\label{thm:3nodes}
    Let $C_p$ be a directed cycle with 3 nodes.  Then:
    \begin{center}
        $\dfrac{vol(\mathcal{M}_{C_p,\lambda})}{2^{|\mathbf{E}|}}\geq 1-(1-\lambda)^{2p-1}$
    \end{center}
    where $\mathbf{E}$ is the set of edges of $C_p$.
\end{theorem}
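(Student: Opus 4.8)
The plan is to establish this as the single-cycle instance of the lower-bound method of Uhler et al.\ \cite{Uhler_2013}, carried out explicitly. The first move is to pass to the complement: by definition $[-1,1]^{|\mathbf{E}|}\setminus\mathcal{M}_{C_p,\lambda}$ is precisely the set of edge-coefficient vectors $(a_{uv})$ for which the induced Gaussian is $\lambda$-strong-faithful to $C_p$, so it is enough to show $\mathrm{vol}\big([-1,1]^{|\mathbf{E}|}\setminus\mathcal{M}_{C_p,\lambda}\big)\le 2^{|\mathbf{E}|}(1-\lambda)^{2p-1}$. Next I would enumerate the constraints that cut out the faithful region. In a cycle every adjacent pair is d-connected given \emph{every} subset of the remaining vertices, so each edge $X_i-X_j$ contributes, for each admissible conditioning set $\mathbf{S}$, the requirement $|\rho_M(X_i,X_j\mid\mathbf{X_S})|>\lambda$; for the three-node cycle these are the inequalities $|\rho(X_i,X_j\mid\emptyset)|>\lambda$ and $|\rho(X_i,X_j\mid X_k)|>\lambda$ over the three pairs, of which I would retain the $2p-1$ needed to drive the bound.

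The computational core is to write each of these partial correlations as a rational function of the edge coefficients $a_{12},a_{23},a_{13}$ (with the noise variances fixed to $1$, as in the ambient linear SEM, so that partial correlations carry the full information). Using the Schur-complement identity $\mathrm{cov}(X_i,X_j\mid\mathbf{S})=\mathrm{cov}(X_i,X_j)-\mathrm{cov}(X_i,\mathbf{X_S})\,\mathrm{var}(\mathbf{X_S})^{-1}\,\mathrm{cov}(\mathbf{X_S},X_j)$ one finds clean simplifications, e.g.\ $\rho(X_1,X_2\mid\emptyset)^2=a_{12}^2/(a_{12}^2+1)$, $\rho(X_2,X_3\mid X_1)^2=a_{23}^2/(a_{23}^2+1)$, and $\rho(X_1,X_3\mid X_2)^2=u/(u+1)$ with $u=a_{13}^2/(a_{12}^2+1)$, and analogous forms for the remaining constraints. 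In every case the squared partial correlation is a monotone increasing function of one designated coefficient once the others are held fixed, so the set on which it \emph{exceeds} $\lambda^2$ intersects $[-1,1]$ in an interval whose relative length is at most $1-\lambda$ --- for instance $\rho(X_1,X_3\mid X_2)^2>\lambda^2$ forces $|a_{13}|>\lambda\sqrt{a_{12}^2+1}/\sqrt{1-\lambda^2}\ge\lambda$, which leaves $a_{13}$ in a set of relative length $\le 1-\lambda$ uniformly in $a_{12}$.

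I would then integrate the edge coefficients out one at a time by Fubini, in an order chosen so that at each stage the constraint being invoked is monotone in the coefficient currently being integrated and depends on the others only through quantities already fixed. Each one-dimensional integral then contributes a factor $2(1-\lambda)$, and chaining the $2p-1$ constraints --- reusing a given edge coefficient as the ``active'' variable for constraints that differ only in their conditioning set --- yields $\mathrm{vol}\big([-1,1]^{|\mathbf{E}|}\setminus\mathcal{M}_{C_p,\lambda}\big)\le 2^{|\mathbf{E}|}(1-\lambda)^{2p-1}$; dividing by $2^{|\mathbf{E}|}$ gives the stated inequality.

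The step I expect to be hardest is making the Fubini chain close with exactly $2p-1$ genuine factors rather than the $|\mathbf{E}|=p$ that the ``obvious'' adjacent-pair constraints supply on their own: this requires bringing in the second, ``long-arc'' family of partial correlations and arranging all of the constraints into a single order under which each still trims the current slice by a full $(1-\lambda)$ factor, which is precisely the combinatorial content of Uhler et al.'s single-cycle argument and which I would invoke and specialize to $p=3$ here. A secondary point requiring care is verifying, uniformly over the coefficients not yet integrated, that the normalization by the conditional variances never enlarges the $\lambda$-superlevel set of any chosen partial correlation beyond relative length $1-\lambda$ --- the $\rho^2=u/(u+1)$ type identities above are exactly what make this go through, and the general pattern would need to be checked constraint by constraint.
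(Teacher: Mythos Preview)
The paper does not prove this theorem. It is quoted as a result of Uhler et al.\ \cite{Uhler_2013}, with a single additional remark: Uhler et al.\ actually state the bound for the \emph{restricted} $\lambda$-strong-unfaithful set, but a 3-node cycle is the complete graph $K_3$, and for complete graphs the restricted and general notions coincide. That one-line observation is the paper's entire argument here.

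Your proposal is therefore not being compared against a proof in the paper --- you are attempting to reconstruct Uhler et al.'s underlying argument rather than merely cite it. The outline is the right shape, and your partial-correlation formulas check out (in particular $\rho_{13\mid 2}^2=c^2/(a^2+c^2+1)$ agrees with your $u/(u+1)$ form). One point deserves a caution, though: your plan to ``reuse a given edge coefficient as the active variable'' to squeeze $2p-1=5$ factors of $(1-\lambda)$ out of $|\mathbf{E}|=3$ coordinates does not work by plain iterated Fubini --- intersecting two one-dimensional superlevel constraints on the same coordinate yields one factor, not two. In Uhler et al.'s actual argument for $C_p$ the extra $p-1$ factors come from constraints that couple several coefficients multiplicatively (correlations along the long arc of the cycle), and the volume bound exploits that nested product structure rather than coordinate-by-coordinate reuse. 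You already flag this as the hardest step and say you would invoke Uhler et al.\ there, which is in effect exactly what the paper does.
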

Uhler et al. provided \textbf{theorem \ref{thm:3nodes}} as a lower bound of a \textit{restricted version} of $\lambda$-strong-unfaithful distribution for $C_p$, but a cycle with only 3 nodes is also a complete graph, in which case the general \textbf{$\lambda$-strong Faithfulness} is equivalent to the \textbf{ \textit{restricted} $\lambda$-strong Faithfulness}\cite{Uhler_2013}. 

\begin{lemma}\label{lemma:triangle}
    Let $G_{\Delta}$ be a triangle.  Then:
    \begin{center}
        $\dfrac{vol(\mathcal{Q}_{G,K})}{2^{|\mathbf{E}|}}\geq 1-(1-\dfrac{K}{2})^4$
    \end{center}
    where $\mathbf{E}$ is the set of edges of $G_{\Delta}$.
\end{lemma}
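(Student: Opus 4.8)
The plan is to estimate $vol(\mathcal{Q}_{G,K})$ with the same ``tube around a low-degree hypersurface'' technique that underlies the lower bounds in \cite{Uhler_2013} and Theorem \ref{thm:3nodes}, and then to borrow Theorem \ref{thm:3nodes} itself by comparing $\mathcal{Q}_{G,K}$ with the strong-unfaithful region $\mathcal{M}_{C_3,\lambda}$ of the complete $3$-node graph.

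First, set up the parametrization. A triangle $G_\Delta$ has three vertices and three edges; fix an acyclic orientation, which up to relabeling is $1\to 2$, $1\to 3$, $2\to 3$ with edge strengths $\alpha=a_{12}$, $\beta=a_{13}$, $\gamma=a_{23}$ drawn uniformly from $[-1,1]^3$ and error variances normalized to $1$. Identifying the collider of the triangle (vertex $3$) and reading off the $k$-triangle-faithfulness conditions, the assumption is equivalent to the three inequalities
\[
|\rho_M(X_1,X_2\mid X_3)|\ge K|\alpha|,\qquad |\rho_M(X_1,X_3)|\ge K|\beta|,\qquad |\rho_M(X_2,X_3)|\ge K|\gamma|,
\]
the first because vertex $3$ is a collider on $\langle X_1,X_3,X_2\rangle$, the second and third because vertices $2$ and $1$ are chain- and fork-type non-colliders on the paths through them. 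A short linear-SEM computation gives $cov(X_1,X_3)=\beta+\alpha\gamma$, $cov(X_2,X_3)=\alpha\beta+\gamma(\alpha^2+1)$ and $cov(X_1,X_2\mid X_3)=(\alpha-\beta\gamma)/((\beta+\alpha\gamma)^2+\gamma^2+1)$, so the cores of the three tubes making up $\mathcal{Q}_{G,K}$ are the hypersurfaces $\alpha=\beta\gamma$, $\beta=-\alpha\gamma$, $\gamma(\alpha^2+1)=-\alpha\beta$, while every conditioning variance entering the normalizations lies in a fixed interval $[c_0,c_1]$ with $c_0\ge 1$ (each relevant child still carries its own unit-variance noise) by compactness of the cube.

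Second, reduce the parameter-dependent thresholds $K|e(X_i\to X_j)|$ to the constant form needed to invoke Theorem \ref{thm:3nodes}. Intersect each of the three events with the half-slab $\{\,|e(X_i\to X_j)|\ge\tfrac12\,\}$; there $K|e(X_i\to X_j)|\ge K/2$, so each inequality is implied by the fixed-threshold inequality $|\rho_M(X_i,X_j\mid\mathbf S)|\ge K/2$. Consequently $\mathcal{Q}_{G,K}$ contains the union over the three edges of $(\text{the }(K/2)\text{-strong-unfaithful set of that constraint})\cap\{|e|\ge\tfrac12\}$. Since the three constraints above are exactly three of the six defining $\mathcal{M}_{C_3,K/2}$ --- precisely the three whose exact-violation locus is \emph{not} a coordinate hyperplane $\{a_{ij}=0\}$ --- one starts from $vol(\mathcal{M}_{C_3,K/2})/2^{|\mathbf E|}\ge 1-(1-K/2)^5$ and subtracts (i) the volume of the three discarded ``$\{|a_{ij}|\lesssim K\}$'' slabs and (ii) the volume removed by the three restrictions $\{|e|\ge\tfrac12\}$. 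Carrying the inclusion--exclusion through, the loss amounts, in the worst case, to replacing one of the five factors $(1-K/2)$ by $1$, which gives the stated $vol(\mathcal{Q}_{G,K})/2^{|\mathbf E|}\ge 1-(1-K/2)^4$.

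The hard part is this last bookkeeping step. One must control the overlaps among the three tubes and the three discarded slabs so the subtractions do not over-count, and --- the genuinely delicate point --- handle the collider tube around $\alpha=\beta\gamma$, whose half-width is of order $K|\alpha|$ and therefore collapses as $\alpha\to 0$: on the strip $\{|\alpha|<\tfrac12\}$ the crude ``half-slab'' restriction cannot simply be discarded, so near $\alpha=0$ one needs a separate direct estimate of how much of that tube actually survives. Showing that, after all this, exactly one factor is lost --- not two, which would ruin the bound --- is where the real work of the proof lies; an alternative route, should the comparison with $\mathcal{M}_{C_3,\lambda}$ prove too lossy, is a direct iterated (Fubini) integration over $\beta$, then $\gamma$, then $\alpha$, estimating in turn the fraction of each coordinate compatible with $k$-triangle-faithfulness given the others.
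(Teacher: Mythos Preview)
Your high-level plan---reduce to Theorem~\ref{thm:3nodes} by comparing $\mathcal{Q}_{G,K}$ with the strong-unfaithful region $\mathcal{M}_{C_3,\lambda}$---is exactly what the paper does. The paper's own proof of this lemma is a single sentence: it is ``almost a direct application of Theorem~\ref{thm:3nodes}, with slight adjustments according to the restriction that the only variable in the triangle that can be conditioned on should be the collider.'' No further detail is given, so at the level of strategy you are aligned and there is essentially nothing more in the paper to compare against.

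Where you diverge is in the mechanism for turning the parameter-dependent threshold $K|e|$ into the constant $K/2$. Your half-slab restriction $\{|e|\ge\tfrac12\}$ followed by inclusion--exclusion is more elaborate than anything the paper hints at, and you yourself flag the bookkeeping (overlaps among tubes, the collapsing collider tube near $\alpha=0$, showing that exactly one factor is lost) as unfinished and ``genuinely delicate.'' This route may be made to work, but it is not the path of least resistance, and the claim that the net loss is precisely one factor of $(1-K/2)$ is asserted rather than argued. The ``alternative route'' you mention at the end---a direct iterated integration over the three edge coefficients, redoing Uhler et~al.'s $C_3$ computation with $K|e|$ in place of the constant $\lambda$---is almost certainly what the paper intends by ``direct application with slight adjustments,'' and it sidesteps the half-slab bookkeeping entirely: the $K/2$ emerges from integrating the linear threshold in $|e|$, and the exponent drop from $5$ to $4$ reflects the smaller constraint set you correctly identified (three constraints rather than the full restricted-strong-faithfulness set). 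I would lead with that route rather than treat it as a fallback.
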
This lemma is almost a direct application of \textbf{theorem \ref{thm:3nodes}}, with slight adjustments according to the restriction that the only variable in the triangle that can be conditioned on should be the collider. \\ \textcolor{white}{abc} 
Uhler et al. only discussed the lower bounds for the three types of DAGs because there are too many cases of the conditional covariance 
$cov(X_i, X_j|X_S)$ to be discussed once the $i$ and $j$ are connected by more than one \textit{self-avoiding} path\cite{Uhler_2013}.  A path is \textit{self-avoiding} if no node is visited more than once. Similarly, it's generally hard to derive a lower bound of $k$-triangle unfaithful distribution for all DAGs.  Luckily, \textbf{lemma \ref{lemma:triangle}} can be used not only in $G_{\Delta}$, but also in any triangle as a subgraph as long as all triangles in the graph are not sharing edges:

\begin{theorem}
    \label{lemma:probtrainglevio}
    Let $G_{\{\Delta\}} = \langle \mathbf{V, E}\rangle$ be a DAG where each edge is in at most one triangle and density $d$.  Then,
    \begin{center}
$\mathbb{P}\left((a_{st})_{(s,j)\in\mathbf{E}}\in\mathcal{Q}_{G,K}\right)\geq (1-(1-d^3)^{\lfloor\frac{|\mathbf{V}|}{3}\rfloor}) (1-(1-\dfrac{K}{2})^4)$    
    \end{center}
\end{theorem}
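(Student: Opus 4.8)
The plan is to exhibit many pairwise vertex-disjoint triangles inside $G_{\{\Delta\}}$ and argue that, independently, each one has a decent chance of being $k$-triangle-unfaithful; the probability that every one of them is faithful is then a small power, and the stated bound follows after a short convexity estimate. Concretely, first I would fix a partition of $\mathbf{V}$ into $m:=\lfloor |\mathbf{V}|/3\rfloor$ pairwise vertex-disjoint triples $T_1,\dots,T_m$, with $T_\ell=\{i_\ell,j_\ell,h_\ell\}$ (dropping the at most two leftover vertices). For each $\ell$ let $B_\ell$ be the event that (i) all three pairs inside $T_\ell$ are adjacent, so $T_\ell$ induces a triangle, and (ii) the three edge-weights of that triangle fall into the triangle-unfaithful region supplied by Lemma~\ref{lemma:triangle}. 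Since the conditioning set used in Lemma~\ref{lemma:triangle} obeys the ``$h_\ell$ in the set iff $h_\ell$ is the collider'' restriction that defines the union in $\mathcal{Q}_{G,K}$, we have $B_\ell \subseteq \{(a_{st})\in\mathcal{Q}_{G,K}\}$, so it suffices to bound $\mathbb{P}\bigl(\bigcup_\ell B_\ell\bigr)$ from below. Here I take the skeleton to be sampled by an independent-edge (Bernoulli) rule with edge probability $d$, independently of the uniform weights on $[-1,1]$; because the $T_\ell$ are vertex-disjoint, hence edge-disjoint, and, crucially, because in $G_{\{\Delta\}}$ every edge lies in at most one triangle, Lemma~\ref{lemma:triangle} applies verbatim to each embedded triangle $T_\ell$ (this is exactly the edge-disjointness hypothesis under which the excerpt asserts the lemma carries over from a free-standing triangle to a triangle inside a larger DAG), and the events $B_1,\dots,B_m$ are mutually independent.

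Consequently $\mathbb{P}(B_\ell)\ge d^3\bigl(1-(1-\tfrac{K}{2})^4\bigr)$: the factor $d^3$ is the probability that the three pairs of $T_\ell$ are adjacent, and the second factor is the conditional probability bound of Lemma~\ref{lemma:triangle}. By independence,
\[ \mathbb{P}\Bigl(\bigcap_{\ell=1}^m B_\ell^{\,c}\Bigr)=\prod_{\ell=1}^m \mathbb{P}(B_\ell^{\,c})\le \bigl(1-d^3(1-(1-\tfrac{K}{2})^4)\bigr)^m . \]
Writing $c:=(1-\tfrac{K}{2})^4\in[0,1]$, the function $c\mapsto (1-d^3+d^3c)^m$ is convex on $[0,1]$, takes the value $(1-d^3)^m$ at $c=0$ and $1$ at $c=1$, hence lies below the chord joining those endpoints:
\[ \bigl(1-d^3(1-c)\bigr)^m \le (1-d^3)^m(1-c)+c = 1-(1-c)\bigl(1-(1-d^3)^m\bigr). \]
Therefore $\mathbb{P}\bigl((a_{st})\in\mathcal{Q}_{G,K}\bigr)\ge 1-\mathbb{P}\bigl(\bigcap_\ell B_\ell^{\,c}\bigr)\ge \bigl(1-(1-d^3)^m\bigr)\bigl(1-(1-\tfrac{K}{2})^4\bigr)$, which with $m=\lfloor|\mathbf{V}|/3\rfloor$ is the asserted inequality.

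The union/independence bookkeeping and the convexity step are routine. The step I expect to be the main obstacle is the transfer of Lemma~\ref{lemma:triangle} to an embedded triangle, i.e.\ certifying that the $k$-triangle-unfaithfulness of $T_\ell$ can be read off from the three weights on $T_\ell$'s own edges alone. This is precisely where ``each edge is in at most one triangle'' is used: for the embedded triangle $\{i_\ell,j_\ell,h_\ell\}$ one must exhibit an admissible conditioning set $\mathbf{W}$ (containing $h_\ell$ iff $h_\ell$ is the collider) for which $\mathrm{cov}(X_{i_\ell},X_{j_\ell}\mid X_{\mathbf{W}})$ is a function of only those three weights --- conditioning on $\mathbf{V}\setminus\{i_\ell,j_\ell,h_\ell\}$ should block every connecting trek except the edge $i_\ell-j_\ell$ and the path $i_\ell-h_\ell-j_\ell$ --- and then verify that the resulting semialgebraic volume is exactly the one estimated for a stand-alone triangle in Lemma~\ref{lemma:triangle}. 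A secondary point is to pin down the probabilistic model: ``$T_\ell$ is a triangle with probability $d^3$'' is literal under a Bernoulli-edge skeleton of density $d$, whereas for an arbitrary fixed DAG of density $d$ one would instead need a counting argument producing at least $\lfloor|\mathbf{V}|/3\rfloor$ pairwise vertex-disjoint triangles, which density alone does not force.
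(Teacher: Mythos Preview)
Your proposal is correct and follows the same skeleton as the paper: partition $\mathbf{V}$ into $m=\lfloor|\mathbf{V}|/3\rfloor$ vertex-disjoint triples, treat the density $d$ as a Bernoulli edge probability so each triple is a triangle with probability $d^3$, and invoke Lemma~\ref{lemma:triangle} for the within-triangle violation probability (the ``each edge in at most one triangle'' hypothesis is exactly what the paper cites to justify transferring the lemma to an embedded triangle). The only difference is the final combination step. The paper obtains the product directly via the conditional decomposition
\[
\mathbb{P}(\text{violation})\ \ge\ \mathbb{P}(\text{some triple is a triangle})\cdot \mathbb{P}(\text{violation}\mid \text{some triple is a triangle}),
\]
using independence of skeleton and edge-weights to argue that, conditional on a triangle existing, Lemma~\ref{lemma:triangle} still gives the factor $1-(1-K/2)^4$. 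You instead bound $\mathbb{P}\bigl(\bigcap_\ell B_\ell^c\bigr)\le (1-d^3(1-c))^m$ and then use convexity of $c\mapsto(1-d^3+d^3c)^m$ to factor it as $\le 1-(1-c)\bigl(1-(1-d^3)^m\bigr)$. Both arrive at the same bound; the paper's route is one line shorter, while yours makes the independence bookkeeping explicit. Your secondary worry about the probabilistic model is resolved by the paper itself, which explicitly adopts the Bernoulli-edge interpretation of $d$.
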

The detailed proof of this theorem can be found in the appendix. This lower bound of the probability of the violation of \textbf{$k$-Triangle Faithfulness} consists of two parts:
\begin{enumerate}

    \item the probability of $G_{\Delta}$ having triangles:\newline $\mathbb{P}(\exists  i,j,h\in\mathbf{|V|}s.t.i\rightarrow j, i\rightarrow h, j\rightarrow h\in\mathbf{|E|} )\geq(1-(1-d^3)^{\lfloor\frac{|\mathbf{V}|}{3}\rfloor})$ 
    \item the probability that a triangle in $G_{\{\Delta\}}$ violates the \textbf{$k$-Triangle Faithfulness}:\newline
$\mathbb{P}\left((a_{st})_{(s,j)\in\mathbf{E}}\in\mathcal{Q}_{G,K}|\exists  i,j,h\in\mathbf{|V|}s.t.i\rightarrow j, i\rightarrow h, j\rightarrow h\in\mathbf{|E|}\right)\geq1-(1-\dfrac{K}{2})^4$
\end{enumerate}
Similar to theorem \ref{thm:ratiosktf}, theorem \ref{lemma:probtrainglevio} shows that the probability of the violation of $k$-triangle faithfulness converges to one as $\mathbf{V}$ increases and $K$ decreases ($k$-triangle faithfulness getting more strict).  Furthermore, as the graph gets larger and lager, the rate of convergence to one is more and more dominated by the size and density of the graph.  We are going to verify such observation in our simulation.

\section{Simulation Results}
In this section, we are presenting simulation results to visualize how much weaker \textbf{$k$-Triangle-Faithfulness} compared to \textbf{Strong Faithfulness Assumption}.\\ \textcolor{white}{abc}  
Similar to the simulation in Uhler et al.\cite{Uhler_2013}, we generated random DAGs given a expected neighborhood size and edge weights following uniform distribution in $[-1,1]$.  We generated graph with the 3,5 and 10 many nodes and with 5-20 different expected neighborhood sizes.  For each total number of nodes and expected neighborhood size, we generated 10,000 different graphs.\\ \textcolor{white}{abc}
In order to compare the strength of \textbf{$k$-Triangle-Faithfulness} compared to \textbf{Strong Faithfulness Assumption}, for each simulation we counted the violation for both of these two faithfulness assumptions. For the \textbf{$k$-Triangle-Faithfulness}, we first counted how many triangles were there in the graph.  If the graph contained no triangle then the assumption would be vacuously satisfied; then in every triangle we check for each pair of variables whether there were partial correlations between them that violate the assumption with a given $K$.  For the \textbf{Strong Faithfulness Assumption}, similar to Uhler et al.'s procedure, we assumed any partial correlation smaller than $10^{-5}$ were actual zeros due to the round-off errors and counted the number of simulations where there were partial correlations between $10^{-5}$ and a given $\lambda$.\\ \textcolor{white}{abc}
We considered three values for $\lambda$ and $K$: 0.1, 0.01 and 0.001.  The resulting plots are given in Figure \ref{fig:three comparison}.
\begin{figure}[H]
\begin{tabular}{ccc}
  \includegraphics[width=48mm]{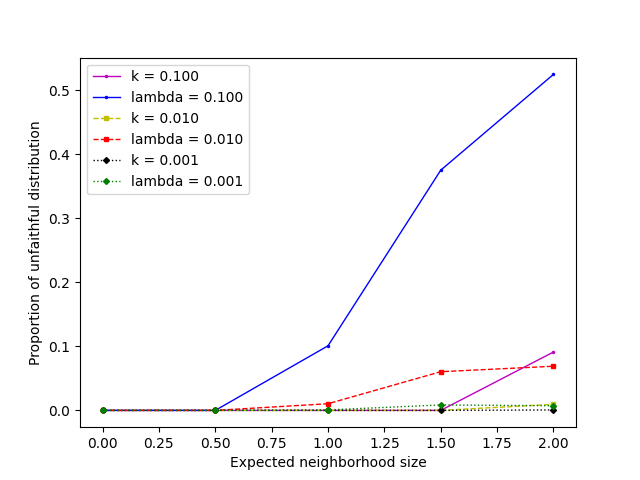} &\includegraphics[width=48mm]{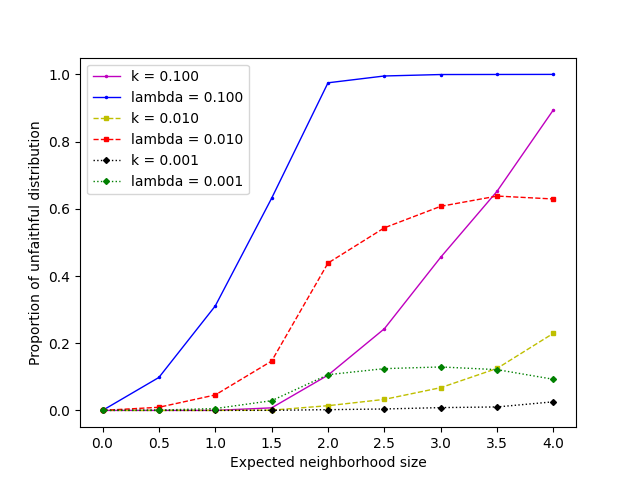}&\includegraphics[width=48mm]{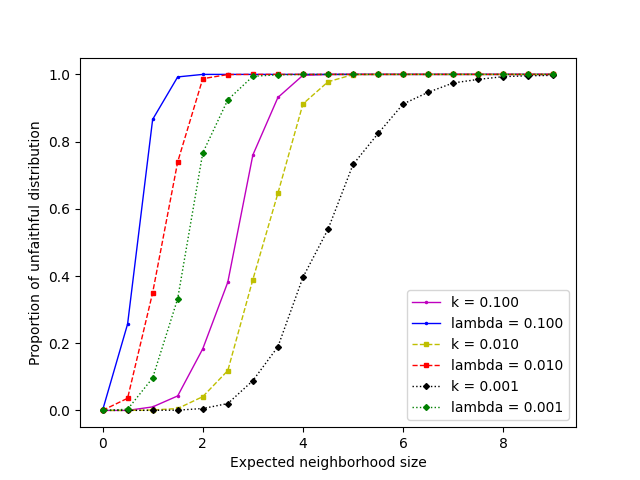}\\ \textcolor{white}{abc}
(a) 3-nodes DAGs & (b) 5-nodes DAGs & (c) 10-nodes DAGs\\ \textcolor{white}{abc}
\end{tabular}
\caption{\textit{Proportion of $\lambda$-strong-unfaithful distributions and $k$-triangle-unfaithful distributions for 3 values of $\lambda$ and $k$}}
\label{fig:three comparison}
\end{figure}In order to see how strong the \textbf{$k$-Triangle-Faithfulness} Assumption is, we further simulated 10,000 graphs with 10 \textit{nodes} and an \textit{expected neighborhood size} from 5 to 9, or density from 0.5 to 1, and looked through all the triangles to see the maximum value of $k$ that can be required to have all the simulations satisfy the assumption. The average $k$ is $1.65\times 10^{-7}$. Fig \ref{fig:max_k} shows that the change of required $k$ value as we increase the density to 1, indicating that once the graph is so dense that each edge would participate in more than one triangle, the assumption is extremely hard to be satisfied.

\begin{figure}[h]
    \centering
    \includegraphics[scale = 0.5 ]{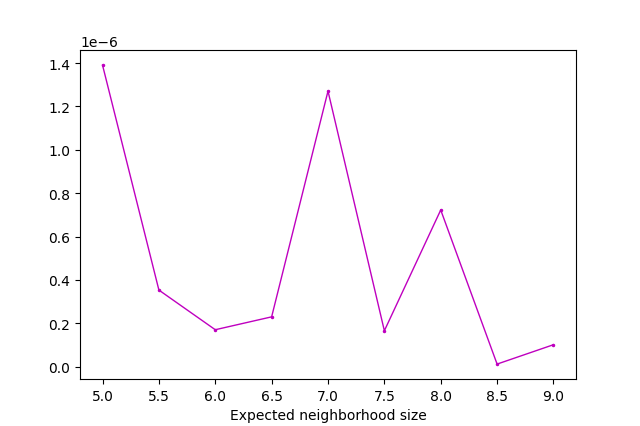}
    \caption{\textit{The maximum value of $k$ for the 10-nodes DAGs with 5 to 9 expected neighborhood size to fully satisfy the \textbf{$k$-Triangle-Faithfulness Assumption}}.}
    \label{fig:max_k}
\end{figure}
The $k$-Triangle-Faithfulness sets lower bounds of nonzero correlation proportional to the edge strength of the edge between the two variables, which is expected to have the property that the weaker the direct causal influence is, the closer to zero the correlations can be without violating the Faithfulness. In order to visualize the relation between the strength of the edge (the scale of the absolute value of the edge coefficient) and the probability of the violation of \textbf{$k$-Triangle-Faithfulness} Assumption is, we further simulated 1000 graphs with 8 \textit{nodes} and an \textit{expected neighborhood size} from 2 to 6 with $k=0.1$, and looked at the edge strength and the proportion of the violation of \textbf{$k$-Triangle-Faithfulness} that happens under such edge strength. Fig \ref{fig:es_vio} shows that as the causal influence is getting stronger, the \textbf{$k$-Triangle-Faithfulness} becomes more and more likely to be violated then becomes less likely when the absolute value of edge strength exceeds 0.5.  Furthermore, as the density of the graph increases, the violation of \textbf{$k$-Triangle-Faithfulness} becomes more uniform across different edge strengths, agreeing with the derivation of the lower bound of the probability of the violation of \textbf{$k$-Triangle-Faithfulness}, which indicates that graph density dominantly influences the probability of violation.

\begin{figure}[H]
    \centering
    \includegraphics[scale = 0.35 ]{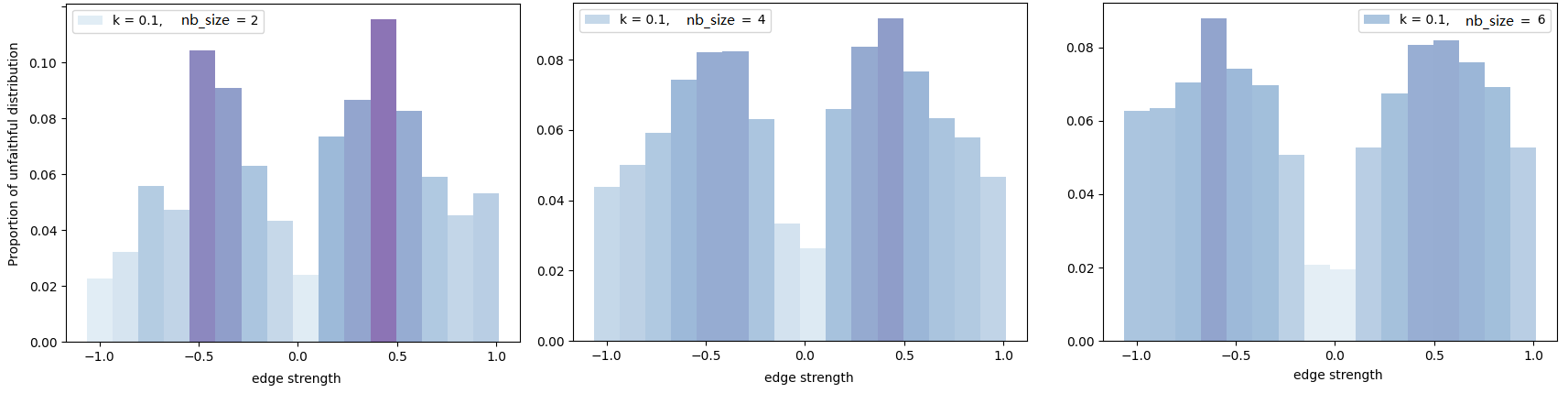}
    \caption{\textit{Proportion of $k$-triangle-unfaithful distributions for different average neighborhood size for graphs with 8 vertices and $k=0.1$ \textbf{nb\textunderscore size} stands for \textit{average neighborhood size}.}.}
    \label{fig:es_vio}
\end{figure}

\section{Discussion}

We have shown that there is a uniformly consistent estimator of causal structure and some causal effects for nonparametric distributions under the \textbf{\textit{k-}Triangle-Faithfulness Assumption}, which is sometimes stronger than the Faithfulness Assumption and weaker than the Strong Faithfulness Assumption.  We further provided the upper and lower bound of the violation of \textbf{\textit{k-}Triangle-Faithfulness Assumption} in the linear Gaussian case.\\ \textcolor{white}{abc}  
There are a number of open questions, such as whether the causal sufficiency assumption can be relaxed, so we allow the existence of latent variables and whether there are similar results under assumptions weaker than the Causal Faithfulness Assumption, such as the Sparsest Markov Representation Assumption \cite{solus2021consistency}.
\newpage

\chapter{Latent Causal Structure Learning}

\section{Introduction}
Causal mechanisms can be modeled from observed data and background assumptions. One way of representing the causal model is by directed graphs. A major challenge of finding causal models from data is that there might be confounders (direct causes of more than two variables present in the dataset) that are not measured in the dataset: sometimes one uses the causal model to predict the effect of changing states of a measured variable $A$ on $B$, in which case the confouders of $A$ and $B$ should be taken into account to estimate the effect; sometimes the unmeasured variables participate in the causal relation of interest, in which case it is important to detect the existence of the unmeasured and study the causal connection between the unmeasured.\\ \textcolor{white}{abc}
 In many scientific contexts, variables of interest that are not directly measured are called ``latent'' variables. In such cases, researchers often measure ``indicators" of the variables instead. For instance, researchers in psychometrics measure reaction times to study intelligence \cite{DEARY2001389}; personalities are identified by asking behavioral questions  \cite{personality}. In these cases, causal relations among the latent variables are of primary interest. For instance, WAIS (Wechsler Adult Intelligence Scale)-IV standardization data is a dataset with sample size 1800 that measures various cognitive behaviors from three group of people divided by ages: group 1 between 20 and 54 years of age (n = 1000), group 2 between 16 and 19 (n = 400), and group 3 between 55 and 69 (n = 400) and fig \ref{fig:WAIS} are two latent models as results from factor analysis on the covariance of the measured behaviors \cite{WAIS-IVorigin}  \cite{WAIS-IV}. Notice that their models do not contain causal connections between the latent factors. In some other cases, the measured variables are of primary interest, but there are unmeasured variables that influence the measured variables, and these unmeasured common causes will either lead to wrong conclusions about causal relations or measurement error.  One typical case is the Simpson's Paradox \cite{simpson}. The kidney stone case is an example of the Simpson's paradox, in which two treatments A and B are used to treat large and small kidneys stones, with large stone being harder to treat. As shown in fig \ref{fig:kidney}, while A is more successful than B in both sizes, when combining the large and small sizes together and calculate the overall success rate, B has a success rate higher than A. It is because the confounder, that A is more likely to be used to treat the large kidney stone, is not considered \cite{kidney}.
\begin{figure}
    \centering
    \includegraphics[width=0.8\textwidth]{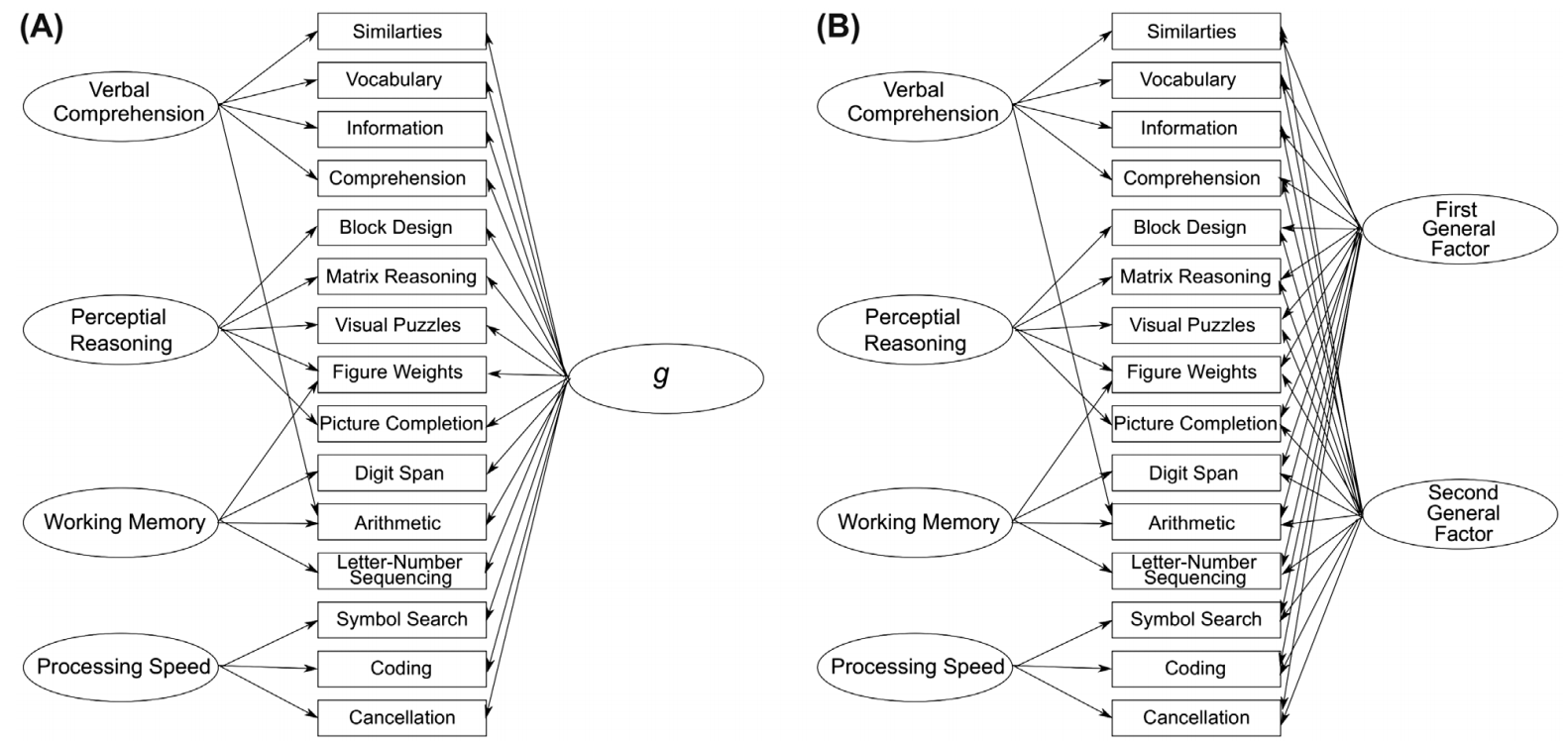}
    \caption{(A) A bi-factor model of the WAIS-IV. (B) Corresponding model with two general factors\cite{WAIS-IV}}
    \label{fig:WAIS}
\end{figure}
\begin{figure}
    \centering
    \includegraphics[width=0.8\textwidth]{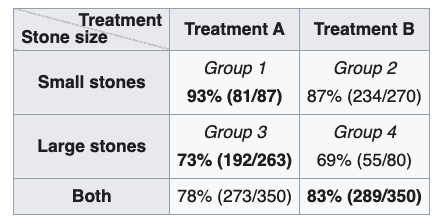}
    \caption{Treatment A is better in treating kidneys in both sizes, but when combining the two sizes together it has a lower successful rate}
    \label{fig:kidney}
\end{figure}\\ \textcolor{white}{abc}
Methods to study latent variables have been developing for decades. Recent developments, such as rank constraint, higher order moments (tensor constraint) and Generalized independence Noise (GIN) Condition, can be used to model latent variables with more details than classical methods such as factor analysis, but they all have limitations.  All these methods assume the linearity of the model over directed acyclic graphs.  Assuming non-Gaussianity of the noise distribution, the tensor constraint and GIN reveals information that cannot be uncovered by the rank constraint, but each of these methods so far has only been used alone under the linearity and acyclicity.  This chapter shows that rank constraint and GIN can be used to learn latent causal structures while relaxing the assumptions of linearity and acyclicity.  In section 3.2, I will introduce the various methods for causal modeling with latent variables, including the rank constraint; in section 3.3, I will introduce the the recent development of latent causal structures using non-Gaussianity and such methods can still cluster measured variables with the same latents with nonlinearity between latents; in section 3.4, I will present the main work of this chapter: I will first show how the rank constraint identifies latents with cycles under the latent then I will show that under the non-Gaussian distribution how combining GIN and rank constraint identifies cyclic structures.  For the rest of the thesis, we are going to refer to latent variables with $L$, and $L$ only means latent variables.

\section{Various Approaches to Causal Inference in the Presence of Latent Variables}

In this section I will introduce some classical approaches to causal inference with latent variables. Notice that many methods assume linear models, which requires variables to be continuous. If the measured variables are categorical which is common in psychometrics, some of the methods, such as the factor analysis and FOFC, can still be performed: regarding the discrete variables as discretized from multivariate normal distributions, certain rank constraints still hold for their covariance matrix \cite{cox}\cite{binaryTetrad}\cite{wang2020causal}. 
\subsection{Factor Analysis} 
    The factor analytic model is a kind of structural equation method that models the relation between the measured and latent variables. It assumes linear models, and studies the relation between the measured and latent variables by patterns of covariance matrix of the measured variables \cite{FALV}. This method estimates the number of latents in a heuristic way; it can only identify structures up to an orthogonal transformation, which means that given any square matrix $Q$ such that $QQ^T=I$, if the relation between the measured random vector $\mathbf{X}$ and the latent variable vector $\mathbf{U}$ can be written as a linear equation $\mathbf{X=AU}$, then the coefficient matrix $\mathbf{A'=AQ}$ and $\mathbf{U'=Q^TU}$ give the same result.  Furthermore, it cannot handle the case where some measured variables are causing latents and cannot identify the direct causal relations between measured variables, which is an important to study the latents since the direct causal relation between measured variables can mislead the process of estimating the structure of latents. Glymour (1998) argued that factor analysis and regression used to measure intelligence and its influence on people's behavior then were neither reliable nor uniquely correct \cite{Glymour1998-GLYWWW}.
\subsection{Structural Equation Modeling (SEM)}
As introduced in Chapter 1, Structral Equation Model (SEM) represents causal connections with equations.  It involves a variety of ways to model information about some phenomenons by equations with parameters which will be estimated \cite{SEM}. Construction of SEM is often heuristic: people need background knowledge to start constructing the model and certain assumptions are needed, such as linearity, to form the equation and estimate parameters.  
\subsection{Extension of LiNGAM with Overcomplete ICA:}
    Independent Component Analysis (ICA) assumes that the data is generated by:
    \begin{center}
        $\mathbf{x=Ae}$
    \end{center}
    where $\mathbf{x\in}\mathbb{R}^p$, $\mathbf{e\in}\mathbb{R}^d$, $\mathbf{A\in}\mathbb{R}^{p\times d}$.  Elements in $\mathbf{e}$ are assumed to be independent from each other and at most one of them is Gaussian \cite{ding2019likelihoodfreeICA}. Given $\mathbf{x}$ as the observed data, the purpose of ICA is to estimate $\mathbf{A}$ and $\mathbf{e}$.  When $d=p$, this process is called Complete ICA; when $d>p$, it is Overcomplete ICA;  when $d<p$, it is Undercomplete ICA.\\ \textcolor{white}{abc}
LiNGAM (short for Linear Non-Gaussian Acyclic Model) is an algorithm that finds causal orders by finding a permutation matrix $\mathbf{P}$ that makes $\mathbf{I-\Tilde{A}^{-1}}$ lower triangular, with $\mathbf{\Tilde{A}}$ being the estimate of $\mathbf{A}$ from processing the data with Complete ICA \cite{JMLR:v7:shimizu06a}.  This algorithm can also be used to find causal structures with latent variables, using Overcomplete ICA instead \cite{HOYER2008362}.  However, this method usually makes rather strong parametric assumptions on the distribution of independent components, which may be violated on real data, leading to sub-optimal or even wrong solutions \cite{ding2019likelihoodfreeICA}.\\ \textcolor{white}{abc}  
Furthermore, notice that ICA assume all elements in $\mathbf{e}$ are independent.  In the latent variable case, it assumes that the latents are independent from each other, so LiNGAM with the Overcomplete ICA cannot be used to recover connections between the latents.  In many studies the main interest is rather to learn how different latents causally influence each other, in which case the Overcomplete ICA is not helpful.
    
    \subsection{Autoencoder}
    An autoencoder is a type of neural network that learns a hidden layer $\mathbf{h}$ from the input data.  The network has two parts, the encoder $\mathbf{h=f(x)}$ and decoder $\mathbf{o=g(h)}$, and the purpose is to learn $f$ and $g$ to minimize the distance between $\mathbf{x}$ and $\mathbf{o}$ \cite{Goodfellow-et-al-2016}.
    When the number of nodes in the code layer is smaller than the input layer like fig \ref{fig:autoencoder}, it is called an Undercomplete Autoencoder.\\ \textcolor{white}{abc}
\begin{figure}     
\begin{center}
\begin{tikzpicture}[scale=0.15]
\tikzstyle{every node}+=[inner sep=0pt]
\draw [black] (14,-46.8) circle (3);
\draw (14,-46.8) node {$X_4$};
\draw [black] (14,-37.8) circle (3);
\draw (14,-37.8) node {$X_3$};
\draw [black] (14,-15.6) circle (3);
\draw (14,-15.6) node {$X_1$};
\draw [black] (61.7,-15.6) circle (3);
\draw (61.7,-15.6) node {$O_1$};
\draw [black] (61.7,-25.6) circle (3);
\draw (61.7,-25.6) node {$O_2$};
\draw [black] (14,-26.5) circle (3);
\draw (14,-26.5) node {$X_2$};
\draw [black] (61.7,-35.8) circle (3);
\draw (61.7,-35.8) node {$O_3$};
\draw [black] (61.7,-46.8) circle (3);
\draw (61.7,-46.8) node {$O_4$};
\draw [black] (37.1,-18.9) circle (3);
\draw (37.1,-18.9) node {$h_1$};
\draw [black] (38.3,-28.5) circle (3);
\draw (38.3,-28.5) node {$h_2$};
\draw [black] (38.3,-37.8) circle (3);
\draw (38.3,-37.8) node {$h_3$};
\draw [black] (16.97,-16.02) -- (34.13,-18.48);
\fill [black] (34.13,-18.48) -- (33.41,-17.87) -- (33.27,-18.86);
\draw [black] (16.65,-17.01) -- (35.65,-27.09);
\fill [black] (35.65,-27.09) -- (35.18,-26.28) -- (34.71,-27.16);
\draw [black] (16.85,-25.56) -- (34.25,-19.84);
\fill [black] (34.25,-19.84) -- (33.33,-19.61) -- (33.65,-20.56);
\draw [black] (16.32,-35.9) -- (34.78,-20.8);
\fill [black] (34.78,-20.8) -- (33.84,-20.92) -- (34.48,-21.69);
\draw [black] (16.21,-17.62) -- (36.09,-35.78);
\fill [black] (36.09,-35.78) -- (35.83,-34.87) -- (35.16,-35.61);
\draw [black] (16.8,-36.73) -- (35.5,-29.57);
\fill [black] (35.5,-29.57) -- (34.57,-29.39) -- (34.93,-30.33);
\draw [black] (17,-37.8) -- (35.3,-37.8);
\fill [black] (35.3,-37.8) -- (34.5,-37.3) -- (34.5,-38.3);
\draw [black] (16.4,-45) -- (35.9,-30.3);
\fill [black] (35.9,-30.3) -- (34.96,-30.39) -- (35.57,-31.19);
\draw [black] (40.96,-36.41) -- (59.04,-26.99);
\fill [black] (59.04,-26.99) -- (58.1,-26.91) -- (58.56,-27.8);
\draw [black] (40.93,-27.05) -- (59.07,-17.05);
\fill [black] (59.07,-17.05) -- (58.13,-17) -- (58.61,-17.87);
\draw [black] (40.07,-18.5) -- (58.73,-16);
\fill [black] (58.73,-16) -- (57.87,-15.61) -- (58,-16.6);
\draw [black] (39.99,-19.69) -- (58.81,-24.81);
\fill [black] (58.81,-24.81) -- (58.16,-24.12) -- (57.9,-25.08);
\draw [black] (41.16,-29.39) -- (58.84,-34.91);
\fill [black] (58.84,-34.91) -- (58.22,-34.19) -- (57.92,-35.15);
\draw [black] (40.66,-30.35) -- (59.34,-44.95);
\fill [black] (59.34,-44.95) -- (59.01,-44.07) -- (58.4,-44.85);
\draw [black] (41.29,-37.54) -- (58.71,-36.06);
\fill [black] (58.71,-36.06) -- (57.87,-35.63) -- (57.96,-36.62);
\draw [black] (39.57,-20.6) -- (59.23,-34.1);
\fill [black] (59.23,-34.1) -- (58.85,-33.24) -- (58.28,-34.06);
\draw [black] (16.81,-45.76) -- (35.49,-38.84);
\fill [black] (35.49,-38.84) -- (34.56,-38.65) -- (34.91,-39.59);
\draw [black] (41.1,-38.88) -- (58.9,-45.72);
\fill [black] (58.9,-45.72) -- (58.33,-44.97) -- (57.97,-45.9);
\end{tikzpicture}
\end{center}
        \caption{An Example of Undercomplete Autoencoder:the left side is the encoder, mapping the input data to the code layer $\mathbf{h}$, the right side is the decoder, outputting regenerated data}
        \label{fig:autoencoder}
\end{figure}
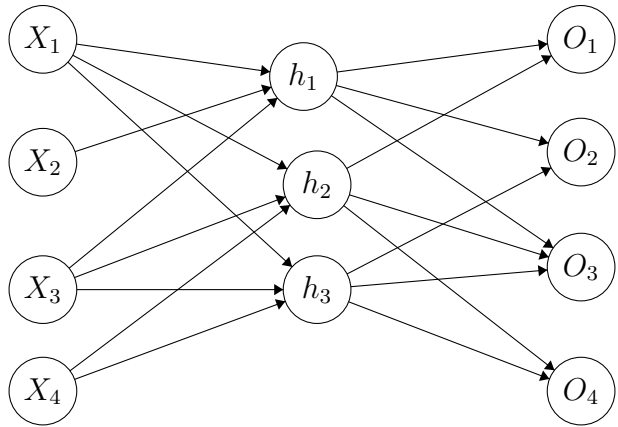Apparently the code layer and its connection from the input layer represents information about the latent variables. Here $f$ and $g$ do not have to be linear, which enables the autoencoder to learn more general cases.  But when the encoder and decoder have too few constraint, the autoencoder can generate data that are close to the original input without learning useful information about the latents \cite{Goodfellow-et-al-2016}.
    
   \subsection{FCI}
    Some causal discovery algorithms, such as Fast Causal Inference (FCI), are able to identify the existence of latent variables under certain assumptions. Roughly speaking, FCI starts with a complete undirected graph in which every variable in the dataset is a vertex; based on the causal Markov and Faithful assumptions, which will be explained in the next section, it then uses conditional independence tests to detect causal relations and latent variables.  For instance, the causal graph fig \ref{fig:FCIex}  can be identified by FCI by first identifying $Y$ and $W$ as colliders, then finding the existence of the latent variable from the double headed edge $Y\leftrightarrow W$.\\ \textcolor{white}{abc} 
    The output of FCI is a Markov equivalence class, which is a set if causal graphs that entail the same set of conditional independence relations, instead of a single causal graph, and there could be many graphs in one equivalence class. While FCI can sometimes detect the existence of latent variables, in other cases it cannot determine whether there is a latent variable or not. For example, whenever $X$ is a (possibly indirect) cause of $Y$, FCI cannot tell whether there is also a latent confounder of $X$ and $Y$ or not. FCI cannot identify the number of latents that a group of measured variables, are sharing and its performance is not good in simulations.
    
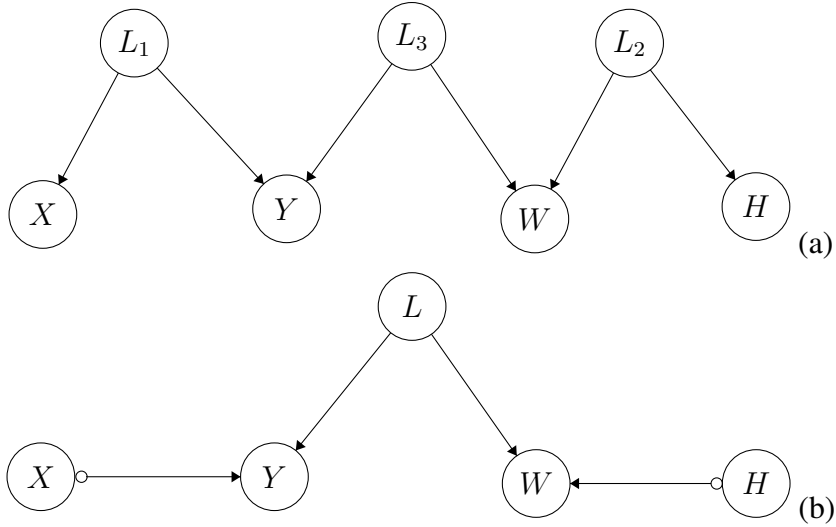
\begin{figure}
\begin{center}
\begin{tikzpicture}[scale=0.15]
\tikzstyle{every node}+=[inner sep=0pt]
\draw [black] (14.2,-11) circle (3);
\draw (14.2,-11) node {$L_1$};
\draw [black] (38.8,-10.4) circle (3);
\draw (38.8,-10.4) node {$L_3$};
\draw [black] (58.1,-11) circle (3);
\draw (58.1,-11) node {$L_2$};
\draw [black] (6.1,-26.2) circle (3);
\draw (6.1,-26.2) node {$X$};
\draw [black] (49.7,-26.6) circle (3);
\draw (49.7,-26.6) node {$W$};
\draw [black] (69.3,-25.5) circle (3);
\draw (69.3,-25.5) node {$H$};
\draw [black] (27.7,-25.7) circle (3);
\draw (27.7,-25.7) node {$Y$};
\draw [black] (12.79,-13.65) -- (7.51,-23.55);
\fill [black] (7.51,-23.55) -- (8.33,-23.08) -- (7.45,-22.61);
\draw [black] (59.93,-13.37) -- (67.47,-23.13);
\fill [black] (67.47,-23.13) -- (67.37,-22.19) -- (66.58,-22.8);
\draw [black] (56.68,-13.64) -- (51.12,-23.96);
\fill [black] (51.12,-23.96) -- (51.94,-23.49) -- (51.06,-23.02);
\draw [black] (40.47,-12.89) -- (48.03,-24.11);
\fill [black] (48.03,-24.11) -- (47.99,-23.17) -- (47.16,-23.73);
\draw [black] (37.04,-12.83) -- (29.46,-23.27);
\fill [black] (29.46,-23.27) -- (30.34,-22.92) -- (29.53,-22.33);
\draw [black] (16.23,-13.21) -- (25.67,-23.49);
\fill [black] (25.67,-23.49) -- (25.5,-22.56) -- (24.76,-23.24);
\end{tikzpicture}
(a)    
\end{center}

\begin{center}
\begin{tikzpicture}[scale=0.15]
\tikzstyle{every node}+=[inner sep=0pt]
\draw [black] (38.8,-10.4) circle (3);
\draw (38.8,-10.4) node {$L$};
\draw [black] (49.8,-26.1) circle (3);
\draw (49.8,-26.1) node {$W$};
\draw [black] (69.3,-26.1) circle (3);
\draw (69.3,-26.1) node {$H$};
\draw [black] (65.8,-26.1) circle (0.5);
\draw (65.8,-26.1) node {};
\draw [black] (5.9,-25.5) circle (3);
\draw (5.9,-25.5) node {$X$};
\draw [black]  (9.5,-25.5) circle (0.5);
\draw  (9.5,-25.5) node {};
\draw [black] (26.6,-25.5) circle (3);
\draw (26.6,-25.5) node {$Y$};
\draw [black] (40.52,-12.86) -- (48.08,-23.64);
\fill [black] (48.08,-23.64) -- (48.03,-22.7) -- (47.21,-23.27);
\draw [black] (9.9,-25.5) -- (23.6,-25.5);
\fill [black] (23.6,-25.5) -- (22.8,-25) -- (22.8,-26);
\draw [black] (36.91,-12.73) -- (28.49,-23.17);
\fill [black] (28.49,-23.17) -- (29.38,-22.86) -- (28.6,-22.23);
\draw [black] (65.3,-26.1) -- (52.8,-26.1);
\fill [black] (52.8,-26.1) -- (53.6,-26.6) -- (53.6,-25.6);
\end{tikzpicture}
(b)
\end{center}    
    \caption{FCI outputs the graph $(b)$  with $L$ being the latent common cause with the true graph $(a)$}
    \label{fig:FCIex}
\end{figure}

\subsection{Rank Constraint, FOFC and FTFC }
    
When the underlying distribution is linear, rank constraints of submatrices of the covariance matrix, which are special cases of tensor constraints (explained in section 4.1.2), can be used to model latent variables  \cite{Sullivant_2010}.  Algorithms like Find One Factor Cluster (FOFC) and Find Two Factor Cluster (FTFC) have been invented using the rank constraint to cluster variables. However, FOFC can only identify clusters sharing one latent and needs at least three measured variables in one cluster to do so \cite{Kummerfeld:2016:CCM:2939672.2939838}; FTFC can only identify clusters sharing two latents and needs at least six variables to do so \cite{FTFC}.\\ \textcolor{white}{abc}  
Spearman (1904) established the tetrad equation in his general intelligence theory  \cite{SpearmanTetradEquation}.  If the only common cause of people's mental performance is their ``intelligence,'' the tetrad equation should hold for any combinations of measured mental performances:
\begin{center}
    $\rho_{pq}\rho_{rs}-\rho_{rq}\rho_{ps}=0$
\end{center}
where $\rho_{xy}$ is the correlation between two measured result between two mental performance $x$ and $y$.  
This tetrad constraint is a special case of the rank constraint that identifies clusters of variables sharing only one common cause.
Assuming linear models, we first define \textbf{rank faithfulness:}\\ 
\textbf{Rank Faithfulness:} A distribution $P$ is rank-faithful to a DAG $G=\mathbf{\langle V,E\rangle}$ if and only if for all $\mathbf{A,B}\subset\mathbf{V}$,  $rank(\Sigma_{\mathbf{A,B}}) = t_m(\mathbf{A,B})$, 
where $\Sigma_{\mathbf{A,B}}$ is the cross-covariance matrix of $\mathbf{A,B}$ and $t_m(\mathbf{A,B})=min\{|\mathbf{C_A}|+|\mathbf{C_B}||\mathbf{C_A, C_B\subset V}$; $\mathbf{C_A, C_B }$ t-separates $\mathbf{   A, B}\}$.\footnote{From now on we define $\Sigma_{\mathbf{A,B}}$ as the cross-covariance matrix, with rows of $\mathbf{A}$ and columns of $\mathbf{B}$.}\\ \textcolor{white}{abc}
Notice that in the parameter space (of the linear system) in the sense of Lebesgue measure, the faithfulness assumption holds almost everywhere.  \\ \textcolor{white}{abc}
Assuming the \textbf{Causal Markov Condition} and \textbf{rank faithfulness, }the rank constraint can be used to model latent common causes by checking whether submatrices of the certain covariance matrix has a zero determinant and further partially identifies the connection between the latents \cite{Sullivant_2010} \cite{CausationPredictionSearch}. It can also be used when conditioning on subsets of measured variables, called the conditional rank constraint \cite{conditionaltetrad}.
\begin{theorem}
\textbf{Rank Constraint:} The Covariance matrix $\sum_{\mathbf{A,B}}$ has rank less or equal to r for the covariance matrices of all linear model with the graph  $G=\langle \mathbf{V, E}\rangle$ if and only if there is a pair of sets $\mathbf{C_A, C_B\subset V}$ such that $\mathbf{C_A, C_B}$ t-separates $\mathbf{A, B}$ and $|\mathbf{C_A}|+|\mathbf{C_B}|\leq r$.
\end{theorem}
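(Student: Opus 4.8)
\emph{Setup.} The natural tool is the \emph{trek rule}. Writing the linear model on the DAG $G$ as $(\mathbf{I}-\Lambda)^{T}\mathbf{X}=\epsilon$ with $\Lambda$ the matrix of edge coefficients ($\Lambda_{ji}\neq 0$ iff $j\to i$) and $\Omega=\mathrm{Cov}(\epsilon)$ diagonal, one has $\Sigma=(\mathbf{I}-\Lambda)^{-T}\Omega(\mathbf{I}-\Lambda)^{-1}$, and since $G$ is acyclic $(\mathbf{I}-\Lambda)^{-1}=\sum_{k\ge 0}\Lambda^{k}$ has $(v,i)$ entry equal to $\sum_{\pi:v\to i}\prod_{e\in\pi}\lambda_e$, the sum over directed paths. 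Expanding the product gives, for $a,b\in\mathbf{V}$,
\[
\Sigma_{a,b}\;=\;\sum_{\tau}\ \omega_{\mathrm{top}(\tau)}\prod_{e\in\tau}\lambda_e ,
\]
the sum ranging over all treks $\tau$ from $a$ to $b$ (each side a directed path, the two sides allowed to share vertices). I would take this identity as the starting point and prove the two implications separately.

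\emph{Sufficiency.} Assume $(\mathbf{C_A},\mathbf{C_B})$ t-separates $\mathbf{A}$ from $\mathbf{B}$ with $|\mathbf{C_A}|+|\mathbf{C_B}|\le r$. Fix $a\in\mathbf{A}$, $b\in\mathbf{B}$ and split the treks from $a$ to $b$ into those whose $\mathbf{A}$-side path meets $\mathbf{C_A}$ and the remainder, whose $\mathbf{B}$-side path must then meet $\mathbf{C_B}$. For a trek in the first class let $c$ be the \emph{last} vertex of $\mathbf{C_A}$ along its $\mathbf{A}$-side; cutting there writes the trek uniquely as a directed path $c\to a$ meeting $\mathbf{C_A}$ only at its start, concatenated with a trek from $c$ to $b$. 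Acyclicity is what makes this clean: a concatenation of two directed walks in a DAG is automatically a simple path, so the cut is a bijection and the first class sums to $\sum_{c\in\mathbf{C_A}}R_{ca}\,\Sigma_{c,b}$, where $R_{ca}$ collects the weights of the $\mathbf{C_A}$-avoiding paths $c\to a$. Symmetrically, the second class sums to $\sum_{c'\in\mathbf{C_B}}\widetilde\Sigma_{a,c'}\,R'_{c'b}$, where $\widetilde\Sigma_{a,c'}$ collects the weights of treks $a\to c'$ whose $\mathbf{A}$-side avoids $\mathbf{C_A}$. Hence $\Sigma_{\mathbf{A},\mathbf{B}}=R\,\Sigma_{\mathbf{C_A},\mathbf{B}}+\widetilde\Sigma_{\mathbf{A},\mathbf{C_B}}\,R'$, a product of an $|\mathbf{A}|\times(|\mathbf{C_A}|+|\mathbf{C_B}|)$ matrix and a $(|\mathbf{C_A}|+|\mathbf{C_B}|)\times|\mathbf{B}|$ matrix, so $\mathrm{rank}\,\Sigma_{\mathbf{A},\mathbf{B}}\le|\mathbf{C_A}|+|\mathbf{C_B}|\le r$ for every parameter value.

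\emph{Necessity.} I would prove the contrapositive: if every t-separator of $(\mathbf{A},\mathbf{B})$ has size at least $r+1$, then some choice of parameters gives $\mathrm{rank}\,\Sigma_{\mathbf{A},\mathbf{B}}\ge r+1$. Two ingredients are needed. First, a Menger-type duality for treks: the minimum size of a t-separator of $(\mathbf{A},\mathbf{B})$ equals the maximum size of a system of treks $\tau_1,\dots,\tau_m$ joining distinct $a_i\in\mathbf{A}$ to distinct $b_i\in\mathbf{B}$ with \emph{no sided intersection}; this follows from a max-flow/min-cut argument on an auxiliary network in which each vertex is split into an ``in'' and an ``out'' copy. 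Under the hypothesis there is such a system of size $r+1$, from some $\mathbf{A}'\subseteq\mathbf{A}$ to $\mathbf{B}'\subseteq\mathbf{B}$. Second, a Lindström--Gessel--Viennot expansion of the minor: $\det\Sigma_{\mathbf{A}',\mathbf{B}'}$ is a signed sum of weights of all trek systems $\mathbf{A}'\to\mathbf{B}'$, and the involution swapping the two sides past the first sided intersection cancels all systems that have one, leaving $\det\Sigma_{\mathbf{A}',\mathbf{B}'}$ equal to a signed sum over the no-sided-intersection systems alone. Since at least one such system exists, inspection of a suitably extremal surviving monomial shows this polynomial in the $\lambda_e$ and $\omega_v$ is not identically zero, so a full-measure set of parameter values makes $\det\Sigma_{\mathbf{A}',\mathbf{B}'}\neq 0$, whence $\mathrm{rank}\,\Sigma_{\mathbf{A},\mathbf{B}}\ge r+1$, contradicting the assumption.

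\emph{Main obstacle.} The sufficiency half is essentially careful bookkeeping once acyclicity is exploited; the substance is in necessity, where both the trek-Menger duality and---more delicately---the verification that the no-sided-intersection terms surviving the involution do not themselves cancel (a sign and leading-monomial analysis) require real work. These are precisely the arguments of Sullivant--Talaska--Draisma, which I would cite and adapt rather than reprove in full.
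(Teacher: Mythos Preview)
The paper does not actually prove this theorem; it is stated as a known result with a citation to Sullivant, Talaska, and Draisma (2010), and the paper's own technical work (e.g., Theorem~\ref{thm:ranklinear}) builds on it rather than reproving it. Your proposal is correct and is precisely the Sullivant--Talaska--Draisma argument you yourself name at the end: the trek-rule factorization for sufficiency and the Menger/Lindstr\"om--Gessel--Viennot machinery for necessity. One minor remark: the paper's later generalization (Theorem~\ref{thm:ranklinear}) establishes the sufficiency direction via Cauchy--Binet applied to $\Sigma=(I-L)^{-T}\Phi(I-L)^{-1}$ together with the path-system lemma (Lemma~\ref{lm:detsysdipath}), rather than your direct trek-cutting factorization $\Sigma_{\mathbf{A},\mathbf{B}}=R\,\Sigma_{\mathbf{C_A},\mathbf{B}}+\widetilde\Sigma_{\mathbf{A},\mathbf{C_B}}\,R'$; both routes are standard and yield the same bound, but the Cauchy--Binet route is what generalizes cleanly to the cyclic and partially-nonlinear settings the paper cares about.
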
Some algorithms using the rank constraint to identify latent variables, such as $FOFC$ and $FTFC$ \cite{Kummerfeld:2016:CCM:2939672.2939838}, assuming that there are no direct causal relations between measured variables. For four variables $X_1, X_2, X_3$ and $X_4$, if for every partition $\mathbf{A,B}$ of $\{X_1, X_2, X_3, X_4\}$ with $\mathbf{|A| = |B| =2}$,  $\Sigma_\mathbf{A,B}$ has a zero rank, the algorithm will cluster them together, put a latent common cause among them and later merge all the clusters having overlapping variables. \\ \textcolor{white}{abc}
Another application is $FTFC$. For any six variables $X_1, X_2, X_3$, $X_4$, $X_5$ and $X_6$, if $\sum_\mathbf{A,B}$ has a zero rank for every partition $\mathbf{A,B}$ of $\{X_1, X_2, X_3, X_4,  X_5, X_6\}$, i.e., the sexted constraint, the algorithm will put two latent common causes among them and later merge clusters similarly as $FOFC$. \\ \textcolor{white}{abc}
Although the clusters identified using the rank constraint are assumed to share latent common causes, the constraint will still hold if in the true graph one of the measured variable is a parent of a latent.  That is to say, the output of the cluster allows one edge between the measured and the latents to be reversed.  There is also spider model as shown in \ref{fig:spidermodel}, where the measured variables are sharing one latent variable, but it is $(\{L\},\{L\})$ that \textit{\textbf{t-separates}} partitions of the set of measured variables. In this case the sum of the cardinality of the choke set is 2, but it does not entail that there are two latent common causes.
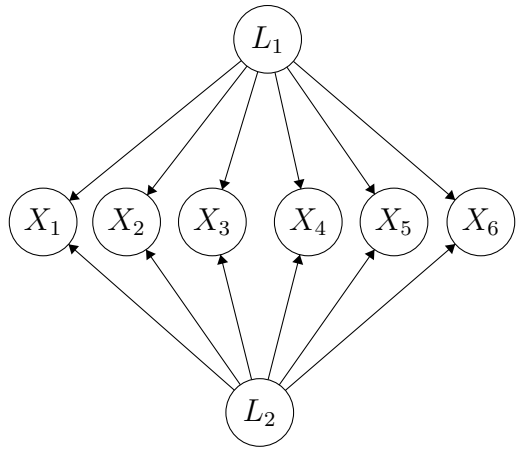
\begin{figure}[H]
    \begin{center}
\begin{tikzpicture}[scale=0.15]
\tikzstyle{every node}+=[inner sep=0pt]
\draw [black] (26.1,-24.1) circle (3);
\draw (26.1,-24.1) node {$X_2$};
\draw [black] (18.7,-24.1) circle (3);
\draw (18.7,-24.1) node {$X_1$};
\draw [black] (38.6,-7.9) circle (3);
\draw (38.6,-7.9) node {$L_1$};
\draw [black] (33.7,-24.1) circle (3);
\draw (33.7,-24.1) node {$X_3$};
\draw [black] (42.2,-24.1) circle (3);
\draw (42.2,-24.1) node {$X_4$};
\draw [black] (49.8,-24.1) circle (3);
\draw (49.8,-24.1) node {$X_5$};
\draw [black] (57.5,-24.1) circle (3);
\draw (57.5,-24.1) node {$X_6$};
\draw [black] (37.9,-41) circle (3);
\draw (37.9,-41) node {$L_2$};
\draw [black] (36.27,-9.79) -- (21.03,-22.21);
\fill [black] (21.03,-22.21) -- (21.96,-22.09) -- (21.33,-21.31);
\draw [black] (36.77,-10.28) -- (27.93,-21.72);
\fill [black] (27.93,-21.72) -- (28.82,-21.4) -- (28.03,-20.79);
\draw [black] (37.73,-10.77) -- (34.57,-21.23);
\fill [black] (34.57,-21.23) -- (35.28,-20.61) -- (34.32,-20.32);
\draw [black] (39.25,-10.83) -- (41.55,-21.17);
\fill [black] (41.55,-21.17) -- (41.86,-20.28) -- (40.89,-20.5);
\draw [black] (40.31,-10.37) -- (48.09,-21.63);
\fill [black] (48.09,-21.63) -- (48.05,-20.69) -- (47.23,-21.26);
\draw [black] (40.88,-9.85) -- (55.22,-22.15);
\fill [black] (55.22,-22.15) -- (54.94,-21.25) -- (54.29,-22.01);
\draw [black] (37.18,-38.09) -- (34.42,-27.01);
\fill [black] (34.42,-27.01) -- (34.13,-27.91) -- (35.1,-27.67);
\draw [black] (38.64,-38.09) -- (41.46,-27.01);
\fill [black] (41.46,-27.01) -- (40.78,-27.66) -- (41.75,-27.91);
\draw [black] (39.63,-38.55) -- (48.07,-26.55);
\fill [black] (48.07,-26.55) -- (47.2,-26.92) -- (48.02,-27.49);
\draw [black] (40.17,-39.04) -- (55.23,-26.06);
\fill [black] (55.23,-26.06) -- (54.3,-26.2) -- (54.95,-26.96);
\draw [black] (36.18,-38.54) -- (27.82,-26.56);
\fill [black] (27.82,-26.56) -- (27.87,-27.5) -- (28.69,-26.93);
\draw [black] (35.65,-39.02) -- (20.95,-26.08);
\fill [black] (20.95,-26.08) -- (21.22,-26.99) -- (21.88,-26.24);
\end{tikzpicture}
\end{center}
    \caption{For any six variables $X_1, X_2, X_3$, $X_4$, $X_5$ and $X_6$, if $\sum_\mathbf{A,B}$ has a zero rank for every partition $\mathbf{A,B}$ of $\{X_1, X_2, X_3, X_4,  X_5, X_6\}$, FTFC puts two latent common causes among them.}
    \label{fig:FTFC}
\end{figure}

\begin{figure}
    
\begin{center}
\begin{tikzpicture}[scale=0.12]
\tikzstyle{every node}+=[inner sep=0pt]
\draw [black] (61.1,-40.4) circle (3);
\draw (61.1,-40.4) node {$X_5$};
\draw [black] (35.1,-44.6) circle (3);
\draw (35.1,-44.6) node {$X_3$};
\draw [black] (37,-4.6) circle (3);
\draw (37,-4.6) node {$L$};
\draw [black] (6.2,-29.7) circle (3);
\draw (6.2,-29.7) node {$X_1$};
\draw [black] (20.5,-39.4) circle (3);
\draw (20.5,-39.4) node {$X_2$};
\draw [black] (45.1,-44.6) circle (3);
\draw (45.1,-44.6) node {$X_4$};
\draw [black] (74.8,-26.1) circle (3);
\draw (74.8,-26.1) node {$X_6$};
\draw [black] (45.1,-23.5) circle (3);
\draw (45.1,-23.5) node {$L_4$};
\draw [black] (18,-13.7) circle (3);
\draw (18,-13.7) node {$L_1$};
\draw [black] (23.9,-19.8) circle (3);
\draw (23.9,-19.8) node {$L_2$};
\draw [black] (32.3,-25.5) circle (3);
\draw (32.3,-25.5) node {$L_3$};
\draw [black] (52.9,-19.8) circle (3);
\draw (52.9,-19.8) node {$L_5$};
\draw [black] (58.4,-11.2) circle (3);
\draw (58.4,-11.2) node {$L_6$};
\draw [black] (34.67,-6.5) -- (8.53,-27.8);
\fill [black] (8.53,-27.8) -- (9.46,-27.69) -- (8.83,-26.91);
\draw [black] (35.71,-7.31) -- (21.79,-36.69);
\fill [black] (21.79,-36.69) -- (22.58,-36.18) -- (21.68,-35.75);
\draw [black] (36.86,-7.6) -- (35.24,-41.6);
\fill [black] (35.24,-41.6) -- (35.78,-40.83) -- (34.78,-40.78);
\draw [black] (37.6,-7.54) -- (44.5,-41.66);
\fill [black] (44.5,-41.66) -- (44.84,-40.78) -- (43.86,-40.97);
\draw [black] (38.68,-7.09) -- (59.42,-37.91);
\fill [black] (59.42,-37.91) -- (59.39,-36.97) -- (58.56,-37.53);
\draw [black] (39.61,-6.08) -- (72.19,-24.62);
\fill [black] (72.19,-24.62) -- (71.74,-23.79) -- (71.25,-24.66);
\draw [black] (45.1,-26.5) -- (45.1,-41.6);
\fill [black] (45.1,-41.6) -- (45.6,-40.8) -- (44.6,-40.8);
\draw [black] (43.92,-20.74) -- (38.18,-7.36);
\fill [black] (38.18,-7.36) -- (38.04,-8.29) -- (38.96,-7.9);
\draw [black] (20.71,-12.4) -- (34.29,-5.9);
\fill [black] (34.29,-5.9) -- (33.36,-5.79) -- (33.79,-6.69);
\draw [black] (16.22,-16.11) -- (7.98,-27.29);
\fill [black] (7.98,-27.29) -- (8.86,-26.94) -- (8.05,-26.34);
\draw [black] (25.86,-17.53) -- (35.04,-6.87);
\fill [black] (35.04,-6.87) -- (34.14,-7.15) -- (34.9,-7.8);
\draw [black] (23.39,-22.76) -- (21.01,-36.44);
\fill [black] (21.01,-36.44) -- (21.64,-35.74) -- (20.66,-35.57);
\draw [black] (32.96,-22.57) -- (36.34,-7.53);
\fill [black] (36.34,-7.53) -- (35.68,-8.2) -- (36.65,-8.42);
\draw [black] (32.74,-28.47) -- (34.66,-41.63);
\fill [black] (34.66,-41.63) -- (35.04,-40.77) -- (34.05,-40.91);
\draw [black] (54.01,-22.59) -- (59.99,-37.61);
\fill [black] (59.99,-37.61) -- (60.16,-36.68) -- (59.23,-37.05);
\draw [black] (50.73,-17.73) -- (39.17,-6.67);
\fill [black] (39.17,-6.67) -- (39.4,-7.59) -- (40.09,-6.86);
\draw [black] (60.62,-13.22) -- (72.58,-24.08);
\fill [black] (72.58,-24.08) -- (72.32,-23.17) -- (71.65,-23.91);
\draw [black] (55.53,-10.32) -- (39.87,-5.48);
\fill [black] (39.87,-5.48) -- (40.48,-6.2) -- (40.78,-5.24);
\end{tikzpicture}
\end{center}
    \caption{The Spider model which will be identified as a cluster with two common latent variables, while $(\{L\},\{L\})$ \textit{\textbf{t-separates}} partitions of $\{X_1,...X_6\}$}
    \label{fig:spidermodel}
\end{figure}
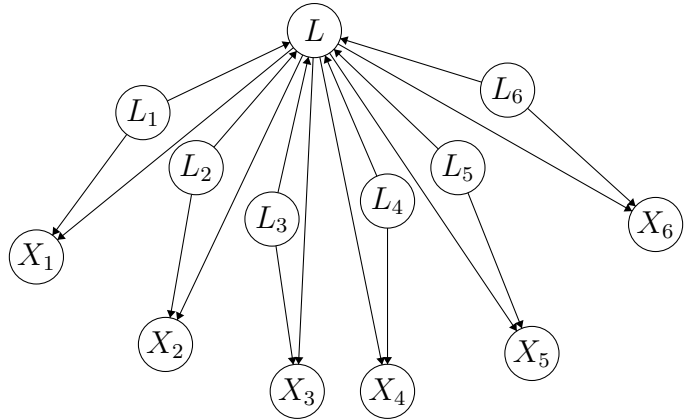

The rank constraint implies the number of latents causing the same set of measured variables but says nothing about how some of these latents are connected. For instance, in figure \ref{fig:FTFC} FTFC is able to identify the two latents $L_1$ and $L_2$, but while it can identify some causal relations between variables of different clusters, it is unable to distinguish if there are edges between $L_1$ and $L_2$ or how are those edges directed.  Furthermore, although the rank constraint is used to identify latent variables, there are cases that satisfy certain rank constraints with causal relations among measured variables.  For instance, fig \ref{fig:FTFCexpt} satisfies the sexted constraint, but there is only one latent common cause $L_1$.  Such case is ruled out by assumting that there is no direct causal relation between measured variables.\\ \textcolor{white}{abc}
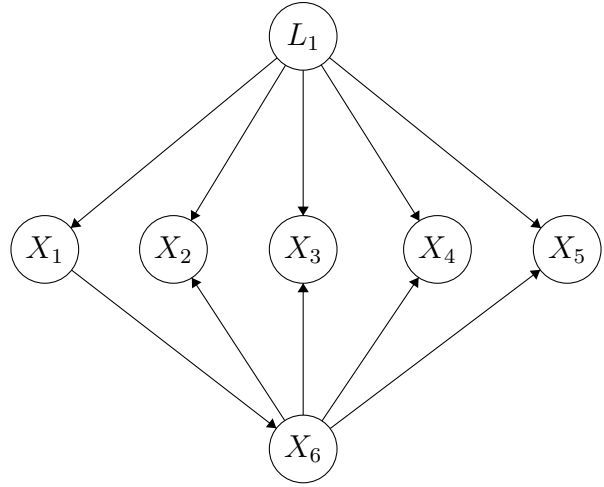
\begin{figure}
\begin{center}
\begin{tikzpicture}[scale=0.15]
\tikzstyle{every node}+=[inner sep=0pt]
\draw [black] (22.3,-29.9) circle (3);
\draw (22.3,-29.9) node {$X_2$};
\draw [black] (10.9,-29.9) circle (3);
\draw (10.9,-29.9) node {$X_1$};
\draw [black] (33.8,-11) circle (3);
\draw (33.8,-11) node {$L_1$};
\draw [black] (33.8,-29.9) circle (3);
\draw (33.8,-29.9) node {$X_3$};
\draw [black] (45.7,-29.9) circle (3);
\draw (45.7,-29.9) node {$X_4$};
\draw [black] (57.2,-29.9) circle (3);
\draw (57.2,-29.9) node {$X_5$};
\draw [black] (33.8,-47.6) circle (3);
\draw (33.8,-47.6) node {$X_6$};
\draw [black] (31.49,-12.91) -- (13.21,-27.99);
\fill [black] (13.21,-27.99) -- (14.15,-27.87) -- (13.51,-27.1);
\draw [black] (32.24,-13.56) -- (23.86,-27.34);
\fill [black] (23.86,-27.34) -- (24.7,-26.91) -- (23.85,-26.39);
\draw [black] (33.8,-14) -- (33.8,-26.9);
\fill [black] (33.8,-26.9) -- (34.3,-26.1) -- (33.3,-26.1);
\draw [black] (35.4,-13.54) -- (44.1,-27.36);
\fill [black] (44.1,-27.36) -- (44.1,-26.42) -- (43.25,-26.95);
\draw [black] (36.13,-12.89) -- (54.87,-28.01);
\fill [black] (54.87,-28.01) -- (54.56,-27.12) -- (53.93,-27.9);
\draw [black] (33.8,-44.6) -- (33.8,-32.9);
\fill [black] (33.8,-32.9) -- (33.3,-33.7) -- (34.3,-33.7);
\draw [black] (35.47,-45.11) -- (44.03,-32.39);
\fill [black] (44.03,-32.39) -- (43.16,-32.77) -- (43.99,-33.33);
\draw [black] (36.19,-45.79) -- (54.81,-31.71);
\fill [black] (54.81,-31.71) -- (53.87,-31.79) -- (54.47,-32.59);
\draw [black] (32.17,-45.08) -- (23.93,-32.42);
\fill [black] (23.93,-32.42) -- (23.95,-33.36) -- (24.79,-32.81);
\draw [black] (13.27,-31.73) -- (31.43,-45.77);
\fill [black] (31.43,-45.77) -- (31.1,-44.88) -- (30.49,-45.67);
\end{tikzpicture}
\end{center}
\caption{A example that $FTFC$ cannot identify}
\label{fig:FTFCexpt}
\end{figure}I've discussed the rank constraint in the linear, acyclic model, but the assumption of acyclicity and linear model can be further relaxed while the rank constraint can still be entailed. To use the rank constraint to identify the latent variables, only the structure of a part of the graph related to the latents matters. Spirtes has proved that the rank constraint works with only linearity and acyclicity under the choke set\cite{LAchoke}.  Later in the this chapter I will show that we can further relax the assumption and only linearity is needed under the choke set for the rank constraint to hold.  \\
\textbf{LA below the choke set}\cite{LAchoke}: Let $D(\mathbf{C_A,A,}G)$ be the set of vertices on directed paths in $G$ from $\mathbf{C_A}$ to $\mathbf{A}$ except for the members of $\mathbf{C_A}$. If $S$ is a fixed-parameter SEM $\langle\phi,\theta\rangle$ with path diagram (graph) $G$,$S$ is LA below the sets $\mathbf{C_A,C_B}$ for $\mathbf{A,B}$ if and only if each member of $\mathbf{W}= D(\mathbf{C_A,A,}G)\cup D(\mathbf{C_B,B,}G)),$
\begin{enumerate}
    \item $\mathbf{V_{ext}=V}\cup\{\epsilon_X:X\in \mathbf{W}\}$;
    \item no member of $\mathbf{W}$ lies on a cycle;
    \item $G_{ext}$ is a directed graph over $\mathbf{V_{ext}}$ with sub-graph $G$, together with an edge from $\epsilon_X$ to $X$ for each $X\in\mathbf{W}$;
    \item for each $X\in D(\mathbf{C_A,A},G_{ext}),$ \newline $X=\Sigma_{V\in\mathbf{Pa}(X, G_{ext})\cap (D(\mathbf{C_A,A},G_{ext})\cup \mathbf{C_A})}a_{X,V}V+f_x(\mathbf{Pa}(X,G_{ext})\setminus(D(\mathbf{C_A,A},G_{ext})\cup \mathbf{C_A}))$\newline and for each $X\in D(\mathbf{C_B,B},G_{ext})$,\newline $X=\Sigma_{V\in\mathbf{Pa}(X, G_{ext})\cap (D(\mathbf{C_B,B},G_{ext})\cup \mathbf{C_B})}a_{X,V}V+g_x(\mathbf{Pa}(X,G_{ext})\setminus(D(\mathbf{C_B,B},G_{ext})\cup \mathbf{C_B}))$.
\end{enumerate}
\begin{theorem}\label{thm:LArank}\cite{LAchoke} Suppose $G$ is a directed graph containing $\mathbf{C_A, A, C_B}$ and $\mathbf{B}$, and $(\mathbf{C_A, C_B})$ t-separates $\mathbf{A, B}$ in $G$.  Then for all covariance matrices entailed by a fixed parameter structural equation model $S$, defined as in \ref{def:SEM}, with path diagram $G$ that is linear and acyclic below $\mathbf{C_A}$ and $\mathbf{C_B}$ for $\mathbf{A}$ and $\mathbf{B}$ rank$(\Sigma_{\mathbf{A, B}})\leq|\mathbf{C_A}|+|\mathbf{C_B}|$.
\end{theorem}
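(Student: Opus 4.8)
The plan is to adapt the proof of the trek-separation theorem for fully linear acyclic models \cite{Sullivant_2010}, peeling off only the linear and acyclic part of $S$ that lies below $\mathbf{C_A}$ and $\mathbf{C_B}$ and treating everything above them as opaque noise constrained only by the Causal Markov Assumption. Concretely, for each $A\in\mathbf{A}$ I would recursively substitute the structural equations of item (4) of ``LA below the choke set'', expanding only vertices of $D(\mathbf{C_A},\mathbf{A},G)$ and keeping every $C\in\mathbf{C_A}$ and every external term $f_X(\cdot)$ frozen; item (4) makes each expansion reintroduce only members of $D(\mathbf{C_A},\mathbf{A},G)\cup\mathbf{C_A}$ as new linear summands, and item (2) (no member of $\mathbf{W}$ lies on a cycle) makes the recursion terminate. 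The upshot is
\[
A=\sum_{C\in\mathbf{C_A}}\alpha_{A,C}\,C+\eta_A,\qquad \eta_A=\sum_{X}w_{X,A}\,f_X,
\]
where $\alpha_{A,C},w_{X,A}\in\mathbb{R}$ are sums of products of edge coefficients along directed paths whose interior vertices all lie in $D(\mathbf{C_A},\mathbf{A},G)$, $f_X$ is the external term of $X$, and $X$ ranges over the vertices admitting such a path into $A$; the $\mathbf{B}$-side likewise gives $B=\sum_{C'\in\mathbf{C_B}}\beta_{B,C'}\,C'+\zeta_B$ with $\zeta_B=\sum_Y w_{Y,B}\,g_Y$. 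Writing $P=(\alpha_{A,C})$ and $Q=(\beta_{B,C'})$, a direct expansion of $\mathrm{Cov}(A,B)$ gives
\[
\Sigma_{\mathbf{A},\mathbf{B}}=P\,\Sigma_{\mathbf{C_A},\mathbf{B}}+\big(\mathrm{Cov}(\eta_A,C')\big)_{A,C'}\,Q^{T}+\big(\mathrm{Cov}(\eta_A,\zeta_B)\big)_{A,B},
\]
whose first summand has rank at most $|\mathbf{C_A}|$ and whose second has rank at most $|\mathbf{C_B}|$. Hence it suffices to prove $\mathrm{Cov}(\eta_A,\zeta_B)=0$ for all $A\in\mathbf{A},B\in\mathbf{B}$; degenerate vertices (those in $\mathbf{C_A}$, or not descendants of $\mathbf{C_A}$, with $\eta_A=0$ or $\eta_A=A$) fall under the same argument.

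The crux is the vanishing of $\mathrm{Cov}(\eta_A,\zeta_B)$. First, no vertex outside $\mathbf{C_A}\cup\mathbf{C_B}$ can have both a directed path into $\mathbf{A}$ with interior in $D(\mathbf{C_A},\mathbf{A},G)$ and a directed path into $\mathbf{B}$ with interior in $D(\mathbf{C_B},\mathbf{B},G)$: pairing the two is a trek between $\mathbf{A}$ and $\mathbf{B}$ whose $\mathbf{A}$-side lies in $D(\mathbf{C_A},\mathbf{A},G)\cup\mathbf{A}$ (so misses $\mathbf{C_A}$) and whose $\mathbf{B}$-side misses $\mathbf{C_B}$, contradicting t-separation. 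Since every $X$ occurring in an $\eta_A$ lies outside $\mathbf{C_A}$ and every $Y$ occurring in a $\zeta_B$ outside $\mathbf{C_B}$, a vertex common to the two families would lie outside $\mathbf{C_A}\cup\mathbf{C_B}$ and would violate this; so $X\neq Y$ throughout $\mathrm{Cov}(\eta_A,\zeta_B)=\sum_{X,Y}w_{X,A}w_{Y,B}\,\mathrm{Cov}(f_X,g_Y)$, and it remains to show $\mathrm{Cov}(f_X,g_Y)=0$ for $X\neq Y$. If it were nonzero, the argument tuples $\mathbf{P}_X=\mathbf{Pa}(X,G_{ext})\setminus(D(\mathbf{C_A},\mathbf{A},G)\cup\mathbf{C_A})$ and $\mathbf{P}_Y=\mathbf{Pa}(Y,G_{ext})\setminus(D(\mathbf{C_B},\mathbf{B},G)\cup\mathbf{C_B})$ would be dependent, hence by contraposition of the Causal Markov Assumption not d-separated given $\emptyset$ in $G_{ext}$; an active path given $\emptyset$ has no colliders, so it is a trek between some $p\in\mathbf{P}_X$ and $q\in\mathbf{P}_Y$. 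Concatenating this trek with $p\to X$ and a $D(\mathbf{C_A},\mathbf{A},G)$-interior directed path $X\to\cdots\to A$ on one side, and with $q\to Y$ and $Y\to\cdots\to B$ on the other, and extracting genuine directed paths from the resulting walks (using item (2) to preclude cycles through the vertices involved), produces a trek between $A$ and $B$. By t-separation its $\mathbf{A}$-side contains a member of $\mathbf{C_A}$ or its $\mathbf{B}$-side a member of $\mathbf{C_B}$; in the first case, such a member can occur only on the initial segment up to $p$, and taking the last one, $C$, gives a directed path $C\to\cdots\to p\to X\to\cdots\to A$ from $\mathbf{C_A}$ into $\mathbf{A}$ through $p$, forcing $p\in D(\mathbf{C_A},\mathbf{A},G)$ and contradicting $p\in\mathbf{P}_X$; the second case is symmetric. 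Either way we reach a contradiction, so $\mathrm{Cov}(f_X,g_Y)=0$, hence $\mathrm{Cov}(\eta_A,\zeta_B)=0$, and $\mathrm{rank}(\Sigma_{\mathbf{A},\mathbf{B}})\le|\mathbf{C_A}|+|\mathbf{C_B}|$.

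The ideas above are short; the difficulty I anticipate is the bookkeeping around ``below the choke set''. I must verify that the first step's recursion really confines each $\eta_A$ to external terms $f_X$ whose argument vertices $\mathbf{P}_X$ are provably outside $D(\mathbf{C_A},\mathbf{A},G)\cup\mathbf{C_A}$ — this is precisely what powers the contradiction in the second step — while correctly treating the case where $D(\mathbf{C_A},\mathbf{A},G)$ and $D(\mathbf{C_B},\mathbf{B},G)$ overlap or share vertices with $\mathbf{C_A}\cup\mathbf{C_B}$, and the possibility that a single vertex is an interior vertex of many of the relevant paths. The ``straighten the concatenated walks'' and ``last $\mathbf{C_A}$-vertex'' steps also have to be made precise, leaning on item (2) to rule out cycles through $\mathbf{W}$. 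A secondary point is that the Causal Markov Assumption must be legitimately available: any cycles permitted by the hypothesis lie strictly above the choke sets, and the only d-separation facts used concern the external parent tuples, i.e.\ exactly the regime in which ``d-separation implies independence'' was set up in Chapter 1.
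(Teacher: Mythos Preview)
The paper does not prove this theorem; it is quoted from \cite{LAchoke}. The closest thing to a proof in the paper is the argument for the strengthening Theorem~\ref{thm:ranklinear} (linearity below the choke set, with cycles allowed, under a global generalized-additive assumption), so that is the natural point of comparison.

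Your route and the paper's diverge after the shared key step. Both arguments reduce to showing that the ``residual noise'' on the $\mathbf{A}$-side and on the $\mathbf{B}$-side are uncorrelated, and both prove this the same way: a nonzero cross-covariance would, by the Causal Markov Assumption, force a trek between the argument-parents of $f_X$ and $g_Y$, which concatenated with the below-choke-set paths would contradict t-separation (this is the paper's Lemma~\ref{lemma:rankcyclic0}). Where you differ is in how that zero-covariance fact yields the rank bound. You expand each $A\in\mathbf{A}$ recursively to obtain $A=\sum_{C\in\mathbf{C_A}}\alpha_{A,C}C+\eta_A$ and then read off $\Sigma_{\mathbf{A},\mathbf{B}}=P\,\Sigma_{\mathbf{C_A},\mathbf{B}}+M\,Q^{T}+0$ directly as a sum of two low-rank matrices. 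The paper instead writes the covariance on the restricted subgraph as $(I-L)^{-T}\Phi(I-L)^{-1}$, applies Cauchy--Binet twice to $\det\Sigma_{\mathbf{A},\mathbf{B}}$, and uses a block-zero structure of $\Phi$ together with the Sullivant--Talaska--Draisma path-system lemma to kill every term (Lemma~\ref{lemma:atleast0}).

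Your approach is more elementary---no determinantal machinery, no appeal to \cite{Sullivant_2010}---and it gives the rank bound as an explicit factorization rather than via vanishing minors. Its cost is the reliance on item~(2) of ``LA below the choke set'' to terminate the recursion; this is exactly why the paper's Cauchy--Binet route is preferred for Theorem~\ref{thm:ranklinear}, where cycles below the choke set are allowed and $(I-L)^{-1}$ absorbs them as a convergent series. For the acyclic-below statement you were asked to prove, your direct factorization is correct and arguably cleaner; the bookkeeping concerns you flag (degenerate $A$'s not descended from $\mathbf{C_A}$, overlap of $D(\mathbf{C_A},\mathbf{A},G)$ and $D(\mathbf{C_B},\mathbf{B},G)$, walk-to-trek extraction) are real but routine, and your ``last $\mathbf{C_A}$-vertex'' device handles them.
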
 \textcolor{white}{abc} Theorem \ref{thm:LArank} claims that given sets of variables  $\mathbf{A, B}$ and their choke sets $\mathbf{C_A}$ and $\mathbf{C_B}$, as long as the paths from the choke set to $\mathbf{A, B}$ are linear and there are no cycles between $\mathbf{A, B}$ and the choke sets, the rank constraint still holds.  For instance, in figure \ref{fig:FTFC}, regrading $\{L_1, L_2\}$ as the choke set, the sextad still holds if there are cycles between $L_1$ and $L_2$, or $L_1$ and $L_2$ are causally connected in a nonlinear way; but this constraint fails if $X_1 =f(L_1)+\epsilon_1$ and $X_2=g(L_1)+\epsilon_2$ where $f$ and $g$ are nonlinear functions, or $X_1$ is causing some parents of $L_1$.\\ \textcolor{white}{abc}

\section{Recent Developments }

\subsection{Generalized Independence Noise}
The Generalized Independence Noise (GIN) condition can be used to identify latent common causes assuming that noises are non-Gaussian  \cite{xie2020generalized} .\\ 
\textbf{Generalized Independence Noise (GIN) condition: }Let $\mathbf{Y}$ and $\mathbf{Z}$ be two observed random vectors. Suppose that the variables follow the linear non-Gaussian acyclic causal models.  If there exists a non zero $\omega$ such that for $E_{\mathbf{Y}||\mathbf{Z}}:=\omega^T\mathbf{Y}$, $\omega^T\mathbb{E[\mathbf{YZ}^T]}=0$, then we say that $(\mathbf{Z,Y})$ follows the GIN condition if and only if $\mathbb{E}_{\mathbf{Y}||\mathbf{Z}}$ is independent from $\mathbf{Z}$.\\ \textcolor{white}{abc}
The GIN condition can be used to recover the causal ordering among latent variables:
\begin{theorem}\label{thm:GIN}
Suppose that random vectors $\mathbf{L,Y}$ and $\mathbf{Z}$ are related in the following way:
\begin{center}
    $\mathbf{Y=}A\mathbf{L+E_Y}$,
    
    $\mathbf{Z=}B\mathbf{L+E_Z}$.
\end{center}
Denote by $l$ the dimensionality of $\mathbf{L}$.  Assume $A$ is of full column rank.  Then, if 1) $Dim(Y)>l$,  2) $\mathbf{E_Y}\indep \mathbf{L}$, 3) $\mathbf{E_Y}\indep \mathbf{E_Z}$, 4) the cross-covariance matrix of $l$ and $\mathbf{Z}$, $\Sigma_{L Z}=\mathbb{E}[\mathbf{LZ}^T]$ has rank $l$, then $E_{Y\parallel Z}\indep \mathbf{Z}$, i.e.,$(\mathbf{Z,Y})$ satisfies the GIN condition.
\end{theorem}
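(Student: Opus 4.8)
The plan is to reduce the statement to a short linear-algebra computation on the cross-covariance $\mathbb{E}[\mathbf{Y}\mathbf{Z}^{T}]$ and then read the independence directly off the structural equations.

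First I would compute $\mathbb{E}[\mathbf{Y}\mathbf{Z}^{T}]$. Substituting $\mathbf{Y}=A\mathbf{L}+\mathbf{E_Y}$ and $\mathbf{Z}=B\mathbf{L}+\mathbf{E_Z}$ and expanding, the two cross terms involving $\mathbf{E_Y}$ vanish: $\mathbb{E}[\mathbf{E_Y}\mathbf{L}^{T}]=0$ by assumption (2) and $\mathbb{E}[\mathbf{E_Y}\mathbf{E_Z}^{T}]=0$ by assumption (3) (using the zero-mean convention of the linear non-Gaussian model). Hence
\[
\mathbb{E}[\mathbf{Y}\mathbf{Z}^{T}]=A\bigl(\mathbb{E}[\mathbf{L}\mathbf{L}^{T}]B^{T}+\mathbb{E}[\mathbf{L}\mathbf{E_Z}^{T}]\bigr)=A\,\mathbb{E}[\mathbf{L}\mathbf{Z}^{T}]=A\,\Sigma_{LZ}.
\]
Because $A$ has full column rank $l$ and $\Sigma_{LZ}$ has rank $l$ by (4), Sylvester's rank inequality gives $\operatorname{rank}(A\Sigma_{LZ})=l$; since $\dim(\mathbf{Y})>l$ by (1), the matrix $A\Sigma_{LZ}$ has a nontrivial left null space, which is exactly the set of admissible $\omega$, so $E_{\mathbf{Y}\parallel\mathbf{Z}}$ is well defined.

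Next I would show that every admissible $\omega$ kills $A$. From $\omega^{T}A\Sigma_{LZ}=0$ and the fact that $\Sigma_{LZ}$ (an $l\times\dim(\mathbf{Z})$ matrix) has full row rank, hence a right inverse, we obtain $\omega^{T}A=0$. Substituting into the first structural equation,
\[
E_{\mathbf{Y}\parallel\mathbf{Z}}=\omega^{T}\mathbf{Y}=\omega^{T}A\mathbf{L}+\omega^{T}\mathbf{E_Y}=\omega^{T}\mathbf{E_Y},
\]
so $E_{\mathbf{Y}\parallel\mathbf{Z}}$ is a deterministic linear function of $\mathbf{E_Y}$ alone (and, as a by-product, does not depend on which admissible $\omega$ we picked).

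Finally I would deduce the independence. Since $\mathbf{Z}=B\mathbf{L}+\mathbf{E_Z}$ is a measurable function of the pair $(\mathbf{L},\mathbf{E_Z})$, it suffices to prove $\mathbf{E_Y}\indep(\mathbf{L},\mathbf{E_Z})$; then $\omega^{T}\mathbf{E_Y}=E_{\mathbf{Y}\parallel\mathbf{Z}}$ is independent of $\mathbf{Z}$. This is where the underlying linear non-Gaussian acyclic model is used: in such a model all exogenous noise terms are mutually (jointly) independent, so the noises generating $\mathbf{L}$ together with $\mathbf{E_Z}$ are jointly independent of $\mathbf{E_Y}$, and since $\mathbf{L}$ is a function of its own exogenous noises the pair $(\mathbf{L},\mathbf{E_Z})$ is jointly independent of $\mathbf{E_Y}$. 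I expect this last step to be the main obstacle: conditions (2) and (3) as literally written are only \emph{pairwise} independences, and pairwise independence does not imply joint independence, so the argument has to appeal to the global mutual independence of noises that is part of the model definition (equivalently, one should read (2)--(3) as the single condition $\mathbf{E_Y}\indep(\mathbf{L},\mathbf{E_Z})$). It is worth noting that non-Gaussianity plays no role in this direction of the GIN characterization; it is needed only for the converse, to preclude accidental cancellations in the covariance.
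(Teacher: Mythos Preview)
Your proof is correct and follows essentially the same route as the paper's argument (given for the slight generalization, Theorem~\ref{nonlinearGIN}, in the Appendix): compute $\mathbb{E}[\mathbf{YZ}^{T}]=A\Sigma_{LZ}$, use Sylvester's inequality together with the full row rank of $\Sigma_{LZ}$ to show that any admissible $\omega$ annihilates $A$, and conclude $E_{\mathbf{Y}\parallel\mathbf{Z}}=\omega^{T}\mathbf{E_Y}\indep\mathbf{Z}$ from conditions (2)--(3). Your explicit remark that (2) and (3) must really be read as the joint independence $\mathbf{E_Y}\indep(\mathbf{L},\mathbf{E_Z})$---guaranteed by the mutual independence of exogenous noises in the underlying model---is a valid technical point that the paper's proof leaves implicit.
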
 If $\mathbf{Z}$ and $\mathbf{Y}$ can be written as linear equations of the same vector of latent variables ($\mathbf{L}$) and $\mathbf{(Z,Y)}$ does not GIN condition, while condition $(1)(2)(4)$ are satisfied, then $\mathbf{E_Y}\nindep \mathbf{E_Z}$, which suggests that there exists common latent causes of $\mathbf{Z}$ that are descendants of $\mathbf{L}.$ \\ \textcolor{white}{abc}
If instead of having a latent common cause, measured variables have direct causal relations with each other, then they will satisfy a condition called IN:\\ 
\textbf{Independence Noise (IN) Condition: } Let $Y$ be a single variable and $\mathbf{Z}$ be a random vector. Suppose that the variables follow the linear non-Gaussian acyclic causal models.  Let $\omega$ be the vector of regression coefficients of regressing $Y$ on $\mathbf{Z}$, so $\omega=\mathbb{E}[Y\mathbf{Z}^T]\mathbb{E}^{-1}[\mathbf{ZZ}^T]=0$, then we say that $(\mathbf{Z},Y)$ follows the IN condition if and only if $\mathbb{E}_{Y||\mathbf{Z}}=Y-\omega^T\mathbf{Z}$ is independent from $\mathbf{Z}$.\\ \textcolor{white}{abc}
IN is a special case of GIN.
Let $\ddot{Y}:=(Y,\mathbf{Z})$, then the following two statements are true:
\begin{enumerate}
    \item $(\mathbf{Z},\ddot{Y} )$ follows the GIN condition if and only if $(\mathbf{Z,} Y )$ follows it.
    \item If $(\mathbf{Z,} Y )$ follows the IN condition, then  $(\mathbf{Z},\ddot{Y} )$ follows the GIN condition.
\end{enumerate}
Notice that IN can be used in practice to make sure that variables share latent common causes before applying GIN to figure out the structures among latent variables.\\ \textcolor{white}{abc}
I will describe a GIN-based algorithm that identifies the latent structures of non-Gaussian data \cite{xie2020generalized}.  Since it is not the core of the thesis, I will leave the details and provide an example to illustrate how the the algorithm works:
\begin{enumerate}
    \item for any $\mathbf{X\subset V}$, if $(\mathbf{V\setminus X, X})$ satisfies the GIN and no proper subset of $\mathbf{X}$ satisfies the GIN with its complement, label $\mathbf{X}$ as a cluster sharing $\mathbf{|X|-1}$ many latents and merge any two clusters having the same number of measure variables  overlapping variables;
    \item recover the causal ordering of latent variables by recursively checking the GIN condition: for any two clusters $\mathbf{Z,Y}$, if for  $\mathbf{Z'\subset Z}$, $(\mathbf{Z\setminus Z',Z'\cup Y})$ satisfies GIN condition, the latent cause of $\mathbf{Z}$ is no later than $\mathbf{Y}.$
\end{enumerate}
For instance, in the fig \ref{2clusters}, $(\{X_1\},\{X_2,X_3\})$ can be written as:
\begin{center}
    $\begin{bmatrix}X_2\\ \textcolor{white}{abc}X_3\end{bmatrix}=\begin{bmatrix}a_2\\ \textcolor{white}{abc}ba_3\end{bmatrix}L_1+\begin{bmatrix}\epsilon_{X_2}\\ \textcolor{white}{abc} a_3\epsilon_{L_2}+ \epsilon_{X_3}\end{bmatrix}$\textcolor{white}{*******}     $X_1=a_1L_1+ \epsilon_{X_1}$
\end{center}
since the four conditions in thereom 2.4 are satisfied (taking $X_1$ to be $\mathbf{Z}$ and $\{X_2,X_3\}$ as $\mathbf{Y}$), $(\{X_1\},\{X_2,X_3\})$ satisfies the GIN; $(\{X_4\},\{X_2,X_3\})$ is written as:
\begin{center}
    $\begin{bmatrix}X_2\\ \textcolor{white}{abc}X_3\end{bmatrix}=\begin{bmatrix}a_2\\ \textcolor{white}{abc}ba_3\end{bmatrix}L_1+\begin{bmatrix}\epsilon_{X_2}\\ \textcolor{white}{abc} a_3\epsilon_{L_2}+ \epsilon_{X_3}\end{bmatrix}$\textcolor{white}{*******}     $X_4=ba_4L_1+ \epsilon_{X_4}+a_4\epsilon_{L_2}$
\end{center}
since $a_3\epsilon_{L_2}+ \epsilon_{X_3}$ and $\epsilon_{X_4}+a_4\epsilon_{L_2}$ are not independent, condition 3) is not satisfied, so $(\{X_4\},\{X_2,X_3\})$ does not satisfy the GIN.

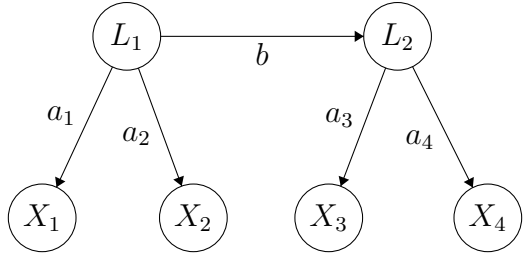
\begin{figure}
    \begin{center}
\begin{tikzpicture}[scale=0.15]
\tikzstyle{every node}+=[inner sep=0pt]
\draw [black] (42.1,-18.2) circle (3);
\draw (42.1,-18.2) node {$L_2$};
\draw [black] (50.1,-34.3) circle (3);
\draw (50.1,-34.3) node {$X_4$};
\draw [black] (36,-34.3) circle (3);
\draw (36,-34.3) node {$X_3$};
\draw [black] (10.6,-34.3) circle (3);
\draw (10.6,-34.3) node {$X_1$};
\draw [black] (24,-34.3) circle (3);
\draw (24,-34.3) node {$X_2$};
\draw [black] (18.1,-18.2) circle (3);
\draw (18.1,-18.2) node {$L_1$};
\draw [black] (21.1,-18.2) -- (39.1,-18.2);
\fill [black] (39.1,-18.2) -- (38.3,-17.7) -- (38.3,-18.7);
\draw (30.1,-18.7) node [below] {$b$};
\draw [black] (41.04,-21.01) -- (37.06,-31.49);
\fill [black] (37.06,-31.49) -- (37.81,-30.92) -- (36.88,-30.57);
\draw (38.3,-25.42) node [left] {$a_3$};
\draw [black] (16.83,-20.92) -- (11.87,-31.58);
\fill [black] (11.87,-31.58) -- (12.66,-31.07) -- (11.75,-30.64);
\draw (13.63,-25.21) node [left] {$a_1$};
\draw [black] (19.13,-21.02) -- (22.97,-31.48);
\fill [black] (22.97,-31.48) -- (23.16,-30.56) -- (22.22,-30.9);
\draw (20.29,-27.05) node [left] {$a_2$};
\draw [black] (43.43,-20.89) -- (48.77,-31.61);
\fill [black] (48.77,-31.61) -- (48.86,-30.67) -- (47.96,-31.12);
\draw (45.4,-27.35) node [left] {$a_4$};
\end{tikzpicture}
\end{center}
    \caption{$X_1$ and $X_2$ are in a cluster with latent variable causally earlier than the latent variable in the $\{X_3, X_4\}$ cluster }
    \label{2clusters}
\end{figure}

Figure \ref{2clusters}
Similarly, $(\{X_1,X_2\},\{X_4,X_3\})$ and $(\{X_3, X_4\},\{X_2,X_1\})$ both satisfy GIN, and the algorithm using GIN is able to identify the structure in fig \ref{2clusters} including $L_1\rightarrow L_2$ \cite{xie2020generalized}. Notice that in this graph only one tetrad constraint is satisfied, i.e., $\rho_{14}\rho_{23}-\rho_{13}\rho_{24}=0$. Since $\rho_{14}\rho_{23}=\rho_{13}\rho_{24}\neq\rho_{12}\rho_{34}$, the rank constraint entails that $X_1,X_2,X_3,X_4$ does not share one common cause.  Furthermore, $X_2,X_3$ do not share one common cause and $X_1,X_4$ do not share one common cause. Additionally, if we find no conditional independence among $X_1,X_2,X_3,X_4$, and background knowledge ensures that no measured variable can be a cause of the latents, the rank constraint also entails that  $X_2,X_4$ share one common cause and $X_1,X_1$ share one common cause, but how the latent variables are connected cannot be identified using only the rank constraint.\\ \textcolor{white}{abc}
In fig \ref{2latents}, $L_1$ and $L_2$ are directed connected.  If the underlying distribution is non-Gaussian, then the GIN condition can be used to find that there are two latents among $\{X_1,X_2,X_3,X_4\}$; if the probability is Gaussian, then with six measured variables, Rank Constraint can be used to find that there are two latents.  However, the direct application of these two theorems cannot tell whether there is an edge between $L_1$ and $L_2$. This is because $L_1$ and $L_2$ are causing the exact same variables. In fig \ref{2clusters}, since $L_1$ and $L_2$ are direct causes of two distinct set of measured variables, GIN can be used to identify the $L_1\rightarrow L_2$ edge.   
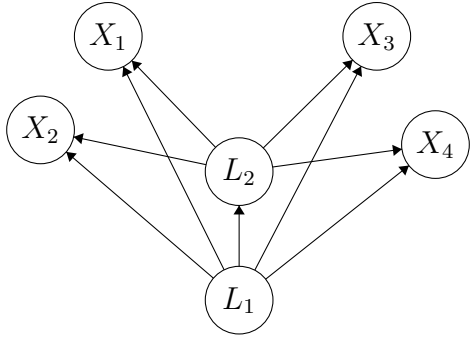
\begin{figure}
\begin{center}
\begin{tikzpicture}[scale=0.15]
\tikzstyle{every node}+=[inner sep=0pt]
\draw [black] (44.3,-25.9) circle (3);
\draw (44.3,-25.9) node {$X_4$};
\draw [black] (9.3,-24.6) circle (3);
\draw (9.3,-24.6) node {$X_2$};
\draw [black] (15.3,-16.3) circle (3);
\draw (15.3,-16.3) node {$X_1$};
\draw [black] (39.1,-16.3) circle (3);
\draw (39.1,-16.3) node {$X_3$};
\draw [black] (26.9,-39.7) circle (3);
\draw (26.9,-39.7) node {$L_1$};
\draw [black] (26.9,-28.3) circle (3);
\draw (26.9,-28.3) node {$L_2$};
\draw [black] (24.62,-37.75) -- (11.58,-26.55);
\fill [black] (11.58,-26.55) -- (11.86,-27.45) -- (12.51,-26.69);
\draw [black] (25.57,-37.01) -- (16.63,-18.99);
\fill [black] (16.63,-18.99) -- (16.54,-19.93) -- (17.44,-19.48);
\draw [black] (28.29,-37.04) -- (37.71,-18.96);
\fill [black] (37.71,-18.96) -- (36.9,-19.44) -- (37.79,-19.9);
\draw [black] (29.25,-37.84) -- (41.95,-27.76);
\fill [black] (41.95,-27.76) -- (41.01,-27.87) -- (41.63,-28.65);
\draw [black] (23.96,-27.68) -- (12.24,-25.22);
\fill [black] (12.24,-25.22) -- (12.92,-25.87) -- (13.12,-24.89);
\draw [black] (24.81,-26.14) -- (17.39,-18.46);
\fill [black] (17.39,-18.46) -- (17.58,-19.38) -- (18.3,-18.68);
\draw [black] (29.04,-26.2) -- (36.96,-18.4);
\fill [black] (36.96,-18.4) -- (36.04,-18.61) -- (36.74,-19.32);
\draw [black] (29.87,-27.89) -- (41.33,-26.31);
\fill [black] (41.33,-26.31) -- (40.47,-25.92) -- (40.6,-26.91);
\draw [black] (26.9,-36.7) -- (26.9,-31.3);
\fill [black] (26.9,-31.3) -- (26.4,-32.1) -- (27.4,-32.1);
\end{tikzpicture}
\caption{$X_1, X_2, X_3, X_4$ share two latent common causes}
\label{2latents}
\end{center}
\end{figure}

\subsection{Multi-Trek Constraint (Tensor Constraint)}\label{sec:tensorconstraint}

In addition to the GIN condition, the tensor constraint is another method to identify latent structures when the data is non-Gaussian \cite{robeva2020multitrek}.  To introduce this new constraint, we first introduce a new term for a graph structure generalizing the concept \textit{trek}.
\subsubsection{\textit{k-trek} and \textit{k-choke set}}\label{definition:ktreksep}
\textbf{\textbf{k-trek}}: a $k$-$trek$ between $k$ vertices $v_1...v_k$ is an ordered collection of $k$ directed paths $(\pi_1,...\pi_k)$ where $\tau_i$ ends at $v_i$ and $\tau_1...\tau_k$ begin at the same vertex, denoted by $top(\tau_1...\tau_k)$.   For instance, in fig \ref{fig:FOFC} $(L\rightarrow X_1,L\rightarrow X_2,L\rightarrow X_3,L\rightarrow X_4)$ is a 4-$trek$.
We say that a collection of $k$ sets of nodes $\mathbf{S_1, ..., S_k\subset V}$  such that $\mathbf{|S_1|= ...= |S_k|}=n$ is a $k$-$trek$ $system$ $T$, if there is a collection of $n$ many $k$-$treks$ between $\mathbf{S_1, ..., S_k}$ such that the ends of $T$  on the $i$-th side equal $S_i$.  A $k$-$trek$ $system$ $T$ has a sided intersection if there exits two $k$-treks $(\pi_1,..,\pi_i,...\pi_k)$ and $(\kappa_1,...,\kappa_i,...\kappa_k)$ and $\pi_i$ and $\kappa_i$ have common vertices for some $i$.\\ 
\textbf{k-choke set and k-trek-separation: }the collection of sets$(\mathbf{A_1,...,A_k})$ \textit{k-trek-separates} $\mathbf{S_1,...,S_k}$ if for every \textit{k-trek} with paths $(\pi_1,...,\pi_k)$ between $S_1,...,S_k$,there exists $j\in\{1,...,k\}$ such that $P_j$ contains a vertex from $A_j$. 
 We call $(\mathbf{A_1,...,A_k})$ \textbf{\textit{k-choke sets}}\cite{robeva2020multitrek}.\\ \textcolor{white}{abc}
 \textbf{\textit{Example.}} Consider fig \ref{fig:3-choke-sep}. $\{X_1, X_5\}, \{X_2, X_6\}, \{X_3, X_7\}$ form a \textit{3-trek-system} and $(\{L_1\}, \{L_2\}, \emptyset)$ \textit{3-trek-separates} this system.
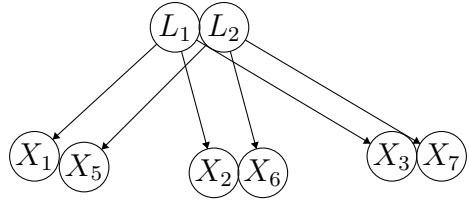
\begin{figure}
\begin{center}
\begin{tikzpicture}[scale=0.11]
\tikzstyle{every node}+=[inner sep=0pt]
\draw [black] (14.3,-27) circle (3);
\draw (14.3,-27) node {$X_1$};
\draw [black] (36,-29) circle (3);
\draw (36,-29) node {$X_2$};
\draw [black] (57.5,-27) circle (3);
\draw (57.5,-27) node {$X_3$};
\draw [black] (37.2,-11.4) circle (3);
\draw (37.2,-11.4) node {$L_2$};
\draw [black] (20.3,-28.3) circle (3);
\draw (20.3,-28.3) node {$X_5$};
\draw [black] (41.9,-29) circle (3);
\draw (41.9,-29) node {$X_6$};
\draw [black] (63.5,-27) circle (3);
\draw (63.5,-27) node {$X_7$};
\draw [black] (31.3,-11.4) circle (3);
\draw (31.3,-11.4) node {$L_1$};
\draw [black] (29.09,-13.43) -- (16.51,-24.97);
\fill [black] (16.51,-24.97) -- (17.44,-24.8) -- (16.76,-24.06);
\draw [black] (32.07,-14.3) -- (35.23,-26.1);
\fill [black] (35.23,-26.1) -- (35.5,-25.2) -- (34.54,-25.46);
\draw [black] (33.88,-12.93) -- (54.92,-25.47);
\fill [black] (54.92,-25.47) -- (54.49,-24.63) -- (53.98,-25.49);
\draw [black] (39.78,-12.93) -- (60.92,-25.47);
\fill [black] (60.92,-25.47) -- (60.49,-24.63) -- (59.98,-25.49);
\draw [black] (37.97,-14.3) -- (41.13,-26.1);
\fill [black] (41.13,-26.1) -- (41.4,-25.2) -- (40.44,-25.46);
\draw [black] (35.08,-13.52) -- (22.42,-26.18);
\fill [black] (22.42,-26.18) -- (23.34,-25.97) -- (22.63,-25.26);
\end{tikzpicture}
\end{center}
\caption{$\{X_1, X_5\}, \{X_2, X_6\}, \{X_3, X_7\}$ form a \textit{3-trek-system} and $(\{L_1\}, \{L_2\}, \emptyset)$ \textit{3-trek-separates} this system. }
\label{fig:3-choke-sep}
 \end{figure}
Now we can introduce higher order cumulant tensor and see its relation with the structure of the graph.

\subsubsection{Higher Order Cumulant Tensor}
Consider a linear graphical model $M=\langle P,G\rangle$:
\begin{center}
     $\mathbf{X =  A}^T\mathbf{ X+\epsilon}$  
\end{center}
where $\mathbf{A}_{ij}=a_{j,i}$ if $X_j\rightarrow X_i$ is in $G$ and $\mathbf{A}_{ij}=0$ otherwise.\\ \textcolor{white}{abc}  Since $G$ is acyclic, $\mathbf{I-A}$ is invertible:
\begin{center}
    $\mathbf{X = (I- A)}^{-T} \epsilon$
\end{center}
The $k$-th \textit{cumulant tensor} $Z^{(k)}$ of a random vector  $\mathbf{Z}=(Z_1,...Z_p)$ is the $p\times  ...\times p$ $(k\text{ times})$ table such that the entry at position $(i_1...i_k)$ is given by:

\begin{center}
    $cum(Z_{i_1},...,Z_{i_k})=\underset{A_1,...A_L}{\sum}(-1)^{L-1}(L-1)!\mathbb{E}[\underset{j\in A_1}{\Pi}Z_j]\mathbb{E}[\underset{j\in A_2}{\Pi}Z_j]...\mathbb{E}[\underset{j\in A_L}{\Pi}Z_j]$
\end{center}
where the sum is taken over all partitions $(A_1,...A_L)$ of the set $\{i_1...i_k\}$.\\ \textcolor{white}{abc}
\begin{lemma}\cite{bookblindsource}\label{lm:highordercum}
    The tensor $\mathcal{C}^{(k)}$ of $k-th$ order cumulants of $\mathbf{X}$ is
\begin{equation}
    \mathcal{C}^{(k)} = \mathcal{E}^{(k)}\cdot (\mathbf{I-A})^{-k}
\end{equation}
where $\mathcal{E}^{(k)}\cdot (\mathbf{I-A})^{-k}$ is the Tucker product of the $k-th$ order cumulant tensor $\mathcal{E}^{(k)}$ of the noises and the matrix $(\mathbf{I-A}^{-1})$ along each of its $k$ dimensions.  In other words,
\begin{center}
    $(\mathcal{E}^{(k)}\cdot\mathbf{(I-A)}^{-k})_{i_1,...,i_k}=\sum_{j_1,...,j_k}\mathcal{E}^{(k)}_{j_1,...,j_k}(\mathbf{(I-A)^{-1}})_{j_1,i_1}...(\mathbf{(I-A)^{-1}})_{j_k,i_k}$
\end{center}
\end{lemma}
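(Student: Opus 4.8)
The plan is to reduce the statement to the standard multilinearity of joint cumulants under linear maps, applied to the solved form of the structural equations. First I would record that acyclicity of $G$ lets us order $\mathbf{V}$ so that $\mathbf{A}$ is strictly triangular; then $\mathbf{I-A}$ is triangular with unit diagonal, hence invertible, so $\mathbf{X}=(\mathbf{I-A})^{-T}\epsilon$ is a genuine linear function of the noise vector. Writing $\mathbf{M}:=(\mathbf{I-A})^{-T}$ we have $X_i=\sum_j M_{ij}\epsilon_j=\sum_j ((\mathbf{I-A})^{-1})_{j,i}\,\epsilon_j$, so the coefficient of $\epsilon_j$ inside $X_i$ is exactly $((\mathbf{I-A})^{-1})_{j,i}$. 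I would assume the error cumulants of order $k$ exist (finite $k$-th moments), which is what makes $\mathcal{E}^{(k)}$ and $\mathcal{C}^{(k)}$ well defined.

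The key fact I would invoke (and, if desired, prove) is that joint cumulants are multilinear in their arguments: if $Y_i=\sum_j c_{ij}W_j$ for jointly distributed $W_1,\dots,W_p$, then
\begin{equation}
\mathrm{cum}(Y_{i_1},\dots,Y_{i_k})=\sum_{j_1,\dots,j_k}c_{i_1 j_1}\cdots c_{i_k j_k}\,\mathrm{cum}(W_{j_1},\dots,W_{j_k}).
\end{equation}
The cleanest route is through the joint cumulant generating function $K_W(t)=\log\mathbb{E}[\exp(t^\top W)]$: since $\mathrm{cum}(W_{j_1},\dots,W_{j_k})$ is the coefficient $\partial_{t_{j_1}}\cdots\partial_{t_{j_k}}K_W(t)$ evaluated at $t=0$, and $K_Y(s)=K_W(C^\top s)$ for $C=(c_{ij})$, the chain rule produces precisely the multilinear transformation rule above. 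Alternatively one can argue combinatorially from the moment--cumulant partition formula, using that each moment $\mathbb{E}[\prod_{r\in B}Y_{i_r}]$ expands multilinearly and that the signed sum over set partitions of $\{i_1,\dots,i_k\}$ is preserved under the substitution; this is the argument underlying the cited source \cite{bookblindsource}.

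With these ingredients the lemma is a substitution: taking $W=\epsilon$ and $c_{i,j}=((\mathbf{I-A})^{-1})_{j,i}$ gives
\begin{equation}
\mathrm{cum}(X_{i_1},\dots,X_{i_k})=\sum_{j_1,\dots,j_k}((\mathbf{I-A})^{-1})_{j_1,i_1}\cdots((\mathbf{I-A})^{-1})_{j_k,i_k}\,\mathcal{E}^{(k)}_{j_1,\dots,j_k},
\end{equation}
which is exactly the coordinate expression for the Tucker product $\mathcal{E}^{(k)}\cdot(\mathbf{I-A})^{-k}$ in the statement. The only genuinely non-routine step is establishing the multilinearity of cumulants; everything else is index bookkeeping. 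The subtlety to watch is the transpose convention: because the model is written $\mathbf{X}=(\mathbf{I-A})^{-T}\epsilon$, the contraction in each mode must pair the noise index with the row of $(\mathbf{I-A})^{-1}$ and the $X$ index with the column, which is what makes the final sum match the displayed formula rather than its transpose.
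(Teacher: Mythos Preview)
Your proposal is correct and is the standard argument: invertibility of $\mathbf{I-A}$ from acyclicity, the solved form $\mathbf{X}=(\mathbf{I-A})^{-T}\epsilon$, and multilinearity of joint cumulants under linear maps, with the transpose convention handled properly. The paper does not supply its own proof of this lemma at all; it simply imports the result from the cited reference \cite{bookblindsource}, so there is nothing to compare against beyond noting that your sketch is exactly the kind of argument that reference contains.
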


The \textit{determinant of an order-k $n\times n\times ...\times n$ tensor T} is defined as:

\begin{center}
    $det$ $T=\underset{\sigma_2,...\sigma_k\in\mathcal{Q}(n)}{\sum}sign(\sigma_2)...sign(\sigma_k)\underset{i=1}{ \overset{n}{\prod}}T_{i,\sigma_2(i),...,\sigma_k(i)}$
\end{center}
where $\mathcal{Q}(n)$ is the set of permutations of the set $\{1,..,n\}$, $T_{i,\sigma_2(i),...,\sigma_k(i)}$ is the entry in the position determined by $i$ and the $k-1$ permutations of $i$.\\ \textcolor{white}{abc}
Before stating the tensor constraint, it is worth noticing that the higher-order cumulant is used here instead of higher order moments.  Recall that the $k$-th \textit{moment tensor} $Z^{(k)}$ of a random vector  $\mathbf{Z}=(Z_1,...Z_p)$ is the $p\times  ...\times p$ $(k\text{ times})$ table such that the entry at position $(i_1...i_k)$ is given by \cite{moment}:

\begin{center}
    $moment(Z_{i_1},...,Z_{i_k})=\mathbb{E}(Z_{i_1}Z_{i_2}...Z_{i_k})$
\end{center}
The moment function is not used because when the order $k$ is larger than 3, when variables in $\mathbf{Z}$ are independent,  then moment tensor $Z^{(k)}$ is not diagonal, which makes it less straightforward to derive the desired constraint \cite{robeva2020multitrek}.

\begin{theorem}\label{thm:TensorConstraint}
\textbf{Tensor Constraint: }Let $G=\langle\mathbf{V,E}\rangle$ be a DAG, and let $\mathbf{S_1,...,S_k\subset V}$ with the same cardinality.  Then, 
    $det$ $C^{(k)}_{\mathbf{S_1,...,S_k}}$ is identically zero
if and only if every system of k-treks between $\mathbf{S_1,...,S_k}$ has a sided intersection.
\end{theorem}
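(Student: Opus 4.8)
The plan is to run a Lindström–Gessel–Viennot (LGV) style argument on $k$-trek systems, generalizing the trek-separation proof for the covariance/rank case to the cumulant tensor. The starting point is Lemma~\ref{lm:highordercum}: since $G$ is acyclic, $\mathcal{C}^{(k)} = \mathcal{E}^{(k)}\cdot(\mathbf{I-A})^{-k}$, so entrywise $C^{(k)}_{v_1,\dots,v_k} = \sum_{j_1,\dots,j_k}\mathcal{E}^{(k)}_{j_1,\dots,j_k}\,((\mathbf{I-A})^{-1})_{j_1,v_1}\cdots((\mathbf{I-A})^{-1})_{j_k,v_k}$. Two facts make this tractable. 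First, since the noises are jointly independent, $\mathcal{E}^{(k)}$ is diagonal: $\mathcal{E}^{(k)}_{j_1,\dots,j_k}$ vanishes unless $j_1=\cdots=j_k=t$, where it equals $\kappa_t := \mathrm{cum}_k(\epsilon_t)$. Second, because $G$ is a DAG, $((\mathbf{I-A})^{-1})_{t,v} = \sum_{\pi}\prod_{e\in\pi} a_e$, the sum over directed paths $\pi$ from $t$ to $v$, with $a_e$ the structural coefficient of edge $e$. Hence $C^{(k)}_{v_1,\dots,v_k} = \sum_t \kappa_t \prod_{\ell=1}^k\big(\sum_{\pi_\ell:\,t\rightsquigarrow v_\ell}\prod_{e\in\pi_\ell}a_e\big)$ is exactly a generating polynomial over $k$-treks with top $t$ and legs ending at $v_1,\dots,v_k$.

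Next I would substitute this into the tensor-determinant definition $\det C^{(k)}_{\mathbf{S_1},\dots,\mathbf{S_k}} = \sum_{\sigma_2,\dots,\sigma_k}\mathrm{sign}(\sigma_2)\cdots\mathrm{sign}(\sigma_k)\prod_{i=1}^n C^{(k)}_{(S_1)_i,(S_2)_{\sigma_2(i)},\dots,(S_k)_{\sigma_k(i)}}$, where $\mathbf{S_\ell}=\{(S_\ell)_1,\dots,(S_\ell)_n\}$, and fully expand. Every resulting monomial in the indeterminates $\{a_e\}\cup\{\kappa_t\}$ is indexed by a choice of permutations $\sigma_2,\dots,\sigma_k$, tops $t_1,\dots,t_n$, and, for each $i$, a $k$-tuple of directed paths out of $t_i$ landing on $(S_1)_i,(S_2)_{\sigma_2(i)},\dots,(S_k)_{\sigma_k(i)}$; this data is precisely a $k$-trek system between $\mathbf{S_1},\dots,\mathbf{S_k}$ (its $\ell$-th side equals $S_\ell$ because $\sigma_\ell$ is a bijection) together with the sign bookkeeping. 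So $\det C^{(k)}_{\mathbf{S_1},\dots,\mathbf{S_k}}$ is a signed sum over all $k$-trek systems between $\mathbf{S_1},\dots,\mathbf{S_k}$.

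The core is a sign-reversing, fixed-point-free involution on the $k$-trek systems that have a sided intersection. Given such a system, select by a fixed deterministic rule (smallest side index $\ell$ carrying two legs that meet, then smallest pair of such legs $\pi_\ell,\kappa_\ell$, then their first common vertex $w$) and exchange the portions of $\pi_\ell$ and $\kappa_\ell$ after $w$. The result is again a legal $k$-trek system between $\mathbf{S_1},\dots,\mathbf{S_k}$ using the same edge multiset and the same tops — hence the same $\{a_e\}$- and $\{\kappa_t\}$-monomial — but it alters exactly one $\sigma_\ell$ by a transposition, so the sign flips; one checks the rule makes this an involution with no fixed point, so those monomials cancel in pairs. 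What survives is the signed sum over $k$-trek systems with \emph{no} sided intersection, and this is not identically zero whenever such a system exists: in a sided-intersection-free system the legs on each side form a vertex-disjoint bundle out of each top, which lets one reconstruct the system — paths and top assignment, hence the forced $\sigma_\ell$'s — from its monomial, reading off the tops from the $\kappa_t$ factors, so distinct such systems give distinct non-cancelling terms. Combining, $\det C^{(k)}_{\mathbf{S_1},\dots,\mathbf{S_k}}$ vanishes identically as a polynomial in the model parameters if and only if there is no $k$-trek system between $\mathbf{S_1},\dots,\mathbf{S_k}$ without a sided intersection, i.e.\ every such system has a sided intersection.

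The step I expect to be the main obstacle is making the involution genuinely well-defined and fixed-point-free on trek \emph{systems} rather than single paths: the tail-swap at the canonical common vertex must return a collection whose legs are still directed paths and whose side-ends still equal $\mathbf{S_1},\dots,\mathbf{S_k}$ with the incidence encoded by the updated permutations, and the selection rule must be symmetric under the swap so applying it twice is the identity. Closely related is the non-cancellation of the surviving terms: distinct sided-intersection-free systems could a priori share an edge multiset, and ruling this out requires the reconstruction argument above, carried out top-by-top using the vertex-disjointness on each side. Everything else — the path expansion of $(\mathbf{I-A})^{-1}$, the diagonality of $\mathcal{E}^{(k)}$, and the determinant bookkeeping — is routine given Lemma~\ref{lm:highordercum}.
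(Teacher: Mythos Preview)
The paper does not prove this theorem; it is imported from \cite{robeva2020multitrek}. More importantly, the paper's Chapter~4 (Section ``Inconsistency of Combinatorial Hyperdeterminant'') actually exhibits a \emph{counterexample} to the theorem as stated when $k$ is odd: for the graph in Figure~\ref{fig:counterexampletensorconstraint} with $k=3$ and $n=2$, every $3$-trek system has a sided intersection (on the $\mathbf{S_1}$ side, at $L_3$), yet $\det\hat{\mathcal{C}}=2a_{11}a_{13}b_{13}a_{35}a_{22}a_{42}b_{23}a_{36}\,\mathbb{E}(L_1^3)\,\mathbb{E}(L_2^3)\neq 0$. So the statement you are trying to prove is, without further restriction, false.

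Your argument breaks at exactly the point this counterexample probes. In the combinatorial hyperdeterminant $\det T=\sum_{\sigma_2,\dots,\sigma_k}\mathrm{sign}(\sigma_2)\cdots\mathrm{sign}(\sigma_k)\prod_{i}T_{i,\sigma_2(i),\dots,\sigma_k(i)}$ the first axis carries no signed permutation. If the sided intersection you select is on side $\ell\geq 2$, your tail-swap composes $\sigma_\ell$ with a transposition and the sign flips, as you claim. But if the canonical intersection lies on side $\ell=1$, swapping the first-side tails forces a relabelling of the two treks (since the product is indexed by the first-side endpoint), which pre-composes \emph{every} $\sigma_2,\dots,\sigma_k$ with the same transposition; the sign change is $(-1)^{k-1}$, not $-1$. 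For odd $k$ this is $+1$, your involution is sign-\emph{preserving} rather than sign-reversing, and the cancellation fails. Your selection rule ``smallest side index $\ell$'' guarantees you hit this bad case whenever the only intersections are on side~$1$, which is precisely the configuration in the paper's counterexample. The LGV machinery is the right idea for even $k$, but for odd $k$ the combinatorial hyperdeterminant is not skew-symmetric in the first axis (the paper notes Cayley defined it only for even $k$), and no choice of involution can repair this.
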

The tensor constraint is a generalized version of the rank constraint when the data is non-Gaussian, and can be used to detect the existence of latent variables when the rank constraint is not available.  For instance, in fig \ref{3trek}, the rank constraint cannot be used since there are not enough observed variables to form tetrads, but if the variables are non-Gaussian then we have $det$ $C^{(3)}_{{X_1,X_2,X_3}}\neq 0$, which implies that there exists a \textit{3-trek}.\footnote{In Chapter 4 we are going to show that there is limitation applying the tensor constraint to tensors with odd-numbered dimensions, but that limitation can only happen when there are more than one entry for each dimension.}\\ \textcolor{white}{abc}
\begin{figure}
\begin{center}
\begin{tikzpicture}[scale=0.15]
\tikzstyle{every node}+=[inner sep=0pt]
\draw [black] (37.7,-15.9) circle (3);
\draw (37.7,-15.9) node {$L$};
\draw [black] (25.6,-30.4) circle (3);
\draw (25.6,-30.4) node {$X_1$};
\draw [black] (37.7,-32.2) circle (3);
\draw (37.7,-32.2) node {$X_2$};
\draw [black] (50,-29.2) circle (3);
\draw (50,-29.2) node {$X_3$};
\draw [black] (35.78,-18.2) -- (27.52,-28.1);
\fill [black] (27.52,-28.1) -- (28.42,-27.8) -- (27.65,-27.16);
\draw [black] (37.7,-18.9) -- (37.7,-29.2);
\fill [black] (37.7,-29.2) -- (38.2,-28.4) -- (37.2,-28.4);
\draw [black] (39.74,-18.1) -- (47.96,-27);
\fill [black] (47.96,-27) -- (47.79,-26.07) -- (47.05,-26.75);
\end{tikzpicture}
\end{center}
    \caption{A structure that the Rank Constraint cannot identify and the tensor constraint can}
    \label{3trek}
\end{figure}
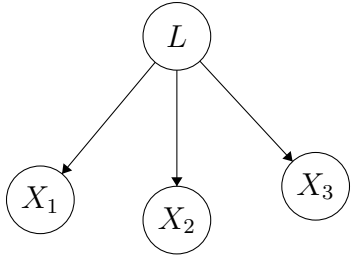When the determinant $C^{(k)}_{\mathbf{S_1,...,S_k}}$ is nonzero, the tensor constraint entails that for any $\mathbf{S_1,...,S_k\subset V}$, there is a system of $k$-$treks$ between $\mathbf{S_1,...,S_k}$ that has no sided intersection.  Assuming that measured variables cannot cause the latents, it implies that $\mathbf{S_1,...,S_k}$ have common causes.  As shown in the spider model case in \ref{fig:spidermodel}, the number of common causes cannot be concluded.\\ \textcolor{white}{abc}  
When the determinant $C^{(k)}_{\mathbf{S_1,...,S_k}}$ is zero, the tensor constraint entails that for any $\mathbf{S_1,...,S_k\subset V}$, there is no system of $k$-$treks$ between $\mathbf{S_1,...,S_k}$ that has no sided intersection, but it does not imply that the $\mathbf{S_1,...,S_k}$ have common causes fewer than $|\mathbf{S_1}|$ unless we make some additional assumptions, such as the number of direct latent common causes cannot exceed the number of measured variables.  
\subsection{Nonlinearity between Latents}
Similar to the rank constraint, for GIN and the tensor constraint to hold, only the structure of a part of the graph matters.  Theorem \ref{thm:GIN} is used to identify causal clusters, but I have proved that even if the latents that are causing different clusters are connected in a nonlinear way, the GIN condition can still be used to identify causal clusters.
\begin{theorem}\label{nonlinearGIN}
Suppose that random vectors $\mathbf{L_Y,Y,L_Z}$ and $\mathbf{Z}$ are related in the following way:
\begin{center}
    $\mathbf{Y=}A\mathbf{L_Y+E_Y}$,
    
    $\mathbf{Z=}B\mathbf{L_Z+E_Z}$.
\end{center}
Denote by $l_Y$ the dimensionality of $\mathbf{L_Y}$ and  $l_Z$ the dimensionality of $\mathbf{L_Z}$.  Assume $A$ is of full column rank.  Then, if 1) $Dim(\mathbf{Y})>l_Y$,  2) $\mathbf{E_Y}\indep \mathbf{L_Z}$, 3) $\mathbf{E_Y}\indep \mathbf{E_Z}$, 4) the cross-covariance matrix of $l_Y$ and $\mathbf{Z}$, $\Sigma_{\mathbf{L_Y Z}}=\mathbb{E}[\mathbf{L_{Y}Z}^T]$ has rank $l_Y$, then $E_{\mathbf{Y\parallel Z}}\indep \mathbf{Z}$, i.e.,$(\mathbf{Z,Y})$ satisfies the GIN condition.
\end{theorem}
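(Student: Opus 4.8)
\section*{Proof proposal}

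The plan is to follow the argument behind Theorem~\ref{thm:GIN}, the only genuinely new feature being that $\mathbf{Y}$ and $\mathbf{Z}$ are now driven by \emph{different} latent vectors $\mathbf{L_Y}$ and $\mathbf{L_Z}$, which may be related in an arbitrary, possibly nonlinear, way. Working with centered variables so that the noise terms have mean zero, I would first compute the cross second-moment matrix
\[
\mathbb{E}[\mathbf{Y}\mathbf{Z}^T] = A\,\mathbb{E}[\mathbf{L_Y}\mathbf{Z}^T] + \mathbb{E}[\mathbf{E_Y}\mathbf{Z}^T] = A\,\Sigma_{\mathbf{L_Y Z}} + \mathbb{E}\!\left[\mathbf{E_Y}(B\mathbf{L_Z}+\mathbf{E_Z})^T\right].
\]
Condition~(2) gives $\mathbb{E}[\mathbf{E_Y}\mathbf{L_Z}^T]=\mathbb{E}[\mathbf{E_Y}]\,\mathbb{E}[\mathbf{L_Z}]^T=0$ and condition~(3) gives $\mathbb{E}[\mathbf{E_Y}\mathbf{E_Z}^T]=0$, so this collapses to $\mathbb{E}[\mathbf{Y}\mathbf{Z}^T]=A\,\Sigma_{\mathbf{L_Y Z}}$.

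Next I would use conditions~(1) and~(4) to pin down the vector $\omega$ appearing in the GIN condition. Since $\Sigma_{\mathbf{L_Y Z}}$ is $l_Y\times\mathrm{Dim}(\mathbf{Z})$ of rank $l_Y$, it has full row rank, so for $v\in\mathbb{R}^{l_Y}$ one has $v^T\Sigma_{\mathbf{L_Y Z}}=0$ only if $v=0$; hence $\omega^T\mathbb{E}[\mathbf{Y}\mathbf{Z}^T]=0$ is equivalent to $\omega^T A=0$. Because $A$ has full column rank $l_Y$ and $\mathrm{Dim}(\mathbf{Y})>l_Y$, its left null space has dimension $\mathrm{Dim}(\mathbf{Y})-l_Y\ge 1$, so a nonzero such $\omega$ exists and, moreover, \emph{every} admissible $\omega$ annihilates $A$ on the left. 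For any such $\omega$ we get $E_{\mathbf{Y\parallel Z}}=\omega^T\mathbf{Y}=\omega^T(A\mathbf{L_Y}+\mathbf{E_Y})=\omega^T\mathbf{E_Y}$, i.e.\ the residual is a fixed linear functional of $\mathbf{E_Y}$ alone.

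It then remains to show $\omega^T\mathbf{E_Y}\indep\mathbf{Z}$. Since $\mathbf{Z}=B\mathbf{L_Z}+\mathbf{E_Z}$ is a measurable function of the pair $(\mathbf{L_Z},\mathbf{E_Z})$, it suffices that $\mathbf{E_Y}$ be jointly independent of $(\mathbf{L_Z},\mathbf{E_Z})$; combining this with the previous step yields $E_{\mathbf{Y\parallel Z}}=\omega^T\mathbf{E_Y}\indep\mathbf{Z}$, which is precisely the GIN condition for $(\mathbf{Z},\mathbf{Y})$. I expect the one delicate point to be exactly this last step: the hypotheses supply only the two \emph{marginal} independences $\mathbf{E_Y}\indep\mathbf{L_Z}$ and $\mathbf{E_Y}\indep\mathbf{E_Z}$, and marginal independence does not in general imply joint independence, so I would appeal to the underlying structural-equation setup --- in which $\mathbf{E_Y}$ is the exogenous noise injected at $\mathbf{Y}$, lying strictly downstream of both $\mathbf{L_Z}$ and $\mathbf{E_Z}$, and the exogenous terms are mutually independent --- to upgrade conditions~(2) and~(3) to the joint independence that is really needed. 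Apart from that, the proof is the bookkeeping of Theorem~\ref{thm:GIN} carried through carefully with $\mathbf{L_Y}\neq\mathbf{L_Z}$.
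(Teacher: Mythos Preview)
Your proposal is correct and follows essentially the same route as the paper's proof: compute $\mathbb{E}[\mathbf{YZ}^T]=A\Sigma_{\mathbf{L_YZ}}$ via conditions~(2)--(3), use the full row rank of $\Sigma_{\mathbf{L_YZ}}$ (the paper reaches the same conclusion via Sylvester's rank inequality) together with~(1) to show that $\omega^T\mathbb{E}[\mathbf{YZ}^T]=0$ forces $\omega^TA=0$ and that such nonzero $\omega$ exists, and then deduce $E_{\mathbf{Y\parallel Z}}=\omega^T\mathbf{E_Y}\indep\mathbf{Z}$. Your scruple about upgrading the two marginal independences to joint independence is in fact more careful than the paper, which simply asserts the final independence ``in light of condition 2) and 3)'' without comment.
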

Notice that for the rank constraint we discuss \textit{\textbf{linearity below the choke set}}, while for GIN we discuss the\textit{\textbf{ linearity under the latent variables}}, since GIN tests an asymmetric independence relation entailed by \textit{\textbf{d-separation}} while the rank constraint is about \textit{\textbf{t-separation}}.  In many situations, the latent parents of a causal cluster and the choke set between the same cluster and the rest of the variables contain the same latent variables.  Later we are going to show a lemma about the transition between GIN and the rank constraints, which will also show the relation between the latent parents of a cluster and the choke set. \\ \textcolor{white}{abc}
Similarly to the extension of the GIN, it can be proved that even if the latents are connected in a nonlinear way, the tensor constraint can still be used to identify the existence of $k-treks$ as long as the \textbf{LA under \textit{k-choke set}} is satisfied.  Since the structure and tools of the proof for properties of the tensor constraint are similar to that of the rank constraint, we are going to claim and prove the theorem after presenting the work of the rank constraint.

\section{Identifying Cycles with GIN and Rank Constraint}
The linearity and acyclicity between the measured variables and the latent causes enable GIN to cluster variables, while cycles between the measured and the latent variables introduce noises that cannot be separated by linear combinations that GIN aims to find.  With the presence of cycles, the rank constraint is able to perform clustering to identify latent variables.  In this section we first show that the rank constraints can be applied in partially non-linear models with cycles.  Then we propose two algorithms to identify cyclic structures in a latent causal model.\\ \textcolor{white}{abc}
Before introducing our work for identifying cycles, since we use the term `cluster' so often, we here provide an intuitive definition for \textit{causal cluster} by specifying conditions a set of variables needs to satisfy to be counted as a \textit{\textbf{causal cluster}} in our study:\\ 
\textbf{\textit{Causal Cluster: }} given a directed causal graph $G=\langle\mathbf{V,E}\rangle$, a set of variables $\mathbf{X\subset V}$ forms a cluster if there exists a parent set $\mathbf{Pa_G(X)\subset V}$ s.t. 
\begin{enumerate}
    \item $Y\in\mathbf{Pa_G(X)}$ if and only if $Y$ is a parent of some variables $X\in\mathbf{X}$ 
    \item  $X\in\mathbf{X}$ if and only if $X$ is a child of some $Y\in\mathbf{Pa_G(X)}$
    \item there does not exist a partition of $\mathbf{X=X_1\cup X_2}$ and $\mathbf{Pa_G(X)=Pa_G(X_1)\cup Pa_G(X_2)}$ such that $\mathbf{X_1, Pa_G(X_1)}$ and $\mathbf{X_1, Pa_G(X_1)}$ both satisfies the condition 1 and 2.
\end{enumerate}\textcolor{white}{abc} 
\textit{\textbf{Example.}} Consider fig \ref{fig:detectCyclerank}. $\{X_4, X_5, X_3\}$ is a causal cluster with the parent set $\{L_1\}$; $\{X_4, X_5\}$ is not a causal cluster since there does not exist a parent set satisfying the `if' part in the condition 2; $\{X_1, X_5, X_7\}$ is not a causal cluster since the parent set that satisfies condition 1 is $\{L_1, L_2, L_3\}$, but the $\{X_1, X_5, X_7\}$ does not satisfy condition 2; $\{X_1,...,X_7\}$ is not a cluster neither, since there are partitions of this set as well as its parent set that satisfies condition 1 and 2.

\subsection{Rank Constraint with Cycles under the choke sets}
We are using a lemma proved in paper \cite{Sullivant_2010} by Sullivant et al (lemma 3.7 from the original paper):\begin{lemma}\label{lm:detsysdipath}
Given a directed graph $G$ (not necessarily acyclic), and $X = (I-\Lambda)^{-T}\epsilon$, then
$det(I-\Lambda)^{-1}_{R,S}$ is identically zero iff every system of $|R|=|S|$ many directed paths from R to S has two paths which share a vertex.
\end{lemma}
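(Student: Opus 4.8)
The plan is to realize $\det\bigl((I-\Lambda)^{-1}\bigr)_{R,S}$ as a signed generating function over systems of directed walks in $G$ and then kill the non-disjoint terms by the Lindström--Gessel--Viennot cancellation argument. First I would pass to the ring of formal power series in the edge indeterminates $\{\lambda_e\}_{e\in\mathbf{E}}$, where $\Lambda$ has all entries in the maximal ideal, so that $(I-\Lambda)^{-1}=\sum_{n\ge 0}\Lambda^{n}$ is well defined; its $(i,j)$ entry is then $\sum_{\omega}w(\omega)$, the sum over all directed walks $\omega$ between $i$ and $j$ (in the orientation fixed by our sign convention on $\Lambda$) with $w(\omega)=\prod_{e\in\omega}\lambda_e$, counted with multiplicity. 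Only finitely many walks contribute to any fixed monomial, so this identity is valid even though $G$ may contain cycles, and it computes the same rational function as the matrix-inverse entry; hence ``identically zero'' may be tested in this ring.

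Next I would expand the determinant over permutations and distribute the walk sums. Writing $k=|R|=|S|$ and summing over permutations $\sigma$ of $\{1,\dots,k\}$,
\[
\det\bigl((I-\Lambda)^{-1}\bigr)_{R,S}=\sum_{\sigma}\operatorname{sign}(\sigma)\prod_{i=1}^{k}\bigl((I-\Lambda)^{-1}\bigr)_{r_i,s_{\sigma(i)}}=\sum_{\Omega}\operatorname{sign}(\sigma_\Omega)\,w(\Omega),
\]
where the last sum ranges over all walk systems $\Omega=(\omega_1,\dots,\omega_k)$ in which $\omega_i$ joins $r_i$ to $s_{\sigma_\Omega(i)}$ for some bijection $\sigma_\Omega$, and $w(\Omega)=\prod_i w(\omega_i)$. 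I would then apply the standard sign-reversing involution to those $\Omega$ that are \emph{not} vertex-disjoint systems of self-avoiding paths: scan $\omega_1,\omega_2,\dots$ in order, reading each walk from its start, let $v$ be the first vertex encountered for a second time, and tail-swap at the first and the current occurrence of $v$. This is weight preserving, of order two, and reverses $\operatorname{sign}(\sigma_\Omega)$, so all such terms cancel and
\[
\det\bigl((I-\Lambda)^{-1}\bigr)_{R,S}=\sum_{P}\operatorname{sign}(\sigma_P)\,w(P),
\]
the sum now over vertex-disjoint systems $P$ of $k$ directed paths from $R$ to $S$.

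Both directions then drop out of this identity. If every system of $k$ directed paths from $R$ to $S$ has two paths sharing a vertex, the right-hand sum is empty, so the determinant is identically $0$. Conversely, if some vertex-disjoint path system exists, I would note that such a system is recovered uniquely from its edge set --- two vertex-disjoint paths share no edge, and a set of edges forming pairwise vertex-disjoint paths decomposes uniquely into them --- so distinct vertex-disjoint path systems contribute distinct squarefree monomials $w(P)$. Hence there is no cancellation among the surviving terms, each of which carries coefficient $\pm 1\neq 0$, and the determinant is not identically zero.

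The main obstacle is the bookkeeping in the involution once $G$ is no longer required to be acyclic: a single walk may revisit a vertex, so the ``first collision'' must be taken over the concatenated scan of all the walks, and the tail-swap must be phrased to cover the self-intersection (loop-reversal) case as well as the genuine two-walk case. This is the one place where the non-acyclicity of $G$ is actually felt; it is handled exactly as in the classical treatments of the Lindström--Gessel--Viennot lemma, everything else being just the formal-power-series reading of matrix inversion. (One could instead invoke Jacobi's identity for complementary minors of an inverse to reduce the claim to showing that $\det\bigl((I-\Lambda)_{S^{c},R^{c}}\bigr)$ vanishes identically iff $G$ admits no vertex-disjoint path system from $R$ to $S$, but analyzing that minor through partial permutations and cycle covers of $I-\Lambda$ is no simpler, so I would not take that route.)
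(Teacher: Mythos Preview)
The paper does not prove this lemma itself; it is simply quoted as Lemma~3.7 of Sullivant--Talaska--Draisma, so there is no in-paper argument to compare against. Your proposal, however, has a real gap at the involution step. The identity you claim after cancellation,
\[
\det\bigl((I-\Lambda)^{-1}\bigr)_{R,S}=\sum_{P}\operatorname{sign}(\sigma_P)\,w(P)
\]
(sum over vertex-disjoint self-avoiding path systems), is \emph{false} once $G$ has directed cycles. Already for a single vertex $v$ with a loop of weight $\lambda$ and $R=S=\{v\}$ the left side is $(1-\lambda)^{-1}$ while the right side is $1$. More decisively, when $k=|R|=1$ every walk system carries $\operatorname{sign}(\sigma_\Omega)=+1$, so \emph{no} sign-reversing weight-preserving involution on the non-path walks can exist at all; your ``loop-reversal'' is unavailable in a directed graph (the reversed edges need not be present), and a tail-swap at two occurrences inside a single walk drops the loop and is therefore not weight preserving. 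The appeal to ``classical treatments of the Lindstr\"om--Gessel--Viennot lemma'' is circular: classical LGV is stated for acyclic graphs precisely because this case does not arise there.

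One direction of the lemma survives without any involution: if some vertex-disjoint self-avoiding path system $P$ exists, then in the \emph{full} walk-system expansion the squarefree monomial $w(P)$ can be produced only by the walk system $P$ itself (in $P$ every vertex has in- and out-degree at most one, so the edge multiset determines the walks), hence its coefficient is $\operatorname{sign}(\sigma_P)=\pm1$ and the determinant is not identically zero. The other direction---no such $P$ forces the determinant to vanish identically---genuinely requires bringing the cycles into the bookkeeping. The Jacobi complementary-minor route you set aside is one clean way: it replaces the rational function by the polynomial $\det\bigl((I-\Lambda)_{S^{c},R^{c}}\bigr)$, whose Leibniz expansion involves only single edges of $G$ (never walks), so that terms are indexed by vertex-disjoint path-and-cycle configurations and the tail-swap involution applies with no self-intersection case. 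Equivalently, the Fomin/Talaska extension of LGV to digraphs with cycles shows that the correct fixed points are path-plus-cycle systems rather than bare path systems; in either formulation, the absence of any vertex-disjoint $R$-to-$S$ path system makes the relevant sum empty.
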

We are going to prove that the rank constraint applies to the model where variables are \textbf{linear below the choke set}:\\
\textbf{Linear below the choke set:} Given a directed graph $G = \langle \mathbf{V, E}\rangle$, and $\mathbf{A, B, C_A, C_B\subset V}$, $\mathbf{A, B}$ are linear below $\mathbf{C_A}$ and $\mathbf{C_B}$ for $\mathbf{A}$ and $\mathbf{B}$ if for every directed path in $G$ (not necessarily self-avoiding) from some nodes in $\mathbf{C_A}$ to some nodes in $\mathbf{A}$, every edge $X_i\rightarrow X_j$ on this path is linear, i.e., each $X_j$ can be written as a linear equation $X_j = a_{ij}X_i+\epsilon'_{j_{\mathbf{C_A\rightarrow A}}}$ with noise $\epsilon'_{j_{\mathbf{C_A\rightarrow A}}}$ independent from $X_i$.\\ \textcolor{white}{abc} 
We further assume the model $M = \langle P_M, G_M\rangle$ to be a \textbf{generalized additive model}. 
 That is to say, given $P_M$ over $\mathbf{V}$ for $G=\langle \mathbf{V,E}\rangle$ respecting the \textbf{Causal Markov Assumption}, any $X_i\in\mathbf{V}$ can be written as:
\begin{center}
    $X_i =  \underset{X_j\in Pa_M(X_i)}{\sum} f_{i,j}(X_j)+\epsilon_i$  
\end{center}
where $f_{i,j}$ are some smooth functions and $\epsilon_i$ is independent of all $X_j \in Pa_M(X_i)$.
\begin{theorem}\label{thm:ranklinear}
Given a generalized additive model with a directed graph $G = \langle \mathbf{V, E}\rangle$ and $\mathbf{A, B}$, $\mathbf{C_A, C_B\subset V}$ such that $\mathbf{(C_A;C_B)}$ t-separates $A$ and $B$.  Given $\mathbf{|A|=|B|}=n$, for all covariance matrices entailed by a fixed parameter structural equation model with G that is linear below $\mathbf{C_A}$ and $\mathbf{C_B}$ for $\mathbf{A}$ and $\mathbf{B}$, then with $\Sigma_{A,B}$ being the submatrix of covariance matrix on $\mathbf{A}$ and $\mathbf{B}$, $rank(\Sigma_{\mathbf{A,B}})\leq\mathbf{|C_A|+|C_B|}$.

\end{theorem}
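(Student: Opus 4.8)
The plan is to reduce Theorem \ref{thm:ranklinear} to the purely linear case, Theorem \ref{thm:LArank}, by constructing an auxiliary graph and an auxiliary SEM in which the nonlinear part of the model has been ``pushed out'' into exogenous noise terms, while the linear-below-the-choke-set structure is preserved exactly. Concretely, let $\mathbf{W} = D(\mathbf{C_A},\mathbf{A},G)\cup D(\mathbf{C_B},\mathbf{B},G)$ be the set of vertices lying on directed paths from the choke sets down to $\mathbf{A}$ and $\mathbf{B}$ (excluding the choke-set vertices themselves). By the hypothesis ``linear below the choke set,'' every edge $X_i\to X_j$ with $X_j\in\mathbf{W}$ is linear, so each such $X_j$ can be written $X_j = \sum_{X_i\in Pa_M(X_j)\cap(\mathbf{W}\cup\mathbf{C_\bullet})} a_{ij}X_i + \tilde\epsilon_j$, where $\tilde\epsilon_j$ collects the noise $\epsilon_j$ together with all the nonlinear contributions $f_{j,k}(X_k)$ from parents $X_k$ that are \emph{not} in $\mathbf{W}\cup\mathbf{C_\bullet}$. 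The key point is that $\tilde\epsilon_j$, viewed as a random variable, is independent of the variables inside the ``linear subsystem'' because those parents $X_k$ lie off the downward paths and hence are not descendants of anything in the subsystem; this is exactly the structure captured by the \textbf{LA below the choke set} definition in the excerpt, with $f_x, g_x$ there playing the role of the aggregated nonlinear-plus-noise terms.

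The main steps, in order, are: (1) define $G_{ext}$ over $\mathbf{V}_{ext} = \mathbf{V}\cup\{\epsilon_X : X\in\mathbf{W}\}$ exactly as in the LA-below-the-choke-set definition, adding an edge $\epsilon_X\to X$ for each $X\in\mathbf{W}$; (2) verify that the generalized additive model $M$, restricted to the behavior relevant for the covariance $\Sigma_{\mathbf{A},\mathbf{B}}$, \emph{is} a fixed-parameter SEM with path diagram $G_{ext}$ that is LA below $\mathbf{C_A},\mathbf{C_B}$ for $\mathbf{A},\mathbf{B}$ — here I need the aggregated noise terms $\tilde\epsilon_j$ to be mutually independent (or at least that the cross-covariances they induce do not increase the rank), which follows because distinct $\tilde\epsilon_j$ draw their nonlinear pieces from disjoint sets of non-descendant parents under the additive structure and the joint independence of the original $\epsilon_i$'s; (3) observe that $(\mathbf{C_A},\mathbf{C_B})$ still t-separates $\mathbf{A},\mathbf{B}$ in $G_{ext}$, since adding source nodes $\epsilon_X$ feeding into $\mathbf{W}$ creates no new treks between $\mathbf{A}$ and $\mathbf{B}$ that avoid the choke sets (any trek through an $\epsilon_X$ node has $\epsilon_X$ as its top, and the two sides descend through $\mathbf{W}$, so they must still pass through $\mathbf{C_A}$ or $\mathbf{C_B}$); (4) apply Theorem \ref{thm:LArank} to $G_{ext}$ to conclude $\mathrm{rank}(\Sigma_{\mathbf{A},\mathbf{B}})\le |\mathbf{C_A}|+|\mathbf{C_B}|$. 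Alternatively, one could argue directly via Lemma \ref{lm:detsysdipath} and the Tucker/trek-rule expansion $\Sigma_{\mathbf{A},\mathbf{B}} = \sum_{\text{treks}} (\cdot)$, showing every minor of size $|\mathbf{C_A}|+|\mathbf{C_B}|+1$ vanishes because every system of that many directed paths (restricted to the linear subsystem) has a sided intersection forced by t-separation; but routing through Theorem \ref{thm:LArank} is cleaner.

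The hard part will be step (2): making precise that ``folding the nonlinearities into the noise'' yields a \emph{bona fide} linear SEM whose induced second-order moments on $\mathbf{A}\cup\mathbf{B}$ coincide with those of $M$, and in particular that the aggregated noises $\tilde\epsilon_j$ can be treated as the independent exogenous terms required by the fixed-parameter SEM / LA-below-the-choke-set framework. The subtlety is that $\tilde\epsilon_j$ may be correlated \emph{across} different $j$ if two vertices in $\mathbf{W}$ share a non-$\mathbf{W}$ parent $X_k$ — but in that case $X_k\to X_{j_1}$ and $X_k\to X_{j_2}$ would mean $X_k$ is itself on a directed path into $\mathbf{A}$ or $\mathbf{B}$, hence $X_k\in\mathbf{W}$, a contradiction; so actually distinct $\tilde\epsilon_j$ involve disjoint ``outside'' parent sets and (given joint independence of the original noises and that these outside parents are non-descendants of the subsystem) are mutually independent. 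Pinning down this disjointness argument rigorously, and checking that it survives when paths are not self-avoiding (cycles are allowed outside $\mathbf{W}$, but not inside it, per the definition), is where the real work lies; the rest is bookkeeping on top of the two cited theorems.
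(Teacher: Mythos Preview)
Your reduction to Theorem~\ref{thm:LArank} does not go through, for two reasons.

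First, and most importantly, Theorem~\ref{thm:LArank} assumes the model is \emph{linear and acyclic} below the choke set (the ``A'' in ``LA''; condition~2 of that definition reads ``no member of $\mathbf{W}$ lies on a cycle''). Theorem~\ref{thm:ranklinear} assumes only \emph{linearity} below the choke set, and its entire purpose is to drop the acyclicity requirement --- the paper's motivating example (fig.~\ref{fig:examplecycleunderchokeset}) has every $X_i\in\mathbf{A}\cup\mathbf{B}$ sitting on the cycle $L\to X_i\to L$. Your parenthetical ``cycles are allowed outside $\mathbf{W}$, but not inside it, per the definition'' conflates the two definitions; the linear-below definition carries no such clause. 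So step~(2) cannot verify the LA hypotheses, and step~(4) cannot invoke Theorem~\ref{thm:LArank}.

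Second, your mutual-independence argument for the aggregated noises $\tilde\epsilon_j$ is flawed. You claim that a non-$\mathbf{W}$ vertex $X_k$ with edges into two members of $\mathbf{W}$ must itself lie in $\mathbf{W}$ because it is on a directed path into $\mathbf{A}$ or $\mathbf{B}$. But $\mathbf{W}$ consists of vertices on directed paths \emph{from the choke sets} to $\mathbf{A}$ or $\mathbf{B}$; merely being an ancestor of $\mathbf{A}$ does not place $X_k$ in $\mathbf{W}$ unless $X_k$ is also a descendant of $\mathbf{C_A}$. An exogenous $Z\notin\mathbf{W}$ with (possibly nonlinear) edges into two vertices of $\mathbf{A}$, and no path to the $\mathbf{B}$-side, leaves t-separation intact yet makes the corresponding $\tilde\epsilon$'s correlated. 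So the folded SEM has genuinely dependent error terms, and you cannot hand it to a theorem that expects independent exogenous noise.

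The paper's proof accepts exactly this and works around it. It writes $\Sigma_{\mathbf{X,X}}=(I-L)^{-T}\Phi(I-L)^{-1}$ with $\Phi$ explicitly non-diagonal, applies Cauchy--Binet to $\det\Sigma_{\mathbf{A,B}}$, and shows every summand $\det(I-L)^{-1}_{\mathbf{R,A}}\cdot\det\Phi_{\mathbf{R,S}}\cdot\det(I-L)^{-1}_{\mathbf{S,B}}$ vanishes. The key structural fact (Lemma~\ref{lemma:rankcyclic0}) is only that $\Phi_{X,Y}=0$ when $X$ is $\mathbf{A}$-side and $Y$ is $\mathbf{B}$-side --- far weaker than your full independence --- and then a block/pigeonhole argument on $\Phi_{\mathbf{R,S}}$ (Lemma~\ref{lemma:atleast0}), together with Lemma~\ref{lm:detsysdipath} to control the $(I-L)^{-1}$ factors in the presence of cycles, finishes the job. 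Your dismissed ``alternative'' via Lemma~\ref{lm:detsysdipath} and a direct determinant expansion is in fact the route the paper takes, and is the one you should pursue.
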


\begin{proof}
To prove the theorem, it suffices to prove that if $\mathbf{|C_A|+|C_B|<|A|}$, then $det(\Sigma_{A,B}) = 0$.\\ \textcolor{white}{abc}
Let $\mathcal{D}_{C_A\rightarrow A}\subset V$ be the set of variables on a directed path from some nodes in $\mathbf{C_A}$ to some nodes in $\mathbf{A}$, excluding $\mathbf{C_A}$ and $\mathbf{A}$. \\ \textcolor{white}{abc}
Similarly we have $\mathcal{D}_{C_B\rightarrow B}$.  \\ \textcolor{white}{abc}
We call a variable $X\in \mathbf{X}$ $A-side$ if it is on a directed path between $\mathbf{C_A}$ to $A$, so any variable in $G'$ is either $A-side$ or $B-side$, or both.\\ \textcolor{white}{abc}  
Now we consider a graph $G' $ by restrict the original graph $G=\langle\mathbf{V, E}\rangle$ to the set $\mathcal{D}_{C_A\rightarrow A}\cup\mathcal{D}_{C_B\rightarrow B}\mathbf{\cup C_A\cup C_B\cup A\cup B}$. Following the definition of \textit{parent} in a graph ($X_i$ is a parent of $X_j$ if $X_i\rightarrow X_j\in \mathbf{E}$) and the condition that it's linear under $\mathbf{C_A}$ and $\mathbf{C_B}$ for $\mathbf{A}$ and $\mathbf{B}$, we can represent $G'$ as a structural linear equation:
\begin{equation}
    X = LX+\mathcal{E}_{G'}
\end{equation}
where $X = \mathcal{D}_{C_A\rightarrow A}\cup\mathcal{D}_{C_B\rightarrow B}\cup \mathbf{C_A\cup C_B}\cup \mathbf{A\cup B}$. Having $\Phi$ as the covariance matrix of $\mathcal{E}_{G'}$, we can write the covariance matrix of $\mathbf{X}$ as:
\begin{equation}
\Sigma_{\mathbf{X,X} }= (I-L)^{-T}\Phi(I-L)^{-1}
\end{equation}\textbf{Linear below $\mathbf{C_A}$ and $\mathbf{C_B}$} only requires the edges between $\mathbf{C_A}$ to $\mathbf{A}$ and $\mathbf{C_B}$ to $\mathbf{B}$ to be linear, and it does not guarantee that every edge in $G'$ is linear.  If there are non-linear edges in $G'$, the causal influence represented by this edge is not seen in the coefficient matrix $L$ but is included in $\mathcal{E}_G'$. This suggests that $\Phi$ is not diagonal since,  unlike the original $G$,  noises in $G'$ are not necessarily independent from each other. \\ \textcolor{white}{abc}
To proceed in the proof, we use the lemma below:
\begin{lemma}\label{lemma:rankcyclic0}
For any $X\in \mathcal{D}_{C_A\rightarrow A}\cup \mathbf{A}$ and variable $Y\in \mathcal{D}_{C_B\rightarrow B}\cup \mathbf{B}$, $\Phi_{X,Y} = 0$.
\end{lemma} The proof of the lemma can be found in the Appendix, which mainly exploits the condition of $\mathbf{|C_A|+|C_B|<|A|}$ and the independence of noise from the generalized additive model.  Now we apply the Cauchy-Binet Formula twice to $\Sigma_{\mathbf{A,B}}$ and get:\\ \textcolor{white}{abc}
\begin{equation}\label{eq:dter}
det\Sigma_{\mathbf{A,B}} = \sum_{\substack{\mathbf{R,S\subset X_{G'}}\\ \textcolor{white}{abc} \mathbf{|R| = |S| = |A| = |B|}}}det(I-L)^{-1}_\mathbf{R,A}det\Phi_\mathbf{R,S}det(I-L)^{-1}_\mathbf{S,B}
\end{equation}where $\mathbf{X_{G'}}$ is the set of variables restricted to $G'$.\\ \textcolor{white}{abc}
Having $\mathbf{|C_A|+|C_B|<|A|}$, we are going to show that $det\Sigma_\mathbf{A,B}$ is identically zero.  To achieve that, we have to show that the product on the RHS is zero for every $\mathbf{R, S\subset X_{G'}}$, which is satisfied if any of the three determinants is zero.\\ \textcolor{white}{abc}
If $L$ is not a triangular matrix, then the directed graph has cycles, in which case the $\lambda^P$ can go to infinity sometimes, which makes $I-L$ not invertible.  We rule out this situation by assuming that $I-L$ is invertible(assuming eigenvalues are all nonzero).\\ \textcolor{white}{abc}
Now we want to show that no matter what type of $\mathbf{R}$ and $\mathbf{S}$ are, at least one of the three determinant is identically zero. Specifically, we prove the lemma:
\begin{lemma}\label{lemma:atleast0}
    For all $\mathbf{R,S\subset \mathbf{X_{G'}}}$, if $det(I-L)^{-1}_{\mathbf{R,A}} \neq 0$ and $det(I-L)^{-1}_{\mathbf{S,B}}\neq 0$, then $det\Phi_{\mathbf{R,S}}=0$.
\end{lemma}
The proof of the lemma can be found in the appendix.  The main idea of the proof is that based on the \textbf{Lemma}  \ref{lemma:rankcyclic0}, when the $det(I-L)^{-1}_{\mathbf{R,A}} \neq 0$ and $det(I-L)^{-1}_{\mathbf{S,B}}\neq 0$, there are zero submatrices in $\Phi_{\mathbf{R,S}}$ with dimensions large enough that leads to $det\Phi_{\mathbf{R,S}}=0$.\\ \textcolor{white}{abc}
We complete the proof using \textbf{lemma }
 \ref{lemma:atleast0}. Based on equation \ref{eq:dter}, the determinant is nonzero only if there exists some $\mathbf{R,S\subset X_{G'}}$ such that $det(I-L)^{-1}_{R,A}det\Phi_{R,S}det(I-L)^{-1}_{S,B}$ is not zero, which can only happen if every determinant in the product is nonzero.  By \textbf{lemma }\ref{lemma:rankcyclic0} it is not possible, so $det\Sigma_{\mathbf{A,B}} = 0$.
\end{proof}
\textbf{\textit{Example.}} Consider fig \ref{fig:examplecycleunderchokeset} where $\{X_1,...X_6\}$ are caused by $L$ and also causing $L$.  The causal relation between $\{X_1,...X_6\}$ and $L$ are linear, and the coefficient of edges 
$L\rightarrow X_i$ is $a_i$ and $X_i\rightarrow L$ is $b_i$. $(\{L\};\{L\})$ \textbf{\textit{t-separates}} $\{X_1, X_2\}$ and $\{X_4, X_6\}$, making $(\{L\};\{L\}) $ a choke set. Since $\{X_1,...X_6\}$ is linear under the choke set, according to the rank constraint, $rank(\Sigma_{\{X_1, X_2\}, \{X_4, X_6\}}) =2$, therefore $det\Sigma_{\{X_1, X_2\}, \{X_4, X_6\}}\neq 0$.  Similarly, since  $(\{L\};\{L\})$ \textbf{\textit{t-separates}} $\{X_1, X_2, X_3\}$ and $\{X_4, X_5, X_6\}$, $rank(\Sigma_{\{X_1, X_2, X_3\}, \{X_4, X_5, X_6\}}) =2$, therefore\newline $det\Sigma_{\{X_1, X_2, X_3\}, \{X_4, X_5, X_6\}}= 0$.
\begin{figure}[H]
    \begin{center}
\begin{tikzpicture}[scale=0.12]
\tikzstyle{every node}+=[inner sep=0pt]
\draw [black] (14.2,-14.5) circle (3);
\draw (14.2,-14.5) node {$X_1$};
\draw [black] (17.4,-21) circle (3);
\draw (17.4,-21) node {$X_2$};
\draw [black] (34.5,-25) circle (3);
\draw (34.5,-25) node {$X_4$};
\draw [black] (25.3,-25) circle (3);
\draw (25.3,-25) node {$X_3$};
\draw [black] (42.7,-22.3) circle (3);
\draw (42.7,-22.3) node {$X_5$};
\draw [black] (47,-15.3) circle (3);
\draw (47,-15.3) node {$X_6$};
\draw [black] (30.9,-10.1) circle (3);
\draw (30.9,-10.1) node {$L$};
\draw [black] (28.883,-12.32) arc (-44.08166:-58.08301:46.862);
\fill [black] (28.88,-12.32) -- (27.97,-12.55) -- (28.69,-13.24);
\draw [black] (19.465,-18.825) arc (134.91306:122.92227:54.592);
\fill [black] (19.47,-18.82) -- (20.38,-18.61) -- (19.68,-17.91);
\draw [black] (28.19,-11.386) arc (-66.82078:-83.65833:38.845);
\fill [black] (28.19,-11.39) -- (27.26,-11.24) -- (27.65,-12.16);
\draw [black] (16.957,-13.319) arc (111.41953:98.10136:48.876);
\fill [black] (16.96,-13.32) -- (17.88,-13.49) -- (17.52,-12.56);
\draw (21.35,-10.97) node [above] {$a_1$};
\draw [black] (25.918,-22.065) arc (166.09566:152.70799:42.843);
\fill [black] (29.43,-12.72) -- (28.62,-13.2) -- (29.51,-13.66);
\draw [black] (30.279,-13.034) arc (-13.9461:-27.25025:43.107);
\fill [black] (26.77,-22.38) -- (27.58,-21.9) -- (26.69,-21.44);
\draw [black] (33.325,-22.24) arc (-159.22746:-173.60661:37.603);
\fill [black] (33.33,-22.24) -- (33.51,-21.32) -- (32.57,-21.67);
\draw [black] (40.323,-20.471) arc (-129.36767:-142.54201:48.11);
\fill [black] (40.32,-20.47) -- (40.02,-19.58) -- (39.39,-20.35);
\draw [black] (44.049,-14.76) arc (-101.98246:-113.81652:53.214);
\fill [black] (44.05,-14.76) -- (43.37,-14.1) -- (43.16,-15.08);
\draw [black] (33.258,-11.954) arc (50.16113:37.92919:51.759);
\fill [black] (33.26,-11.95) -- (33.55,-12.85) -- (34.19,-12.08);
\draw [black] (33.873,-10.496) arc (80.20986:63.99116:39.051);
\fill [black] (33.87,-10.5) -- (34.58,-11.12) -- (34.75,-10.14);
\draw (40.4,-11.26) node [above] {$b_6$};
\draw [black] (31.997,-12.892) arc (19.54226:7.62367:45.203);
\fill [black] (32,-12.89) -- (31.79,-13.81) -- (32.74,-13.48);

\end{tikzpicture}
\end{center}
\caption{cycles under $L$ leading to correlation submatrix with rank 2.}
\label{fig:examplecycleunderchokeset}
\end{figure}
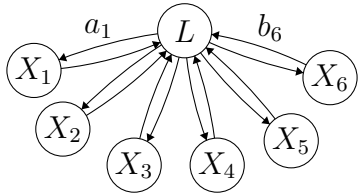\alignnewline\\ \textcolor{white}{abc}
\textbf{Theorem }\ref{thm:ranklinear} suggests that even with the presence of cycles between the measured variables and the latents, as long as there are enough measured variables sharing the same set of latents, rank constraint can still identify the cluster.  In the next section I will present a special type of latent causal structure and how the cycles within causal clusters (an exhaustive set of variables and the direct causes they share) can be identified. 
\section{Identifying Cycles}

\subsection{Linear Latent Hierarchical Cyclic Model($L^2HCM$)}
We adapt the latent hierarchical causal models with measured variables $\mathbf{X_{\mathcal{G}}}=\{X_1, ..., X_m\}$ and latent variables $\mathbf{L_{\mathcal{G}}}=\{L_1, ..., L_m\}$ with linear relationships:
\begin{equation}\label{equ:measureLLH}
    X_i=\Sigma_{L_j\in Pa(X_i)}b_{ij}L_j+\epsilon_{X_i}
\end{equation}
\begin{equation}\label{equ:latentLLH}
    L_i=\Sigma_{P^L_k\in Pa(L_i)}c_{ik}P^L_k+\epsilon_{L_i}
\end{equation}

In this particular section, we focus on the presence of cycles between children and parents.  Therefore if $L_i$ in equation \ref{equ:latentLLH} directly causes some measured variables, then its parents $P^L_k$ can either be a $X_k$ or $L_k$.  In other words, we are allowing the cyclic graph in which $X$ is a parent of $L$ only if $L$ is a parent of $X$.\\ \textcolor{white}{abc}
Huang et al. developed an algorithm identifying Irreducible Linear Latent Hierarchical ($IL^2H$) Graph using rank constraints \cite{huang2022latent}. We are going to adapt some of the conditions from $IL^2H$ graph in addition allowing the presence of cycles and more restrictions for latent parents with measured children:
\begin{enumerate}
    \item \textbf{Effective Cardinality\cite{huang2022latent}: }For a set of latent variables $\mathbf{L}$, denote by $\mathbf{C}$ the largest subset of $PCh_{\mathcal{G}}(\mathbf{L})$ \footnote{$PCh$ means pure children here, and pure children means that the set of parents of these children is $\mathbf{L}$ and nothing else; we also allow measured children causing their latent parents to still be pure children of the same set of latent parents.} such
that there is no subset $\mathbf{C'\subset C}$ satisfying $|\mathbf{C'}|>|Pa_\mathcal{G}(\mathbf{C'})|$ and $|Pa_\mathcal{G}(\mathbf{C'})|<|\mathbf{L}|$. Then, the effective cardinality of $\mathcal{L}$'s
pure children is $\mathbf{|C|}$.
\item \textbf{pc\footnote{\textit{pc} stands for `possibly cyclic.'}-Atomic latent cover:\footnote{the latent cover is named `Possibly Cyclic Atomic latent cover' to distinguish it from the atomic latent cover in \cite{huang2022latent}, and also the clusters can be cyclic here in $L^2HCM$. }}
A set of latent variables $\mathbf{L}=\{L_1,...L_k\}$ is a pc-atomic latent cover if:
\begin{enumerate}
    \item There exists a subset of  pure children $\mathbf{C}$ with the effective cardinality $\geq k+1$.
    \item It has $k+1$ neighbors in addition to $\mathbf{C}$.
    \item $\mathbf{L}$ cannot be partitioned into more subsets where each set satisfies the first and second conditions.
    \item If there are cycles between the pure children of $\mathbf{L}$ and $\mathbf{L}$, replace the $k$ in condition $(a)$ and $(b)$ as $2k$.
\end{enumerate}
\item \textbf{Extroverted Parent}: A latent is an extroverted parent if it is on treks between its measured children and every measured variable that is not among its measured children, otherwise we call the latent an \textit{\textbf{introverted parent.}}
\end{enumerate}
Introverted parents are the latents that cannot be discovered by testing constraints between its set of measured children and the rest of the measured variables.  Later in this chapter, we will show with fig \ref{fig:alternativeofcycleunderLA} that sometimes variables with introverted parents cannot be distinguished from variables on cycles with latent parents.  To guarantee the correctness of our methods, we are going to rule out introverted parents.\\ \textcolor{white}{abc}

Now we state the conditions for \textbf{Linear Latent Hierarchical Cyclic Model($L^2HCM$)}:
\begin{enumerate}
\item Every latent belongs to at least one pc-atomic latent cover and every latent is an extroverted parent.
\item for any pair of latent atomic covers $(\mathbf{L_A, L_B})$, if $PDe_\mathcal{G}(\mathbf{L_A})\cap PDe_\mathcal{G}(\mathbf{L_B})\neq\emptyset$\footnote{$PDe_\mathcal{G}(\mathbf{L})$ is a set of pure descendant of $L$ in $\mathcal{G}$, which is defined as all recursive pure children of $L$\cite{huang2022latent}.}, then either (a) $\mathbf{L_A \subset L_B} $ or (b) $ 
\mathbf{L_A \subset} PDe_\mathcal{G}(\mathbf{L_B}) $ or (c)  $\mathbf{L_B} \subset \mathbf{L_A} $ or (d) $ \mathbf{L_B \subset} PDe_\mathcal{G}(\mathbf{L_A})$\cite{huang2022latent}.
\end{enumerate}

\subsection{Detecting Cycles for Gaussian data}
We first discuss the situation where the distribution of the noise in the $L^2HCM$ model is Gaussian so only the rank constraint is applicable. Due to the existence of cycles, the latents of a cyclic cluster can be the choke set on two sides or only one side, leading to detectable change of the rank of certain correlation submatrices.  We illustrate such idea with an example first, then we prove the formal theorem.\\ \textcolor{white}{abc}
\textbf{\textit{Example.}} Consider the example in fig \ref{fig:detectCyclerank}.  There are cycles between $\{X_3, X_4, X_5\}$ and $L_1$, so that $rank(\Sigma_{\{X_3, X_4\}, \{X_5, X_6\}})= rank(\Sigma_{\{X_5, X_4\}, \{X_3, X_6\}})  = 2$ and  $rank(\Sigma_{\{X_3, X_4, X_5 \}, \{X_6, X_1\}}) = 1$.  The first equation indicates that, if there are no cycles, $\{X_3, X_4, X_5 \}$ shares at least 2 common causes, which is inconsistent with the second equation.  Such inconsistency is due to the edges from $\{X_3, X_4, X_5\}$ to $L_1$. 

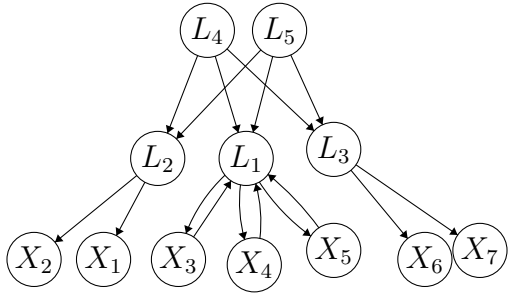
\begin{figure}[H]
\begin{center}
\begin{tikzpicture}[scale=0.12]
\tikzstyle{every node}+=[inner sep=0pt]
\draw [black] (20.2,-32.7) circle (3);
\draw (20.2,-32.7) node {$X_1$};
\draw [black] (12.5,-32.7) circle (3);
\draw (12.5,-32.7) node {$X_2$};
\draw [black] (36.9,-33.3) circle (3);
\draw (36.9,-33.3) node {$X_4$};
\draw [black] (45.7,-31.7) circle (3);
\draw (45.7,-31.7) node {$X_5$};
\draw [black] (55.8,-32.7) circle (3);
\draw (55.8,-32.7) node {$X_6$};
\draw [black] (61.9,-31.7) circle (3);
\draw (61.9,-31.7) node {$X_7$};
\draw [black] (28.5,-32.7) circle (3);
\draw (28.5,-32.7) node {$X_3$};
\draw [black] (36,-21.5) circle (3);
\draw (36,-21.5) node {$L_1$};
\draw [black] (45.7,-20.5) circle (3);
\draw (45.7,-20.5) node {$L_3$};
\draw [black] (26.2,-21.5) circle (3);
\draw (26.2,-21.5) node {$L_2$};
\draw [black] (31.7,-7.3) circle (3);
\draw (31.7,-7.3) node {$L_4$};
\draw [black] (39.7,-7.3) circle (3);
\draw (39.7,-7.3) node {$L_5$};
\draw [black] (43.158,-30.111) arc (-125.84013:-147.03845:22.479);
\fill [black] (43.16,-30.11) -- (42.8,-29.24) -- (42.22,-30.05);
\draw [black] (35.832,-30.501) arc (-164.55007:-186.72679:15.834);
\fill [black] (35.83,-30.5) -- (36.1,-29.6) -- (35.14,-29.86);
\draw [black] (29.327,-29.82) arc (159.05709:133.32699:17.456);
\fill [black] (29.33,-29.82) -- (30.08,-29.25) -- (29.15,-28.89);
\draw [black] (47.61,-22.81) -- (53.89,-30.39);
\fill [black] (53.89,-30.39) -- (53.76,-29.45) -- (52.99,-30.09);
\draw [black] (48.17,-22.21) -- (59.43,-29.99);
\fill [black] (59.43,-29.99) -- (59.06,-29.13) -- (58.49,-29.95);
\draw [black] (30.17,-30.21) -- (34.33,-23.99);
\fill [black] (34.33,-23.99) -- (33.47,-24.38) -- (34.3,-24.94);
\draw [black] (37.058,-24.302) arc (15.31709:-6.59394:16.008);
\fill [black] (37.06,-24.3) -- (36.79,-25.21) -- (37.75,-24.94);
\draw [black] (38.42,-23.271) arc (51.05393:36.06749:31.338);
\fill [black] (38.42,-23.27) -- (38.73,-24.16) -- (39.36,-23.39);
\draw [black] (24.78,-24.14) -- (21.62,-30.06);
\fill [black] (21.62,-30.06) -- (22.44,-29.59) -- (21.55,-29.11);
\draw [black] (23.88,-23.4) -- (14.82,-30.8);
\fill [black] (14.82,-30.8) -- (15.76,-30.68) -- (15.13,-29.91);
\draw [black] (32.57,-10.17) -- (35.13,-18.63);
\fill [black] (35.13,-18.63) -- (35.38,-17.72) -- (34.42,-18.01);
\draw [black] (30.62,-10.1) -- (27.28,-18.7);
\fill [black] (27.28,-18.7) -- (28.04,-18.14) -- (27.11,-17.78);
\draw [black] (37.63,-9.47) -- (28.27,-19.33);
\fill [black] (28.27,-19.33) -- (29.18,-19.09) -- (28.46,-18.4);
\draw [black] (33.88,-9.36) -- (43.52,-18.44);
\fill [black] (43.52,-18.44) -- (43.28,-17.53) -- (42.59,-18.26);
\draw [black] (38.94,-10.2) -- (36.76,-18.6);
\fill [black] (36.76,-18.6) -- (37.44,-17.95) -- (36.47,-17.7);
\draw [black] (40.94,-10.03) -- (44.46,-17.77);
\fill [black] (44.46,-17.77) -- (44.58,-16.83) -- (43.67,-17.25);
\end{tikzpicture}
\end{center}
    \caption{the existence of cycles between $\{X_3, X_4, X_5\}$ and $L_1$ can be detected with rank constraints.}
    \label{fig:detectCyclerank}
\end{figure}

\begin{theorem}\label{thm:rankdetectcycles}
 Given a $L^2HCM$ model, consider $\mathbf{Y}$ with $\mathbf{|Y|}=n$.  Let $\mathbf{Z = V\setminus Y}$. If \\ \textcolor{white}{abc}i) $\mathbf{|Z|}>\mathbf{|Y|}$, $rank(\Sigma_{\mathbf{Z, Y}}) \stackrel{(1)}{=} rank(\Sigma_{\mathbf{Z, Y^S}}) \leq  n' \stackrel{(2)} {<}rank(\Sigma_{\mathbf{V\setminus Y^S, Y^S}}) < n/2$, for every proper subset $\mathbf{Y^S_j\subset Y_j}$ s.t. $\mathbf{|Y^S_j|} \geq n'$;\\ \textcolor{white}{abc}
ii) No subset $\mathbf{Y'\subset Y}$ s.t. there exists a $\mathbf{ W\supset Y'}$ and $\mathbf{Z'= V\setminus W}$ satisfies $rank(\Sigma_{\mathbf{Z', W}}) < \mathbf{|W|}$;\\ \textcolor{white}{abc}
  then there are cycles under the common causes of $\mathbf{Y}$.
\end{theorem}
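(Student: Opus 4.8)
The plan is to argue by contradiction. Suppose the conclusion fails: there are no cycles under the common causes of $\mathbf{Y}$. Write $\mathbf{L}$ for the set of latent variables that are parents of at least one member of $\mathbf{Y}$ --- these are ``the common causes of $\mathbf{Y}$.'' In an $L^2HCM$, a measured variable has only latent parents, and the only way a member of $\mathbf{Y}$ can have an outgoing edge at all is into one of its latent parents (an edge $Y\to L$ forces $L\to Y$, i.e.\ a $2$-cycle $L\to Y\to L$). Since we have assumed no such cycle, every member of $\mathbf{Y}$ is a sink. The goal is to show that this sink structure, combined with the rank bound of Theorem~\ref{thm:ranklinear} and rank faithfulness (generic parameters), forces $rank(\Sigma_{\mathbf{V\setminus Y^S,Y^S}})\le rank(\Sigma_{\mathbf{Z,Y}})\le n'$ for the subsets $\mathbf{Y^S}$ appearing in the hypothesis, contradicting clause $(2)$ of condition i).

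The first step is a uniform t-separation statement. Because every member of $\mathbf{Y}$ is a sink, any trek $(\pi_1,\pi_2)$ ending at a vertex $Y\in\mathbf{Y}$ cannot have $Y$ as its top (that would require an outgoing edge from $Y$), so the last edge of the side ending at $Y$ comes from a parent of $Y$, which lies in $\mathbf{L}$. Hence $(\emptyset,\mathbf{L})$ t-separates $\mathbf{Y^S}$ from $\mathbf{V}\setminus\mathbf{Y^S}$ for \emph{every} $\mathbf{Y^S}\subseteq\mathbf{Y}$; in particular it t-separates $\mathbf{Z}$ from $\mathbf{Y}$ and $\mathbf{V}\setminus\mathbf{Y^S}$ from $\mathbf{Y^S}$. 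Since an $L^2HCM$ is linear, Theorem~\ref{thm:ranklinear} yields $rank(\Sigma_{\mathbf{Z,Y}})\le|\mathbf{L}|$ and $rank(\Sigma_{\mathbf{V\setminus Y^S,Y^S}})\le|\mathbf{L}|$.

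The second step is the reverse inequality for $\mathbf{Z}$ versus $\mathbf{Y}$, namely $rank(\Sigma_{\mathbf{Z,Y}})=|\mathbf{L}|$. Here I would use that $\mathbf{Y}$ contains a set of pure children whose effective cardinality is at least $|\mathbf{L}|+1$ (built into the pc-atomic latent cover / $L^2HCM$ conditions), that every latent in $\mathbf{L}$ is an extroverted parent, and condition ii), which --- read in the acyclic situation --- says that no subset of $\mathbf{Y}$ can be extended by outside variables to a detectably rank-deficient cluster. Together these rule out a t-separator of $\mathbf{Z}$ and $\mathbf{Y}$ of total size smaller than $|\mathbf{L}|$, so by rank faithfulness $rank(\Sigma_{\mathbf{Z,Y}})$ equals the minimal choke-set size $|\mathbf{L}|$; the same reasoning applied to $\mathbf{Y^S}$ with $|\mathbf{Y^S}|\ge n'$ gives $rank(\Sigma_{\mathbf{Z,Y^S}})=|\mathbf{L}|$, consistent with clause $(1)$ of condition i) and forcing $|\mathbf{L}|\le n'$. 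Combining with step one, $rank(\Sigma_{\mathbf{V\setminus Y^S,Y^S}})\le|\mathbf{L}|=rank(\Sigma_{\mathbf{Z,Y}})\le n'$, which contradicts clause $(2)$, $rank(\Sigma_{\mathbf{V\setminus Y^S,Y^S}})>n'$. Therefore some member of $\mathbf{Y}$ is not a sink, i.e.\ there is an edge $Y\to L$ with $L\in\mathbf{L}$, and $L\to Y\to L$ is a cycle under the common causes of $\mathbf{Y}$.

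I expect the main obstacle to be step two: pinning the rank to exactly $|\mathbf{L}|$ rather than merely $\le|\mathbf{L}|$. This is the only place faithfulness (as opposed to the purely combinatorial bound of Theorem~\ref{thm:ranklinear}) enters, and it requires translating the $L^2HCM$ irreducibility conditions (effective cardinality, extroverted parents) and condition ii) into the precise statement ``no choke set of $(\mathbf{Z},\mathbf{Y})$ is strictly smaller than $\mathbf{L}$.'' One must also check that the bound $rank(\Sigma_{\mathbf{V\setminus Y^S,Y^S}})<n/2$ together with $|\mathbf{Y^S}|\ge n'$ leaves enough room for faithfulness to actually witness the ``doubled'' choke set $(\mathbf{L},\mathbf{L})$ of size $2|\mathbf{L}|$ that a cycle creates --- otherwise the converse direction (cycles $\Rightarrow$ the conditions in i) can hold) would be vacuous, and the example in Figure~\ref{fig:detectCyclerank} indicates exactly why the $2k$ appears in the pc-atomic latent cover definition. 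A secondary subtlety is confirming that the cyclic structure produced lives genuinely ``under the common causes of $\mathbf{Y}$'' and not elsewhere in $\mathcal{G}$; this is precisely the role of condition ii), which localizes the rank anomaly to within $\mathbf{Y}$.
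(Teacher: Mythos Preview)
Your proposal is correct and mirrors the paper's two-part argument: first bound $|\mathbf{L}|\le n'$ via the pc-atomic latent cover structure together with condition~ii), then observe that without cycles $(\emptyset,\mathbf{L})$ t-separates $\mathbf{V\setminus Y^S}$ from $\mathbf{Y^S}$ for every admissible $\mathbf{Y^S}$, forcing $rank(\Sigma_{\mathbf{V\setminus Y^S,Y^S}})\le n'$ and contradicting clause~(2). One small correction to your Step~2 sketch: the role of condition~ii) is to place the pure children of each pc-atomic cover in $\mathbf{Z}$, \emph{not} in $\mathbf{Y}$ (if a pure child of some cover $\mathbf{\hat L}$ sat in $\mathbf{Y}$, extending it to the full pure-child set of $\mathbf{\hat L}$ would violate~ii)), and it is these witnesses on the $\mathbf{Z}$-side that push $rank(\Sigma_{\mathbf{Z,Y}})$ up to $|\mathbf{L}|$; the extroverted-parent assumption is not needed here but only in the trek-analysis for part~(b), which in your sink-based contrapositive is automatic.
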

\begin{proof} We divide the proof into two parts.  In part 1 we prove that the latent parent set $\mathbf{C_{Y}}$ of $\mathbf{Y}$ has a size no larger than $n'$;  in part 2 we prove the existence of cycles between $\mathbf{Y}$ and $\mathbf{C_Y}$.
\begin{enumerate}
    \item  By $(1)$ we know that the latent parent set $\mathbf{C_{Y}}$ of $\mathbf{Y_j}$ has a size no larger than $n'_j$.  We now prove this conclusion by contradiction:\\ \textcolor{white}{abc} Assume that $|\mathbf{C_{Y}}|> n'$. Since each latent is from at least one pc-atomic latent cover, we can partition $\mathbf{C_Y=\cup_{i\in[m]}\mathbf{C^i_Y}}$ s.t. each of $\mathbf{C^i_Y}$ belongs to a different pc-atomic latent cover. We denote the corresponding pc-atomic latent cover as $\mathbf{\hat{L}^i_Y}$. Notice that $\mathbf{|\hat{L}^i_Y|\geq |C^i_Y|}$ since the latter is a subset of the former.\\ \textcolor{white}{abc}  
    By the definition of pc-atomic latent cover, each of the $\mathbf{\hat{L}^i_Y}$ has at least $|\mathbf{C^i_Y}|+1$ many pure children and $|\mathbf{C^i_Y}|+1$ many additional neighbors.  By ii) we know that for any $\mathbf{\hat{L}^i_Y}$, $\mathbf{Y}$ does not contain any pure children of such cover.  Therefore there exists at least $\sum_{i}(|\mathbf{\hat{L}}^{i}_Y| + 1) $ many pure children of each $\mathbf{\hat{L}}^{i}_Y$ that are not included in $\mathbf{Y}$.  Since $\mathbf{|\hat{L}^i_Y|\geq |C^i_Y|}$ we further have there are at least $\sum_{i}(|\mathbf{\hat{L}}^{i}_Y| + 1)\geq\sum_{i}(|\mathbf{C}^{i}_Y| + 1)>|\mathbf{C_Y}|>n'$ many pure children not included in $\mathbf{Y}$.  Specifically for each $\mathbf{C_i\subset\mathbf{\hat{L}}}^{i}_Y$, there are at least $|\mathbf{C_i}|$ many variables (pure children of $\mathbf{\hat{L}}^i_Y$) that are being connected by treks with $\mathbf{Y}$ through $\mathbf{C_i}$.  Denote these children with $Ch(\mathbf{C_Y})$.  In other words, we should at least have $(\emptyset, \mathbf{C_Y})$ $t-sep$ $\mathbf{Y}$ and $Ch(\mathbf{C_Y})$ not to mention $Ch(\mathbf{C_Y})\subset\mathbf{V\setminus Y}$.  Therefore $rank(\Sigma_{\mathbf{Z,Y}}) \geq n'$, contradiction.
    
    \item By $(2)$ we know that there exists some partition $\mathbf{Y = Y^C\cup Y^S } $ such that there exists some treks $T = \langle T^C, T^S\rangle$ between $\mathbf{ Y^C}$ and $\mathbf{ Y^S}$ such that $\mathbf{C_{Y}}$ does not contain a variable on every $T^S$, since otherwise $(\emptyset, \mathbf{C_{Y}})$ $t-sep$ $\mathbf{V\setminus\mathbf{Y^S}}$ and $\mathbf{Y^S}$ so we should have 
    $rank(\Sigma_ {\mathbf{V\setminus Y^S, Y^S}}) \leq n'$.  Assume that $T^S$ contains some latents. Then such latent should be a latent parent that is not included in $\mathbf{C_Y}$, which means that the latent parent is not an extroverted parent, since otherwise it will be on the treks between $\mathbf{Y}$ and $\mathbf{V\setminus Y}$ and included in $\mathbf{C_Y}$.  Therefore $T^S$ does not contain any latent variables.  Since we do not allow direct causal connection between measured variables, such $T^S$ can only contain some variables in $\mathbf{Y^S}$, which means that such variables are also the source of the trek, i.e. causes of the variables in the other side. Since variables are only connecting with each other through common causes, it means that there are cycles under the latent parent set of $\mathbf{Y}$.    
    
\end{enumerate}

\end{proof}

Notice that the converse of theorem \ref{thm:rankdetectcycles} is not always true.  For instance, in fig \ref{fig:rankcannotdetect}, there are cycles between $\{X_3, X_4, X_5\}$ and $L_1$, but $rank(\Sigma_{\{X_6, X_1\}, \{X_3, X_4, X_5 \}}) = rank(\Sigma_{ \{X_6, X_1\},\{X_3, X_4\}}) = rank(\Sigma_{ \{X_6, X_5, X_1\},\{X_3, X_4\}}) = 2$ since $\{L_1\}$ is in the choke set of both sides.  The rank constraint cannot identify this structure because the edge between $L_2$ and $L_1$ goes into $L_1$ while the one between $L_3$ and $L_1$ comes out of $L_1$.

\begin{figure}
\begin{center}
\begin{tikzpicture}[scale=0.12]
\tikzstyle{every node}+=[inner sep=0pt]
\draw [black] (14.7,-15.7) circle (3);
\draw (14.7,-15.7) node {$X_1$};
\draw [black] (20.2,-20.8) circle (3);
\draw (20.2,-20.8) node {$X_2$};
\draw [black] (39.8,-30) circle (3);
\draw (39.8,-30) node {$X_4$};
\draw [black] (29.8,-27.3) circle (3);
\draw (29.8,-27.3) node {$X_3$};
\draw [black] (47.6,-27.3) circle (3);
\draw (47.6,-27.3) node {$X_5$};
\draw [black] (56.7,-22.3) circle (3);
\draw (56.7,-22.3) node {$X_6$};
\draw [black] (39.2,-13.1) circle (3);
\draw (39.2,-13.1) node {$L_1$};
\draw [black] (63.6,-16.6) circle (3);
\draw (63.6,-16.6) node {$X_7$};
\draw [black] (29.8,-6.1) circle (3);
\draw (29.8,-6.1) node {$L_2$};
\draw [black] (47.6,-5.4) circle (3);
\draw (47.6,-5.4) node {$L_3$};
\draw [black] (31.079,-24.587) arc (152.99852:139.9946:48.979);
\fill [black] (37.2,-15.34) -- (36.3,-15.63) -- (37.07,-16.27);
\draw [black] (37.918,-15.812) arc (-27.0425:-39.96438:49.285);
\fill [black] (31.8,-25.06) -- (32.69,-24.77) -- (31.93,-24.13);
\draw [black] (39.237,-27.054) arc (-171.06243:-184.87094:45.679);
\fill [black] (39.24,-27.05) -- (39.61,-26.19) -- (38.62,-26.34);
\draw [black] (45.706,-24.974) arc (-142.72267:-156.06452:45.479);
\fill [black] (45.71,-24.97) -- (45.62,-24.04) -- (44.82,-24.64);
\draw [black] (41.07,-15.446) arc (36.80009:24.41273:48.925);
\fill [black] (41.07,-15.45) -- (41.15,-16.39) -- (41.95,-15.79);
\draw [black] (39.686,-16.06) arc (7.75755:-3.69092:54.942);
\fill [black] (39.69,-16.06) -- (39.3,-16.92) -- (40.29,-16.79);
\draw [black] (27.27,-7.71) -- (17.23,-14.09);
\fill [black] (17.23,-14.09) -- (18.18,-14.08) -- (17.64,-13.24);
\draw [black] (28.16,-8.61) -- (21.84,-18.29);
\fill [black] (21.84,-18.29) -- (22.7,-17.89) -- (21.86,-17.34);
\draw [black] (32.21,-7.89) -- (36.79,-11.31);
\fill [black] (36.79,-11.31) -- (36.45,-10.43) -- (35.85,-11.23);
\draw [black] (41.41,-11.07) -- (45.39,-7.43);
\fill [black] (45.39,-7.43) -- (44.46,-7.6) -- (45.14,-8.34);
\draw [black] (49.02,-8.04) -- (55.28,-19.66);
\fill [black] (55.28,-19.66) -- (55.34,-18.72) -- (54.46,-19.19);
\draw [black] (50.06,-7.12) -- (61.14,-14.88);
\fill [black] (61.14,-14.88) -- (60.77,-14.01) -- (60.2,-14.83);
\end{tikzpicture}
\end{center}
    \caption{the cycles between $L_1$ and $\{X_3, X_4, X_5\}$ cannot be detected with the rank constraints}
    \label{fig:rankcannotdetect}
\end{figure}
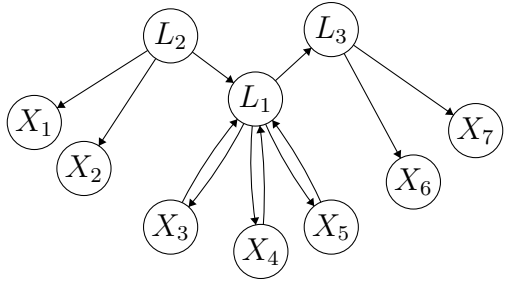
\alignnewline\\ \textcolor{white}{abc}
Generally speaking, such asymmetry cannot be identified under Gaussianity without first knowing if any cluster is cycle, since the $L_2\rightarrow L_1\rightarrow L_3$, $L_3\rightarrow L_1\rightarrow L_2$ and $L_2\leftarrow L_1\rightarrow L_3$ are markov equivalent.  The asymmetry of non-Gaussian distribution reveals more information of the structure of the graph so that cycles under the latent causes may be identified with fewer restrictions.

\subsection{Detect Cycles for non-Gaussian Data}
GIN tests if there exist linear combinations of a set of variables to be independent from some other variables, or whether the non-Gaussian components of two sets of variables can be separated.  Cycles under latents may make some measured variables ancestors of some other measured variables, and linear combinations of the ancestors cannot separate their non-Gaussian noises from the descendants.  To use GIN and the rank constraint to detect cycles, clusters can be identified by rank constraints first, then GIN can be used to identify the ancestor-descendant relation between variables that are children in different clusters.  To see the validity of this method, we need to use a lemma showing the relation between GIN and \textit{t-separation} (its proof can be found in the Appendix).\\ \textcolor{white}{abc}

\begin{lemma}
\label{GINdsep}
        Consider two sets of variables $\mathbf{Z, Y}$ in a $L^2HCM$ model. Assume faithfulness holds for the $L^2HCM$. If there are any treks between $\mathbf{Z}$ and $\mathbf{Y}$, $(\mathbf{Z, Y})$ satisfies GIN iff there exists a $\mathbf{C_Y}$, s.t. $(\emptyset, \mathbf{C_Y})$ t-separates $(\mathbf{Z, Y})$ and $\mathbf{|C_Y|<|Y|}$.
\end{lemma}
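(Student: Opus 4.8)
The plan is to prove both directions by converting the GIN condition into a rank statement about $\Sigma_{\mathbf{Y},\mathbf{Z}}$ and then reading off a choke set via rank faithfulness, using the structure of the $L^2HCM$ to make that choke set one-sided. Recall that $(\mathbf{Z},\mathbf{Y})$ satisfies GIN means there is a nonzero $\omega$ with $\omega^T\mathbb{E}[\mathbf{Y}\mathbf{Z}^T]=0$ and $E_{\mathbf{Y}\parallel\mathbf{Z}}:=\omega^T\mathbf{Y}\indep\mathbf{Z}$. Since the $L^2HCM$ is linear and, as in the proof of Theorem~\ref{thm:ranklinear}, $I-\Lambda$ is invertible, I will work with $\mathbf{X}=(I-\Lambda)^{-T}\epsilon$, so that every variable is a fixed linear combination of the jointly independent non-Gaussian noises and the cycles $X\leftrightarrow L$ contribute only through the geometric-series expansion of $(I-\Lambda)^{-1}$.

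For ``$\Leftarrow$'': assume $(\emptyset,\mathbf{C_Y})$ t-separates $(\mathbf{Z},\mathbf{Y})$ with $|\mathbf{C_Y}|<|\mathbf{Y}|$. First shrink $\mathbf{C_Y}$ to a minimal such set, so that every member lies on a trek to $\mathbf{Z}$; by rank faithfulness this gives $\mathrm{rank}(\Sigma_{\mathbf{C_Y},\mathbf{Z}})=|\mathbf{C_Y}|$. Put $A=\Sigma_{\mathbf{Y},\mathbf{C_Y}}\Sigma_{\mathbf{C_Y},\mathbf{C_Y}}^{-1}$ and $\mathbf{E_Y}=\mathbf{Y}-A\mathbf{C_Y}$. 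Because $\mathbf{C_Y}$ sits on the $\mathbf{Y}$-side of every $\mathbf{Z}$--$\mathbf{Y}$ trek, $\mathbf{E_Y}$ is assembled only from noises that do not reach $\mathbf{Z}$, hence $\mathbf{E_Y}\indep\mathbf{Z}$, and so in particular $\mathbf{E_Y}\indep\mathbf{C_Y}$ and $\mathbf{E_Y}\indep\mathbf{E_Z}$ for the residual $\mathbf{E_Z}$ of $\mathbf{Z}$. Taking $\mathbf{L}=\mathbf{C_Y}$, the hypotheses of Theorem~\ref{thm:GIN} in its linear-cyclic form ($\dim(\mathbf{Y})>|\mathbf{C_Y}|=l$, $\mathbf{E_Y}\indep\mathbf{L}$, $\mathbf{E_Y}\indep\mathbf{E_Z}$, $\mathrm{rank}(\Sigma_{\mathbf{L},\mathbf{Z}})=l$) are met, so $E_{\mathbf{Y}\parallel\mathbf{Z}}\indep\mathbf{Z}$, i.e.\ $(\mathbf{Z},\mathbf{Y})$ satisfies GIN.

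For ``$\Rightarrow$'': assume GIN holds with $\omega\neq 0$. Independence gives uncorrelatedness, so $\omega^T\Sigma_{\mathbf{Y},\mathbf{Z}}=0$ and $\mathrm{rank}(\Sigma_{\mathbf{Y},\mathbf{Z}})<|\mathbf{Y}|$; by rank faithfulness and the Rank Constraint theorem there is an a priori two-sided choke set $(\mathbf{C_Z},\mathbf{C_Y})$ t-separating $(\mathbf{Z},\mathbf{Y})$ with $|\mathbf{C_Z}|+|\mathbf{C_Y}|<|\mathbf{Y}|$. To upgrade this to a one-sided separator I will use the full independence $\omega^T\mathbf{Y}\indep\mathbf{Z}$, not merely the rank bound: expanding $\omega^T\mathbf{Y}$ and each $Z\in\mathbf{Z}$ in the independent non-Gaussian noises, a Darmois--Skitovich argument forces every noise term that is an ancestor of some $Z$ to have coefficient $0$ in $\omega^T\mathbf{Y}$. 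Hence $\omega$ lies in the left kernel of the loading matrix $\Lambda_{\mathbf{Y}}$ of $\mathbf{Y}$ on the set $\mathbf{C_Y^\ast}$ of vertices through which the $\mathbf{Z}$--$\mathbf{Y}$ treks enter $\mathbf{Y}$ (the latent parents of members of $\mathbf{Y}$ lying on such treks, together with any $Y\in\mathbf{Y}$ that is itself an ancestor of $\mathbf{Z}$). By construction $\mathbf{C_Y^\ast}$ is a one-sided t-separator, and since $\Lambda_{\mathbf{Y}}$ has full column rank by faithfulness, the existence of a nonzero $\omega$ with $\omega^T\Lambda_{\mathbf{Y}}=0$ forces $|\mathbf{C_Y^\ast}|<|\mathbf{Y}|$; this $\mathbf{C_Y^\ast}$ is the required $\mathbf{C_Y}$. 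The degenerate case with no treks between $\mathbf{Z}$ and $\mathbf{Y}$ (excluded by hypothesis, but consistent either way) is handled by $\mathbf{C_Y}=\emptyset$.

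The hard part is this one-sided step in ``$\Rightarrow$'': with cycles $X\leftrightarrow L$ present I must check that $\mathbf{C_Y^\ast}$ is well defined and really t-separates the $\mathbf{Y}$-side of every trek --- in particular handling the case where some $Y\in\mathbf{Y}$ is itself an ancestor of some $Z\in\mathbf{Z}$, so that $\epsilon_Y$ is a shared noise rather than only latents being shared --- and that its loading matrix has full column rank under rank faithfulness. A smaller but necessary point is to confirm explicitly that Theorem~\ref{thm:GIN}, stated for acyclic models, carries over to the cyclic-but-linear $L^2HCM$ with $I-\Lambda$ invertible; I expect this to follow from the same bookkeeping used in Theorem~\ref{thm:ranklinear}.
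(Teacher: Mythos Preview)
Your proposal is correct and uses the same core ingredients as the paper --- the linear decomposition $\mathbf{Y}=A\mathbf{C_Y}+\epsilon$ and the Darmois--Skitovich theorem --- but packages them less economically. For ``$\Leftarrow$'' the paper argues directly (pick $\omega$ with $\omega^TA=0$, then $\omega^T\mathbf{Y}=\omega^T\epsilon\indep\mathbf{Z}$) rather than routing through Theorem~\ref{thm:GIN}; for ``$\Rightarrow$'' the paper runs what you call steps 4--6 as a contrapositive (take a minimal one-sided $\mathbf{C_Y}$ with $|\mathbf{C_Y}|\geq|\mathbf{Y}|$, then the loading $A$ has full row rank so no nonzero $\omega$ annihilates it, and Darmois--Skitovich gives $\omega^T\mathbf{Y}\nindep\mathbf{Z}$). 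Your rank-constraint detour to a two-sided choke set is superfluous --- you correctly abandon it for the Darmois--Skitovich argument, which is where both proofs actually live --- and your explicit construction of $\mathbf{C_Y}^\ast$ and attention to the case $Y\in\mathbf{Y}$ ancestral to $\mathbf{Z}$ is a bit more careful than the paper about the cyclic setting.
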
\alignnewline\\ \textcolor{white}{abc}
\textbf{\textit{Example.}} Consider fig \ref{fig:rankcannotdetect} again.  For $\mathbf{Z}=\{X_6, X_7\}$, $\mathbf{Y}=\{X_3, X_4, X_5\}$, $\{\mathbf{Z, Y} \}$ does not satisfy GIN: for $(\emptyset, \mathbf{C_Y})$ to t-separate $(\mathbf{Z, Y})$, it is necessary for $\mathbf{C_Y}$ to include $\{X_3, X_4, X_5\}$ there are treks connecting $\mathbf{Y}$ and $\mathbf{Z}$ where the $\mathbf{Y}$ side directed paths only contains a variable in $\mathbf{Y}$, such as $\langle X_3, X_3\rightarrow L_1\rightarrow L_3\rightarrow X_6, \rangle$.

\begin{theorem}\label{thm:GINRankCycle}
    Let $\mathbf{Z = V\setminus Y}$. If (i) given $\mathbf{|Z|}>\mathbf{|Y|}=n$, $rank(\Sigma_{\mathbf{Z,Y}}) = n'< n$; (ii) No subset $\mathbf{Y'\subset Y}$ s.t. there exists a $\mathbf{ W\supset Y'}$ and $\mathbf{Z'= V\setminus W}$ satisfies $rank(\Sigma_{\mathbf{Z', W}}) < \mathbf{|W|}$; (iii) $(\mathbf{Z, Y})$ fails GIN, then there are cycles under the common causes of $\mathbf{Y}$.
\end{theorem}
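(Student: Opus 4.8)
\emph{Proof proposal.}
The plan is to transpose the proof of the Gaussian statement, Theorem~\ref{thm:rankdetectcycles}, into the non-Gaussian setting, replacing the rank/t-separation argument of its second half by the GIN characterization of Lemma~\ref{GINdsep}. The argument splits in two: first bound the latent parent set $\mathbf{C_Y}$ of $\mathbf{Y}$ by $|\mathbf{C_Y}|\le n'<n$; then argue by contradiction that the absence of cycles under the common causes of $\mathbf{Y}$ would make $(\emptyset,\mathbf{C_Y})$ a choke set small enough to force $(\mathbf{Z},\mathbf{Y})$ to satisfy GIN, against hypothesis (iii). Throughout I use the $L^2HCM$ structure: measured variables never directly cause one another, a measured variable is a parent of a latent only when that latent is also a parent of it, and every latent is an extroverted parent.

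For the first part I would reuse Part~1 of the proof of Theorem~\ref{thm:rankdetectcycles} essentially verbatim. Its only inputs are that $\mathrm{rank}(\Sigma_{\mathbf{Z,Y}})=n'<n$ and that $\mathbf{Y}$ contains no pure child of any pc-atomic latent cover containing a latent of $\mathbf{C_Y}$ --- exactly hypotheses (i) and (ii). The pc-atomic-cover counting argument then shows that $|\mathbf{C_Y}|>n'$ would yield strictly more than $n'$ pure children of those covers lying outside $\mathbf{Y}$ and trek-connected to $\mathbf{Y}$ through $\mathbf{C_Y}$, contradicting $\mathrm{rank}(\Sigma_{\mathbf{Z,Y}})=n'$; hence $|\mathbf{C_Y}|\le n'<n$.

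For the second part, suppose toward a contradiction that there are no cycles under the common causes of $\mathbf{Y}$. I claim $(\emptyset,\mathbf{C_Y})$ then t-separates $(\mathbf{Z},\mathbf{Y})$. Let $\tau=(\pi_1,\pi_2)$ be a trek with $\pi_1$ ending in $\mathbf{Z}$ and $\pi_2$ ending at some $Y\in\mathbf{Y}$. If $\pi_2$ is the trivial path at $Y$, then $Y=top(\tau)$ and the first edge of $\pi_1$ is $Y\to L$ for a latent $L$ (no measured variable is a direct cause of a measured one); but in an $L^2HCM$ this forces $L\to Y$, so $L\in\mathbf{C_Y}$ and $Y\to L\to Y$ is a cycle under a common cause of $\mathbf{Y}$, contrary to assumption. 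Hence $\pi_2$ is nontrivial, and its penultimate vertex is a parent of $Y$, necessarily a latent, and so a member of $\mathbf{C_Y}$; thus $\pi_2$ meets $\mathbf{C_Y}$, proving the claim. Since (iii) forces $n'\ge 1$ (if $n'=0$ there are no treks between $\mathbf{Z}$ and $\mathbf{Y}$, whence $\mathbf{Z}$ and $\mathbf{Y}$ are independent and $(\mathbf{Z},\mathbf{Y})$ trivially satisfies GIN), there are treks between $\mathbf{Z}$ and $\mathbf{Y}$, and $|\mathbf{C_Y}|\le n'<n=|\mathbf{Y}|$; Lemma~\ref{GINdsep} then gives that $(\mathbf{Z},\mathbf{Y})$ satisfies GIN, contradicting (iii). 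Therefore there are cycles under the common causes of $\mathbf{Y}$.

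The main obstacle I anticipate is the careful verification of the t-separation claim and its boundary cases. Excluding the ``$Y$ is the top of the trek'' configuration relies squarely on the $L^2HCM$ restriction that $X\to L$ only when $L\to X$, together with a precise reading of ``cycles under the common causes of $\mathbf{Y}$''; one must also be sure that ``no cycles'' truly prevents the $\mathbf{Y}$-side path from alternating between latents and measured variables and re-entering $\mathbf{Y}$ without ever touching $\mathbf{C_Y}$ --- this is where the ``no measured--measured edges'' and ``penultimate vertex is a latent parent'' observations do the work. A secondary point is confirming that all hypotheses of Lemma~\ref{GINdsep} (faithfulness of the $L^2HCM$, treks present) are in force and that the inequality from Part~1 is genuinely strict, which it is since $|\mathbf{C_Y}|\le n'$ and $n'<n$ by (i).
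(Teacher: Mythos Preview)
Your proposal is correct and uses essentially the same ingredients as the paper's proof—Lemma~\ref{GINdsep}, the pc-atomic latent cover counting argument from Theorem~\ref{thm:rankdetectcycles}, and the $L^2HCM$ structural restrictions. The only difference is the direction of the contradiction: the paper starts from (iii), deduces via Lemma~\ref{GINdsep} that every $(\emptyset,\mathbf{C_Y})$ t-separating set has $|\mathbf{C_Y}|\ge n$, assumes one such set is purely latent, and then runs the cover-counting argument to contradict (i); you instead start from (i), bound the latent parent set by $n'$, assume no cycles to show it t-separates, and invoke Lemma~\ref{GINdsep} to contradict (iii). These are the same argument read in opposite directions, and your organization is arguably cleaner in that it fixes $\mathbf{C_Y}$ as a concrete set (the latent parent set) rather than an arbitrary t-separating set.
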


\begin{proof}
    We proceed the proof by contradiction and use the \textbf{Lemma} \ref{GINdsep}.\\ \textcolor{white}{abc}
    By (iii) and \textbf{Lemma} \ref{GINdsep}, we know that for any $\mathbf{C_Y}$ s.t. $(\emptyset, \mathbf{C_Y})$ t-separates  $(\mathbf{Z, Y})$,  $\mathbf{|C_Y|\geq|Y|}=n$. Assume that there is such $\mathbf{|C_Y|}$ that only contains latents. 
      Since each latent is from at least one pc-atomic latent cover, we can partition $\mathbf{C_Y=\cup_{i\in[m]}\mathbf{C^i_Y}}$ s.t. each of $\mathbf{C^i_Y}$ belongs to a different pc-atomic latent cover. We denote the corresponding pc-atomic latent cover as $\mathbf{\hat{L}^i_Y}$.\\ \textcolor{white}{abc}
    Notice that $\mathbf{|\hat{L}^i_Y|\geq |C^i_Y|}$ since the latter is a subset of the former.  By the definition of pc-atomic latent cover, each of the $\mathbf{\hat{L}^i_Y}$ has at least $|\mathbf{C^i_Y}|+1$ many pure children and $|\mathbf{C^i_Y}|+1$ many additional neighbors.\\ \textcolor{white}{abc}
    By (ii) we know that for any $\mathbf{\hat{L}^i_Y}$, $\mathbf{Y}$ does not contain any pure children of such cover.  Therefore there exists at least $\sum_{i}(|\mathbf{\hat{L}}^{i}_Y| + 1) $ many pure children of each $\mathbf{\hat{L}}^{i}_Y$ that are not included in $\mathbf{Y}$.  Since $\mathbf{|\hat{L}^i_Y|\geq |C^i_Y|}$ we further have there are at least $\sum_{i}(|\mathbf{\hat{L}}^{i}_Y| + 1)\geq\sum_{i}(|\mathbf{C}^{i}_Y| + 1)>|\mathbf{C_Y}|>n$ many pure children not included in $\mathbf{Y}$.  Specifically for each $\mathbf{C_i\subset\mathbf{\hat{L}}}^{i}_Y$, there are at least $|\mathbf{C_i}|$ many variables (pure children of $\mathbf{\hat{L}}^{i}_Y$) that are being connected by treks with $\mathbf{Y}$ through $\mathbf{C_i}$.  Denote these children with $Ch(\mathbf{C_Y})$.  In other words, we should at least have $(\emptyset, \mathbf{C_Y})$ \textit{t-seps} $\mathbf{Y}$ and $Ch(\mathbf{C_Y})$ not to mention $Ch(\mathbf{C_Y})\subset\mathbf{Z}$.  Therefore $rank(\Sigma_{\mathbf{Z,Y}}) \geq n$, contradiction.
    
    Therefore $\mathbf{C_Y}$ cannot be all latents.  Since we do not allow direct edges between measured variables, it's only possible that $\mathbf{Y}\cap\mathbf{C_Y}\neq\emptyset$, which means that some treks connecting some $Z\in\mathbf{Z}$ and $Y\in\mathbf{Y}$ have the $\mathbf{Y}$ side as $Y$, so such $Y$ is a ancestor to $\mathbf{Z}$ through its latent causes.  Based on the defition of $L^2HCM$, a measured variable cannot cause any latent variable other than its latent parents.  So we have cycles between $\mathbf{Y}$ and its latent parent set.
\end{proof}
\begin{figure}[H]
\begin{center}
\begin{tikzpicture}[scale=0.12]
\tikzstyle{every node}+=[inner sep=0pt]
\draw [black] (24.4,-25) circle (3);
\draw (24.4,-25) node {$X_1$};
\draw [black] (17,-23.1) circle (3);
\draw (17,-23.1) node {$X_2$};
\draw [black] (36.9,-33.3) circle (3);
\draw (36.9,-33.3) node {$X_4$};
\draw [black] (45.6,-32.7) circle (3);
\draw (45.6,-32.7) node {$X_5$};
\draw [black] (49.6,-25) circle (3);
\draw (49.6,-25) node {$X_6$};
\draw [black] (57,-21.1) circle (3);
\draw (57,-21.1) node {$X_7$};
\draw [black] (28.5,-32.7) circle (3);
\draw (28.5,-32.7) node {$X_3$};
\draw [black] (36.9,-17) circle (3);
\draw (36.9,-17) node {$L_1$};
\draw [black] (46.7,-9.6) circle (3);
\draw (46.7,-9.6) node {$L_3$};
\draw [black] (26.2,-9.6) circle (3);
\draw (26.2,-9.6) node {$L_2$};
\draw [black] (43.633,-30.436) arc (-141.39954:-160.61527:36.194);
\fill [black] (43.63,-30.44) -- (43.53,-29.5) -- (42.74,-30.12);
\draw [black] (36.219,-30.38) arc (-169.79322:-190.20678:29.512);
\fill [black] (36.22,-30.38) -- (36.57,-29.5) -- (35.59,-29.68);
\draw [black] (29.21,-29.786) arc (163.431:140.2726:29.888);
\fill [black] (29.21,-29.79) -- (29.92,-29.16) -- (28.96,-28.88);
\draw [black] (47.26,-12.55) -- (49.04,-22.05);
\fill [black] (49.04,-22.05) -- (49.39,-21.17) -- (48.41,-21.36);
\draw [black] (48.7,-11.83) -- (55,-18.87);
\fill [black] (55,-18.87) -- (54.84,-17.94) -- (54.09,-18.6);
\draw [black] (29.92,-30.05) -- (35.48,-19.65);
\fill [black] (35.48,-19.65) -- (34.67,-20.11) -- (35.55,-20.59);
\draw [black] (37.573,-19.922) arc (10.08281:-10.08281:29.86);
\fill [black] (37.57,-19.92) -- (37.22,-20.8) -- (38.2,-20.62);
\draw [black] (38.726,-19.38) arc (35.79806:22.18713:50.702);
\fill [black] (38.73,-19.38) -- (38.79,-20.32) -- (39.6,-19.74);
\draw [black] (25.85,-12.58) -- (24.75,-22.02);
\fill [black] (24.75,-22.02) -- (25.34,-21.28) -- (24.34,-21.17);
\draw [black] (24.51,-12.08) -- (18.69,-20.62);
\fill [black] (18.69,-20.62) -- (19.55,-20.24) -- (18.73,-19.68);
\draw [black] (28.67,-11.31) -- (34.43,-15.29);
\fill [black] (34.43,-15.29) -- (34.06,-14.43) -- (33.49,-15.25);
\draw [black] (44.31,-11.41) -- (39.29,-15.19);
\fill [black] (39.29,-15.19) -- (40.23,-15.11) -- (39.63,-14.31);
\end{tikzpicture}
\end{center}
    \caption{An example where combining GIN and rank constraint cannot identify the cycles since $L_1$ is a collider between any other two latents.}
    \label{fig:collidercycle}
\end{figure}
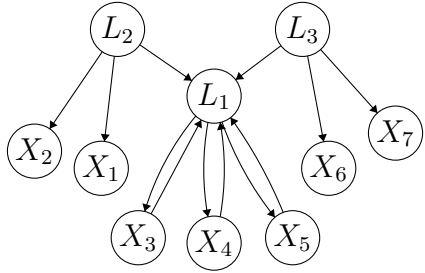
Notice that, if the latents of a cyclic cluster are colliders between the other latent variables, the variables in this cluster will still satisfy GIN with measured variables from other clusters. For instance, consider fig \ref{fig:collidercycle}.  Since $L_1$ is a collider between $L_2$ and $L_3$, $X_3, X_4, X_5$ are not ancestors of other measured variables, therefore in a linear non-Gaussian model $(\mathbf{\{X_1, X_6\}, \{X_3, X_4, X_5\}})$ satisfies GIN.  However, we still have  $rank(\Sigma_\mathbf{\{X_3, X_4, X_5\}, \{X_1, X_6\}}) = 1<rank(\Sigma_\mathbf{\{X_3, X_4\}, \{X_1, X_6\}})=2<| \{X_3, X_4, X_5\}|$, therefore the cycles between $L_1$ and $\{X_3, X_4, X_5\}$ can be identified by just rank constraints. \\ \textcolor{white}{abc}
We've shown two methods to identify cycles within causal clusters. Both use the relation of children among different causal clusters, either the change of rank of correlation submatrices or the dependence due to the asymmetric inseparability of noises. These methods all implicitly assume that latents within the same clusters have the same causal relation with latents from another cluster. In the next section we talk about identifying another type of latent causal structure: cycles between latent blocks.  
\section{Cycles between blocks using GIN}
In the original paper of GIN, Xie et al. \cite{xie2020generalized} defined Linear Non-Gaussian Latent Variable Model ($LiNGLaM$) as linear and acyclic models with additional requirements that:
\begin{enumerate}
    \item No measured variables being ancestors of latent variables.
    \item The noise terms are non-Gaussian.
    \item Every latent set $\mathbf{L}$ has at least $2\mathbf{|L|}$ pure measured children.
    \item No direct edges between measured variables.
\end{enumerate}
GIN is claimed to be able to identify causal relations between latent variables of any two considered groups of measured variables.  So far it has been shown to identify acyclic causal relations between latent variables, including hierarchical structures\cite{pmlr-v162-xie22a}. In this section we are going to adapt the $LiNGLaM$ model and discuss one situation that can occur under $LiNGLaM$: cycles between blocks of latents.
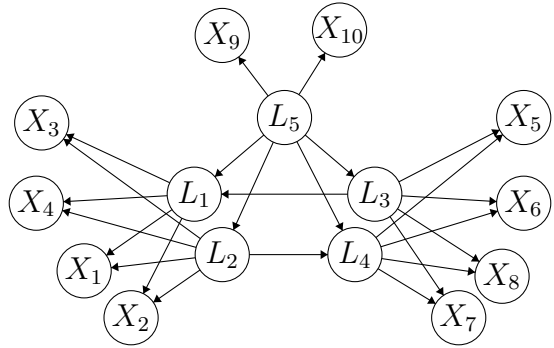
\begin{figure}[H]
\begin{center}
\begin{tikzpicture}[scale=0.12]
\tikzstyle{every node}+=[inner sep=0pt]
\draw [black] (66.1,-15.6) circle (3);
\draw (66.1,-15.6) node {$X_5$};
\draw [black] (66.1,-25.1) circle (3);
\draw (66.1,-25.1) node {$X_6$};
\draw [black] (12.1,-25.1) circle (3);
\draw (12.1,-25.1) node {$X_4$};
\draw [black] (12.7,-16.2) circle (3);
\draw (12.7,-16.2) node {$X_3$};
\draw [black] (17.4,-32.6) circle (3);
\draw (17.4,-32.6) node {$X_1$};
\draw [black] (63.7,-33.2) circle (3);
\draw (63.7,-33.2) node {$X_8$};
\draw [black] (29.6,-24.1) circle (3);
\draw (29.6,-24.1) node {$L_1$};
\draw [black] (58.9,-37.8) circle (3);
\draw (58.9,-37.8) node {$X_7$};
\draw [black] (49.6,-24.1) circle (3);
\draw (49.6,-24.1) node {$L_3$};
\draw [black] (22.6,-37.8) circle (3);
\draw (22.6,-37.8) node {$X_2$};
\draw [black] (32.7,-30.8) circle (3);
\draw (32.7,-30.8) node {$L_2$};
\draw [black] (47.4,-30.8) circle (3);
\draw (47.4,-30.8) node {$L_4$};
\draw [black] (39.6,-15.6) circle (3);
\draw (39.6,-15.6) node {$L_5$};
\draw [black] (32.7,-6.5) circle (3);
\draw (32.7,-6.5) node {$X_9$};
\draw [black] (45.7,-5.9) circle (3);
\draw (45.7,-5.9) node {$X_{10}$};
\draw [black] (26.88,-22.83) -- (15.42,-17.47);
\fill [black] (15.42,-17.47) -- (15.93,-18.26) -- (16.35,-17.36);
\draw [black] (26.6,-24.27) -- (15.1,-24.93);
\fill [black] (15.1,-24.93) -- (15.92,-25.38) -- (15.87,-24.38);
\draw [black] (27.14,-25.81) -- (19.86,-30.89);
\fill [black] (19.86,-30.89) -- (20.8,-30.84) -- (20.23,-30.02);
\draw [black] (52.12,-25.73) -- (61.18,-31.57);
\fill [black] (61.18,-31.57) -- (60.78,-30.72) -- (60.24,-31.56);
\draw [black] (51.28,-26.58) -- (57.22,-35.32);
\fill [black] (57.22,-35.32) -- (57.18,-34.38) -- (56.35,-34.94);
\draw [black] (46.6,-24.1) -- (32.6,-24.1);
\fill [black] (32.6,-24.1) -- (33.4,-24.6) -- (33.4,-23.6);
\draw [black] (28.24,-26.77) -- (23.96,-35.13);
\fill [black] (23.96,-35.13) -- (24.77,-34.64) -- (23.88,-34.19);
\draw [black] (29.81,-30) -- (14.99,-25.9);
\fill [black] (14.99,-25.9) -- (15.63,-26.6) -- (15.9,-25.63);
\draw [black] (30.28,-29.03) -- (15.12,-17.97);
\fill [black] (15.12,-17.97) -- (15.47,-18.84) -- (16.06,-18.04);
\draw [black] (29.72,-31.15) -- (20.38,-32.25);
\fill [black] (20.38,-32.25) -- (21.23,-32.65) -- (21.12,-31.66);
\draw [black] (30.23,-32.51) -- (25.07,-36.09);
\fill [black] (25.07,-36.09) -- (26.01,-36.05) -- (25.44,-35.22);
\draw [black] (50.37,-31.24) -- (60.73,-32.76);
\fill [black] (60.73,-32.76) -- (60.01,-32.15) -- (59.87,-33.14);
\draw [black] (49.96,-32.36) -- (56.34,-36.24);
\fill [black] (56.34,-36.24) -- (55.91,-35.4) -- (55.39,-36.25);
\draw [black] (50.27,-29.93) -- (63.23,-25.97);
\fill [black] (63.23,-25.97) -- (62.32,-25.73) -- (62.61,-26.69);
\draw [black] (49.73,-28.91) -- (63.77,-17.49);
\fill [black] (63.77,-17.49) -- (62.84,-17.61) -- (63.47,-18.38);
\draw [black] (35.7,-30.8) -- (44.4,-30.8);
\fill [black] (44.4,-30.8) -- (43.6,-30.3) -- (43.6,-31.3);
\draw [black] (52.27,-22.73) -- (63.43,-16.97);
\fill [black] (63.43,-16.97) -- (62.49,-16.9) -- (62.95,-17.78);
\draw [black] (52.59,-24.28) -- (63.11,-24.92);
\fill [black] (63.11,-24.92) -- (62.34,-24.37) -- (62.28,-25.37);
\draw [black] (41.2,-13.06) -- (44.1,-8.44);
\fill [black] (44.1,-8.44) -- (43.25,-8.85) -- (44.1,-9.38);
\draw [black] (37.79,-13.21) -- (34.51,-8.89);
\fill [black] (34.51,-8.89) -- (34.6,-9.83) -- (35.39,-9.23);
\draw [black] (37.31,-17.54) -- (31.89,-22.16);
\fill [black] (31.89,-22.16) -- (32.82,-22.02) -- (32.17,-21.26);
\draw [black] (38.36,-18.33) -- (33.94,-28.07);
\fill [black] (33.94,-28.07) -- (34.73,-27.55) -- (33.82,-27.13);
\draw [black] (41.89,-17.54) -- (47.31,-22.16);
\fill [black] (47.31,-22.16) -- (47.03,-21.26) -- (46.38,-22.02);
\draw [black] (40.97,-18.27) -- (46.03,-28.13);
\fill [black] (46.03,-28.13) -- (46.11,-27.19) -- (45.22,-27.65);
\end{tikzpicture}
\end{center}    
    \caption{$\{L_1, L_2\}$ and $\{L_3, L_4\}$ are two blocks of latents having distinct measured children; there are no cycles in the model but there is a cycle between the two blocks.}
    \label{fig:cycbl}
\end{figure}
\textbf{\textit{Example.}} Consider fig \ref{fig:cycbl}. Clearly it satisfies all the graphical requirement of $LiNGLaM$.  However, in addition to $L_5$ being causally earlier than $\{L_1, L_2\}$ and $\{L_3, L_4\}$, we further have $L_2\rightarrow L_4$ and $L_3\rightarrow L_1$.  Viewed as the latent common causes of $\{X_1, X_2, X_3, X_4\}$, $L_1$ and  $L_2$ are not distinguishable by GIN.  Similarly, $L_3$ and  $L_4$ are not distinguishable by GIN.  Treating $\{L_1, L_2\}$ as a latent block and $\{L_3, L_4\}$ as another, there is a cycle between blocks to be identified.
\begin{theorem}\label{thm:GINcollider1}
Consider a $G=\langle \mathbf{V, E}\rangle$ follows a linear and acyclic graph with $LiNGLaM$. Denote latents of the cluster $\mathcal{C}$ by $L_{\mathcal{C}}$ and the measured children in cluster $\mathcal{C}$ by $\{X^{\mathcal{C}}_1...X^{\mathcal{C}}_m\}$. Given clusters $\mathcal{C}_i, \mathcal{C}_a, \mathcal{C}_b$, if:
\begin{enumerate}[label=(\roman*)]
\item $\mathcal{C}_i$ is causally ealier than $\mathcal{C}_a, \mathcal{C}_b$. 
\item Neither $(\{X^{\mathcal{C}_a}_1,...X^{\mathcal{C}_a}_{|L_{\mathcal{C}_a}|},X^{\mathcal{C}_b}_{|L_{\mathcal{C}_b}|+1},...X^{\mathcal{C}_b}_{2|L_{\mathcal{C}_b}|},X^{\mathcal{C}_i}_{|L_{\mathcal{C}_i}|+1},...X^{\mathcal{C}_i}_{2|L_{\mathcal{C}_i}|}\},\{X^{\mathcal{C}_b}_1,...X^{\mathcal{C}_b}_{|L_{\mathcal{C}_b}|}\})$ \newline nor $(\{X^{\mathcal{C}_b}_1,...X^{\mathcal{C}_b}_{|L_{\mathcal{C}_b}|},X^{\mathcal{C}_a}_{|L_{\mathcal{C}_a}|+1},...X^{\mathcal{C}_a}_{2|L_{\mathcal{C}_a}|},X^{\mathcal{C}_i}_{|L_{\mathcal{C}_i}|+1},...X^{\mathcal{C}_i}_{2|L_{\mathcal{C}_i}|}\},\{X^{\mathcal{C}_a}_1,...X^{\mathcal{C}_a}_{|L_{\mathcal{C}_a|}}\})$ satisfies GIN condition.
\item $rank(\Sigma_{\{X^{\mathcal{C}_i}\},\{X^{\mathcal{C}_a}\}\cup\{X^{\mathcal{C}_b}\}})<max(|L_{\mathcal{C}_a}|,|L_{\mathcal{C}_b}|)$.
\item given a $\omega$ s.t. $[X^{\mathcal{C}_b}_1,...X^{\mathcal{C}_b}_{|L_{\mathcal{C}_b}|}]^T\omega\indep [X^{\mathcal{C}_i}_{|L_{\mathcal{C}_i}|+1},...X^{\mathcal{C}_i}_{2|L_{\mathcal{C}_i}|}]$\footnote{All vectors are column vectors by default.}, $[X^{\mathcal{C}_b}_1,...X^{\mathcal{C}_b}_{|L_{\mathcal{C}_b}|}]^T\omega\nindep [X^{\mathcal{C}_a}_1,...X^{\mathcal{C}_a}_{|L_{\mathcal{C}_a}|}]$

\end{enumerate}
then $\mathcal{C}_a$ and $\mathcal{C}_b$ has cycles between blocks of latents.
\end{theorem}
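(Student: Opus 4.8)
The plan is to argue by contradiction, reusing the template already established for Theorems~\ref{thm:rankdetectcycles} and~\ref{thm:GINRankCycle}. Suppose $\mathcal{C}_a$ and $\mathcal{C}_b$ have \emph{no} cycle between their latent blocks. Since the underlying $LiNGLaM$ graph is acyclic and faithfulness holds, all directed paths between $L_{\mathcal{C}_a}$ and $L_{\mathcal{C}_b}$ must run in a single direction, so exactly one of three cases obtains: (A) some latent of $L_{\mathcal{C}_a}$ is an ancestor of some latent of $L_{\mathcal{C}_b}$ and not conversely; (B) the mirror image with $a$ and $b$ swapped; (C) there is no directed path at all between the two blocks. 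By hypothesis~(i), $L_{\mathcal{C}_i}$ lies above both blocks in every case. I would show each of (A), (B), (C) contradicts one of hypotheses~(ii)--(iv); since the cases are exhaustive, the block cycle must exist.

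First I would restate the hypotheses graphically. By Lemma~\ref{GINdsep} --- whose proof uses only linearity, faithfulness, and the GIN/t-separation correspondence, and therefore transfers from $L^2HCM$ to an acyclic $LiNGLaM$ model --- condition~(ii) says that for neither of the two listed pairs $(\mathbf{Z},\mathbf{Y})$ is there a choke set $(\emptyset,\mathbf{C_Y})$ t-separating them with $|\mathbf{C_Y}|<|\mathbf{Y}|$. Through rank-faithfulness and the rank--choke-set correspondence, condition~(iii) bounds the size of the smallest choke set separating $\mathcal{C}_i$'s children from the union of $\mathcal{C}_a$'s and $\mathcal{C}_b$'s children below $\max(|L_{\mathcal{C}_a}|,|L_{\mathcal{C}_b}|)$. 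Finally, by the definition of the non-Gaussian residual $E_{\mathbf{Y}\parallel\mathbf{Z}}$ together with the argument of Lemma~\ref{GINdsep}, condition~(iv) records that the linear combination of $\mathcal{C}_b$'s first-block children that is d-separated, off $L_{\mathcal{C}_i}$, from $\mathcal{C}_i$'s second-block children is nevertheless \emph{not} independent of $\mathcal{C}_a$'s first-block children --- i.e.\ there is a trek joining a $\mathcal{C}_a$-child to a $\mathcal{C}_b$-child that is not routed through $L_{\mathcal{C}_i}$.

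Case~(C) then contradicts~(iv) directly: if no directed path joins the two blocks, then $L_{\mathcal{C}_i}$ is the only common ancestor of $\mathcal{C}_a$'s and $\mathcal{C}_b$'s children, every trek between them passes through $L_{\mathcal{C}_i}$, and so the residual of~(iv) is independent of $\mathcal{C}_a$'s children --- impossible. For case~(B) I would show that the first pair in~(ii) is forced to satisfy GIN: because $L_{\mathcal{C}_b}$ does not feed back into $L_{\mathcal{C}_a}$ and $\mathcal{C}_i$'s children reach $\mathcal{C}_b$'s children only through $L_{\mathcal{C}_i}$, the treks from the $\mathcal{C}_a$- and $\mathcal{C}_i$-parts of $\mathbf{Z}_1$ into $\mathbf{Y}_1$ enter on the $\mathbf{Y}_1$-side through a bounded-size subset of $L_{\mathcal{C}_b}$ whose cardinality is strictly below $|\mathbf{Y}_1|$ by hypothesis~(iii); checking that the remaining ($\mathcal{C}_b$-side) part of $\mathbf{Z}_1$ does not enlarge the choke set required on the $\mathbf{Y}_1$-side, Lemma~\ref{GINdsep} yields GIN, contradicting~(ii). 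Case~(A) is the mirror image, handled through the second pair in~(ii). Hence (A), (B), (C) all fail, and there is a cycle between the two latent blocks.

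The step I expect to be the main obstacle is the verification in case~(B) (and symmetrically~(A)) that the ``spare'' children deliberately placed inside $\mathbf{Z}_1$ do not re-inflate the $\mathbf{Y}_1$-side choke set back up to all of $L_{\mathcal{C}_b}$: this is precisely the degeneracy a block cycle would produce, and its absence is what must be exploited, via careful trek/t-separation bookkeeping and a clean conversion of the rank inequality~(iii) into a cardinality bound through rank-faithfulness. A minor preliminary is justifying that Lemma~\ref{GINdsep}, as stated for $L^2HCM$, applies to the acyclic $LiNGLaM$ setting considered here. Throughout, faithfulness is what licenses moving freely between the graphical description (ancestry, treks, t-separation) and the distributional one (GIN, ranks of cross-covariances, independence of residuals).
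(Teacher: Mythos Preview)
Your case analysis is close in spirit to the paper's argument, but the two diverge in how they distribute the work among the hypotheses, and that divergence creates a real gap in your handling of cases~(A) and~(B).

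The paper does \emph{not} use~(ii) to dispose of both one-directional cases. Instead it uses~(iii) together with Lemma~\ref{GINdsep} to guarantee the $\omega$ in~(iv) exists, and then argues directly from~(iv) that an edge $L_{\mathcal{C}_b}\to L_{\mathcal{C}_a}$ must be present: multiplying by $\omega$ ``d-separates'' $L_{\mathcal{C}_b}$ from $L_{\mathcal{C}_i}$, and since any directed path $L_{\mathcal{C}_a}\to L_{\mathcal{C}_b}$ extends (via~(i)) to a directed path $L_{\mathcal{C}_i}\to L_{\mathcal{C}_a}\to L_{\mathcal{C}_b}$ and is therefore also blocked, the residual dependence in~(iv) can only come from an edge $L_{\mathcal{C}_b}\to L_{\mathcal{C}_a}$. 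This single step eliminates both your case~(A) and your case~(C). Condition~(ii) is then invoked just once, to say the relation is not \emph{simply} $L_{\mathcal{C}_b}\to L_{\mathcal{C}_a}$, forcing the reverse edge as well.

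Your plan instead routes cases~(A) and~(B) through~(ii), and this is where it breaks down. Take your case~(B) and the first pair in~(ii): $\mathbf{Z}_1$ deliberately contains \emph{other} children $X^{\mathcal{C}_b}_{|L_{\mathcal{C}_b}|+1},\dots,X^{\mathcal{C}_b}_{2|L_{\mathcal{C}_b}|}$ of the same latent block $L_{\mathcal{C}_b}$. The treks from those children into $\mathbf{Y}_1$ already pass through \emph{all} of $L_{\mathcal{C}_b}$ on the $\mathbf{Y}_1$-side, so the minimal $(\emptyset,\mathbf{C_{Y_1}})$ choke set has size $|L_{\mathcal{C}_b}|=|\mathbf{Y}_1|$, regardless of whether a block cycle is present. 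Lemma~\ref{GINdsep} then says GIN \emph{fails} for this pair even in case~(B), so~(ii) is not violated and you get no contradiction. Hypothesis~(iii) only bounds the choke set between $\mathcal{C}_i$ and $\mathcal{C}_a\cup\mathcal{C}_b$; it does nothing to shrink the choke set forced by the intra-$\mathcal{C}_b$ treks inside $\mathbf{Z}_1$. (You also have a sign slip: in case~(B) it is $L_{\mathcal{C}_a}$ that does not feed back into $L_{\mathcal{C}_b}$, not the other way round.) The fix is to follow the paper and let~(iv) carry cases~(A) and~(C); then~(ii) only needs to dispatch the remaining one-directional case~(B).
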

\begin{proof}
 $(i)$ and $(ii)$ indicates that in addition to common causes, it is neither simply $L_{\mathcal{C}_a}\leftarrow L_{\mathcal{C}_b}$ nor simply $L_{\mathcal{C}_a}\rightarrow L_{\mathcal{C}_b}$. $(iii)$ indicates that $\mathcal{C}_i$ is d-separated from $\mathcal{C}_b$ and $\mathcal{C}_a$ by a set with size less than $max(|L_{\mathcal{C}_a}|,|L_{\mathcal{C}_b}|)$.\\ \textcolor{white}{abc}
 WLOG assume $max(|L_{\mathcal{C}_a}|,|L_{\mathcal{C}_b}|) = |L_{\mathcal{C}_b}|$. 
 By theorem \ref{GINdsep}, $(\{X^{\mathcal{C}_i}_{|\mathcal{C}_i|+1},...X^{\mathcal{C}_i}_{2|\mathcal{C}_i|}\},\{X^{\mathcal{C}_b}_1,...X^{\mathcal{C}_b}_{|L_{\mathcal{C}_b}|}\})$ satisfies GIN.
 Now consider $(iv)$. For the vector $\omega$ satisfying $[X^{\mathcal{C}_b}_1,...X^{\mathcal{C}_b}_{|L_{\mathcal{C}_b}|}]^T\omega\indep [X^{\mathcal{C}_i}_{|L_{\mathcal{C}_i}|+1},...X^{\mathcal{C}_i}_{2|L_{\mathcal{C}_i}|}]$, we can view such the multiplication of $[X^{\mathcal{C}_b}_1,...X^{\mathcal{C}_b}_{|L_{\mathcal{C}_b}|}]$ and $\omega$  as a \textit{\textbf{d-separation}} between $L_{\mathcal{C}_b}$ and $L_{\mathcal{C}_i}$. After $L_{\mathcal{C}_b}$ \textit{\textbf{being d-separated from}} $L_{\mathcal{C}_i}$ by multiplying with $\omega$, we still have $[X^{\mathcal{C}_b}_1,...X^{\mathcal{C}_b}_{|L_{\mathcal{C}_b|}}]^T\omega$ and measured variables in $\mathcal{C}_a$ being dependent.  $\{X^{\mathcal{C}_b}\}$ are only connected to other variables through $L_{\mathcal{C}_b}$. Since any directed path from $L_{\mathcal{C}_a}$ to $L_{\mathcal{C}_b}$ will also be blocked by the \textit{\textbf{d-separation}} (these directed paths are also part of directed paths from $L_{\mathcal{C}_i}$ to $L_{\mathcal{C}_b}$) we know that such dependency between $L_{\mathcal{C}_a}$ and $L_{\mathcal{C}_b}$, can only come from the directed causal influences from $L_{\mathcal{C}_b}$ to $L_{\mathcal{C}_a}$. Therefore there are directed paths from  $L_{\mathcal{C}_b}$ to $L_{\mathcal{C}_a}$, such that $L_{\mathcal{C}_a}$ can be viewed as colliders between $L_{\mathcal{C}_b}$ and $L_{\mathcal{C}_i}$.  Therefore $L_{\mathcal{C}_a}\leftarrow L_{\mathcal{C}_b}$ exists between some latents in $\mathcal{C}_a$ and $\mathcal{C}_b$.  In this case $(ii)$ can only hold if there are also $L_{\mathcal{C}_a}\rightarrow L_{\mathcal{C}_b}$.  Therefore there are cycles between blocks.
\end{proof}
\textbf{\textit{Example.}}  Consider fig \ref{fig:cycbl} again. By the definition of $LiNGLaM$ we have the following linear SEM:
\begin{equation}\label{eq:SEMCycleBlockGIN1}
    \begin{bmatrix}X_2\\ \textcolor{white}{abc}X_3\end{bmatrix}=AL_5+C\epsilon_{L_3}+B\begin{bmatrix}\epsilon_{L_1}\\ \textcolor{white}{abc} \epsilon_{L_2}\end{bmatrix} +\begin{bmatrix}\epsilon_{X_2}\\ \textcolor{white}{abc} \epsilon_{X_3}\end{bmatrix} 
\end{equation}
where $dim(A)=dim(C)=2\times 1$ and $dim(B)=2\times 2$.  It's obvious that GIN is able to identify that the $L_5$ cluster is causally earlier than the other two clusters, while the causal relation $\{L_1, L_2\}$ cluster and $\{L_3, L_4\}$ cluster is not clear.  We further have $rank(\Sigma_{\{X_9, X_{10}\},\{X_1, ..., X_8\}}) = 1 < max(|\{L_1, L_2\}|, |\{L_3, L_4\}|)$. By equation \ref{eq:SEMCycleBlockGIN1}, given any nonzero vector $\omega$ s.t. $\omega^TA=0$, $[X_2, X_3]\omega\indep[X_9, X_{10}]$ (the causal influence of $L_5$ is canceled), while its component $\omega^TC\epsilon_{L_3}$ is dependent with $L_3$ and so its children. By the Darmois–Skitovich theorem,    $[X_2, X_3]\omega\nindep[X_7, X_8]$.\\ \textcolor{white}{abc}
Notice that there are only three clusters presented in the theorem:  $\mathcal{C}_a$ and $\mathcal{C}_b$, the causal relations between the latents of which shall be further identified; $\mathcal{C}_i$, the causally earlier one.  In practice there might be multiple clusters causally early than $\mathcal{C}_a$ and $\mathcal{C}_b$, and they can be unioned and viewed as one $\mathcal{C}_i$.  As long as condition $(iii)$ is satisfied, theorem \ref{thm:GINcollider1} can is valid to use.\\ \textcolor{white}{abc}
In fact, we can just define the $\mathcal{C}_i$ as the union of all clusters of which the latents are causally earlier than $L_{\mathcal{C}_b}$ and $L_{\mathcal{C}_a}$.  When $|L_{\mathcal{C}_b}|$ and $|L_{\mathcal{C}_a}|$ are large enough, we can have a similar procedure of checking the existence of cycles between latent blocks that identifies both the edges $L_{\mathcal{C}_b}\rightarrow L_{\mathcal{C}_a}$ and $L_{\mathcal{C}_b}\leftarrow L_{\mathcal{C}_a}$ and therefore find a cycle..

\begin{theorem}\label{thm:GINcollider2}
Consider a $G=\langle \mathbf{V, E}\rangle$ follows a linear and acyclic graph with $LiNGLaM$. Denote latents of the cluster $\mathcal{C}$ by $L_{\mathcal{C}}$ and the measured children in cluster $\mathcal{C}$ by $\{X^{\mathcal{C}}_1...X^{\mathcal{C}}_m\}$. Given clusters $\mathcal{C}_i, \mathcal{C}_a, \mathcal{C}_b$, if:
\begin{enumerate}[label=(\roman*)]
\item Let $\mathcal{C}_j$ be the union of all clusters causally ealier than $\mathcal{C}_a, \mathcal{C}_b$. \newline  Neither $(\{X^{\mathcal{C}_a}_1,...X^{\mathcal{C}_a}_{|L_{\mathcal{C}_a}|},X^{\mathcal{C}_b}_{|L_{\mathcal{C}_b}|+1},...X^{\mathcal{C}_b}_{2|L_{\mathcal{C}_b}|},X^{\mathcal{C}_j}_{|L_{\mathcal{C}_j}|+1},...X^{\mathcal{C}_j}_{2|L_{\mathcal{C}_j}|}\},\{X^{\mathcal{C}_b}_1,...X^{\mathcal{C}_b}_{|L_{\mathcal{C}_b}|}\})$ nor\newline  $(\{X^{\mathcal{C}_b}_1,...X^{\mathcal{C}_b}_{|L_{\mathcal{C}_b}|},X^{\mathcal{C}_a}_{|L_{\mathcal{C}_a}|+1},...X^{\mathcal{C}_a}_{2|L_{\mathcal{C}_a}|},X^{\mathcal{C}_j}_{|L_{\mathcal{C}_j}|+1},...X^{\mathcal{C}_j}_{2|L_{\mathcal{C}_j}|}\},\{X^{\mathcal{C}_a}_1,...X^{\mathcal{C}_a}_{|L_{\mathcal{C}_a|}}\})$ satisfies GIN condition.
\item $rank(\Sigma_{\{X^{\mathcal{C}_i}\},\{X^{\mathcal{C}_a}\}\cup\{X^{\mathcal{C}_b}\}})<min(|L_{\mathcal{C}_a}|,|L_{\mathcal{C}_b}|)$.
\item Given a $\omega$ s.t. $[X^{\mathcal{C}_b}_1,...X^{\mathcal{C}_b}_{|L_{\mathcal{C}_b}|}]^T\omega\indep [X^{\mathcal{C}_i}_{|L_{\mathcal{C}_i}|+1},...X^{\mathcal{C}_i}_{2|L_{\mathcal{C}_i}|}]$, $[X^{\mathcal{C}_b}_1,...X^{\mathcal{C}_b}_{|L_{\mathcal{C}_b}|}]^T\omega\nindep [X^{\mathcal{C}_a}_1,...X^{\mathcal{C}_a}_{|L_{\mathcal{C}_a}|}]$.
\item Given a $\theta$ s.t. $[X^{\mathcal{C}_a}_1,...X^{\mathcal{C}_a}_{|L_{\mathcal{C}_a}|}]^T\theta\indep [X^{\mathcal{C}_i}_{|L_{\mathcal{C}_i}|+1},...X^{\mathcal{C}_i}_{2|L_{\mathcal{C}_i}|}]$, $[X^{\mathcal{C}_a}_1,...X^{\mathcal{C}_a}_{|L_{\mathcal{C}_a}|}]^T\theta\nindep [X^{\mathcal{C}_b}_1,...X^{\mathcal{C}_b}_{|L_{\mathcal{C}_b}|}]$.
\end{enumerate}
then $\mathcal{C}_a$ and $\mathcal{C}_b$ has cycles between blocks of latents.
\end{theorem}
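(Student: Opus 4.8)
The plan is to mirror the proof of Theorem~\ref{thm:GINcollider1} and apply its core step \emph{twice}, once in each direction, so as to produce both a directed influence $L_{\mathcal{C}_b}\rightarrow L_{\mathcal{C}_a}$ and a directed influence $L_{\mathcal{C}_a}\rightarrow L_{\mathcal{C}_b}$; together these constitute the claimed cycle between the two latent blocks. Throughout I will read the $\mathcal{C}_i$ appearing in conditions (ii)--(iv) as the same object $\mathcal{C}_j$ defined in (i), namely the union of all clusters whose latents are causally earlier than $L_{\mathcal{C}_a}$ and $L_{\mathcal{C}_b}$; taking the maximal such union is exactly what makes the cancellation argument below as strong as possible.

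First I would fix the linear SEM coming from the $LiNGLaM$ structure, writing the relevant measured children of $\mathcal{C}_a$ and $\mathcal{C}_b$ as linear combinations of (a) the latents of $\mathcal{C}_j$, (b) the independent non-Gaussian noise terms $\epsilon_{L_{\mathcal{C}_a}}$, $\epsilon_{L_{\mathcal{C}_b}}$, and (c) their own measurement noise --- exactly as in the worked example following Theorem~\ref{thm:GINcollider1}. Condition (i), the failure of GIN in both the ``$a$ as parent'' and ``$b$ as parent'' configurations, combined with Lemma~\ref{GINdsep}, rules out that the relation between $L_{\mathcal{C}_a}$ and $L_{\mathcal{C}_b}$ is purely $L_{\mathcal{C}_a}\rightarrow L_{\mathcal{C}_b}$ or purely $L_{\mathcal{C}_b}\rightarrow L_{\mathcal{C}_a}$. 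Condition (ii), $rank(\Sigma_{\{X^{\mathcal{C}_i}\},\{X^{\mathcal{C}_a}\}\cup\{X^{\mathcal{C}_b}\}})<\min(|L_{\mathcal{C}_a}|,|L_{\mathcal{C}_b}|)$, together with the rank-constraint characterisation of t-separation, says that $\mathcal{C}_j$ is d-separated from the union of the two blocks by a latent choke set strictly smaller than \emph{each} block individually; this is precisely what licenses the GIN-based cancellation on \emph{both} sides, whereas in Theorem~\ref{thm:GINcollider1} one only needed it on the $\max$ side, hence the weaker bound there.

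Next I would run the argument of Theorem~\ref{thm:GINcollider1} on condition (iii): for a vector $\omega$ with $[X^{\mathcal{C}_b}_1,\dots,X^{\mathcal{C}_b}_{|L_{\mathcal{C}_b}|}]^{T}\omega\indep[X^{\mathcal{C}_i}_{|L_{\mathcal{C}_i}|+1},\dots,X^{\mathcal{C}_i}_{2|L_{\mathcal{C}_i}|}]$, multiplication by $\omega$ acts as a d-separation of $L_{\mathcal{C}_b}$ from $L_{\mathcal{C}_i}$, so the surviving dependence with the children of $\mathcal{C}_a$ cannot be routed through any common ancestor in $\mathcal{C}_j$, nor through a directed path $L_{\mathcal{C}_a}\rightarrow L_{\mathcal{C}_b}$ (which would be a sub-path of a now-blocked $L_{\mathcal{C}_i}\rightarrow L_{\mathcal{C}_b}$ route); hence it must come from a directed influence $L_{\mathcal{C}_b}\rightarrow L_{\mathcal{C}_a}$. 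Then I would run the mirror-image argument on condition (iv), with $\theta$ in place of $\omega$ and the roles of $a$ and $b$ exchanged, to obtain $L_{\mathcal{C}_a}\rightarrow L_{\mathcal{C}_b}$. The two conclusions together give a directed cycle between the blocks, as claimed, and condition (i) is then automatically consistent with --- indeed explained by --- this cycle.

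The main obstacle I anticipate is the path-analysis step: showing that the residual dependence after the $\omega$- (resp.\ $\theta$-) cancellation is attributable \emph{only} to a cross-block edge and not to some subtler route, in particular ruling out that $L_{\mathcal{C}_a}$ and $L_{\mathcal{C}_b}$ share a latent common cause lying outside $\mathcal{C}_j$, which would create dependence with no cycle. For this I would appeal to the $LiNGLaM$ requirement that no measured variable is an ancestor of a latent and to the clustering and causal-ordering assumptions, which force any such common cause to itself be a cluster latent causally earlier than both blocks, hence already inside $\mathcal{C}_j$ and therefore cancelled; faithfulness is then used to guarantee that the dependence that genuinely survives corresponds to an active directed path. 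A secondary, more bookkeeping-level point is to state once and cleanly --- via the Darmois--Skitovich theorem applied to the linear-combination noise, exactly as in the example after Theorem~\ref{thm:GINcollider1} --- why ``multiplying by $\omega$ behaves as a d-separation'' in the non-Gaussian setting, and then to invoke this fact twice, symmetrically.
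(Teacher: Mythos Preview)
Your proposal is correct and follows essentially the same approach as the paper: use (i) to rule out a purely one-directional relation, use (ii) with the rank/t-separation characterisation (and Lemma~\ref{GINdsep}) to guarantee the existence of the cancelling vectors $\omega$ and $\theta$ on \emph{both} sides, then run the collider/d-separation argument of Theorem~\ref{thm:GINcollider1} once on (iii) to extract $L_{\mathcal{C}_b}\rightarrow L_{\mathcal{C}_a}$ and once on (iv) to extract $L_{\mathcal{C}_a}\rightarrow L_{\mathcal{C}_b}$. Your anticipation of the ``hidden common cause outside $\mathcal{C}_j$'' obstacle and its resolution via the maximality of $\mathcal{C}_j$ in the $LiNGLaM$ setting is exactly the ingredient the paper leaves implicit.
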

The proof can be found in the Appendix and is very similar to the proof of theorem \ref{thm:GINcollider1}.  Notice that condition $(ii)$ and the existence of $\omega$ and $\theta$ as described in the condition $(iii)$ and $(iv)$ show that $\mathcal{C}_i$ is not connected with other two clusters with all their latents.  Naturally it leads to a special case that, when $|L_{\mathcal{C}_a}|=|L_{\mathcal{C}_b}|=2$,  the edges $L_{\mathcal{C}_b}\rightarrow L_{\mathcal{C}_a}$ and $L_{\mathcal{C}_b}\leftarrow L_{\mathcal{C}_a}$ can be identified by GIN, which means that the cycle between the latent blocks can be removed:\\ \textcolor{white}{abc}  
\begin{figure}[H]
\begin{center}
\begin{tikzpicture}[scale=0.12]
\tikzstyle{every node}+=[inner sep=0pt]
\draw [black] (10.3,-25.7) circle (3);
\draw (10.3,-25.7) node {$X_1$};
\draw [black] (14.6,-17.4) circle (3);
\draw (14.6,-17.4) node {$X_2$};
\draw [black] (14.6,-40.4) circle (3);
\draw (14.6,-40.4) node {$X_4$};
\draw [black] (64,-14.9) circle (3);
\draw (64,-14.9) node {$X_5$};
\draw [black] (67.5,-23.7) circle (3);
\draw (67.5,-23.7) node {$X_6$};
\draw [black] (62.5,-39.5) circle (3);
\draw (62.5,-39.5) node {$X_8$};
\draw [black] (10.3,-32.8) circle (3);
\draw (10.3,-32.8) node {$X_3$};
\draw [black] (32.9,-32.8) circle (3);
\draw (32.9,-32.8) node {$L_2$};
\draw [black] (32.9,-21.9) circle (3);
\draw (32.9,-21.9) node {$L_1$};
\draw [black] (67.5,-31.9) circle (3);
\draw (67.5,-31.9) node {$X_7$};
\draw [black] (45.7,-21.9) circle (3);
\draw (45.7,-21.9) node {$L_3$};
\draw [black] (45.7,-32.8) circle (3);
\draw (45.7,-32.8) node {$L_4$};
\draw [black] (38.5,-9.8) circle (3);
\draw (38.5,-9.8) node {$L_5$};
\draw [black] (23,-13.1) circle (3);
\draw (23,-13.1) node {$X_9$};
\draw [black] (26.1,-4.9) circle (3);
\draw (26.1,-4.9) node {$X_{10}$};
\draw [black] (30.04,-31.9) -- (13.16,-26.6);
\fill [black] (13.16,-26.6) -- (13.78,-27.32) -- (14.08,-26.36);
\draw [black] (30.6,-30.87) -- (16.9,-19.33);
\fill [black] (16.9,-19.33) -- (17.19,-20.23) -- (17.83,-19.46);
\draw [black] (30.2,-23.2) -- (13,-31.5);
\fill [black] (13,-31.5) -- (13.94,-31.6) -- (13.51,-30.7);
\draw [black] (30.79,-24.03) -- (16.71,-38.27);
\fill [black] (16.71,-38.27) -- (17.63,-38.05) -- (16.92,-37.35);
\draw [black] (29.94,-22.4) -- (13.26,-25.2);
\fill [black] (13.26,-25.2) -- (14.13,-25.56) -- (13.96,-24.58);
\draw [black] (29.99,-21.18) -- (17.51,-18.12);
\fill [black] (17.51,-18.12) -- (18.17,-18.79) -- (18.41,-17.82);
\draw [black] (29.9,-32.8) -- (13.3,-32.8);
\fill [black] (13.3,-32.8) -- (14.1,-33.3) -- (14.1,-32.3);
\draw [black] (30.13,-33.95) -- (17.37,-39.25);
\fill [black] (17.37,-39.25) -- (18.3,-39.4) -- (17.92,-38.48);
\draw [black] (47.84,-30.7) -- (61.86,-17);
\fill [black] (61.86,-17) -- (60.93,-17.2) -- (61.63,-17.91);
\draw [black] (48.47,-31.64) -- (64.73,-24.86);
\fill [black] (64.73,-24.86) -- (63.8,-24.7) -- (64.19,-25.63);
\draw [black] (48.49,-33.91) -- (59.71,-38.39);
\fill [black] (59.71,-38.39) -- (59.16,-37.63) -- (58.79,-38.56);
\draw [black] (48.7,-32.68) -- (64.5,-32.02);
\fill [black] (64.5,-32.02) -- (63.68,-31.56) -- (63.72,-32.56);
\draw [black] (48.43,-23.15) -- (64.77,-30.65);
\fill [black] (64.77,-30.65) -- (64.25,-29.86) -- (63.84,-30.77);
\draw [black] (47.77,-24.07) -- (60.43,-37.33);
\fill [black] (60.43,-37.33) -- (60.24,-36.41) -- (59.51,-37.1);
\draw [black] (48.69,-22.15) -- (64.51,-23.45);
\fill [black] (64.51,-23.45) -- (63.75,-22.89) -- (63.67,-23.89);
\draw [black] (48.5,-20.83) -- (61.2,-15.97);
\fill [black] (61.2,-15.97) -- (60.27,-15.79) -- (60.63,-16.72);
\draw [black] (35.9,-21.9) -- (42.7,-21.9);
\fill [black] (42.7,-21.9) -- (41.9,-21.4) -- (41.9,-22.4);
\draw [black] (42.7,-32.8) -- (35.9,-32.8);
\fill [black] (35.9,-32.8) -- (36.7,-33.3) -- (36.7,-32.3);
\draw [black] (37.24,-12.52) -- (34.16,-19.18);
\fill [black] (34.16,-19.18) -- (34.95,-18.66) -- (34.04,-18.24);
\draw [black] (35.71,-8.7) -- (28.89,-6);
\fill [black] (28.89,-6) -- (29.45,-6.76) -- (29.82,-5.83);
\draw [black] (35.57,-10.42) -- (25.93,-12.48);
\fill [black] (25.93,-12.48) -- (26.82,-12.8) -- (26.61,-11.82);
\end{tikzpicture}
\end{center}
    \caption{$L_5$ is not connecting every latent in the same latent set in a cluster.}
    \label{fig:cycbl2}
\end{figure}
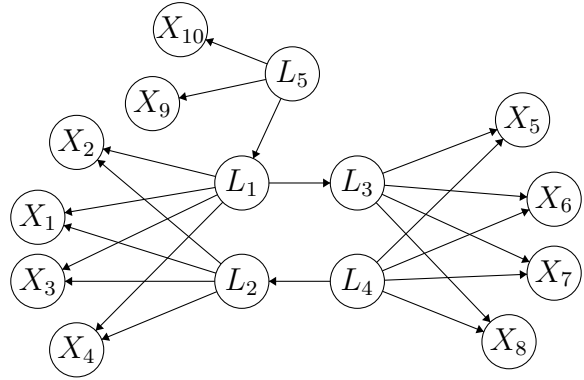
\textbf{\textit{Example. }}Consider fig \ref{fig:cycbl2}. Let $a_{ij}$ be the coefficient of $L_i\rightarrow X_j$. Let $X_{i,j}$ be a linear combination of $X_i$ and $X_j$. Then we have the following linear equations shown on the left: 

\begin{center}
    $\begin{bmatrix}X_3\\ \textcolor{white}{abc}X_4\end{bmatrix}=\begin{bmatrix}a_{13}\\ \textcolor{white}{abc}a_{14}\end{bmatrix}L_1+\begin{bmatrix}a_{23}\\ \textcolor{white}{abc}a_{24}\end{bmatrix}L_2+\begin{bmatrix}\epsilon_{X_3}\\ \textcolor{white}{abc}  \epsilon_{X_4}\end{bmatrix}$\textcolor{white}{*******}     $X_{3,4}=\begin{bmatrix}-a_{14}\\ \textcolor{white}{abc}a_{13}\end{bmatrix}^T\begin{bmatrix}X_3\\ \textcolor{white}{abc}X_4\end{bmatrix}$
\end{center}
\begin{center}
    $\begin{bmatrix}X_1\\ \textcolor{white}{abc}X_2\end{bmatrix}=\begin{bmatrix}a_{11}\\ \textcolor{white}{abc}a_{12}\end{bmatrix}L_1+\begin{bmatrix}a_{21}\\ \textcolor{white}{abc}a_{22}\end{bmatrix}L_2+\begin{bmatrix}\epsilon_{X_1}\\ \textcolor{white}{abc}  \epsilon_{X_2}\end{bmatrix}$\textcolor{white}{*******}     $X_{1,2}=\begin{bmatrix}-a_{12}\\ \textcolor{white}{abc}a_{11}\end{bmatrix}^T\begin{bmatrix}X_1\\ \textcolor{white}{abc}X_2\end{bmatrix}$
\end{center}

\begin{center}
    $\begin{bmatrix}X_5\\ \textcolor{white}{abc}X_6\end{bmatrix}=\begin{bmatrix}a_{35}\\ \textcolor{white}{abc}a_{36}\end{bmatrix}L_3+\begin{bmatrix}a_{45}\\ \textcolor{white}{abc}a_{46}\end{bmatrix}L_4+\begin{bmatrix}\epsilon_{X_5}\\ \textcolor{white}{abc}  \epsilon_{X_6}\end{bmatrix}$\textcolor{white}{*******}    $X_{5,6}=\begin{bmatrix}-a_{35}\\ \textcolor{white}{abc}a_{36}\end{bmatrix}^T\begin{bmatrix}X_5\\ \textcolor{white}{abc}X_6\end{bmatrix}$
\end{center}
\begin{center}
    $\begin{bmatrix}X_7\\ \textcolor{white}{abc}X_8\end{bmatrix}=\begin{bmatrix}a_{37}\\ \textcolor{white}{abc}a_{38}\end{bmatrix}L_3+\begin{bmatrix}a_{47}\\ \textcolor{white}{abc}a_{48}\end{bmatrix}L_4+\begin{bmatrix}\epsilon_{X_7}\\ \textcolor{white}{abc}  \epsilon_{X_8}\end{bmatrix}$\textcolor{white}{*******}    $X_{7,8}=\begin{bmatrix}-a_{38}\\ \textcolor{white}{abc}a_{37}\end{bmatrix}^T\begin{bmatrix}X_7\\ \textcolor{white}{abc}X_8\end{bmatrix}$
\end{center}
Let $\mathcal{C}_i$ be the cluster with measured variables $\mathbf{X}^{\mathcal{C}_i} = \{X_9, X_{10}\}$ and $L_{\mathcal{C}_i}=\{L_5\}$.  Similarly, we have $\mathbf{X}^{\mathcal{C}_b} = \{X_1,..., X_4\}$, $L_{\mathcal{C}_b}=\{L_1, L_2\}$ and $\mathbf{X}^{\mathcal{C}_a} = \{X_5,..., X_8\}$, $L_{\mathcal{C}_a}=\{L_3, L_4\}$. Notice that $rank(\Sigma_{\{X_9, X_{10}\},\{X_1, ..., X_8\}}) = 1 < min(|\{L_1, L_2\}|, |\{L_3, L_4\}|)=2$ so condition $(ii)$ in theorem \ref{thm:GINcollider2} is met.\\ \textcolor{white}{abc}  
  Then it is easy to see that all the combined variables $X_{i,j}$ created as shown above on the right, we have $X_{i,j }\indep\{X_9, X_{10}\}$, meeting the independence requirement described in condition $(iii)$ and $(iv)$.
Furthermore, $(\{X_{5,6}\},\{X_{3,4},X_{7,8}\})$ satisfies GIN and $(\{X_{1,2}\},\{X_{3,4},X_{7,8}\})$ does not satisfy, which indicates that there is a latent cause of $\{X_{7,8},X_{5,6}\}$ causally earlier than latent cause of $\{X_{1,2},X_{3,4}\}$;\\ \textcolor{white}{abc}
Similar procedure can be applied to detect edges with the opposite direction.  By looking for $\omega_1$ s.t. $[X_1, X_2]^T\omega\indep[X_{7,8}, X_{5,6}]$, we form a new $X_{1,2}:=[X_1, X_2]^T\omega$.  Applying the procedure for the other pairs of measured variables, we create a new set of $X_{1,2},X_{3,4},X_{7,8},X_{5,6}$, such as:
\begin{center}
        $X_{1,2}=\begin{bmatrix}-a_{22}\\ \textcolor{white}{abc}a_{21}\end{bmatrix}^T\begin{bmatrix}X_1\\ \textcolor{white}{abc}X_2\end{bmatrix}$\textcolor{white}{*******}  $X_{3,4}=\begin{bmatrix}-a_{24}\\ \textcolor{white}{abc}a_{23}\end{bmatrix}^T\begin{bmatrix}X_3\\ \textcolor{white}{abc}X_4\end{bmatrix}$
\end{center}
\begin{center}
   $X_{5,6}=\begin{bmatrix}-a_{45}\\ \textcolor{white}{abc}a_{46}\end{bmatrix}^T\begin{bmatrix}X_5\\ \textcolor{white}{abc}X_6\end{bmatrix}$ \textcolor{white}{*******} 
   $X_{7,8}=\begin{bmatrix}-a_{48}\\ \textcolor{white}{abc}a_{47}\end{bmatrix}^T\begin{bmatrix}X_7\\ \textcolor{white}{abc}X_8\end{bmatrix}$
\end{center}
among the new collection of combined variables, $(\{X_{1,2}\},\{X_{3,4},X_{7,8}\})$ satisfies GIN;$(\{X_{5,6}\},\{X_{3,4},X_{7,8}\})$ does not satisfy GIN. There is a latent cause of $\{X_{1,2},X_{3,4}\}$ causally earlier than latent cause of $\{X_{5,6},X_{7,8}\}$.\\ \textcolor{white}{abc}
Apparently the combined variables have the same latents as the original measured variables.  So we conclude that there exist cycles between blocks in the latent structure.

\section{Rank Constraint-Based and GIN Condition-Based Algorithm for Estimating Cyclic Latent Structures}

In this section we present two algorithms that learn latent causal structures with Non-Gaussian noises in the $L^2HCM$ model (allowing cycles between the measured children and their latents) and $LiNGLaM$ model (allowing cycles between latent blocks).  Both algorithm involve a stage of estimating causal order between blocks of latent variables for each causal clusters, the method of which is provided by a key proposition derived from the GIN Condition:

\begin{proposition}\label{prop:GIN4}\cite{xie2020generalized}
Suppose that $\{\mathcal{S}_1,...,\mathcal{S}_i,...,\mathcal{S}_n\}$ contains all clusters of the $LiNGLaM$.  Denote $\mathbf{T}=\{L(\mathcal{S}_1),...,L(\mathcal{S}_i)\}$ and $\mathbf{R}=\{L(\mathcal{S}_{i+1}),...,L(\mathcal{S}_n)\}$, where all elements in $\mathbf{T}$ are causally earlier than those in $\mathbf{R}$. Let $\hat{\mathbf{Z}}$ contain half of the set of children of each latent variable set in $\mathbf{T}$, and $\hat{\mathbf{Y}}$ contain the elements from the other half set of the children of each latent variable set in $\mathbf{R}$. Furthermore, Let $L(\mathcal{S}_r)$ be a latent variable set of $\mathbf{R}$ and $\mathcal{S}_r=\{R_1,R_2,...,R_{2Dim(L(\mathcal{S}_r))}\}$. If for any one of the remaining elements $L(\mathcal{S}_k)\in\mathbf{R}$, with $k\neq r$ and $\mathcal{S}_k=\{K_1,K_2,...,K_{2Dim(L(\mathcal{S}_k))}\}$ such that $(\{R_{Dim(L(\mathcal{S}_r))+1},...R_{2Dim(L(\mathcal{S}_r))},\hat{\mathbf{Z}}\},\{R_1,....,R_{Dim(L(\mathcal{S}_r))},\newline K_1,...K_{Dim(L(\mathcal{S}_k))},\hat{\mathbf{Y}}\})$ follows the GIN condition, then $L(\mathcal{S}_r)$ is a root latent variable set in $\mathbf{R}$.    
\end{proposition}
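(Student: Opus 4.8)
The plan is to derive this from Theorem~\ref{thm:GIN} (which gives sufficient structural conditions for the GIN condition) together with a faithfulness-based converse in the spirit of Lemma~\ref{GINdsep}; the statement is essentially a result of \cite{xie2020generalized}, and I would follow the structure of their argument in the present notation. Write $d_r=Dim(L(\mathcal{S}_r))$, $d_k=Dim(L(\mathcal{S}_k))$, and abbreviate the tested pair as $(\mathbf{Z}_{rk},\mathbf{Y}_{rk})$ with $\mathbf{Z}_{rk}=\{R_{d_r+1},\dots,R_{2d_r}\}\cup\hat{\mathbf{Z}}$ and $\mathbf{Y}_{rk}=\{R_1,\dots,R_{d_r}\}\cup\{K_1,\dots,K_{d_k}\}\cup\hat{\mathbf{Y}}$. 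I would prove the biconditional: $L(\mathcal{S}_r)$ is a root of $\mathbf{R}$ if and only if $(\mathbf{Z}_{rk},\mathbf{Y}_{rk})$ satisfies GIN for every such $k$. The proposition is the ``$\Leftarrow$'' half, which I obtain as the contrapositive of ``not a root $\Rightarrow$ GIN fails for some $k$''.

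The ``root $\Rightarrow$ GIN'' direction I would handle by exhibiting the decomposition required by Theorem~\ref{thm:GIN}. If $L(\mathcal{S}_r)$ is a root of $\mathbf{R}$, every latent ancestor of $L(\mathcal{S}_r)$ lies in $\mathbf{T}$, and no latent of $\mathbf{R}\setminus\{\mathcal{S}_r\}$ is an ancestor of $L(\mathcal{S}_r)$ or of any latent of $\mathbf{T}$; hence every latent ancestor of $\mathbf{Z}_{rk}$ lies in $L(\mathcal{S}_r)\cup L(\mathbf{T})$, so the set $\mathbf{L}^\ast$ of latent common causes of $\mathbf{Y}_{rk}$ and $\mathbf{Z}_{rk}$ does too. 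Taking this $\mathbf{L}^\ast$ and writing $\mathbf{Y}_{rk}=A\mathbf{L}^\ast+\mathbf{E_Y}$, $\mathbf{Z}_{rk}=B\mathbf{L}^\ast+\mathbf{E_Z}$, I would verify the four hypotheses: $A$ has full column rank by rank faithfulness; $\mathbf{E_Y}\indep\mathbf{L}^\ast$ and $\mathbf{E_Y}\indep\mathbf{E_Z}$ because rootness forces $\mathbf{E_Y}$ to reduce to a vector of individual noise terms of distinct measured vertices, which are mutually independent and independent of all latents; $\Sigma_{\mathbf{L}^\ast\mathbf{Z}_{rk}}$ has rank $|\mathbf{L}^\ast|$ because $\{R_{d_r+1},\dots,R_{2d_r}\}$ is a full-rank surrogate for $L(\mathcal{S}_r)$ and $\hat{\mathbf{Z}}$, carrying half the children of each cluster of $\mathbf{T}$, is one for the relevant part of $L(\mathbf{T})$; and finally $Dim(\mathbf{Y}_{rk})>|\mathbf{L}^\ast|$, where the auxiliary children in $\hat{\mathbf{Y}}$ supply the needed padding and the $LiNGLaM$ hypothesis that each latent set has at least $2|L|$ pure children guarantees enough of them. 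Theorem~\ref{thm:GIN} then yields GIN.

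For the direction the proposition needs I would argue the contrapositive. If $L(\mathcal{S}_r)$ is not a root of $\mathbf{R}$, pick $L(\mathcal{S}_{k_0})\in\mathbf{R}$ strictly causally earlier, so there is a directed path from a latent of $L(\mathcal{S}_{k_0})$ into $L(\mathcal{S}_r)$. In the $k=k_0$ pair this path produces treks from the $\mathbf{Z}$-side pure children $R_{d_r+1},\dots,R_{2d_r}$ (now descendants of $L(\mathcal{S}_{k_0})$), through $L(\mathcal{S}_{k_0})$, to the $\mathbf{Y}$-side children $K_1,\dots,K_{d_{k_0}}$, whose $\mathbf{Y}$-side directed path does not meet $L(\mathcal{S}_r)$. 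Invoking faithfulness (Lemma~\ref{GINdsep}), GIN can hold only if some $(\emptyset,\mathbf{C})$ t-separates $(\mathbf{Z}_{rk_0},\mathbf{Y}_{rk_0})$ with $|\mathbf{C}|<Dim(\mathbf{Y}_{rk_0})$; but these extra treks force such a $\mathbf{C}$ to cover $L(\mathcal{S}_{k_0})$ and the path latents in addition to what the remaining treks already require, pushing $|\mathbf{C}|$ up to $Dim(\mathbf{Y}_{rk_0})$ and leaving no admissible separator, so GIN fails for $k_0$.

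The step I expect to be the main obstacle is the pair of dimension counts: proving $Dim(\mathbf{Y}_{rk})>|\mathbf{L}^\ast|$ in the forward direction and, dually, that every t-separator of the $(r,k_0)$-pair has size at least $Dim(\mathbf{Y}_{rk_0})$ in the backward direction. Both reduce to comparing the number of (pure) children the construction places on each side against the number of distinct latents they jointly expose, and making these inequalities come out right is exactly where the ``half the children of each cluster'' bookkeeping and the $2|L|$-pure-children hypothesis of $LiNGLaM$ are used — carefully, with respect to a minimal choke set rather than the crude bound $L(\mathcal{S}_r)\cup L(\mathbf{T})$. A secondary point is justifying $\mathbf{E_Y}\indep\mathbf{L}^\ast$ and $\mathbf{E_Y}\indep\mathbf{E_Z}$, which genuinely require the root property (they fail once some latent of $L(\mathcal{S}_r)$ has a parent inside $\mathbf{R}$), together with the usual rank-faithfulness/genericity appeal for the full-column-rank and full-rank-covariance claims.
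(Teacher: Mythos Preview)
The paper does not give a proof of this proposition at all: it is quoted from \cite{xie2020generalized}, and the only remark the paper adds is that it ``can be easily proved, as stated in the original paper, just by treating measured children of latents in $\mathbf{T}$ as a new causal cluster.'' So there is no detailed argument in the paper to compare against.

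Your proposal is essentially the unpacking of that one-line reduction. Collapsing all of $\mathbf{T}$'s measured children into a single surrogate cluster is precisely what lets you write $\mathbf{Y}_{rk}$ and $\mathbf{Z}_{rk}$ as linear in a common latent vector $\mathbf{L}^\ast\subset L(\mathcal{S}_r)\cup L(\mathbf{T})$ and then invoke Theorem~\ref{thm:GIN}; your explicit verification of the four hypotheses is exactly what that reduction buys. The contrapositive direction via Lemma~\ref{GINdsep} is likewise the natural faithfulness-side argument, and is in the spirit of how the paper uses that lemma elsewhere (e.g.\ Theorem~\ref{thm:GINRankCycle}). Two small remarks: first, the biconditional you set up is more than the proposition asks, though harmless; second, Lemma~\ref{GINdsep} is stated for $L^2HCM$ rather than $LiNGLaM$, so strictly speaking you should note that $LiNGLaM$ satisfies the structural hypotheses that lemma needs (it does, being acyclic with no measured-to-latent edges). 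The dimension-count you flag as the main obstacle is real, and it is exactly where the ``half the children'' and ``$2|L|$ pure children'' clauses earn their keep; your identification of this as the crux is accurate.
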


The proposition can be easily proved, as stated in the original paper \cite{xie2020generalized}, just by treating measured children of latents in $\mathbf{T}$ as a new causal cluster.  Now we want to show that this method can also be used when there are cycles between the measured children and latents in the same causal cluster:

\begin{proposition}\label{prop:GINcycle4}
Suppose that $\mathbf{C} = \{\mathcal{S}_1,...,\mathcal{S}_i,...,\mathcal{S}_n\}$ contains all clusters of the $L^2HCM$.  Denote $\mathbf{R}=\{\mathcal{S}_k,...,\mathcal{S}_n\}$ to be the set of clusters in which latents being caused by their measured children, so $\mathbf{C}\setminus\mathbf{R}$ contains clusters of a $LiNGLaM$.  Denote the causal order $\mathcal{K}=\langle L(\mathcal{S}^1),...,L(\mathcal{S}^i)\rangle$, which is a causal order over latent variables of clusters in $\mathbf{C\setminus R}$ s.t. $i<j$ iff $L(\mathcal{S}^i)$ is causally no later than $L(\mathcal{S}^j)$.  Given $L(\mathcal{S}_r)$ being the latents of its clustered measured children $\mathcal{S}_r\in\mathbf{C\setminus R}$ and $\mathcal{S}_r=\{R_1,R_2,...,R_{2Dim(L(\mathcal{S}_r))}\}$, denote $\mathbf{T}$ the set of latents causally earlier than $L(\mathcal{S}_r)$ according to $\mathcal{K}$. let $\hat{\mathbf{Z}}$ contain the elements from the half set of the children of each latent variable set in $\mathbf{T}$, and $\hat{\mathbf{Y}}$ contain the elements from half set of the children of a latent variable set $L(\mathcal{S}_k)\in\mathbf{R}$.  Let $\mathcal{S}_r=\{R_1,R_2,...,R_{2Dim(L(\mathcal{S}_r))}\}$.  If $(\{R_{Dim(L(\mathcal{S}_r))+1},...R_{2Dim(L(\mathcal{S}_r))},\hat{\mathbf{Z}}\}, \{R_1,....,R_{Dim(L(\mathcal{S}_r))},K_1,...K_{Dim(L(\mathcal{S}_k))},\hat{\mathbf{Y}}\})$ follows the GIN condition, then $L(\mathcal{S}_r)$ is causally earlier than $L(\mathcal{S}_k))$ and so is latent variables in $\mathbf{T}$ .    
\end{proposition}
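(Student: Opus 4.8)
The plan is to reduce the statement to Proposition~\ref{prop:GIN4} by the same device used there: collapse the half-sets of children of the latents in $\mathbf{T}$ (together with the relevant linear combinations of them) into a single ``pseudo-cluster'' that plays the role of the causally-earliest block, so that the pair of variable sets appearing in the hypothesis becomes an instance of the basic GIN root-identification test among $\{L(\mathcal{S}_r),L(\mathcal{S}_k),\dots\}$. Everything in that reduction is literally the acyclic argument behind Proposition~\ref{prop:GIN4} (which in turn rests on Theorem~\ref{thm:GIN}) except for one new ingredient: the cluster $\mathcal{S}_k$ lies in $\mathbf{R}$, so there are feedback edges from its measured children into $L(\mathcal{S}_k)$. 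The crucial observation is that such feedback does not spoil the factor-analytic form the reduction needs. In an $L^2HCM$, every pure child $X$ of $L(\mathcal{S}_k)$ still obeys $X=\beta_X^{T}L(\mathcal{S}_k)+\epsilon_X$ with $\epsilon_X$ the exogenous noise of $X$; an edge $X\to L_j$ changes only the \emph{reduced form} of $L(\mathcal{S}_k)$, which becomes a linear function of the latent blocks causally upstream of $L(\mathcal{S}_k)$ plus a block noise $\tilde{\epsilon}_{\mathcal{S}_k}$ aggregating $\epsilon_{L(\mathcal{S}_k)}$ and the $\epsilon_X$ of the cyclic children (the aggregation is well defined because $I-L$ is invertible in the model, as in the hypotheses of Theorem~\ref{thm:ranklinear}). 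Hence, as far as its pure children are concerned, $\mathcal{S}_k$ is indistinguishable from an ordinary linear factor cluster sitting below its upstream latents, which is exactly what the $LiNGLaM$-based argument requires.

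Concretely I would carry this out in four steps. First, write the reduced-form SEM for the union of all variables occurring in the tested pair: each child of a clean cluster is linear in that cluster's latents plus its own noise, each pure child of $\mathcal{S}_k$ is linear in $L(\mathcal{S}_k)$ plus its own noise, and $L(\mathcal{S}_k)$ is linear in its upstream latent blocks plus $\tilde{\epsilon}_{\mathcal{S}_k}$. Second, check that, under the hypothesised order in which $L(\mathcal{S}_r)$ and the blocks in $\mathbf{T}$ are causally no later than $L(\mathcal{S}_k)$, none of $\tilde{\epsilon}_{\mathcal{S}_k}$ nor the cyclic $\epsilon_X$'s of $\mathcal{S}_k$ occurs in the reduced form of any variable on the $\mathbf{Z}$-side of the pair (the $\mathbf{Z}$-side consists only of the second-half children $R_{Dim(L(\mathcal{S}_r))+1},\dots$ and $\hat{\mathbf{Z}}$, which are children of clean clusters no later than $L(\mathcal{S}_r)$); this is where the extroverted-parent condition and the acyclicity of the between-block latent structure enter. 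Third, observe that with this noise-flow bookkeeping in place, the tested pair is exactly a root-identification GIN test in the collapsed model, so Theorem~\ref{thm:GIN} --- equivalently Lemma~\ref{GINdsep}, which characterises GIN by a choke set of size $<|\mathbf{Y}|$ that $t$-separates the two sides --- applies verbatim. Fourth, translate the conclusion back: GIN holding forces a choke set of the required size, which by the usual counting over the fresh half-children of $L(\mathcal{S}_r)$, $L(\mathcal{S}_k)$ and the blocks of $\mathbf{T}$ must be $L(\mathcal{S}_r)\cup\mathbf{T}$ lying on the $\mathbf{Y}$-side of every trek; that is possible only if no element of $L(\mathcal{S}_r)\cup\mathbf{T}$ is a descendant of $L(\mathcal{S}_k)$, i.e.\ $L(\mathcal{S}_r)$ and every block of $\mathbf{T}$ are causally earlier than $L(\mathcal{S}_k)$.

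The main obstacle is Step~2 together with verifying the non-degeneracy hypotheses that Theorem~\ref{thm:GIN} needs after collapsing: the coefficient matrix must have full column rank and the relevant cross-covariance must have the right rank. In the cyclic cluster these become conditions on how many \emph{pure} children $L(\mathcal{S}_k)$ still has available after the half-split, and they are exactly what the ``$2k$ pure children'' clause in the definition of a \emph{pc}-atomic latent cover (condition (d)) was designed to guarantee; I would spell out that $2\,Dim(L(\mathcal{S}_k))$ pure children is precisely enough to supply both $K_1,\dots,K_{Dim(L(\mathcal{S}_k))}$ on the $\mathbf{Y}$-side and a disjoint half elsewhere while keeping the relevant submatrices full rank, and similarly for $L(\mathcal{S}_r)$ and the blocks of $\mathbf{T}$. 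A secondary subtlety, already present in the acyclic proof, is that the linear combinations used to form the pseudo-cluster out of $\mathbf{T}$'s children inherit the non-Gaussianity and independence one needs; this transfers unchanged because those children live entirely in the clean region $\mathbf{C}\setminus\mathbf{R}$. Once Step~2 and these rank checks are discharged, the remainder of the argument is identical in form to Proposition~\ref{prop:GIN4}.
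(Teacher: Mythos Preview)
Your proposal is correct and takes essentially the same route as the paper: reduce to Proposition~\ref{prop:GIN4} by treating the children of the clean clusters in $\mathbf{T}$ as a single pseudo-cluster and observing that the cyclic cluster $\mathcal{S}_k$ still admits the linear factor form $X=\beta_X^{T}L(\mathcal{S}_k)+\epsilon_X$ needed for the GIN test. The paper's own proof is a one-liner (``just an application of Proposition~\ref{prop:GIN4}''), so your Steps~1--4 and the rank/noise bookkeeping actually supply considerably more justification than the paper does; none of it is wrong, and the pc-atomic cover clause~(d) is indeed the right place to source the full-rank conditions.
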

It is easy to see that this proposition is just an application of \textbf{Proposition}
\ref{prop:GIN4}.\\ \textcolor{white}{abc}
\textit{\textbf{Example.}} Consider fig \ref{fig:collidercycle} again.  Denote $a_{ij}$ as the edge strength or coeffieicent of the edge $L_i\rightarrow X_j$ and $b_{ij}$ of the edge $X_i\rightarrow L_j$ and $c_{ij}$ of the edge $L_i\rightarrow L_j$.  Consider $\omega = [-c_{21}a_{15},a_{21}]$.  Having $\mathbf{Z}=\{X_2\}$ and $\mathbf{Y}=\{X_1, X_5\}$, we can see $E_{\mathbf{Y}\parallel\mathbf{Z}}\indep\mathbf{Z}$. Therefore, even though there are cycles under $L_1$, we still have $(\{X_2\}, \{X_1, X_5\})$ following the GIN condition.  Similarly, $(\{X_7\}, \{X_6, X_5\})$ follows the GIN condition.  Therefore we conclude $L_2\rightarrow L_1$ and $L_3\rightarrow L_1$.
\subsection{Rank Constraint and GIN Condition-Based Algorithm for Estimating $L^2HCM$}

We now present an algorithm of learning a $L^2HCM$ latent structure with non-Gaussian distribution.  The procedure of the algorithm is similar to the GIN condition-based algorithm that estimates $LiNGLaM$\cite{xie2020generalized}.  The first stage is to identify individual causal clusters and cycles under latents, the second stage is to learn the causal order of latent variable sets of acyclic causal orders and the third stage is to learn the partial causal order of the latent variable sets of clusters.  The theoretical justification of the algorithm is shown by \textbf{Theorem} \ref{thm:ranklinear}, \textbf{Theorem} \ref{thm:rankdetectcycles}, \textbf{Proposition} \ref{prop:GIN4} and \ref{prop:GINcycle4}.  

\begin{algorithm}
    \caption{LatentCausalCyclicStructureDiscovery}\label{alg:fullalgolatentcycunder}
    \KwIn{Data from a set of measured variables $X_\mathcal{G}$}
\KwOut{Partial causal order  $\mathcal{K}$}
$\mathcal{L}, \mathcal{L}_c\gets FindCausalCyclicClusters(X_\mathcal{G})$;\\ 
$\mathcal{K}\gets LearningtheCausalOrderofLatentVariables(\mathcal{L})$;\\ 
$\mathcal{K}\gets LearningCausalOrderForCyclicClusters(\mathcal{L}_c, \mathcal{K})$;

\end{algorithm}

\subsubsection{Stage 1: Finding Causal Cyclic Clusters}
The first stage of the algorithm is to find the causal cluster as measured variables sharing the same set of latent variables as parents.  Since we allow cycles in between the latent parents and measured children, GIN condition is not able to identify all causal clusters.  For the sake of completeness, we use rank constraint to identify clusters first then use GIN to check the existence of cycles.  By \textbf{Lemma} \ref{GINdsep} it is easy to see that for any cluster that can be identified by GIN, it can identified by the rank constraint.

After identifying (and merging) all clusters, the algorithm goes through each causal cluster  to check whether there are cycles between the latent variable set and the measured children, by checking if measured children in the cluster $\mathcal{S}$ and the rest of the measured variables $X_\mathcal{G}\setminus\mathcal{S}$ follow the GIN condition.  By \textbf{Theorem} \ref{thm:GINRankCycle}, if the GIN condition is not satisfied, then there are edges from the measured children to the latent set of the cluster.  Such cluster will be labeled as `\textit{cyclic}.' The number of latents for a cyclic cluster will be updated as the rank of the correlation matrix between $\mathcal{S}$ and $X_\mathcal{G}\setminus\mathcal{S}$.

The first stage of the algorithm is presented in algorithm \ref{alg:cycunderLAdetect}.\\ \textcolor{white}{abc} 
\begin{algorithm}
\caption{FindCausalCyclicClusters}\label{alg:cycunderLAdetect}
\KwIn{Data from a set of measured variables $X_\mathcal{G}$}
\KwOut{a set of clusters $\mathcal{L}, \mathcal{L}_c$}
 $\mathcal{L}, \mathcal{L}_c\gets\emptyset, \emptyset$\\ 
 Active Set $\mathcal{S}\gets X_\mathcal{G}$\\ 
 $k\gets 1;$\\ 
  {$\#$\textit{Identifying Causal Clusters}}\\ 
\While{$\mathcal{S}\neq \emptyset$ and $|\mathcal{S}|\geq 2k+2$}{$\mathcal{L}_k\gets\emptyset$\Comment*[r]{$\mathcal{L}_k$ keeps track clusters with $k$ many latents}

\For{$\mathbf{S}\subset\mathcal{S}, |\mathbf{S}|=k+1$}{$\mathbf{S'}\gets X_\mathcal{G}\setminus\mathbf{S}$

\If{$rank(\Sigma_\mathbf{S, S'})<k$}{$\mathcal{L}_k\gets \mathcal{L}_k\cup\mathbf{S}$}}
Merge all overlapping clusters in $\mathcal{L}_k$

\For{$\mathbf{S}\in\mathcal{L}_k$}{
$\mathcal{L}\gets \mathcal{L}\cup\{\mathbf{S}\}$

$X_\mathcal{G}\gets X_\mathcal{G}\setminus\mathbf{S}$

Initiate $L_\mathbf{S}$ as the number of latent causes shared in $\mathbf{S}$

$L_\mathbf{S}\gets k$

}
$k\gets k+1$

}
{$\#$\textit{Identifying Cycles under Latents}}\\ 
\For{$\mathbf{S}\in\mathcal{L}$}{
$\mathbf{S'}\gets X_\mathcal{G}\setminus\mathbf{S}$

\If{$E_{\mathbf{S\parallel S' }}\nindep \mathbf{S'}$}{Mark $\mathbf{S}$ as $cyclic$ and $\mathcal{L}_c\gets\mathcal{L}_c\cup \{\mathbf{S}\}$

$L_\mathbf{S}\gets \lfloor max_{\mathbf{S''\subset S}}$  $ 
   rank(\Sigma_{\mathbf{S'',  S\setminus S''}}) \rfloor/2$

}

} 
\textbf{Return } $\mathcal{L}, \mathcal{L}_c$ 
\end{algorithm}
Notice that not all cyclic clusters can be identified by this method.  When the latents of a cyclic cluster are not causally earlier than any other latent, the GIN is not able to identify its cyclicity.  For instance, consider fig \ref{fig:collidercycle} again. Having $\mathbf{Z}=\{X_2, X_1, X_6, X_7\}$ and $\mathbf{Y}=\{X_3, X_4, X_5\}$, it is easy to check that $(\mathbf{Z, Y})$ follows the GIN condition.  Fortunately this case only occurs when the cluster is the latest in the causal order, therefore such cluster can be noticed and marked at the end of the stage 3 of the algorithm.

\subsubsection{Stage 3: Learning the Partial Causal Order of Latent Variables}
The stage 2 of the algorithm takes the cluster not marked as `\textit{cyclic}' and run the step 2 of the original \textbf{GIN condition-based algorithm for estimating $LiNGLaM$} to learn the causal order of the clusters\cite{xie2020generalized}.  Using the learned causal order $\mathcal{K}$ among the latent variables in stage 2, stage 3 has two tasks:
\begin{enumerate}
    \item To check all the causal clusters that are latest in $\mathcal{K}$, to see if any of them are cyclic;
    \item To learn the partial causal order among cyclic clusters. 
\end{enumerate}

\begin{algorithm}
\caption{LearningCausalOrderForCyclicClusters}\label{alg:CAorderlearning}
\KwIn{A set of clusters $\mathcal{C}$, a causal order $\mathcal{K}$}
\KwOut{Causal Order $\mathcal{K}_c$}

$\mathcal{L}\gets$the set of latent variables for each cluster\\ 
{$\#$\textit{Further Identifying Cycles under Latents}}\\ 
\For{$L(\mathbf{S})\in\mathcal{K}$ that is causally the latest in $\mathcal{K}$}{

\If{$rank(\Sigma_{\mathbf{S,  X_\mathcal{G}\setminus S}})<|L(\mathbf{S})|$}{Mark $\mathbf{S}$ as $cyclic$ as keep track of it\\ 

$L_\mathbf{S}\gets \lfloor max_{\mathbf{S''\subset S}}$  $rank(\Sigma_{\mathbf{S'',  S\setminus S''}}) \rfloor/2$\Comment*[r]{$L_\mathbf{S}$ is the size of $L(\mathbf{S})$}

   $\mathcal{K}\gets\mathcal{K}\setminus \{L(\mathbf{S})\}$
}

}

 \For{$\mathbf{S}\in\mathcal{C}$}
 { \For{$L(\mathbf{S_r})\in\mathcal{K}$, starting from the causally latest one and move to the earlier ones}{

 $\mathbf{T}\gets\{L(\mathbf{S_i})|L(\mathbf{S_i}) \text{is causally earlier than } L(\mathbf{S_r})\}$

 $\mathbf{R}\gets\{L(\mathbf{S_r}),L(\mathbf{S})\}$
 
\If{
$L(\mathbf{S_r})$ is the root variable set in $\mathbf{R}$ according to \textbf{Proposition} \ref{prop:GINcycle4}
        }{
Include $\mathbf{S}$ in $\mathcal{K}_c$ and mark as \textit{causally later than $\mathbf{S_r}$}
        }
 
 }
 }

Include all the newly marked cyclic clusters in $\mathcal{K}_c$ and mark their causal order as the same as in line 3

\textbf{Return } $\mathcal{K}_c$ 
\end{algorithm}

The soundness of the algorithmic steps for the first task is proved in \textbf{Theorem} \ref{thm:rankdetectcycles} and the second task is proved in\textbf{ Proposition} \ref{prop:GINcycle4}.\\ \textcolor{white}{abc}
In stage 3, the causal order $\mathcal{K}$ of the acyclic clusters is used as a `ruler' to measure the causal order of cyclic clusters.  We call the causal order as `partial' since the causal relation between the two cyclic clusters cannot be identified if they are not distinguishable by the `ruler.'\\ \textcolor{white}{abc}

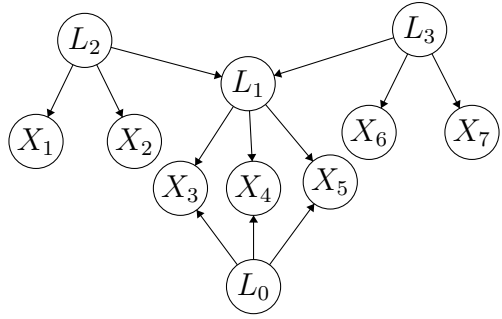
\begin{figure}[H]
\begin{center}
\begin{tikzpicture}[scale=0.12]
\tikzstyle{every node}+=[inner sep=0pt]
\draw [black] (19,-17.5) circle (3);
\draw (19,-17.5) node {$L_2$};
\draw [black] (37.1,-22.3) circle (3);
\draw (37.1,-22.3) node {$L_1$};
\draw [black] (56.1,-16.3) circle (3);
\draw (56.1,-16.3) node {$L_3$};
\draw [black] (29.6,-33.9) circle (3);
\draw (29.6,-33.9) node {$X_3$};
\draw [black] (37.7,-33.9) circle (3);
\draw (37.7,-33.9) node {$X_4$};
\draw [black] (46.2,-33.2) circle (3);
\draw (46.2,-33.2) node {$X_5$};
\draw [black] (37.7,-44.8) circle (3);
\draw (37.7,-44.8) node {$L_0$};
\draw [black] (13.6,-28.7) circle (3);
\draw (13.6,-28.7) node {$X_1$};
\draw [black] (24.5,-28.7) circle (3);
\draw (24.5,-28.7) node {$X_2$};
\draw [black] (50.5,-27.7) circle (3);
\draw (50.5,-27.7) node {$X_6$};
\draw [black] (61.9,-27.7) circle (3);
\draw (61.9,-27.7) node {$X_7$};
\draw [black] (53.24,-17.2) -- (39.96,-21.4);
\fill [black] (39.96,-21.4) -- (40.87,-21.63) -- (40.57,-20.68);
\draw [black] (21.9,-18.27) -- (34.2,-21.53);
\fill [black] (34.2,-21.53) -- (33.56,-20.84) -- (33.3,-21.81);
\draw [black] (39.47,-42.38) -- (44.43,-35.62);
\fill [black] (44.43,-35.62) -- (43.55,-35.97) -- (44.36,-36.56);
\draw [black] (37.7,-41.8) -- (37.7,-36.9);
\fill [black] (37.7,-36.9) -- (37.2,-37.7) -- (38.2,-37.7);
\draw [black] (35.91,-42.39) -- (31.39,-36.31);
\fill [black] (31.39,-36.31) -- (31.47,-37.25) -- (32.27,-36.65);
\draw [black] (35.47,-24.82) -- (31.23,-31.38);
\fill [black] (31.23,-31.38) -- (32.08,-30.98) -- (31.24,-30.44);
\draw [black] (37.25,-25.3) -- (37.55,-30.9);
\fill [black] (37.55,-30.9) -- (38,-30.08) -- (37,-30.13);
\draw [black] (39.02,-24.6) -- (44.28,-30.9);
\fill [black] (44.28,-30.9) -- (44.15,-29.96) -- (43.38,-30.6);
\draw [black] (17.7,-20.2) -- (14.9,-26);
\fill [black] (14.9,-26) -- (15.7,-25.49) -- (14.8,-25.06);
\draw [black] (20.32,-20.19) -- (23.18,-26.01);
\fill [black] (23.18,-26.01) -- (23.27,-25.07) -- (22.38,-25.51);
\draw [black] (54.78,-18.99) -- (51.82,-25.01);
\fill [black] (51.82,-25.01) -- (52.62,-24.51) -- (51.73,-24.07);
\draw [black] (57.46,-18.97) -- (60.54,-25.03);
\fill [black] (60.54,-25.03) -- (60.62,-24.09) -- (59.73,-24.54);
\end{tikzpicture}
\end{center}
    \caption{An alternative graph that is not $L^2HCM$ that is not distinguishable by the algorithm from fig \ref{fig:collidercycle}.}
    \label{fig:alternativeofcycleunderLA}
\end{figure}

\textbf{\textit{Remark.}}  Recall that we require all latent variables in $L^2HCM$ to be extroverted parents.  Here is an example demonstrating why we need this assumption.  If it is not the case that all latent variables are extroverted parents, the clusters found in line 4 of state 3 are not necessarily cyclic, where there can just be some latents of the cluster exogeneous to the latents from other clusters.  For instance, in fig \ref{fig:alternativeofcycleunderLA}, the cluster $\{X_3, X_4, X_5\}$ was initially identified as having 2 latents and passes the stage 2 of the whole algorithm identified as the causally latest cluster, but in line 3 of stage 5 we have $rank(\Sigma_{\{X_3, X_4, X_5\}, \{X_1, X_2, X_6, X_7\}})$ to be 1, but it is not cyclic.  Technically fig \ref{fig:alternativeofcycleunderLA} is a legit causal latent structure, but such case is incompatible with the one of the initial learning task: we want to learn the causal relation between the \textit{latent variable sets(blocks)}, defined by having the same cluster of measured children. The case of fig \ref{fig:alternativeofcycleunderLA} or any case where latent variables from the same causal cluster are not having the same causal order position relative to all the other latents does not fit in this quest.  Therefore, in this section we rule out such cases, but it is an interesting future research topic.
\subsubsection{Experiments}
We applied the proposed algorithm to synthetic data.  Specifically, we comparied the proposed algorithm (CGIN) with the original algorithm (GIN) about the performance in forming the correct clusters and recovering the causal order among latents of clusters. We further investigated the performance of CGIN identifying cycles under latent variables in clusters.  There are some other causal clustering algorithms that are not included in the experiment here:  we did not compare CGIN with regular rank-constraint-based causal clustering algorithms such as FOFC and FTFC, since FOFC and FTFC only identifies clusters with a fixed number of latents and not learning the causal relations between latents\cite{FOFC};  we did not compare MIMBuild, which is another algorithm that learns causal relations between latent variables shared by measured variables, because it learns a markov equivalence class of latent structures, which does not necessarily include a specific causal order\cite{JMLR:v7:silva06aMIMBuild}; we did not compare our algorithm with the tree-based method grouping algorithm CLRG, since it does not recover causal order of latent variables\cite{zhang2021robustifyingCLRG}.\\ \textcolor{white}{abc}
In the simulation study, we run three cases:fig \ref{fig:sim3Cluster}, fig \ref{fig:sim4clusters} and fig \ref{fig:sim4clusters} without the edges from measured variables to the latents.  We included the acyclic graph to focus on testing and comparing the algorithmic learning effect on the correctly clustering measured variables and recovering the causal order of latents from each cluster.  All data are generated by $L^2HCM$ such that each edge strength is generated uniformly from $[-5, -0.5]\cup[0.5, 5]$ and noise terms uniformly from $[-2, 2]$.  Each case is simulated with sample size 500, 1000, 2000.  Due to the consideration of the running time, we did not test them on larger sample sizes.  Notice that the relative width of the gap of the interval generating the edge strength in our study is significantly narrower than what was used in the simulation study of GIN in the original paper, which is $[-2, -0.5]\cup[0.5, 2]$.  We used the Kernel-based Conditional Independence test (KCI) for the non-Gaussian independence test.\\ \textcolor{white}{abc}
We applied two sets of metrics to evaluate the performance of the algorithm. The first set has four metrics evaluating the accuracy of clustering and latent causal order learning:
\begin{enumerate}
    \item $ClusterRecall = \dfrac{\sum_{i,j \in\mathbf{V}} I'_{ij}I_{ij}}{\sum_{i,j \in\mathbf{V}} I_{ij}}$
    \item $ClusterPrecision = \dfrac{\sum_{i,j \in\mathbf{V}} I'_{ij}I_{ij}}{\sum_{i,j \in\mathbf{V}} I'_{ij}}$
    \item $LatentOrderRecall = \dfrac{|\mathbf{CO'}\cap\mathbf{CO}|}{|\mathbf{CO}|}$
    \item $LatentOrderPrecision = \dfrac{|\mathbf{CO'}\cap\mathbf{CO}|}{|\mathbf{CO'}|}$
\end{enumerate}
The indicator function $I'_{ij} = 1$ if and only if $X_i$ and $X_j$ are in the same cluster in the output of the algorithm; $I_{ij} = 1$ if and only if $X_i$ and $X_j$ are in the same cluster in the true graph.  $\mathbf{CO'}$ is the set containing pairs of variables: $(X_i, X_j)$ (with order) is in $\mathbf{CO'}$ if and only if the latent parents of $X_i$ is in is causally earlier than latent parents of $X_j$ in the output of the algorithm;  $\mathbf{CO}$ is the set containing pairs of variables: $(X_i, X_j)$ is in $\mathbf{CO}$ if and only if the latent parents of $X_i$ is in is causally earlier than latent parents of $X_j$ in the true graph.  Therefore, $LatentOrderRecall$ measures the proportion of pairwise causal orders between latent parents of measured variables in the true causal graph correctly learnt by the algorithm; $LatentOrderPrecision$ measures the proportion of the pairwise causal orders between latent parents of measured variables in the output of the algorithm that are actually in the true causal graph. \\ \textcolor{white}{abc}
The result of the first set of metrics is Table 1 in \ref{table1: clusteringandorder}.  We can see that even without the existence of cyclic clusters, CGIN still generally equals or outperforms GIN except for $LatentOrderPrecision$ and $LatentOrderRecall$ at the low sample size and $clusteringRecall$ in the 3-cluster case.  The CGIN stably gives more accurate results with lower variances.  What is worth noting is that as the sample size increases, the performance of CGIN and GIN learning causal order is not improving.  It might suggest that the Kernel-based Conditional Independence test can be unreliable.\\ \textcolor{white}{abc}  
We further did significance tests for the four metrics .  Specifically, we ran 30 simulations with GIN and CGIN for each of the three graphs with the sample size of 500.  The result is in fig \ref{fig:p-val}.  The result shows that the performance difference between CGIN and GIN is significant at the 0.05 significance level, other than the $ClusteringPrecision$ and $LatentOrderRecall$ for the acyclic graph, which is compatible with the fact that the main function of CGIN is to handle the latent causal structure with cycles between the measured children and the latent parent, therefore when there are no such cycles the performance of GIN and CGIN should be similar.\\ \textcolor{white}{abc}
\begin{figure}
    \centering
\includegraphics[width=1\linewidth]{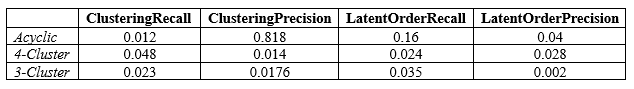}
    \caption{Result of significance test between the performance of GIN and CGIN.}
    \label{fig:p-val}
\end{figure}

The second set of metrics evaluate the accuracy of identifying cyclic clusters:
\begin{enumerate}
    \item $CyclicRecall = \dfrac{\sum_{i\in\mathbf{V}} I^{'c}_{i}I^c_{i}}{\sum_{i\in\mathbf{V}} I^c_{ij}}$
    \item $CyclicPrecision = \dfrac{\sum_{i\in\mathbf{V}} I^{'c}_{i}I^c_{i}}{\sum_{i\in\mathbf{V}} I^{'c}_{i}}$
\end{enumerate}
The indicator function $I^{'c}_{i} = 1$ if and only if there exists an edge from $X_i$ to its latent parents in the cluster, i.e., the cluster being cyclic, in the output of the algorithm; $I^c_{i} = 1$ if and only if  there exists an edge from $X_i$ to its latent parents in the cluster in the true graph.  Therefore, the $CyclicRecall$ measures the proportion of variables in cyclic clusters in the true causal graph correctly identified by the algorithm; the $CyclicPrecision$ measures the proportion of variables in cyclic clusters in the output of the algorithm that are actually in cyclic clusters in the true graph.\\ \textcolor{white}{abc}
The result of the second set of metrics is Table 2 in \ref{table2:cyclic}.  The high precision shows that CGIN has a low tendency of producing wrong edges or falsely identifying cluster as being cyclic.  The recall is comparatively lower with high variances, indicating that the algorithm tends to miss cycles.  In particular, it means that the false positive rate of GIN condition test is high, by falsely rejecting the null hypothesis that a linear combination of a (acyclic) cluster of variables and the rest of the variables are independent.

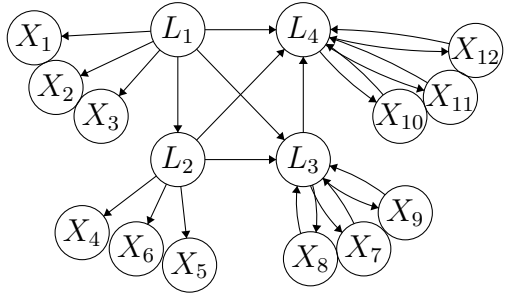
\begin{figure}
    
\begin{center}
\begin{tikzpicture}[scale=0.12]
\tikzstyle{every node}+=[inner sep=0pt]
\draw [black] (17.5,-14.9) circle (3);
\draw (17.5,-14.9) node {$X_1$};
\draw [black] (19.9,-20.4) circle (3);
\draw (19.9,-20.4) node {$X_2$};
\draw [black] (22.8,-36.5) circle (3);
\draw (22.8,-36.5) node {$X_4$};
\draw [black] (34.7,-40.1) circle (3);
\draw (34.7,-40.1) node {$X_5$};
\draw [black] (28.8,-38.3) circle (3);
\draw (28.8,-38.3) node {$X_6$};
\draw [black] (48.1,-39.4) circle (3);
\draw (48.1,-39.4) node {$X_8$};
\draw [black] (24.9,-23.6) circle (3);
\draw (24.9,-23.6) node {$X_3$};
\draw [black] (33.4,-14) circle (3);
\draw (33.4,-14) node {$L_1$};
\draw [black] (54,-38.3) circle (3);
\draw (54,-38.3) node {$X_7$};
\draw [black] (33.4,-28.4) circle (3);
\draw (33.4,-28.4) node {$L_2$};
\draw [black] (47.3,-14) circle (3);
\draw (47.3,-14) node {$L_4$};
\draw [black] (58.6,-34.3) circle (3);
\draw (58.6,-34.3) node {$X_9$};
\draw [black] (58,-23.6) circle (3);
\draw (58,-23.6) node {$X_{10}$};
\draw [black] (63.6,-21.5) circle (3);
\draw (63.6,-21.5) node {$X_{11}$};
\draw [black] (66.4,-16.3) circle (3);
\draw (66.4,-16.3) node {$X_{12}$};
\draw [black] (47.3,-28.4) circle (3);
\draw (47.3,-28.4) node {$L_3$};
\draw [black] (31.41,-16.25) -- (26.89,-21.35);
\fill [black] (26.89,-21.35) -- (27.79,-21.09) -- (27.04,-20.42);
\draw [black] (30.4,-14.17) -- (20.5,-14.73);
\fill [black] (20.5,-14.73) -- (21.32,-15.18) -- (21.27,-14.19);
\draw [black] (30.69,-15.29) -- (22.61,-19.11);
\fill [black] (22.61,-19.11) -- (23.55,-19.22) -- (23.12,-18.32);
\draw [black] (32.14,-31.12) -- (30.06,-35.58);
\fill [black] (30.06,-35.58) -- (30.85,-35.06) -- (29.95,-34.64);
\draw [black] (33.73,-31.38) -- (34.37,-37.12);
\fill [black] (34.37,-37.12) -- (34.78,-36.27) -- (33.78,-36.38);
\draw [black] (55.427,-22.06) arc (-123.76303:-140.03364:29.991);
\fill [black] (55.43,-22.06) -- (55.04,-21.2) -- (54.48,-22.03);
\draw [black] (31.02,-30.22) -- (25.18,-34.68);
\fill [black] (25.18,-34.68) -- (26.12,-34.59) -- (25.52,-33.8);
\draw [black] (60.701,-20.732) arc (-106.79958:-122.61689:43.725);
\fill [black] (60.7,-20.73) -- (60.08,-20.02) -- (59.79,-20.98);
\draw [black] (63.403,-16.418) arc (-89.42077:-104.31209:51.374);
\fill [black] (63.4,-16.42) -- (62.6,-15.93) -- (62.61,-16.93);
\draw [black] (49.852,-15.575) arc (55.65121:40.55212:32.244);
\fill [black] (49.85,-15.58) -- (50.23,-16.44) -- (50.79,-15.61);
\draw [black] (50.169,-14.874) arc (71.50417:59.07936:55.438);
\fill [black] (50.17,-14.87) -- (50.77,-15.6) -- (51.09,-14.65);
\draw [black] (50.3,-13.968) arc (89.2387:77.02844:62.479);
\fill [black] (50.3,-13.97) -- (51.09,-14.48) -- (51.11,-13.48);
\draw [black] (48.331,-31.212) arc (14.36508:-6.04579:14.874);
\fill [black] (48.71,-36.47) -- (49.3,-35.73) -- (48.3,-35.62);
\draw [black] (51.616,-36.488) arc (-133.23914:-158.58312:14.315);
\fill [black] (51.62,-36.49) -- (51.38,-35.58) -- (50.69,-36.3);
\draw [black] (55.653,-33.762) arc (-105.4688:-129.67148:16.754);
\fill [black] (55.65,-33.76) -- (55.01,-33.07) -- (54.75,-34.03);
\draw [black] (47.016,-36.609) arc (-164.92183:-186.75888:14.001);
\fill [black] (46.63,-31.32) -- (46.04,-32.05) -- (47.03,-32.17);
\draw [black] (49.492,-30.445) arc (42.74923:25.42851:20.291);
\fill [black] (49.49,-30.44) -- (49.67,-31.37) -- (50.4,-30.69);
\draw [black] (50.216,-29.092) arc (72.36837:52.49136:20.096);
\fill [black] (50.22,-29.09) -- (50.83,-29.81) -- (51.13,-28.86);
\draw [black] (33.4,-17) -- (33.4,-25.4);
\fill [black] (33.4,-25.4) -- (33.9,-24.6) -- (32.9,-24.6);
\draw [black] (35.48,-16.16) -- (45.22,-26.24);
\fill [black] (45.22,-26.24) -- (45.02,-25.32) -- (44.3,-26.01);
\draw [black] (36.4,-14) -- (44.3,-14);
\fill [black] (44.3,-14) -- (43.5,-13.5) -- (43.5,-14.5);
\draw [black] (35.48,-26.24) -- (45.22,-16.16);
\fill [black] (45.22,-16.16) -- (44.3,-16.39) -- (45.02,-17.08);
\draw [black] (36.4,-28.4) -- (44.3,-28.4);
\fill [black] (44.3,-28.4) -- (43.5,-27.9) -- (43.5,-28.9);
\draw [black] (47.3,-25.4) -- (47.3,-17);
\fill [black] (47.3,-17) -- (46.8,-17.8) -- (47.8,-17.8);
\end{tikzpicture}
\end{center}
    \caption{Simulation Graph with Four Clusters}
    \label{fig:sim4clusters}
\end{figure}
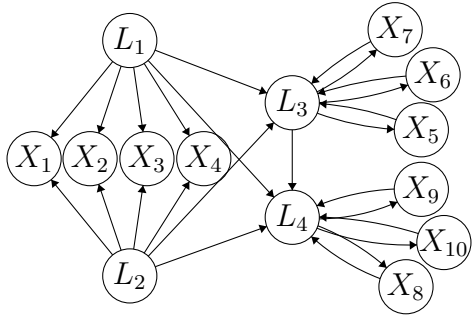
\begin{figure}
    
\begin{center}
\begin{tikzpicture}[scale=0.12]
\tikzstyle{every node}+=[inner sep=0pt]
\draw [black] (11,-27) circle (3);
\draw (11,-27) node {$X_1$};
\draw [black] (17.2,-27) circle (3);
\draw (17.2,-27) node {$X_2$};
\draw [black] (29.8,-27) circle (3);
\draw (29.8,-27) node {$X_4$};
\draw [black] (52.2,-41.2) circle (3);
\draw (52.2,-41.2) node {$X_8$};
\draw [black] (23.5,-27) circle (3);
\draw (23.5,-27) node {$X_3$};
\draw [black] (21.6,-13.9) circle (3);
\draw (21.6,-13.9) node {$L_1$};
\draw [black] (56.4,-36.3) circle (3);
\draw (56.4,-36.3) node {$X_{10}$};
\draw [black] (21.6,-40) circle (3);
\draw (21.6,-40) node {$L_2$};
\draw [black] (39.6,-20.8) circle (3);
\draw (39.6,-20.8) node {$L_3$};
\draw [black] (53.9,-30.4) circle (3);
\draw (53.9,-30.4) node {$X_9$};
\draw [black] (53.9,-23.8) circle (3);
\draw (53.9,-23.8) node {$X_5$};
\draw [black] (55.2,-17.7) circle (3);
\draw (55.2,-17.7) node {$X_6$};
\draw [black] (51,-12.7) circle (3);
\draw (51,-12.7) node {$X_7$};
\draw [black] (39.6,-33.5) circle (3);
\draw (39.6,-33.5) node {$L_4$};
\draw [black] (22.03,-16.87) -- (23.07,-24.03);
\fill [black] (23.07,-24.03) -- (23.45,-23.17) -- (22.46,-23.31);
\draw [black] (19.71,-16.23) -- (12.89,-24.67);
\fill [black] (12.89,-24.67) -- (13.78,-24.36) -- (13,-23.73);
\draw [black] (20.64,-16.74) -- (18.16,-24.16);
\fill [black] (18.16,-24.16) -- (18.88,-23.56) -- (17.94,-23.24);
\draw [black] (50.902,-23.749) arc (-93.75364:-109.94289:30.969);
\fill [black] (50.9,-23.75) -- (50.14,-23.2) -- (50.07,-24.2);
\draw [black] (23.2,-37.46) -- (28.2,-29.54);
\fill [black] (28.2,-29.54) -- (27.35,-29.95) -- (28.2,-30.48);
\draw [black] (52.421,-18.828) arc (-70.40544:-87.11599:34.458);
\fill [black] (52.42,-18.83) -- (51.5,-18.63) -- (51.84,-19.57);
\draw [black] (48.957,-14.895) arc (-46.08158:-63.12883:27.364);
\fill [black] (48.96,-14.9) -- (48.03,-15.09) -- (48.73,-15.81);
\draw [black] (42.598,-20.892) arc (85.6637:70.63977:33.297);
\fill [black] (42.6,-20.89) -- (43.36,-21.45) -- (43.43,-20.45);
\draw [black] (42.423,-19.786) arc (107.79881:94.67976:43.649);
\fill [black] (42.42,-19.79) -- (43.34,-20.02) -- (43.03,-19.06);
\draw [black] (41.722,-18.681) arc (132.37629:118.4133:33.197);
\fill [black] (41.72,-18.68) -- (42.65,-18.51) -- (41.98,-17.77);
\draw [black] (42.446,-34.444) arc (68.36646:48.77441:26.229);
\fill [black] (50.06,-39.1) -- (49.79,-38.2) -- (49.13,-38.95);
\draw [black] (53.413,-36.56) arc (-88.05784:-110.86681:28.381);
\fill [black] (53.41,-36.56) -- (52.6,-36.09) -- (52.63,-37.09);
\draw [black] (51.249,-31.8) arc (-66.08537:-89.45163:21.872);
\fill [black] (51.25,-31.8) -- (50.32,-31.67) -- (50.72,-32.58);
\draw [black] (49.34,-40.299) arc (-110.973:-131.88613:24.648);
\fill [black] (41.71,-35.63) -- (41.97,-36.54) -- (42.64,-35.8);
\draw [black] (42.599,-33.474) arc (88.38677:72.68859:40.726);
\fill [black] (42.6,-33.47) -- (43.38,-34) -- (43.41,-33);
\draw [black] (42.315,-32.228) arc (111.84372:92.61928:26.309);
\fill [black] (42.32,-32.23) -- (43.24,-32.39) -- (42.87,-31.47);
\draw [black] (23.63,-16.11) -- (37.57,-31.29);
\fill [black] (37.57,-31.29) -- (37.4,-30.36) -- (36.66,-31.04);
\draw [black] (24.4,-14.97) -- (36.8,-19.73);
\fill [black] (36.8,-19.73) -- (36.23,-18.97) -- (35.87,-19.91);
\draw [black] (23.65,-37.81) -- (37.55,-22.99);
\fill [black] (37.55,-22.99) -- (36.64,-23.23) -- (37.37,-23.91);
\draw [black] (24.42,-38.98) -- (36.78,-34.52);
\fill [black] (36.78,-34.52) -- (35.86,-34.32) -- (36.2,-35.26);
\draw [black] (19.7,-37.67) -- (12.9,-29.33);
\fill [black] (12.9,-29.33) -- (13.01,-30.26) -- (13.79,-29.63);
\draw [black] (20.64,-37.16) -- (18.16,-29.84);
\fill [black] (18.16,-29.84) -- (17.94,-30.76) -- (18.89,-30.44);
\draw [black] (22.03,-37.03) -- (23.07,-29.97);
\fill [black] (23.07,-29.97) -- (22.46,-30.69) -- (23.45,-30.83);
\draw [black] (23.19,-16.44) -- (28.21,-24.46);
\fill [black] (28.21,-24.46) -- (28.21,-23.51) -- (27.36,-24.04);
\draw [black] (39.6,-23.8) -- (39.6,-30.5);
\fill [black] (39.6,-30.5) -- (40.1,-29.7) -- (39.1,-29.7);
\end{tikzpicture}
\end{center}
    \caption{Simulation Graph with Three Clusters}
    \label{fig:sim3Cluster}
\end{figure}

\begin{figure}
    \centering
    \includegraphics[width=1\linewidth]{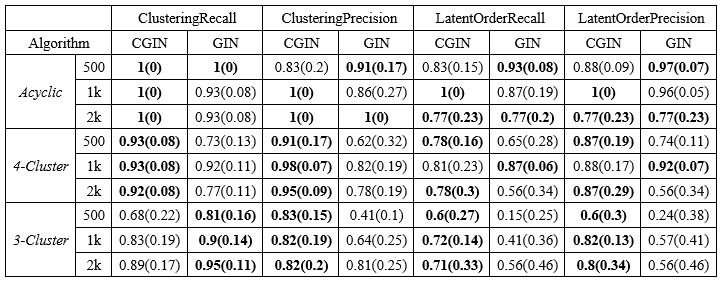}
    \caption{Table 1:  Performance of CGIN and GIN on simulation data.  The number in the parentheses is the standard deviation of the index. The better performance for each comparison is boldface.}
    \label{table1: clusteringandorder}
\end{figure}

\begin{figure}
    \centering
    \includegraphics{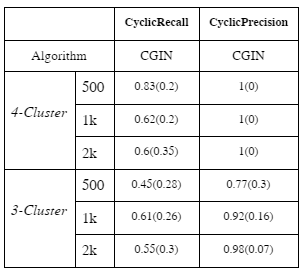}
    \caption{Table 2: Performance of identifying cycles by CGIN on simulation data.  The number in the parentheses is the standard deviation of the index.}
    \label{table2:cyclic}
\end{figure}
\newpage
\subsection{GIN Condition-Based Algorithm for Estimating $LiNGLaM$}

We now present an algorithm of learning a $LiNGLaM$ latent structure with non-Gaussian distribution. Similar to the\textbf{ GIN condition-based algorithm that estimates} $LiNGLaM$, the first stage is to identify individual causal clusters and the second stage is to learn the causal order of the latent variable sets of clusters\cite{xie2020generalized}.  The only difference is that our algorithm has an additional step in the stage 2 to identify cycles between blocks of latent variables. Throughout this chapter we have been using `block' and `set' interchangeably. To emphasize the special meaning of \textit{the group of latent variable shared by meansured variables in the same cluster}, for the rest of this section we will use `block' for this particular meaning.  The correctness of stage 2 is proved in \textbf{Theorem} \ref{thm:GINcollider2}.\\ \textcolor{white}{abc}
The stage 1 of the algorithm takes the dataset and run the step 1 of the original \textbf{GIN condition-based algorithm for estimating $LiNGLaM$} to identify causal clusters\cite{xie2020generalized}.  Here we introduce stage 2, where the algorithm learns the causal order among clusters and recursively find cyclic structures between latent blocks.

\begin{algorithm}
    \caption{CyclicLatentBlockStructureDiscovery}\label{alg:fullalgolatentcycbtblcks}
    \KwIn{Data from a set of measured variables $X_\mathcal{G}$}
\KwOut{Partial causal order  $\mathcal{K}$}
$\mathcal{L}\gets IdentifyingCausalClusters(X_\mathcal{G})$;\\ 
$\mathcal{K}\gets LearningCausalOrderForCyclicLatentBlocks(\mathcal{L})$;

\end{algorithm}
\subsubsection{Causal Order Learning Part 1: Finding Root Nodes}

\textit{Root nodes} of a set of latent variables are the node that are causally earlier than any of the other nodes in the set.  Given a set of identified causal clusters and the corresponding latent blocks, the original GIN condition-based algorithm uses \textbf{Proposition} \ref{prop:GIN4} to recursively find  the (block of) root nodes among the latent blocks from the causal clusters, one block each time.  If there are cycles between the latent blocks, then the original algorithm is not able to identify a single block of root nodes.  In this case the original algorithm will connect the latent blocks with undirected edges.\\ \textcolor{white}{abc}  
In order to provide more information about the structure between the latent blocks, we let the algorithm, instead of just looking for one latent block as root nodes, look for multiple latent blocks as root nodes.  If such multi-block root nodes are found, it is possible that there are cycles between the blocks and we call the $findCyclesBetweenBlocks$ to dissect the multi-block root notes.  $findCyclesBetweenBlocks$ returns two outputs, the second being a collection of sets of latent blocks, each set indicating the existence of cyclic structure between the latent blocks in the set.  The causal order will union with \textit{the sets in the collection}, marking latent blocks within the \textit{same set} as having \textit{block-cycles} with each other.   The complete algorithm for finding root nodes is algorithm \ref{alg:Lblcoksrootfind}.

\begin{algorithm}
\caption{LearningCausalOrderBetweenLatents}\label{alg:Lblcoksrootfind}
\KwIn{A set of clusters $\{\mathcal{S}_1...\mathcal{S}_k\}$ }
\KwOut{Causal Order $\mathcal{K}$}
$\mathcal{L}\gets $the set of latent variables for each cluster;\\ 
{$\#$\textit{Initializing $\mathbf{T}$ as the collection of clusters that are causally earlier than $\mathcal{L}$}}\\ 
{$\#$\textit{Initializing $\mathcal{K}$ as the causal order among latents}}\\ 
{$\#$\textit{Initializing $\mathcal{C}$ as the collection of cyclic blocks of latents}}\\ 
 $\mathbf{T}, \mathcal{K},\mathcal{C}\gets\emptyset, \emptyset,\emptyset$;
 
 \While{$\mathcal{L}\neq\emptyset$}
 { $k\gets 0$
 
 \While{root node not found and $k<|\mathcal{L}|$}{
$k\gets k+1$
 
Find the root node $L(\mathcal{S}_{r_1})...L(\mathcal{S}_{r_k})$ according to \textbf{Proposition } \ref{prop:GIN4} with $k$
 
}

\If{$k>1$}{
$\mathcal{C}\gets \mathcal{C}\cup findCyclesBetweenBlocks(\{\mathcal{S}_r)...\mathcal{S}_{r+k}\},\mathbf{T})[2]$
}

$\mathcal{L}\gets\mathcal{L}\setminus \cup^k_{i=1}L(\mathcal{S}_{r_i})$

$\mathbf{T}\gets\mathbf{T}\cup\{\mathcal{S}_{r_1}...\mathcal{S}_{r_k}\}$

Include $L(\mathcal{S}_{r_1})...L(\mathcal{S}_{r_k})$ in $\mathcal{K}$ as the same causal order

$k\gets 0$

 }

 \For{$\{\mathbf{S}_i..\mathbf{S}_j\}\in\mathcal{C}$}{
Mark $\{\mathbf{S}_i..\mathbf{S}_j\}$ as $block-cycles$ in $\mathcal{K}$
 
 }

\textbf{Return } $\mathcal{K}$ 
\end{algorithm}

\subsubsection{Causal Order Learning Part 2: Find Cycles between Latent Blocks}
As shown in algorithm \ref{alg:blockcycdetect}, $FindCyclesBetweenBlocks$ takes a set $\mathbf{C}$ of latent blocks, representing the latents that were just identified as root nodes all together, and another set $\mathbf{B}$ of latent blocks that are causally earlier than the first set, \textit{i.e.,} the latents that were identified as root nodes earlier than latent blocks in the first set.  It will then bisect $\mathbf{C}$ check the cycles between the two groups of latent blocks based on \textbf{Theorem} \ref{thm:GINcollider2}.  If the two groups of latent blocks are found to only be connected through their common causes in $\mathbf{B}$,  then each of the group will be treated as a new set $\mathbf{C}$ and be passed into $FindCyclesBetweenBlocks$ to be further dissected until either cycles between latent blocks are identified or the group is a singleton of latent block and cannot be dissected anymore.\\ \textcolor{white}{abc}  
Notice that \textbf{Theorem} \ref{thm:GINcollider2} only proves the existence of $block-cycles$, so we still do not know any particular directed edge between any latent variables.  For two groups of latent blocks that are connected with $block-cycles$, it is possible that the $block-cycle$ only exists between some blocks in each group.  Therefore, a pruning procedure is necessary. After learning all the groups latent blocks that are connected with $block-cycles$ and all the groups that are not, the algorithm will prune the groups of latent blocks that are connected with $block-cycles$ by removing from them the latent blocks between which the $block-cycles$ do not exist.  \\ \textcolor{white}{abc}
\begin{algorithm}
\caption{FindCyclesBetweenBlocks}\label{alg:blockcycdetect}
\KwIn{Two sets of clusters $\mathbf{C}=\{\mathcal{C}_1...\mathcal{C}_k\}$ and $\mathbf{B}=\{\mathcal{B}_1...\mathcal{B}_h\}$ s.t. clusters in $\mathbf{B}$ are all causally earlier than all clusters in $\mathbf{C}$}
\KwOut{two sets of pairs of sets of clusters}
{$\#$\textit{$\mathcal{L}_1=\{(\mathcal{C}_a, \mathcal{C}_b)|L(\mathcal{C}_a) \text{ and } L(\mathcal{C}_b) \text{ are connected only through their common causes} \}$}}\\ 
{$\#$\textit{$\mathcal{L}_2=\{(\mathcal{C}_a, \mathcal{C}_b)|\text{there are cycles between the blocks }L(\mathcal{C}_a) \text{ and } L(\mathcal{C}_b) \}$}}\\ 
 $\mathcal{L}_1, \mathcal{L}_2\gets\emptyset, \emptyset$
 
  {$\#$\textit{Identifying Pairwise Cycles}}\\ 
 \For{$\mathbf{C_1, C_2}\subsetneq\mathbf{C}$ s.t. $\mathbf{C_1\cup C_2=C}$ and $\mathbf{C_1\cap C_2=\emptyset}$}{
\For{ $(a,b)\in\{(1,2),(2,1)\}$}{\If{$rank(\Sigma_\mathbf{X^\mathbf{C_a}, X^\mathbf{B}})<min(\mathbf{|L_{C}|, |L_{B}|})$}{\eIf{$E_{X^{\mathbf{C_a}}\parallel X^{B}}\indep X^{\mathbf{C_b} }$}{$\mathcal{L}_1\gets\mathcal{L}_1\cup\{\mathbf{\{C_a, C_b\}\}}$

$\mathcal{L}_1, \mathcal{L}_2 \gets \mathcal{L}_1, \mathcal{L}_2 \cup findCyclesBetweenBlocks(\mathbf{C_a, B})$

$\mathcal{L}_1, \mathcal{L}_2 \gets \mathcal{L}_1, \mathcal{L}_2 \cup findCyclesBetweenBlocks(\mathbf{C_b, B})$
}{$\mathcal{L}_2\gets\mathcal{L}_2\cup\{\mathbf{\{C_a, C_b\}\}}$}}
 
 }
 }
{$\#$\textit{Pruning Cycles}}\\ \textcolor{white}{abc}
\For{$\mathbf{\{C^2_a, C^2_b\}\in\mathcal{L}_2}$}
{\For{$\mathbf{\{C^1_a, C^1_b\}\in\mathcal{L}_1}$}
{$\mathbf{C^2_a\gets C^2_a\setminus C^1_a}, \mathbf{C^2_b\gets C^2_b\setminus C^1_b}$

$\mathbf{C^2_a\gets C^2_a\setminus C^1_b}, \mathbf{C^2_b\gets C^2_b\setminus C^1_a}$
}
}

\textbf{Return } $\mathcal{L}_1, \mathcal{L}_2$ 
\end{algorithm}

\textbf{\textit{Example.}}  Consider fig \ref{fig:cycblockexp} of which the latents are connected as shown in fig \ref{fig:cyclatentblock}.\\$FindCyclesBetweenBlocks$ will be passed the input $\mathbf{C}=\{\{L_3, L_4\}, \{L_5, L_6\}, \{L_7, L_8\}, \{L_9, L_{10}\}\}$ and $\mathbf{B} = \{\{L_1\}\}$.  The main steps of the algorithm work as follows:
\begin{enumerate}
    \item the algorithm will identify there are no $block-cycles$ between $\mathbf{C_a} = \{\{L_3, L_4\}, \{L_5, L_6\},\}, \mathbf{C_b}=\{\{L_7, L_8\}, \{L_9, L_{10}\}\}$; it will remember it in $\mathcal{L}_1$ and call $FindCyclesBetweenBlocks(\mathbf{C_a, B})$ and $FindCyclesBetweenBlocks(\mathbf{C_b, B})$. 
    \item the algorithm will identify that there are $block-cycles$ between $\mathbf{C_a} = \{\{L_3, L_4\}, \{L_7, L_8\} \}, \mathbf{C_b}=\{\{L_5, L_6\},\{L_9, L_{10}\}\}$; it will remember it in $\mathcal{L}_2$. 
    \item For $\mathbf{C_a} = \{\{L_3, L_4\}, \{L_5, L_6\}\}$, $FindCyclesBetweenBlocks(\mathbf{C_a, B})$ will find that there are $block-cycles$ between $\{\{L_3, L_4\} \}$ and $\{\{L_5, L_6\}\}$; it will remember it in $\mathcal{L}_2$. 
    \item Similarly, for $\mathbf{C_b} = \{\{L_7, L_8\}, \{L_9, L_{10}\}\}$, $FindCyclesBetweenBlocks(\mathbf{C_b, B})$ will find that there are $block-cycles$ between $\{\{L_7, L_8\} \}$ and $\{\{L_9, L_{10}\}\}$; it will remember it in $\mathcal{L}_2$. 
    \item Since $\mathcal{L}_1$ for  $FindCyclesBetweenBlocks(\mathbf{C_a, B})$ and $FindCyclesBetweenBlocks(\mathbf{C_b, B})$ are both empty, this two calls do not go to the pruning stage and will return $(\emptyset, \{\{\{L_3, L_4\}, \{L_5, L_6\}\}\})$ and $(\emptyset, \{\{\{L_7, L_8\}, \{L_9, L_{10}\}\}\})$ to the last function call, $FindCyclesBetweenBlocks(\mathbf{C, B})$.  The $\mathcal{L}_1$ and $\mathcal{L}_2$ unions with the corresponding returned value.
    \item For $FindCyclesBetweenBlocks(\mathbf{C, B})$, $\mathcal{L}_1 = \{ \{\{\{L_3, L_4\}, \{L_5, L_6\}\}, \{\{L_7, L_8\}, \{L_9, L_{10}\}\}\}\}$ and $\mathcal{L}_2 = \{ \{ \{\{L_3, L_4\}, \{L_7, L_8\} \}, \{\{L_5, L_6\},\{L_9, L_{10}\}\}\}, \{\{L_3, L_4\}, \{L_5, L_6\}\}, \{\{L_7, L_8\}, \{L_9, L_{10}\}\} \}$.  The first element of $\mathcal{L}_2$ is $\mathcal{L}_2[1] = \{ \{\{L_3, L_4\}, \{L_7, L_8\} \}, \{\{L_5, L_6\},\{L_9, L_{10}\}\}\}$. It can be pruned by $\mathcal{L}_1$ as:
    \begin{enumerate}
        \item $\mathcal{L}_2[1][1]\gets \mathcal{L}_2[1][1]\setminus\mathcal{L}_1[1][1] = \{\{L_3, L_4\}, \{L_7, L_8\} \}\setminus \{\{L_3, L_4\}, \{L_5, L_6\}\}= \{L_7, L_8\}$\\ \textcolor{white}{abc}
         $\mathcal{L}_2[1][2]\gets \mathcal{L}_2[1][2]\setminus\mathcal{L}_1[1][2] = \{\{L_5, L_6\}, \{L_9, L_{10}\} \}\setminus \{\{L_7, L_8\}, \{L_9, L_{10}\}\}= \{L_5, L_6\}$
        \item $\mathcal{L}_2[1][1]\gets \mathcal{L}_2[1][1]\setminus\mathcal{L}_1[1][2] = \{\{L_7, L_8\} \}\setminus \{\{L_7, L_8\}, \{L_9, L_{10}\}\}= \emptyset$\\ \textcolor{white}{abc}
     $\mathcal{L}_2[1][2]\gets \mathcal{L}_2[1][2]\setminus\mathcal{L}_1[1][1] = \{\{L_5, L_6\} \}\setminus \{\{L_3, L_4\}, \{L_5, L_6\}\}= \emptyset$
    \end{enumerate}
    The other elements in $\mathcal{L}_2$ remains the same after the pruning stage.
    \item the algorithm finally returns $(\mathcal{L}_1\text{(will not be used for learning causal order)}, \mathcal{L}_2)$ where $\mathcal{L}_2 = \{ \{\{L_3, L_4\}, \{L_5, L_6\}\}, \{\{L_7, L_8\}, \{L_9, L_{10}\}\} \}$ contains two groups of latent blocks each of which has $block-cycles$ in between.
    
\end{enumerate}

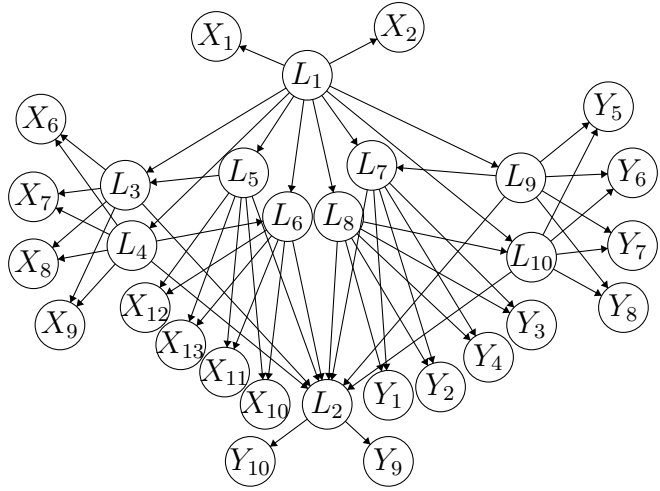
\begin{figure}
\begin{center}
\begin{tikzpicture}[scale=0.11]
\tikzstyle{every node}+=[inner sep=0pt]
\draw [black] (26.2,-4.5) circle (3);
\draw (26.2,-4.5) node {$X_1$};
\draw [black] (37.2,-9.2) circle (3);
\draw (37.2,-9.2) node {$L_1$};
\draw [black] (39.6,-49) circle (3);
\draw (39.6,-49) node {$L_2$};
\draw [black] (15.2,-22.5) circle (3);
\draw (15.2,-22.5) node {$L_3$};
\draw [black] (16,-29.8) circle (3);
\draw (16,-29.8) node {$L_4$};
\draw [black] (29.5,-21) circle (3);
\draw (29.5,-21) node {$L_5$};
\draw [black] (34.8,-26.3) circle (3);
\draw (34.8,-26.3) node {$L_6$};
\draw [black] (45,-20.1) circle (3);
\draw (45,-20.1) node {$L_7$};
\draw [black] (41,-26.3) circle (3);
\draw (41,-26.3) node {$L_8$};
\draw [black] (63,-21.6) circle (3);
\draw (63,-21.6) node {$L_9$};
\draw [black] (64.3,-31.2) circle (3);
\draw (64.3,-31.2) node {$L_{10}$};
\draw [black] (5,-14.5) circle (3);
\draw (5,-14.5) node {$X_6$};
\draw [black] (48.3,-3.4) circle (3);
\draw (48.3,-3.4) node {$X_2$};
\draw [black] (4.1,-23.9) circle (3);
\draw (4.1,-23.9) node {$X_7$};
\draw [black] (4.1,-32) circle (3);
\draw (4.1,-32) node {$X_8$};
\draw [black] (7.3,-39.4) circle (3);
\draw (7.3,-39.4) node {$X_9$};
\draw [black] (32.1,-49) circle (3);
\draw (32.1,-49) node {$X_{10}$};
\draw [black] (27.2,-45.1) circle (3);
\draw (27.2,-45.1) node {$X_{11}$};
\draw [black] (17.6,-37.3) circle (3);
\draw (17.6,-37.3) node {$X_{12}$};
\draw [black] (21.9,-41.8) circle (3);
\draw (21.9,-41.8) node {$X_{13}$};
\draw [black] (30.3,-55.9) circle (3);
\draw (30.3,-55.9) node {$Y_{10}$};
\draw [black] (47,-55.9) circle (3);
\draw (47,-55.9) node {$Y_9$};
\draw [black] (47,-47.9) circle (3);
\draw (47,-47.9) node {$Y_1$};
\draw [black] (53.3,-46.9) circle (3);
\draw (53.3,-46.9) node {$Y_2$};
\draw [black] (64.3,-39.4) circle (3);
\draw (64.3,-39.4) node {$Y_3$};
\draw [black] (59.1,-43.3) circle (3);
\draw (59.1,-43.3) node {$Y_4$};
\draw [black] (73.6,-13) circle (3);
\draw (73.6,-13) node {$Y_5$};
\draw [black] (76.5,-21) circle (3);
\draw (76.5,-21) node {$Y_6$};
\draw [black] (76.5,-29.8) circle (3);
\draw (76.5,-29.8) node {$Y_7$};
\draw [black] (75.4,-37.3) circle (3);
\draw (75.4,-37.3) node {$Y_8$};
\draw [black] (26.52,-21.31) -- (18.18,-22.19);
\fill [black] (18.18,-22.19) -- (19.03,-22.6) -- (18.93,-21.61);
\draw [black] (18.95,-29.25) -- (31.85,-26.85);
\fill [black] (31.85,-26.85) -- (30.97,-26.5) -- (31.16,-27.49);
\draw [black] (34.63,-10.75) -- (17.77,-20.95);
\fill [black] (17.77,-20.95) -- (18.71,-20.96) -- (18.19,-20.11);
\draw [black] (36.78,-12.17) -- (35.22,-23.33);
\fill [black] (35.22,-23.33) -- (35.82,-22.61) -- (34.83,-22.47);
\draw [black] (38.95,-11.64) -- (43.25,-17.66);
\fill [black] (43.25,-17.66) -- (43.2,-16.72) -- (42.38,-17.3);
\draw [black] (39.53,-11.09) -- (61.97,-29.31);
\fill [black] (61.97,-29.31) -- (61.66,-28.42) -- (61.03,-29.19);
\draw [black] (60.01,-21.35) -- (47.99,-20.35);
\fill [black] (47.99,-20.35) -- (48.75,-20.91) -- (48.83,-19.92);
\draw [black] (43.94,-26.92) -- (61.36,-30.58);
\fill [black] (61.36,-30.58) -- (60.68,-29.93) -- (60.48,-30.91);
\draw [black] (34.44,-8.02) -- (28.96,-5.68);
\fill [black] (28.96,-5.68) -- (29.5,-6.45) -- (29.89,-5.53);
\draw [black] (35.42,-29.24) -- (38.98,-46.06);
\fill [black] (38.98,-46.06) -- (39.3,-45.18) -- (38.32,-45.39);
\draw [black] (40.82,-29.29) -- (39.78,-46.01);
\fill [black] (39.78,-46.01) -- (40.33,-45.24) -- (39.33,-45.18);
\draw [black] (18.33,-31.69) -- (37.27,-47.11);
\fill [black] (37.27,-47.11) -- (36.97,-46.21) -- (36.34,-46.99);
\draw [black] (30.52,-23.82) -- (38.58,-46.18);
\fill [black] (38.58,-46.18) -- (38.78,-45.26) -- (37.84,-45.6);
\draw [black] (17.23,-24.71) -- (37.57,-46.79);
\fill [black] (37.57,-46.79) -- (37.39,-45.87) -- (36.66,-46.54);
\draw [black] (44.45,-23.05) -- (40.15,-46.05);
\fill [black] (40.15,-46.05) -- (40.79,-45.36) -- (39.81,-45.17);
\draw [black] (61.87,-32.95) -- (42.03,-47.25);
\fill [black] (42.03,-47.25) -- (42.98,-47.18) -- (42.39,-46.37);
\draw [black] (61.05,-23.88) -- (41.55,-46.72);
\fill [black] (41.55,-46.72) -- (42.45,-46.44) -- (41.69,-45.79);
\draw [black] (39.9,-10.5) -- (60.3,-20.3);
\fill [black] (60.3,-20.3) -- (59.79,-19.5) -- (59.36,-20.4);
\draw [black] (37.85,-12.13) -- (40.35,-23.37);
\fill [black] (40.35,-23.37) -- (40.66,-22.48) -- (39.69,-22.7);
\draw [black] (35.56,-11.71) -- (31.14,-18.49);
\fill [black] (31.14,-18.49) -- (32,-18.09) -- (31.16,-17.54);
\draw [black] (35.05,-11.29) -- (18.15,-27.71);
\fill [black] (18.15,-27.71) -- (19.07,-27.51) -- (18.38,-26.79);
\draw [black] (12.84,-20.65) -- (7.36,-16.35);
\fill [black] (7.36,-16.35) -- (7.68,-17.24) -- (8.3,-16.45);
\draw [black] (12.22,-22.88) -- (7.08,-23.52);
\fill [black] (7.08,-23.52) -- (7.93,-23.92) -- (7.81,-22.93);
\draw [black] (12.92,-24.45) -- (6.38,-30.05);
\fill [black] (6.38,-30.05) -- (7.31,-29.91) -- (6.66,-29.15);
\draw [black] (13.99,-32.02) -- (9.31,-37.18);
\fill [black] (9.31,-37.18) -- (10.22,-36.92) -- (9.48,-36.25);
\draw [black] (13.05,-30.35) -- (7.05,-31.45);
\fill [black] (7.05,-31.45) -- (7.93,-31.8) -- (7.75,-30.82);
\draw [black] (13.93,-25.22) -- (8.57,-36.68);
\fill [black] (8.57,-36.68) -- (9.36,-36.17) -- (8.46,-35.75);
\draw [black] (13.31,-28.47) -- (6.79,-25.23);
\fill [black] (6.79,-25.23) -- (7.28,-26.04) -- (7.73,-25.14);
\draw [black] (14.25,-27.36) -- (6.75,-16.94);
\fill [black] (6.75,-16.94) -- (6.81,-17.88) -- (7.62,-17.29);
\draw [black] (39.86,-7.81) -- (45.64,-4.79);
\fill [black] (45.64,-4.79) -- (44.7,-4.72) -- (45.16,-5.6);
\draw [black] (29.78,-23.99) -- (31.82,-46.01);
\fill [black] (31.82,-46.01) -- (32.25,-45.17) -- (31.25,-45.26);
\draw [black] (29.21,-23.99) -- (27.49,-42.11);
\fill [black] (27.49,-42.11) -- (28.06,-41.36) -- (27.06,-41.27);
\draw [black] (27.73,-23.42) -- (19.37,-34.88);
\fill [black] (19.37,-34.88) -- (20.24,-34.53) -- (19.44,-33.94);
\draw [black] (28.47,-23.82) -- (22.93,-38.98);
\fill [black] (22.93,-38.98) -- (23.67,-38.4) -- (22.73,-38.06);
\draw [black] (34.45,-29.28) -- (32.45,-46.02);
\fill [black] (32.45,-46.02) -- (33.05,-45.29) -- (32.05,-45.17);
\draw [black] (33.68,-29.08) -- (28.32,-42.32);
\fill [black] (28.32,-42.32) -- (29.09,-41.76) -- (28.16,-41.39);
\draw [black] (32.27,-27.92) -- (20.13,-35.68);
\fill [black] (20.13,-35.68) -- (21.07,-35.67) -- (20.53,-34.83);
\draw [black] (32.88,-28.61) -- (23.82,-39.49);
\fill [black] (23.82,-39.49) -- (24.72,-39.2) -- (23.95,-38.56);
\draw [black] (37.19,-50.79) -- (32.71,-54.11);
\fill [black] (32.71,-54.11) -- (33.65,-54.04) -- (33.05,-53.23);
\draw [black] (41.79,-51.05) -- (44.81,-53.85);
\fill [black] (44.81,-53.85) -- (44.56,-52.94) -- (43.88,-53.67);
\draw [black] (45.22,-23.09) -- (46.78,-44.91);
\fill [black] (46.78,-44.91) -- (47.23,-44.07) -- (46.23,-44.15);
\draw [black] (41.8,-29.19) -- (46.2,-45.01);
\fill [black] (46.2,-45.01) -- (46.46,-44.1) -- (45.5,-44.37);
\draw [black] (42.54,-28.88) -- (51.76,-44.32);
\fill [black] (51.76,-44.32) -- (51.78,-43.38) -- (50.92,-43.89);
\draw [black] (46.56,-22.66) -- (57.54,-40.74);
\fill [black] (57.54,-40.74) -- (57.55,-39.79) -- (56.7,-40.31);
\draw [black] (43.62,-27.77) -- (61.68,-37.93);
\fill [black] (61.68,-37.93) -- (61.23,-37.1) -- (60.74,-37.97);
\draw [black] (45.89,-22.97) -- (52.41,-44.03);
\fill [black] (52.41,-44.03) -- (52.65,-43.12) -- (51.7,-43.42);
\draw [black] (47.12,-22.22) -- (62.18,-37.28);
\fill [black] (62.18,-37.28) -- (61.97,-36.36) -- (61.26,-37.07);
\draw [black] (43.19,-28.35) -- (56.91,-41.25);
\fill [black] (56.91,-41.25) -- (56.67,-40.33) -- (55.99,-41.06);
\draw [black] (65.33,-19.71) -- (71.27,-14.89);
\fill [black] (71.27,-14.89) -- (70.33,-15.01) -- (70.96,-15.78);
\draw [black] (66,-21.47) -- (73.5,-21.13);
\fill [black] (73.5,-21.13) -- (72.68,-20.67) -- (72.73,-21.67);
\draw [black] (65.56,-23.16) -- (73.94,-28.24);
\fill [black] (73.94,-28.24) -- (73.51,-27.4) -- (72.99,-28.25);
\draw [black] (64.86,-23.95) -- (73.54,-34.95);
\fill [black] (73.54,-34.95) -- (73.44,-34.01) -- (72.65,-34.63);
\draw [black] (65.67,-28.53) -- (72.23,-15.67);
\fill [black] (72.23,-15.67) -- (71.43,-16.16) -- (72.32,-16.61);
\draw [black] (66.6,-29.28) -- (74.2,-22.92);
\fill [black] (74.2,-22.92) -- (73.26,-23.05) -- (73.91,-23.82);
\draw [black] (67.28,-30.86) -- (73.52,-30.14);
\fill [black] (73.52,-30.14) -- (72.67,-29.74) -- (72.78,-30.73);
\draw [black] (66.93,-32.64) -- (72.77,-35.86);
\fill [black] (72.77,-35.86) -- (72.31,-35.03) -- (71.83,-35.91);
\end{tikzpicture}
\end{center}
    \caption{$LiNGLaM$ with two cyclic latent blocks}
    \label{fig:cycblockexp}
\end{figure}

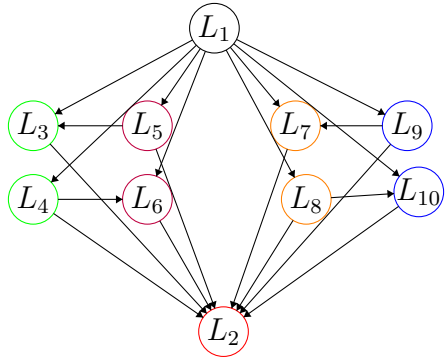
\begin{figure}
\begin{center}
\begin{tikzpicture}[scale=0.11]
\tikzstyle{every node}+=[inner sep=0pt]
\draw [black] (37.7,-10.7) circle (3);
\draw (37.7,-10.7) node {$L_1$};
\draw [red] (38.8,-47.4) circle (3);
\draw (38.8,-47.4) node {$L_2$};
\draw [green] (15.8,-22.5) circle (3);
\draw (15.8,-22.5) node {$L_3$};
\draw [green] (15.8,-31.4) circle (3);
\draw (15.8,-31.4) node {$L_4$};
\draw [purple] (29.6,-22.5) circle (3);
\draw (29.6,-22.5) node {$L_5$};
\draw [purple] (29.6,-31.4) circle (3);
\draw (29.6,-31.4) node {$L_6$};
\draw [orange] (47.5,-22.5) circle (3);
\draw (47.5,-22.5) node {$L_7$};
\draw [orange] (48.8,-31.4) circle (3);
\draw (48.8,-31.4) node {$L_8$};
\draw [blue] (61,-22.5) circle (3);
\draw (61,-22.5) node {$L_9$};
\draw [blue] (62.4,-30.5) circle (3);
\draw (62.4,-30.5) node {$L_{10}$};
\draw [black] (26.6,-22.5) -- (18.8,-22.5);
\fill [black] (18.8,-22.5) -- (19.6,-23) -- (19.6,-22);
\draw [black] (18.8,-31.4) -- (26.6,-31.4);
\fill [black] (26.6,-31.4) -- (25.8,-30.9) -- (25.8,-31.9);
\draw [black] (35.06,-12.12) -- (18.44,-21.08);
\fill [black] (18.44,-21.08) -- (19.38,-21.14) -- (18.91,-20.26);
\draw [black] (36.61,-13.49) -- (30.69,-28.61);
\fill [black] (30.69,-28.61) -- (31.45,-28.04) -- (30.52,-27.68);
\draw [black] (39.62,-13.01) -- (45.58,-20.19);
\fill [black] (45.58,-20.19) -- (45.46,-19.26) -- (44.69,-19.9);
\draw [black] (40.04,-12.58) -- (60.06,-28.62);
\fill [black] (60.06,-28.62) -- (59.75,-27.73) -- (59.12,-28.51);
\draw [black] (58,-22.5) -- (50.5,-22.5);
\fill [black] (50.5,-22.5) -- (51.3,-23) -- (51.3,-22);
\draw [black] (51.79,-31.2) -- (59.41,-30.7);
\fill [black] (59.41,-30.7) -- (58.58,-30.25) -- (58.64,-31.25);
\draw [black] (31.1,-34) -- (37.3,-44.8);
\fill [black] (37.3,-44.8) -- (37.34,-43.86) -- (36.47,-44.35);
\draw [black] (47.21,-33.94) -- (40.39,-44.86);
\fill [black] (40.39,-44.86) -- (41.24,-44.44) -- (40.39,-43.91);
\draw [black] (18.26,-33.11) -- (36.34,-45.69);
\fill [black] (36.34,-45.69) -- (35.97,-44.82) -- (35.4,-45.64);
\draw [black] (30.64,-25.31) -- (37.76,-44.59);
\fill [black] (37.76,-44.59) -- (37.95,-43.66) -- (37.01,-44.01);
\draw [black] (17.84,-24.7) -- (36.76,-45.2);
\fill [black] (36.76,-45.2) -- (36.59,-44.27) -- (35.85,-44.95);
\draw [black] (46.51,-25.33) -- (39.79,-44.57);
\fill [black] (39.79,-44.57) -- (40.53,-43.98) -- (39.58,-43.65);
\draw [black] (59.96,-32.25) -- (41.24,-45.65);
\fill [black] (41.24,-45.65) -- (42.18,-45.59) -- (41.6,-44.78);
\draw [black] (59,-24.74) -- (40.8,-45.16);
\fill [black] (40.8,-45.16) -- (41.7,-44.9) -- (40.96,-44.23);
\draw [black] (40.38,-12.06) -- (58.32,-21.14);
\fill [black] (58.32,-21.14) -- (57.84,-20.34) -- (57.38,-21.23);
\draw [black] (39.12,-13.34) -- (47.38,-28.76);
\fill [black] (47.38,-28.76) -- (47.44,-27.81) -- (46.56,-28.29);
\draw [black] (36,-13.17) -- (31.3,-20.03);
\fill [black] (31.3,-20.03) -- (32.16,-19.65) -- (31.34,-19.08);
\draw [black] (35.52,-12.76) -- (17.98,-29.34);
\fill [black] (17.98,-29.34) -- (18.91,-29.15) -- (18.22,-28.43);
\end{tikzpicture}
\end{center}
    \caption{the structure of the latents of fig \ref{fig:cycblockexp}}
    \label{fig:cyclatentblock}
\end{figure}

\chapter{From Rank Constraint to Tensor Constraint}

This chapter focuses on the tensor constraint, which is introduced in \ref{sec:tensorconstraint}.  We will first prove that the tensor constraint can hold with conditions weaker than linear model with acyclicity, then show the limitation of the tensor constraint.  Just as the tensor constraint being a generalization of the rank constraint, the first part of this chapter (including the first two sections) is to generalize some theoretical result of the rank constraint to the tensor constraint.\footnote{To be completely parallel, we should call `determinant constraint' vs `hyperdeterminant constraint', or `covariance constraint' vs `higher-order cumulant constraint'; here we just follow their names from the original papers from which the methods are discovered.} The main result presented in the first part is that the tensor constraint is applicable with \textbf{only linearity under the \textit{k-choke set}}, which is a generalization of the \textbf{Theorem} \ref{thm:ranklinear}. This result is also sufficient for showing that the tensor constraint is applicable with \textbf{linearity and acyclicity under the \textit{k-choke set}}, which is a generalization of the \textbf{Theorem} \ref{thm:LArank}.  The last section of this chapter will present the limitation of the tensor constraint due to certain mathematical properties of the hyperdeterminant.  Specifically, we will provide an example to show that when the cumulant tensor is odd-dimensional, the hyperdeterminant is not zero while the condition of the tensor constraint is met.

\section{Recursively Additive Model}
Having defined \textbf{\textit{k-choke set} } in \ref{definition:ktreksep}, we now define \textbf{linearity under the \textit{k-choke set}}.\\ 
\textbf{Linear under \textit{k-choke set}:} Given a directed graph $G = \langle \mathbf{V, E}\rangle$, and $\mathbf{|S_1|=...=|S_k|}=n$ and $\mathbf{A_1,...,A_k}$ are \textit{k-choke sets} for $ \mathbf{S_1,...S_k}$.   We say the graph $G$ is \textbf{linear under} $\mathbf{A_1,...,A_k}$ for $\mathbf{S_1,...S_k}$, if for every directed path in $G$ (not necessarily self-avoiding) from some nodes in $\mathbf{A_j}$ to some nodes in $\mathbf{S_j}$, every edge $X_i\rightarrow X_j$ on this path is linear, aka, $X_j = a_{ij}X_i+\epsilon'_{j_{\mathbf{A_j\rightarrow S_j}}}$ with noise $\epsilon'_{j_{\mathbf{C_A\rightarrow A}}}$ independent from $X_i$.  

Recall that \textbf{Theorem} \ref{thm:LArank} and \ref{thm:ranklinear} both assume that the graph $G = \langle \mathbf{V, E}\rangle$ follows a generalized additive SEM then state a local condition about the\textbf{\textit{ choke set}} under which the rank constraint can be used.  Unfortunately, the generalized additive model is not enough to prove these results for the tensor constraint.

\textbf{\textit{Example.}}  Consider fig \ref{fig:counterexampletc} with a generalized additive model with fixed parameterization as below:
\begin{itemize}
    \item $X_1 = a_1L+\epsilon_{x_1}$
    \item $X_2 = a_2L+f_2(L_1)+\epsilon_{x_2}$
    \item $X_3 = a_3L+\epsilon_{x_3}$
    \item $L_2 = f_4(L_1)+ g_4(L_3)+\epsilon_{L_2}$
    \item $X_4 = a_4L+h_4(L_2) +\epsilon_{x_4}$
    \item $X_5 = a_5L+\epsilon_{x_5}$
    \item $X_6 = a_6L+f_6(L_3)+\epsilon_{x_6}$
\end{itemize}
\begin{figure}
    \begin{center}
\begin{tikzpicture}[scale=0.12]
\tikzstyle{every node}+=[inner sep=0pt]
\draw [black] (60,-22.3) circle (3);
\draw (60,-22.3) node {$X_5$};
\draw [black] (39.4,-26.9) circle (3);
\draw (39.4,-26.9) node {$X_3$};
\draw [black] (37,-4.6) circle (3);
\draw (37,-4.6) node {$L$};
\draw [black] (16.6,-22.3) circle (3);
\draw (16.6,-22.3) node {$X_1$};
\draw [black] (23,-27.5) circle (3);
\draw (23,-27.5) node {$X_2$};
\draw [black] (47.3,-25.8) circle (3);
\draw (47.3,-25.8) node {$X_4$};
\draw [black] (66,-16.8) circle (3);
\draw (66,-16.8) node {$X_6$};
\draw [black] (39.4,-46.6) circle (3);
\draw (39.4,-46.6) node {$L_1$};
\draw [black] (50.3,-37.4) circle (3);
\draw (50.3,-37.4) node {$L_2$};
\draw [black] (66,-40.3) circle (3);
\draw (66,-40.3) node {$L_3$};
\draw [black] (34.73,-6.57) -- (18.87,-20.33);
\fill [black] (18.87,-20.33) -- (19.8,-20.19) -- (19.14,-19.43);
\draw (25.47,-12.96) node [above] {$a_1$};
\draw [black] (35.44,-7.16) -- (24.56,-24.94);
\fill [black] (24.56,-24.94) -- (25.41,-24.52) -- (24.56,-24);
\draw (29.36,-14.77) node [left] {$a_2$};
\draw [black] (37.32,-7.58) -- (39.08,-23.92);
\fill [black] (39.08,-23.92) -- (39.49,-23.07) -- (38.5,-23.18);
\draw (37.55,-15.85) node [left] {$a_3$};
\draw [black] (38.31,-7.3) -- (45.99,-23.1);
\fill [black] (45.99,-23.1) -- (46.09,-22.16) -- (45.19,-22.6);
\draw (41.45,-16.28) node [left] {$a_4$};
\draw [black] (39.38,-6.43) -- (57.62,-20.47);
\fill [black] (57.62,-20.47) -- (57.29,-19.59) -- (56.68,-20.38);
\draw (47.17,-13.95) node [below] {$a_5$};
\draw [black] (39.77,-5.76) -- (63.23,-15.64);
\fill [black] (63.23,-15.64) -- (62.69,-14.87) -- (62.3,-15.79);
\draw (50.22,-11.22) node [below] {$a_6$};
\draw [black] (37.45,-44.32) -- (24.95,-29.78);
\fill [black] (24.95,-29.78) -- (25.1,-30.71) -- (25.85,-30.06);
\draw (31.75,-35.6) node [right] {$f_2$};
\draw [black] (41.69,-44.67) -- (48.01,-39.33);
\fill [black] (48.01,-39.33) -- (47.07,-39.47) -- (47.72,-40.23);
\draw (46.15,-42.49) node [below] {$g_2$};
\draw [black] (49.55,-34.5) -- (48.05,-28.7);
\fill [black] (48.05,-28.7) -- (47.77,-29.6) -- (48.74,-29.35);
\draw (49.56,-31.12) node [right] {$h_4$};
\draw [black] (63.05,-39.76) -- (53.25,-37.94);
\fill [black] (53.25,-37.94) -- (53.95,-38.58) -- (54.13,-37.6);
\draw (57.5,-39.47) node [below] {$g_3$};
\draw [black] (66,-37.3) -- (66,-19.8);
\fill [black] (66,-19.8) -- (65.5,-20.6) -- (66.5,-20.6);
\draw (66.5,-28.55) node [right] {$f_6$};
\end{tikzpicture}
\end{center}
    \caption{The model follows a generalized additive model, while some edges are linear.}
    \label{fig:counterexampletc}
\end{figure}
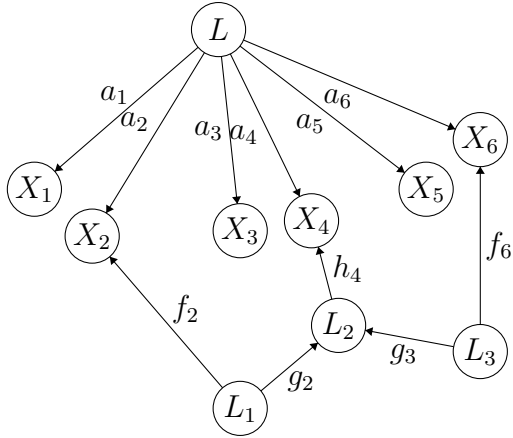
Based on the model,
We have $cum(X_2, X_4, X_6)=a_2a_4a_6\mathbb{E}(L^3)+\mathbb{E}(f_2(L_1)h_4(L_2)f_6(L_3))$.\newline Notice that $\mathbb{E}(f_2(L_1)h_4(L_2)f_6(L_3))\neq 0 $ since $L_2$ is a children of both $L_1$ and $L_3$.\\ \textcolor{white}{abc}
Take $\mathbf{S_1}=\{X_1, X_2\}$,  $\mathbf{S_2}=\{X_3, X_4\}$,  $\mathbf{S_3}=\{X_5, X_6\}$.  The choke set is $(\{L\},\emptyset,\emptyset)$, and $\mathbf{S_1, S_2, S_3}$ are linear under $\{L\}$.\\ \textcolor{white}{abc}
However, it is easy to check that $det(\mathcal{C}^{(3)}_{\mathbf{S_1, S_2, S_3}})\neq 0 $, since the second added term in $cum(X_2, X_4, X_6)$ will not be canceled out.\\ \textcolor{white}{abc}

This is the difference between the rank constraint and the tensor constraint. In the rank constraint with the generalized additive model, we still have noises independent from each other and covariances between them remain zero, and if we looked at tetrad $(\{1,2\}, \{3,4\})$, the reason that the $det(\Sigma_{\{1,2\}, \{3,4\}})\neq 0 $ is that $(\{L\},\emptyset)$ does not $t-$separate the treks, which is consistent with the theorem of the rank constraint \ref{thm:LArank}.\\ \textcolor{white}{abc}
For the tensor costraint, the generalized additive models is not enough because the noises of each variable that are initially and mutually independent become inseparable after transformation of nonlinear functions, making the higher-order cumulant tensor of the noises not diagonal, which means that the higher-order cumulant between some noises is not zero.  Intuitively, such non-zero higher-order cumulant acts like introducing additional $k-treks$ between variables. For instance, consider a simple V-structure directed graph with nonlinear edges:

\begin{center}
\begin{tikzpicture}[scale=0.12]
\tikzstyle{every node}+=[inner sep=0pt]
\draw [black] (62.5,-8.7) circle (3);
\draw (62.5,-8.7) node {$X_{b_1}$};
\draw [black] (43.7,-31.9) circle (3);
\draw (43.7,-31.9) node {$Y_0$};
\draw [black] (21.3,-34) circle (3);
\draw (21.3,-34) node {$X_{a_3}$};
\draw [black] (23.5,-9.7) circle (3);
\draw (23.5,-9.7) node {$X_{a_1}$};
\draw [black] (33.5,-21) circle (3);
\draw (33.5,-21) node {$X_{a_2}$};
\draw [black] (52.7,-21) circle (3);
\draw (52.7,-21) node {$X_{b_2}$};
\draw [black] (66.1,-34) circle (3);
\draw (66.1,-34) node {$X_{b_3}$};
\draw [black] (43.7,-45.4) circle (3);
\draw (43.7,-45.4) node {$Y_1$};
\draw [black] (25.49,-11.95) -- (31.51,-18.75);
\fill [black] (31.51,-18.75) -- (31.36,-17.82) -- (30.61,-18.49);
\draw (27.96,-16.8) node [left] {$f_1$};
\draw [black] (35.55,-23.19) -- (41.65,-29.71);
\fill [black] (41.65,-29.71) -- (41.47,-28.78) -- (40.74,-29.47);
\draw (38.07,-27.92) node [left] {$f_2$};
\draw [black] (50.79,-23.31) -- (45.61,-29.59);
\fill [black] (45.61,-29.59) -- (46.51,-29.29) -- (45.73,-28.65);
\draw (47.65,-25.02) node [left] {$g_2$};
\draw [black] (60.63,-11.05) -- (54.57,-18.65);
\fill [black] (54.57,-18.65) -- (55.46,-18.34) -- (54.68,-17.72);
\draw (57.04,-13.43) node [left] {$g_1$};
\draw [black] (31.45,-23.19) -- (23.35,-31.81);
\fill [black] (23.35,-31.81) -- (24.26,-31.57) -- (23.54,-30.89);
\draw (26.87,-26.03) node [left] {$a$};
\draw [black] (54.85,-23.09) -- (63.95,-31.91);
\fill [black] (63.95,-31.91) -- (63.72,-31) -- (63.02,-31.71);
\draw (58.36,-27.98) node [below] {$b$};
\draw [black] (43.7,-34.9) -- (43.7,-42.4);
\fill [black] (43.7,-42.4) -- (44.2,-41.6) -- (43.2,-41.6);
\draw (43.2,-38.65) node [left] {$f_3$};
\end{tikzpicture}
\end{center}

as a generalized additive model, we have: 
\begin{equation}
Y_0=f_2(X_{a_2}+\epsilon_{X_{a_2}})+g_2(X_{b_2}+\epsilon_{X_{b_2}})+\epsilon_{Y_0}
\end{equation}

$Y_1$ can be written as 
\begin{equation}
    Y_1 = f_3(Y_0)+\epsilon_{Y_1}
\end{equation}

$X_{a_3}$ can be written as 
\begin{equation}
    X_{a_3} = aX_{a_2}+\epsilon_{X_{a_3}}
\end{equation}

$X_{b_3}$ can be written as 
\begin{equation}
    X_{b_3} = bX_{b_2}+\epsilon_{X_{b_3}}
\end{equation}
The 3-order cumulant of $\langle X_{a_3}, X_{b_3}, Y_0\rangle$ can be written as:
\begin{equation}
    cum(X_{a_3}, X_{b_3}, Y_0) = ab\mathbb{E}(X_{a_3}X_{b_3}f_3(f_2(X_{a_2}+\epsilon_{X_{a_2}})+g_2(X_{b_2}+\epsilon_{X_{b_2}})+\epsilon_{Y_0}))
\end{equation}
we can see that the three independent noises $\epsilon_{a_2},\epsilon_{b_2},\epsilon_{Y_0}$ are not in an additive form anymore in the descendant of $Y$.  Even though there is no $3-trek$ on  $X_{a_3}, X_{b_3}, Y_0$, the 3-order cumulant of $\langle X_{a_3}, X_{b_3}, Y_0\rangle$ is nonzero.\\ \textcolor{white}{abc}
To make the noises or exogenous causes of each variable separable, we propose more assumptions about the structural model which is still weaker than linearity:\\ 
\textbf{Additivity Preserving Function:} A function $W$ preserves additivity if $W(\sum_i p_i) = \sum_i w_i(p_i)$ where $p_i$ are some monomials.  For instance, $f(X+Y) = sin(X)+cos(Y).$\\ 
\textbf{Recursively Additive Model: } A graph $G = \langle\mathbf{V, E}\rangle$ follows a recursively additive model, if for all $X\in \mathbf{V}$ s.t. $X= \sum a_i f_i(Pa_G(X)_i)+\epsilon_X$, for all $i$, $f_i$ is an additive preserving function. \\ \textcolor{white}{abc}

\section{Tensor Constraint with Linearity under the \textit{k-Choke Sets}}
Now we show the main result, which states a sufficient condition for the $k^{th}$-order cumulants to have a null hyperdeterminant.
\begin{theorem}\label{thm:tensorconstraintlinearunderkchokeset}
    Given recursively additive model with a directed graph $G = \langle\mathbf{V,E}\rangle$ and $k$ sets of vertices $\mathbf{S_1,\dots,S_k}$ such that the cardinatlity of each of them is $n$, if there exists subsets $\mathbf{A_1,\dots,A_k\subset V}$ such that the directed path to $\mathbf{S_1,\dots,S_k}$  are linear under $\mathbf{A_1,\dots,A_k\subset V}$ and $\mathbf{A_1,\dots,A_k\subset V}$ $k-trek$ separates $\mathbf{S_1,\dots,S_k}$, if $\sum^k_{i=1} |\mathbf{A_i}|<n$ the tensor $\mathcal{C}^{(k)}_{\mathbf{S_1,\dots,S_k}}$ of $k^{th}$-order cumulants has a null determinant.
\end{theorem}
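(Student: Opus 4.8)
The plan is to mimic the proof of Theorem~\ref{thm:ranklinear} (the rank-constraint case), replacing the Cauchy--Binet expansion of the covariance determinant by the analogous multilinear expansion of the hyperdeterminant of the $k$-th order cumulant tensor, and replacing Lemma~\ref{lm:detsysdipath} by Lemma~\ref{lm:highordercum} together with Theorem~\ref{thm:TensorConstraint}. First I would pass to the subgraph $G'$ obtained by restricting $G$ to the union of $\mathbf{A_1,\dots,A_k}$, the sets $\mathbf{S_1,\dots,S_k}$, and all vertices lying on directed paths from some $\mathbf{A_j}$ to the corresponding $\mathbf{S_j}$ (call this $\mathcal{D}_{\mathbf{A_j}\to\mathbf{S_j}}$). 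Because the model is recursively additive and linear under $\mathbf{A_1,\dots,A_k}$, every variable in $G'$ can be written as $X = L X + \mathcal{E}_{G'}$ where $L$ collects the genuinely linear edge coefficients along the relevant directed paths, and all other (nonlinear) influences are absorbed into the noise vector $\mathcal{E}_{G'}$. The key point, exactly as in the rank case, is that $\mathcal{E}_{G'}$ need not have a diagonal cumulant tensor, but its $k$-th order cumulant tensor $\mathcal{F}^{(k)}$ has a controlled zero pattern: any entry $\mathcal{F}^{(k)}_{v_1,\dots,v_k}$ whose indices land on ``$j$-sides'' that are mutually trek-separated by $(\mathbf{A_1,\dots,A_k})$ must vanish. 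Here the recursive-additivity assumption is what makes this work --- additivity-preserving functions keep the exogenous contributions separable, so a nonzero joint cumulant of noises can only arise from a genuine shared source, i.e.\ a $k$-trek system with a sided intersection that is \emph{not} blocked by the $\mathbf{A_j}$'s; but the paths in $G'$ from the choke sets are linear and hence captured by $L$, not by $\mathcal{E}_{G'}$, so such unblocked intersections inside the noise tensor cannot occur when $\sum_i|\mathbf{A_i}| < n$.

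Next I would expand. By Lemma~\ref{lm:highordercum} the cumulant tensor of $\mathbf{X}$ restricted to $\mathbf{S_1,\dots,S_k}$ is $\mathcal{C}^{(k)}_{\mathbf{S_1,\dots,S_k}} = \mathcal{F}^{(k)}\cdot \big((\mathbf{I}-L)^{-1}_{\cdot,\mathbf{S_1}}, \dots, (\mathbf{I}-L)^{-1}_{\cdot,\mathbf{S_k}}\big)$, a multilinear (Tucker) contraction. Applying the multilinearity of the hyperdeterminant --- the tensor analogue of applying Cauchy--Binet once on each side --- I would obtain an expansion of the form
\begin{equation}
\det \mathcal{C}^{(k)}_{\mathbf{S_1,\dots,S_k}} = \sum_{\substack{\mathbf{R_1,\dots,R_k\subset X_{G'}}\\ |\mathbf{R_1}|=\dots=|\mathbf{R_k}|=n}} \left(\prod_{j=1}^{k} \det (\mathbf{I}-L)^{-1}_{\mathbf{R_j},\mathbf{S_j}}\right)\det \mathcal{F}^{(k)}_{\mathbf{R_1,\dots,R_k}},
\end{equation}
up to the combinatorial bookkeeping constants intrinsic to the hyperdeterminant. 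It then suffices to show every summand vanishes. I would argue, paralleling Lemma~\ref{lemma:rankcyclic0} and Lemma~\ref{lemma:atleast0}, that if $\det(\mathbf{I}-L)^{-1}_{\mathbf{R_j},\mathbf{S_j}} \neq 0$ for all $j$, then by Lemma~\ref{lm:detsysdipath} there is, for each $j$, a system of $n$ vertex-disjoint directed paths from $\mathbf{R_j}$ to $\mathbf{S_j}$ inside $G'$; since $(\mathbf{A_1,\dots,A_k})$ $k$-trek-separates $\mathbf{S_1,\dots,S_k}$ and $\sum_i|\mathbf{A_i}| < n$, a pigeonhole/menger-type count forces the index tuples $\mathbf{R_1,\dots,R_k}$ to have a sided overlap pattern that lands entirely in the zero region of $\mathcal{F}^{(k)}$ (the region forced by the argument of the previous paragraph), so $\det\mathcal{F}^{(k)}_{\mathbf{R_1,\dots,R_k}} = 0$. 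Hence every term dies and $\det\mathcal{C}^{(k)}_{\mathbf{S_1,\dots,S_k}} = 0$.

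The main obstacle I expect is the hyperdeterminant bookkeeping: unlike the ordinary determinant, the ``determinant of an order-$k$ tensor'' defined in the excerpt is a sum over $(k-1)$-tuples of permutations, and the naive multilinear expansion produces cross terms that are not simply a product of $k$ ordinary minors. I would need to check carefully that the Tucker-product structure of Lemma~\ref{lm:highordercum} interacts with this definition so that the expansion really does factor as written above (or at least that every term that appears is a product containing either a vanishing ``$(\mathbf{I}-L)^{-1}$ minor'' factor or a vanishing ``noise cumulant'' factor). A secondary, more routine obstacle is pinning down the exact zero pattern of $\mathcal{F}^{(k)}$: I must verify that recursive additivity plus linearity-below-the-choke-sets is genuinely enough to guarantee the needed cumulants of $\mathcal{E}_{G'}$ vanish --- the counterexample with the generalized additive model in the excerpt shows this is exactly where the hypothesis is load-bearing, so the argument has to use additivity-preservation at each nonlinear node, not just at the leaves. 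If the clean factorization fails, the fallback is to argue directly that $\det\mathcal{C}^{(k)}_{\mathbf{S_1,\dots,S_k}}$, as a polynomial in the entries, is a sum of monomials each of which corresponds to a $k$-trek system in $G'$ between $\mathbf{S_1,\dots,S_k}$, invoke Theorem~\ref{thm:TensorConstraint} on $G'$ (every such system has a sided intersection, forced by the choke sets and the cardinality bound), and conclude vanishing that way.
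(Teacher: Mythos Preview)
Your proposal is essentially the paper's proof: restrict to $G'$, write $\mathbf{X}=L\mathbf{X}+\mathcal{E}_{G'}$, apply Cauchy--Binet $k$ times to get exactly your displayed expansion, and then show that whenever all $\det(I-L)^{-1}_{\mathbf{R_j},\mathbf{S_j}}\neq 0$ the noise hyperdeterminant $\det\mathcal{E}^{(k)}_{G'_{\mathbf{R_1,\dots,R_k}}}$ vanishes. The paper makes your pigeonhole precise via the inequality $|\mathbf{R_{U_i}}|>\sum_{j\neq i}|\mathbf{R_{A_j}}|$ (where $\mathbf{R_{U_i}}\subset\mathbf{R_i}$ are the indices with a path to $\mathbf{S_i}$ missing $\mathbf{A_i}$) applied directly to the permutation expansion of the hyperdeterminant, and establishes the entrywise zeros $\mathcal{E}^{(k)}_{G'_{u_1,\dots,u_k}}=0$ for $u_i\in\mathbf{R_{U_i}}$ using multilinearity and the vanishing property of joint cumulants together with recursive additivity; your first ``main obstacle'' (the hyperdeterminant Cauchy--Binet factorization) is simply asserted in the paper without further justification.
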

\begin{proof}
    
For $i\in\{1,\dots,k\}$, let $\mathcal{D}_\mathbf{A_i\rightarrow S_i}\subset \mathbf{V}$ be the set of variables on a directed path from some nodes in $\mathbf{A_i}$ to some nodes in $\mathbf{S_i}$, excluding $\mathbf{A_i}$ and $\mathbf{S_i}$.

Now we consider a graph $G' $ by restrict the original graph $G=\langle \mathbf{V, E}\rangle$ to the set $\bigcup_{i\in[k]} \mathbf{{A_i}\cup\mathcal{D}_{A_i\rightarrow S_i\cup S_i}}$ as well as all the $\mathbf{A_i}$ and $\mathbf{S_i}$. Following the definition of \textit{parent} in a graph ($X_i$ is a parent of $X_j$ if $i\rightarrow j\in \mathbf{E}$) and the condition that the graph is linear under $\mathbf{A_1,\dots,A_k}$ for $\mathbf{S_1,\dots,S_k}$, we can represent $G'$ as a structural linear equation:
\begin{equation}
    \mathbf{X} = L^T\mathbf{X}+\mathcal{E}_{G'}
\end{equation}
where $\mathbf{X = \bigcup_{i\in[k]} A_i\cup\mathcal{D}_{A_i\rightarrow S_i}\cup S_i}$. By \textbf{Lemma} \ref{lm:highordercum} the tensor $\mathcal{C}^{(k)}$ of $k-$th order cumulants of $\mathbf{X}$ equals:

\begin{equation}\label{equ:tensorinproof}
    \mathcal{C}^{(k)} = \mathcal{E}_{G'}^{(k)}\cdot(I-L)^{-k}
\end{equation}

Denote $\mathbf{A_1,\dots,A_k}$ as $\langle \mathbf{A_i}\rangle$ and $\mathbf{S_1,\dots,S_k}$ as $\langle \mathbf{S_i}\rangle$. \textit{Linear below $\langle \mathbf{A_i}\rangle$ and $\langle \mathbf{S_i}\rangle$} only requires the edges between $\mathbf{A_i}$ to $\mathbf{S_i}$ to be linear, and it does not guarantee that every edge in $G'$ is linear.  If there are non-linear edges in $G'$, the causal influence represented by this edge is not in seen the coefficient matrix $L$ but is included $\mathcal{E}_G'$. This suggests that $\Phi$ is not diagonal since,  unlike the original $G$,  noises in $G'$ are not necessarily independent from each other.

Applying \textit{Cauchy-Binet Formula} $k$ many times to equation \ref{equ:tensorinproof}, we get:
\begin{equation}\label{equ:dtertensor}
    det\mathcal{C}^{(k)}_{\mathbf{S_1,\dots,S_k}} = \sum_{\substack{\mathbf{R_1, \dots, R_k \subset X_{G'}}\\ \textcolor{white}{abc}\mathbf{|R_i|} = n, \forall i\in 1,...k}}det\mathcal{E}^{(k)}_{G'_{ \mathbf{R_1,\dots,R_k}}}det(I-L)^{-1}_{\mathbf{R_1,S_1}}\dots det(I-L)^{-1}_{\mathbf{R_k,S_k}}
\end{equation}

To show that $det\mathcal{C}^{(k)}_{\mathbf{S_1,\dots,S_k}}$ is zero, it suffices to show that each term being added together is zero.  Therefore we prove the lemma:

\begin{lemma}\label{lm:atleast0tensor}
     For all $i$, if $det(I-L)^{-1}_{\mathbf{R_i,S_i}}\neq 0 $, then $det\mathcal{E}^{(k)}_{G'_{\mathbf{R_1,\dots,R_k}}} = 0$\\ \textcolor{white}{abc}
\end{lemma}

\begin{proof}

Let $|\mathbf{S_i}| = |\mathbf{R_i}| = n$.  For each $\mathbf{R_i}$, let $\mathbf{R_{A_i}}$ be the set of variables in $\mathbf{R_i}$ that $only$ have some directed path to $\mathbf{S_i}$ that intersect with $\mathbf{A_i}$ and $\mathbf{R_{U_i}}$ have directed paths to $\mathbf{S_i}$ that does not intersect with $\mathbf{A_i}$.  By \textbf{Lemma} \ref{lm:detsysdipath} we know $\mathbf{R_{A_i}}$ contains no more variables than $\mathbf{A_i}$ since otherwise there will be a system of directed paths from $\mathbf{R_i}$ to $\mathbf{S_i}$ with intersecting vertices.\\ \textcolor{white}{abc}  
Let $a_i = |\mathbf{A_i}|$. Then $|\mathbf{R_{U_i}}| \geq n - a_i$.\\ \textcolor{white}{abc}
Now we have the following inequalities:
\begin{enumerate}
    \item $\sum_i|\mathbf{R_{A_i}}|< \sum_i|\mathbf{{A_i}}| < n$
    \item $\forall i, |\mathbf{R_{A_i}|+|R_{U_i}|} = n$
\end{enumerate}
Combining the inequality $1$ and $2$ and we get:
\begin{equation}\label{ineq:R}
    \forall i, \mathbf{|R_{U_i}|} > \sum_{j\neq i} \mathbf{|R_{A_j}|}
\end{equation}
Since $\langle \mathbf{A_i}\rangle$ $k-trek$ separates $\langle \mathbf{S_i}\rangle$, there cannot be a $k-trek$ between $\langle \mathbf{R_{U_i}}\rangle$ since that will introduce a $k-trek$ between $\langle\mathbf{S_i}
\rangle$ that is not $k-trek$ separated by $\langle \mathbf{A_i} \rangle$.\\ \textcolor{white}{abc}
Now we prove that the $k-th$ order joint cumulant among noises of some variables is zero:

\begin{lemma}\label{lm:det0tensor}
    $\mathcal{E}^{(k)}_{G'_{u_1,\dots,u_k}} = 0$ for any $u_1,\dots,u_k\in \mathbf{R_{U_1},\dots,R_{U_k}}$
\end{lemma}

The proof of the lemma can be found in the Appendix, which uses the vanishing property and multilinearity of the higher order joint cumulant. 

With \textbf{Lemma} \ref{lm:det0tensor}, we prove the $det\mathcal{E}^{(k)}_{G'_{\mathbf{R_1,\dots,R_k}}} = 0$.\\ \textcolor{white}{abc}

Recall the definition of \textit{combinatorial hyperdeterminant} of an order-k $n\times n\times ...\times n$ tensor $T$:

\begin{center}
    $det T = \sum_{\sigma_2,...\sigma_k\in\mathfrak{S}(n)}sign(\sigma_2)...sign(\sigma_k)\prod_{i=1}^n T_{i, \sigma_2(i), ..., \sigma_k(i)}$
\end{center}
where $\mathfrak{S}(n)$ is the set of permutations of the set $\{1, ..., n\}$.\\ \textcolor{white}{abc}
If we interpret the meaning of the summed monomials  in the \textit{hyperdeterminant} by its definition while ignoring the sign of permutation, it simply describes a general procedure of selecting entries to make the product (denoted by $P_{\pi}$):
\begin{enumerate}
    \item freeze the ordering of one axis (in our case $\mathbf{R_i}$ for some $i$, then we call the frozen one $\mathbf{R_1}$ for convenience)
    \item $P_{\pi} = 1$
    \item for each $i\in\mathbf{R_1}$:
    \begin{itemize}
    \item for each axis $j\in\{2,...k\}$:
    \begin{itemize}
        \item randomly select a coordinate $c^i_j\in\mathbf{R_j}$
        \item $\mathbf{R_j} = \mathbf{R_j}\setminus\{c^i_j\}$ 
    \end{itemize}
    \item $P_{\pi} = P_{\pi}\times T_{i, c^i_2, ..., c^i_k}$
    \end{itemize}
\end{enumerate}
Since the hyperdeterminant is a sum over the products $P_{\pi}$ after every random selection procedure timed with its corresponding permutation sign, to show that the $det T =0$, it suffices to show that $P_{\pi}=0$.\\ \textcolor{white}{abc}
Notice that $P_{\pi}=0$ iff some entries selected is zero, recall in our proof the tensor $T=\mathcal{E}^{(k)}_{G'_{\mathbf{R_1,\dots,R_k}}}$, it suffices to show the claim:
\begin{center}
    \textit{there exists some $i$ s.t. $i\in \mathbf{R_{U_1}}, (\forall j) c^i_j\in\mathbf{R_{U_j}}$.  }
\end{center}
The claim can be seen as a \textit{pigeonhole} case.  Based on the \textbf{Inequality} \ref{ineq:R}:
\begin{center}
    $\mathbf{|R_{U_1}|} > \sum_{j\neq 1} \mathbf{|R_{A_j}|}$
\end{center}
since $c^i_j$ is either in $\mathbf{R_{A_j}}$ or $\mathbf{R_{U_j}}$, after selecting all the elements as coordinates in $\mathbf{R_{A_j}}$ for each axis, there are still some $i\in \mathbf{R_{U_1}}$ left.\\ \textcolor{white}{abc} By \textbf{Lemma} \ref{lm:det0tensor} we know $\mathcal{E}^{(k)}_{i, c^i_2\dots,c^i_k} = 0$ for all such $i\in\mathbf{R_{U_1}}$ since $c^i_j\in \mathbf{R_{U_j}}$. As a product that times with such $\mathcal{E}^{(k)}_{i, c^i_2\dots,c^i_k}$, we have $P_\pi=0$.  Therefore $det T$ is zero. \\ \textcolor{white}{abc}
Replacing $T$ with $\mathcal{E}^{(k)}_{G'_{\mathbf{R_1,\dots,R_k}}}$, we have that $det\mathcal{E}^{(k)}_{G'_{\mathbf{R_1,\dots,R_k}}}=0$.

\end{proof}
We complete the proof of the theorem using \textbf{Lemma }
 \ref{lm:atleast0tensor}. Based on equation \ref{equ:dtertensor}, the hyperdeterminant is nonzero only if there exists some $\mathbf{R_i,S_i\subset X_{G'}}$ for $i=1\dots k$ such that $det\mathcal{E}^{(k)}_{G'_{ \mathbf{R_1,\dots,R_k}}}det(I-L)^{-1}_{\mathbf{R_1,S_1}}\dots det(I-L)^{-1}_{\mathbf{R_k,S_k}}$ is not zero, which can only happen if every hyperdeterminant and determinant in the product is nonzero.  By \textbf{Lemma }\ref{lm:atleast0tensor} it is not possible, so $det\mathcal{C}^{(k)}_{\mathbf{S_1,\dots,S_k}}  = 0$.
\end{proof}

The \textbf{Theorem} \ref{thm:tensorconstraintlinearunderkchokeset} suggests that the tensor constraint can be used under cases that are not as strict as directed acyclic graph with linear structural models.  However, unlike the computation of a determinant, the computation of a hyperdeterminant of a higher-dimensional tensor is NP-hard \cite{amanov2021tensorrank}.  In addition to this difficulty, in the next section we are going to show that the definition of combinatorial hyperdeterminant is problematic for odd-numbered dimensions, leading to an inconsistency of the original theorem of the tensor constraint, \textbf{Theorem} \ref{thm:TensorConstraint}, equating the structure of the graph with the hyperdeterminant of the high order cumulant tensor.

\section{Inconsistency of Combinatorial Hyperdeterminant}
We first prove a consequence of \textbf{Theorem} \ref{thm:TensorConstraint} stated in \cite{robeva2020multitrek}, then provide a counterexample of the lemma which indicates a problem with combinatorial hyperdeterminant with odd many dimensions.

\begin{lemma}\label{lemma:rankentailstensor}
   Consider a DAG $G=\langle \mathbf{V, E}\rangle$ with linear structural model. Given $\mathbf{S_1,...,S_k\subset V}$ all with cardinality $n$, if for some $k'<k$,  $det\mathcal{C}^{k'}_\mathbf{S_1,...S_{k'}} = 0$ then $det\mathcal{C}^{k}_\mathbf{S_1,...S_{k}} = 0$.
\end{lemma}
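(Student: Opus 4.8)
The plan is to relate the vanishing of the hyperdeterminant of the $k$-th order cumulant tensor directly to the combinatorial structure of $k$-trek systems via \textbf{Theorem \ref{thm:TensorConstraint}}, and then argue that every $k$-trek system between $\mathbf{S_1,\dots,S_k}$ "contains" a $k'$-trek system between $\mathbf{S_1,\dots,S_{k'}}$ in such a way that a sided intersection among the first $k'$ paths is forced. Concretely, by \textbf{Theorem \ref{thm:TensorConstraint}}, $\det\mathcal{C}^{k'}_{\mathbf{S_1,\dots,S_{k'}}}=0$ means every system of $k'$-treks between $\mathbf{S_1,\dots,S_{k'}}$ has a sided intersection, and $\det\mathcal{C}^{k}_{\mathbf{S_1,\dots,S_{k}}}=0$ is equivalent to every system of $k$-treks between $\mathbf{S_1,\dots,S_{k}}$ having a sided intersection. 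So the whole lemma reduces to a purely graph-theoretic implication about trek systems, with no further reference to cumulants or linearity (beyond what \textbf{Theorem \ref{thm:TensorConstraint}} already packages).

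First I would take an arbitrary system $T$ of $n$ many $k$-treks between $\mathbf{S_1,\dots,S_k}$, each $k$-trek being $(\pi_1^{(a)},\dots,\pi_k^{(a)})$ for $a=1,\dots,n$, all sharing a common top. Restricting each $k$-trek to its first $k'$ paths $(\pi_1^{(a)},\dots,\pi_{k'}^{(a)})$ yields $n$ many $k'$-treks — they still share the same tops, and their $i$-th sides still end exactly in $\mathbf{S_i}$ for $i\le k'$ — hence a legitimate $k'$-trek system $T'$ between $\mathbf{S_1,\dots,S_{k'}}$. By the hypothesis (via \textbf{Theorem \ref{thm:TensorConstraint}} applied to $k'$), $T'$ has a sided intersection: there exist indices $a\ne b$ and some $i\in\{1,\dots,k'\}$ with $\pi_i^{(a)}$ and $\pi_i^{(b)}$ sharing a vertex. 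But that very same pair of paths appears inside the original $k$-treks indexed by $a$ and $b$, so $T$ has a sided intersection on side $i\le k' < k$. Since $T$ was arbitrary, every $k$-trek system between $\mathbf{S_1,\dots,S_k}$ has a sided intersection, and applying \textbf{Theorem \ref{thm:TensorConstraint}} in the reverse direction gives $\det\mathcal{C}^{k}_{\mathbf{S_1,\dots,S_k}}=0$.

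The only place that needs a bit of care is confirming that the restriction of a $k$-trek system to its first $k'$ coordinates genuinely satisfies the definition of a $k'$-trek system in the sense of \ref{definition:ktreksep} — i.e.\ that the ends on the $i$-th side equal $\mathbf{S_i}$ and that all $k'$ paths in each truncated tuple still emanate from the common top. This is immediate from the definitions, but it is the step I would state explicitly, since the equivalence in \textbf{Theorem \ref{thm:TensorConstraint}} is phrased for systems whose side-$i$ ends are exactly the prescribed sets. No genuine obstacle is expected here; the content of the lemma is entirely carried by \textbf{Theorem \ref{thm:TensorConstraint}}, and the combinatorial argument is a one-line "truncate and reuse the intersection" observation. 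If one preferred a proof not routed through \textbf{Theorem \ref{thm:TensorConstraint}}, an alternative would be to expand $\det\mathcal{C}^{k}$ using \textbf{Lemma \ref{lm:highordercum}} and the Cauchy--Binet-type expansion as in the proof of \textbf{Theorem \ref{thm:tensorconstraintlinearunderkchokeset}} and factor out the $k'$-dimensional subdeterminants, but the trek-system route is cleaner and avoids re-deriving machinery already available.
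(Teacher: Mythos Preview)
Your proposal is correct and follows essentially the same route as the paper's proof: both invoke \textbf{Theorem \ref{thm:TensorConstraint}} to translate the determinant condition into a statement about sided intersections, observe that restricting each $k$-trek in a $k$-trek system to its first $k'$ paths yields a $k'$-trek system between $\mathbf{S_1,\dots,S_{k'}}$, and then transport the forced sided intersection back to the original $k$-trek system. Your write-up is in fact more careful than the paper's, which compresses the truncation step into a single clause; note, however, that the paper immediately follows this lemma with a counterexample showing that \textbf{Theorem \ref{thm:TensorConstraint}} (and hence this lemma) can fail for odd $k$, so the argument is only as sound as that theorem.
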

\begin{proof}
    By \textbf{Theorem} \ref{thm:TensorConstraint} we know that $det\mathcal{C}^{k'}_\mathbf{S_1,...S_{k'}} = 0$ only if there does not exist a $k'-trek$ system on $\mathbf{S_1, ..., S_{k'}}$ that has no sided intersections.  Since every $k-trek$ system on $\mathbf{S_1,...,S_k}$ include some $k-trek$ system on $\mathbf{S_1, ..., S_{k'}}$, we know that there does not exist a $k-trek$ system on $\mathbf{S_1, ..., S_{k}}$ that has no sided intersections. Therefore $det\mathcal{C}^{k}_\mathbf{S_1,...S_{k}} = 0$.
\end{proof}

\textbf{Lemma} \ref{lemma:rankentailstensor} shows a logical relation between the hyperdeterminants higher-order cumulants subtensor with different dimensions.  Now we provide a counterexample of this lemma.

\begin{figure}
\begin{center}
\begin{tikzpicture}[scale=0.15]
\tikzstyle{every node}+=[inner sep=0pt]
\draw [black] (11.5,-29.4) circle (3);
\draw (11.5,-29.4) node {$X_1$};
\draw [black] (18.8,-29.4) circle (3);
\draw (18.8,-29.4) node {$X_2$};
\draw [black] (32.9,-29.4) circle (3);
\draw (32.9,-29.4) node {$X_3$};
\draw [black] (40.5,-29.4) circle (3);
\draw (40.5,-29.4) node {$X_4$};
\draw [black] (61.6,-30) circle (3);
\draw (61.6,-30) node {$X_5$};
\draw [black] (71.1,-28.1) circle (3);
\draw (71.1,-28.1) node {$X_6$};
\draw [black] (35.6,-8.8) circle (3);
\draw (35.6,-8.8) node {$L_1$};
\draw [black] (50.8,-50) circle (3);
\draw (50.8,-50) node {$L_2$};
\draw [black] (52.6,-17.9) circle (3);
\draw (52.6,-17.9) node {$L_3$};
\draw [black] (33.32,-10.75) -- (13.78,-27.45);
\fill [black] (13.78,-27.45) -- (14.71,-27.31) -- (14.06,-26.55);
\draw (22,-18.61) node [above] {$a_{11}$};
\draw [black] (35.21,-11.77) -- (33.29,-26.43);
\fill [black] (33.29,-26.43) -- (33.89,-25.7) -- (32.9,-25.57);
\draw (33.57,-18.95) node [left] {$a_{13}$};
\draw [black] (38.24,-10.22) -- (49.96,-16.48);
\fill [black] (49.96,-16.48) -- (49.49,-15.67) -- (49.01,-16.55);
\draw (42.51,-13.85) node [below] {$b_{13}$};
\draw [black] (54.39,-20.31) -- (59.81,-27.59);
\fill [black] (59.81,-27.59) -- (59.73,-26.65) -- (58.93,-27.25);
\draw (56.52,-25.34) node [left] {$a_{35}$};
\draw [black] (48.28,-48.38) -- (21.32,-31.02);
\fill [black] (21.32,-31.02) -- (21.72,-31.88) -- (22.27,-31.04);
\draw (36.34,-39.2) node [above] {$a_{22}$};
\draw [black] (49.46,-47.32) -- (41.84,-32.08);
\fill [black] (41.84,-32.08) -- (41.75,-33.02) -- (42.65,-32.58);
\draw (46.35,-38.59) node [right] {$a_{24}$};
\draw [black] (50.97,-47) -- (52.43,-20.9);
\fill [black] (52.43,-20.9) -- (51.89,-21.67) -- (52.89,-21.72);
\draw (52.28,-33.98) node [right] {$b_{23}$};
\draw [black] (55.23,-19.35) -- (68.47,-26.65);
\fill [black] (68.47,-26.65) -- (68.01,-25.83) -- (67.53,-26.7);
\draw (60.31,-23.5) node [below] {$a_{36}$};
\end{tikzpicture}
\end{center}
    \caption{given (right to left) $\mathbf{S_1}=\{X_5, X_6\}, \mathbf{S_2}=\{X_3, X_4\}, \mathbf{S_3}=\{X_1, X_2\}$, there does not exist a $3-trek$ system on $\mathbf{S_1, S_2, S_3}$ that has no sided intersection. }
    \label{fig:counterexampletensorconstraint}
\end{figure}
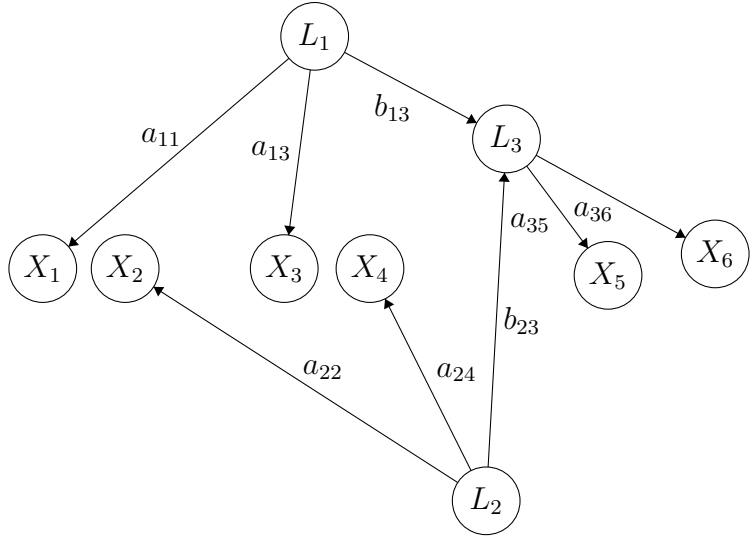
\textbf{\textit{Counterexample.}} Consider fig \ref{fig:counterexampletensorconstraint} with the linear edge coefficients shown in the figure. \\ \textcolor{white}{abc} Take
$\mathbf{S_1}=\{X_5, X_6\}, \mathbf{S_2}=\{X_3, X_4\}, \mathbf{S_3}=\{X_1, X_2\}$.
\\ \textcolor{white}{abc}
Denote $\Tilde{\mathcal{C}} = \mathcal{C}^{(2)}_\mathbf{S_1, S_2}$, which is the covariance submatrix $\Sigma_\mathbf{S_1, S_2}$.  Since every trek system on $\mathbf{S_1}$ and $\mathbf{S_2}$ has a sided intersection (at $L_3$), we get $det\Tilde{\mathcal{C}}=0$, which is also a direct application of the rank constraint.\\ \textcolor{white}{abc}
Now we consider the third order cumulant.
Denote $\hat{\mathcal{C}} = \mathcal{C}^{(3)}_{\mathbf{S_1, S_2, S_3}}$ with the order is $\mathbf{S_1, S_2, S_3}$, we want to compute the hyperdeterminant $det\hat{\mathcal{C}}$ according to the formula:

\begin{center}
    $det T = \sum_{\sigma_2,...\sigma_k\in\mathfrak{S}(n)}sign(\sigma_2)...sign(\sigma_k)\prod_{i=1}^n T_{i, \sigma_2(i), ..., \sigma_k(i)}$
\end{center}

Here $k=3$ and $n=2$. For $n=2$, there are two permutations:
\begin{itemize}
    \item $\sigma_1(1)=1, \sigma_1(2)=2$, $sign(\sigma_1)=1$
    \item $\sigma_2(1)=2, \sigma_2(2)=1$, $sign(\sigma_2)=-1$
\end{itemize}
Plugging in the permutations, we get:
\begin{flalign}
    det\hat{\mathcal{C}}&= sign(\sigma_1)^2\hat{\mathcal{C}}_{X_5, X_3, X_1}\hat{\mathcal{C}}_{X_6, X_4, X_2}+sign(\sigma_2)^2\hat{\mathcal{C}}_{X_6, X_3, X_1}\hat{\mathcal{C}}_{X_5, X_4, X_2}+0+0\\ \textcolor{white}{abc} 
    &= \hat{\mathcal{C}}_{X_5, X_3, X_1}\hat{\mathcal{C}}_{X_6, X_4, X_2}+\hat{\mathcal{C}}_{X_6, X_3, X_1}\hat{\mathcal{C}}_{X_5, X_4, X_2}\\ \textcolor{white}{abc}
    &= 2a_{11}a_{13}b_{13}a_{35}a_{22}a_{42}b_{23}a_{36}\mathbb{E}(L_1^3)\mathbb{E}(L_2^3)
\end{flalign}
The two 0s in the equation is due to the fact that there are no treks on $X_1, X_4$ nor $X_2, X_3$.\\ \textcolor{white}{abc}  
$det\hat{\mathcal{C}}$ is identically nonzero as long as none of the edge coefficient is zero.  This is a violation of \textbf{Lemma} \ref{lemma:rankentailstensor}. \\ \textcolor{white}{abc}
This counterexample is also a direct violation of the \textbf{Theorem} \ref{thm:TensorConstraint}: every $3-trek$ system in the figure has a sided intersection.  We introduce \textbf{Lemma} \ref{lemma:rankentailstensor} here to draw connections between the number of dimensions.  It is easy to see that this counterexample can be generalized to any cases with odd number of sets or dimensions.  In fact, when Cayley first introduced this definition of hyperdeterminant \cite{cayley1844theory}, he only defined it for tensors with an even number of dimensions, and many people have found that the hyperdeternimant with the odd number of dimensions does not contain desirable properties (examples of desirable properties can be found in the appendix) and only discuss the even number case. \\ \textcolor{white}{abc} 
 This violation can be avoided if $\mathbf{S_1}$ is not the first in the order of the $\hat{\mathcal{C}}$.  In other words, the hyperdeterminant with the odd number dimensions is sensitive to the first dimension.  It adds to the difficulty of using the tensor constraints and makes the correctness of \textbf{Theorem} \ref{thm:TensorConstraint} questionable without some further restriction.  

\chapter{Conclusion}

\section{Summary}
In this thesis I discussed two questions in causal discovery:
defining a faithfulness assumption more general than $k$-Triangle Faithfulness that can be applied to nonparametric distributions, and under the assumption that the modified version of $k$-Triangle Faithfulness holds, can be used to show the uniform consistency of a modified causal discovery algorithm; relaxing the linearity and acyclicity assumption to learn causal structures with latent variables with some cycles and nonlinearity.  I believe that the work in this thesis of relaxing various simplification assumptions will extend the causal discovery method to be applicable in a wider range with diversified causal mechanism and statistical phenomena.  I now summarize the novel contributions of this thesis.

\subsection{ Generalized version of $k$-Triangle Faithfulness with Uniform Consistency and Bounding the Probability of Violation of $k$-Triangle Faithfulness}
I presented the Generalized version of $k$-Triangle Faithfulness, which can be applied to any smooth distribution with the Edge Estimation Algorithm that provides uniformly consistent estimators of causal effects and the \textit{Very Conservative }$SGS$ Algorithm that is a uniformly consistent estimator of the Markov equivalence class of the true DAG. I then provided an investigation of the probability of the violation of $k$-Triangle-Faithfulness in the linear Gaussian model and compared the $k$-Triangle-Faithfulness with the Strong Faithfulness assumption\cite{JMLR:v8:kalisch07a} to quantify how much weaker the $k$-Triangle-Faithfulness assumption is, both by a mathematical analysis and a simulation study.
\subsection{Learning Latent Causal Structure with Cyclicity and Nonlinearity}
I applied GIN and rank constraints to learn causal structure with latent variables with models with cycles and nonlinearity. I designed causal clustering algorithms with GIN and rank constraints to identify cycles with models with the noises of variables following non-Gaussianity distribution. I implemented this algorithm and show its simulation result.  Finally I returned to acyclic latent causal structure with cycles between latent blocks and describe the pseudo-code identifying the cycles between latent blocks using GIN and rank constraints and an example of how the pseudo-algorithm works.
\subsection{Tensor Constraint for Latent Causal Structure with Cyclicity and Nonlinearity}
I first defined \textbf{recursively additive model}, which is more restricted than the generalized additive model but more general than the linear model, and show that tensor constraint, as a generalization of rank constraint under non-Gaussian distribution\cite{robeva2020multitrek}, is preserved with only linearity between the observed variables and their latent common causes. I further showed that the tensor constraint may fail when the dimension number of the high order cumulant tensor is odd. 

\section{Future Work}

\subsection{Hierarchical Latent Structure with Cyclic Hierarchical Clusters }

In Chapter 3 I showed the algorithm CGIN that can identify causal clusters with cycles between the measured children and latent parents as well as learning their causal order.  I conjecture that this algorithm can be extended to latent causal structures with hierarchies and cycles between parents and children that are all latent.  The\textbf{ Theorem} \ref{thm:rankdetectcycles}, \ref{thm:GINcollider1} and \ref{thm:GINcollider2} can be easily extended to hierarchical latent structures.  At least two directions are worth pursuing: learning latent causal structures such hierarchical structures with cyclic clusters for models with Gaussian distribution using the rank constraint and the same learning task under non-Gaussian distribution combining rank constraints and GIN.

\subsection{Identifying Causal Clusters with Cycles between Latent Blocks}
In Chapter 3 I described an algorithm learning causal orders between latent parents of different causal clusters and identifying the existence of cycles between latent blocks.  Unfortunately, I found out that it is both difficult for GIN and rank constraint to accurately identify individual causal clusters where the connection between the latent blocks are cyclic.  It will be interesting to develop methods or use different tests to identify clusters with cycle between latent blocks.

\subsection{Learning Latent Causal Structure with Tensor Constraints}
In Chapter 4 I showed that when the dimension of the high dimensional cumulant tensor is odd, the tensor constraint can be nonzero even if every $k-$ trek system has sided intersections. The \textbf{Theorem} \ref{thm:TensorConstraint} \cite{robeva2020multitrek} for tensor constraint still works when the dimension number is even.  In \textbf{Theorem} \ref{thm:tensorconstraintlinearunderkchokeset} I show that this constraint is preserved assuming the model with only linearity under \textbf{k-choke set}.  It would be interesting to determine if there are causal learning algorithm using the tensor constraints, maybe only in the even-dimensional situation, to identify the existence of latent variables and clusters.

\chapter{Appendix}

This chapter gives the proofs of some theoretic results in previous chapters that are not presented directly in the chapter. The the notations follow what are given in the corresponding chapters.

\section{Proof in Chapter 1}
\begin{proposition}
Recall the Total variation smoothness:\\ 
 \textbf{ TV (Total Variation) Smoothness(L): } Let $\mathcal{P}_{[0,1],TV(L)}$ be the collection of distributions $p_{Y,\mathbf{A}}$, such that for all $\mathbf{a},\mathbf{a'}\in [0,1]^{|\mathbf{A}|}$, we have:
\begin{center}
    $||p_{Y|\mathbf{A=a}}-p_{Y|\mathbf{A=a'}}||_1\leq L||\mathbf{a-a'}||_1$
\end{center}
Consider two vectors of variables $\mathbf{X}$ and $\epsilon$, where $\epsilon$ follows a multivariate standard normal distribution: 
\begin{center}
$\epsilon\sim\mathcal{N}\left(0,\mathbf{\Sigma_\epsilon}\right)$
\end{center} and $\mathbf{X}$ can be written as:  
\begin{center}
    $\mathbf{X}=\mathbf{B^TX}+\epsilon$
\end{center}
Denote $\mathbf{(I-B)^{-T}\mathbf{\Sigma_\epsilon}(I-B)}^{-1}$ with $\Sigma$. Given $Y$ as a variable in $\mathbf{X}$ and $\mathbf{A}$ as a set of variables in $\mathbf{X}$, consider 
\begin{center}
$L\geq\dfrac{2\phi(0)||\Sigma_{Y,\mathbf{A}}\Sigma_{\mathbf{A}\mathbf{A}}^{-1}||_1}{\sqrt{var(Y|\mathbf{A})}}$
\end{center}
where $\Sigma_{Y,\mathbf{A}}$ is a submatrix of $\Sigma$ and $\phi(0)$ is the pdf of the standard normal distribution at $0$.\\ \textcolor{white}{abc}
Then $p_{Y,\mathbf{A}}$ satisfies TV(L) smoothness.

\end{proposition}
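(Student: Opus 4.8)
The plan is to reduce the Total Variation distance between two conditional densities of $Y$ given $\mathbf{A}$ to a one-dimensional Gaussian comparison, and then to control that comparison by a Lipschitz bound in the conditioning value. First I would recall that since $(\mathbf{X},\epsilon)$ is jointly Gaussian via $\mathbf{X}=\mathbf{B}^T\mathbf{X}+\epsilon$ with $\epsilon\sim\mathcal{N}(0,\mathbf{\Sigma}_\epsilon)$, we have $\mathbf{X}\sim\mathcal{N}(0,\Sigma)$ with $\Sigma=\mathbf{(I-B)}^{-T}\mathbf{\Sigma}_\epsilon\mathbf{(I-B)}^{-1}$, and hence for any value $\mathbf{a}$ the conditional law of $Y$ given $\mathbf{A}=\mathbf{a}$ is again Gaussian: $Y\mid\mathbf{A}=\mathbf{a}\sim\mathcal{N}\!\left(\Sigma_{Y,\mathbf{A}}\Sigma_{\mathbf{A}\mathbf{A}}^{-1}\mathbf{a},\ \mathrm{var}(Y\mid\mathbf{A})\right)$, where the conditional variance $\mathrm{var}(Y\mid\mathbf{A})=\Sigma_{YY}-\Sigma_{Y,\mathbf{A}}\Sigma_{\mathbf{A}\mathbf{A}}^{-1}\Sigma_{\mathbf{A},Y}$ does not depend on $\mathbf{a}$. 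So the two densities $p_{Y\mid\mathbf{A}=\mathbf{a}}$ and $p_{Y\mid\mathbf{A}=\mathbf{a}'}$ are Gaussians with equal variance and means $\mu(\mathbf{a})=\Sigma_{Y,\mathbf{A}}\Sigma_{\mathbf{A}\mathbf{A}}^{-1}\mathbf{a}$ and $\mu(\mathbf{a}')=\Sigma_{Y,\mathbf{A}}\Sigma_{\mathbf{A}\mathbf{A}}^{-1}\mathbf{a}'$.

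Next I would invoke the formula for the $L_1$ distance between two equal-variance Gaussians, already recorded in Chapter 2 of the excerpt: $\|p_{Y_1}-p_{Y_2}\|_1 = 2\left(2\Phi\!\left(\frac{|\mu_1-\mu_2|}{2\sqrt{\mathrm{var}(Y)}}\right)-1\right)$. Applying this with $\mu_1=\mu(\mathbf{a})$, $\mu_2=\mu(\mathbf{a}')$, $\mathrm{var}(Y)=\mathrm{var}(Y\mid\mathbf{A})$ gives
\begin{equation}
\|p_{Y\mid\mathbf{A}=\mathbf{a}}-p_{Y\mid\mathbf{A}=\mathbf{a}'}\|_1 = 2\left(2\Phi\!\left(\frac{|\Sigma_{Y,\mathbf{A}}\Sigma_{\mathbf{A}\mathbf{A}}^{-1}(\mathbf{a}-\mathbf{a}')|}{2\sqrt{\mathrm{var}(Y\mid\mathbf{A})}}\right)-1\right).
\end{equation}
Now I would bound the right-hand side using the mean value theorem (or simple concavity of $\Phi$ on $[0,\infty)$): $2\Phi(t)-1 \le 2\phi(0)\,t$ for all $t\ge 0$, since $\frac{d}{dt}(2\Phi(t)-1)=2\phi(t)\le 2\phi(0)$. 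This yields
\begin{equation}
\|p_{Y\mid\mathbf{A}=\mathbf{a}}-p_{Y\mid\mathbf{A}=\mathbf{a}'}\|_1 \le 2\cdot 2\phi(0)\cdot\frac{|\Sigma_{Y,\mathbf{A}}\Sigma_{\mathbf{A}\mathbf{A}}^{-1}(\mathbf{a}-\mathbf{a}')|}{2\sqrt{\mathrm{var}(Y\mid\mathbf{A})}} = \frac{2\phi(0)\,|\Sigma_{Y,\mathbf{A}}\Sigma_{\mathbf{A}\mathbf{A}}^{-1}(\mathbf{a}-\mathbf{a}')|}{\sqrt{\mathrm{var}(Y\mid\mathbf{A})}}.
\end{equation}
Finally, by Hölder's inequality $|\Sigma_{Y,\mathbf{A}}\Sigma_{\mathbf{A}\mathbf{A}}^{-1}(\mathbf{a}-\mathbf{a}')| \le \|\Sigma_{Y,\mathbf{A}}\Sigma_{\mathbf{A}\mathbf{A}}^{-1}\|_\infty\,\|\mathbf{a}-\mathbf{a}'\|_1$ (treating the row vector in the $\ell_\infty$-type operator norm dual to $\ell_1$), and one checks this matches the $\|\cdot\|_1$ matrix-norm notation used in the statement; combining with the hypothesis $L\ge \frac{2\phi(0)\|\Sigma_{Y,\mathbf{A}}\Sigma_{\mathbf{A}\mathbf{A}}^{-1}\|_1}{\sqrt{\mathrm{var}(Y\mid\mathbf{A})}}$ gives $\|p_{Y\mid\mathbf{A}=\mathbf{a}}-p_{Y\mid\mathbf{A}=\mathbf{a}'}\|_1\le L\|\mathbf{a}-\mathbf{a}'\|_1$, which is exactly TV$(L)$ smoothness.

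The only genuinely delicate point — and the step I expect to require the most care — is matching the matrix-norm conventions: the statement writes $\|\Sigma_{Y,\mathbf{A}}\Sigma_{\mathbf{A}\mathbf{A}}^{-1}\|_1$, and I must make sure that whatever norm this denotes is indeed the one for which $|v^T x|\le \|v^T\|\,\|x\|_1$ holds with $\|x\|_1$ the vector $1$-norm; since $\Sigma_{Y,\mathbf{A}}\Sigma_{\mathbf{A}\mathbf{A}}^{-1}$ is a single row, its induced $(\ell_1\to\mathbb{R})$ operator norm is just the max absolute entry, and I would state this identification explicitly so the Hölder step is unambiguous. Everything else is routine: the Gaussianity of conditionals, the closed form for the $L_1$ distance of equal-variance Gaussians, and the elementary bound $2\Phi(t)-1\le 2\phi(0)t$.
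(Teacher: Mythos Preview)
Your proposal is correct and follows essentially the same route as the paper: reduce to the closed-form $L_1$ distance between two equal-variance Gaussians, bound $2\Phi(t)-1$ by $2\phi(0)t$, and control the mean shift by a H\"older-type inequality (the paper calls this step Cauchy--Schwarz and applies it before the $\phi(0)$ bound rather than after, but since $\Phi$ is increasing the order is immaterial). Your explicit remark about the norm convention on $\Sigma_{Y,\mathbf{A}}\Sigma_{\mathbf{A}\mathbf{A}}^{-1}$ is a point the paper leaves implicit.
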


\begin{proof}
Based on the structure equation it is easy to derive that $\mathbf{X}$ follows the distribution:
\begin{center}
$\mathbf{X}\sim\mathcal{N}\left(0,\left[\mathbf{(I-B)\mathbf{\Sigma_\epsilon}^{-1}(I-B)}^T\right]^{-1}\right)$
\end{center}

 Recall that $L_1$ distance between two Gaussian distribution with only different means is:
\begin{center}
    $||Y_1-Y_2||_1 = 2\left(2\Phi(\dfrac{|\mu_1-\mu_2|}{2\sqrt{var(Y)}})-1\right)$
\end{center}
where $Y_1\sim\mathcal{N}\left(\mu_1,var(Y)\right)$ and $Y_2\sim\mathcal{N}\left(\mu_2,var(Y)\right)$.\\ \textcolor{white}{abc}  Now we plug in $Y_1$ as $Y|\mathbf{A=a_1}$ and $Y_2$ as $Y|\mathbf{A=a_2}$.  Fortunately when conditioning on the same set of variable, the resulted Gaussian distribution always have the same covariance. we see:
\begin{center}
    $||Y_1-Y_2||_1 = 2\left(2\Phi(\dfrac{|\Sigma_{Y,\mathbf{A}}\Sigma_{\mathbf{A}\mathbf{A}}^{-1}(\mathbf{a_1-a_2})|}{2\sqrt{var(Y|\mathbf{A})}})-1\right)$
\end{center}
It is commonly know that $\Phi'(x) = \phi(x)\leq \phi(0)$ (which can also be derived with some high school math).  Then we have:
\begin{flalign}
    ||Y_1-Y_2||_1 &= 2\left(2\Phi(\dfrac{|\Sigma_{Y,\mathbf{A}}\Sigma_{\mathbf{A}\mathbf{A}}^{-1}(\mathbf{a_1-a_2})|}{2\sqrt{var(Y|\mathbf{A})}})-1\right)\\ \textcolor{white}{abc}
    &<2\left(2\Phi(\dfrac{||\Sigma_{Y,\mathbf{A}}\Sigma_{\mathbf{A}\mathbf{A}}^{-1}||_1||\mathbf{a_1-a_2}||_1}{2\sqrt{var(Y|\mathbf{A})}})-1\right)\label{inequ:cw}\\ \textcolor{white}{abc}
    &<4\phi(0)\dfrac{||\Sigma_{Y,\mathbf{A}}\Sigma_{\mathbf{A}\mathbf{A}}^{-1}||_1||\mathbf{a_1-a_2}||_1}{2\sqrt{var(Y|\mathbf{A})}}\label{inequ:slope}\\ \textcolor{white}{abc}
    &\leq 2\dfrac{\phi(0)||\Sigma_{Y,\mathbf{A}}\Sigma_{\mathbf{A}\mathbf{A}}^{-1}||_1||\mathbf{a_1-a_2}||_1}{\sqrt{var(Y|\mathbf{A})}}\\ \textcolor{white}{abc}
    &\leq L||\mathbf{a_1-a_2}||_1
\end{flalign}
The step \ref{inequ:cw} is derived according to Cauchy-Schwart Inequality.  As shown in plot \ref{fig:gaussianplot}, the step \ref{inequ:slope} is derived by the fact that $\Phi'(x)<\phi(0)$ for $x>0$.
\begin{figure}
    \centering
    \includegraphics[width=0.6\linewidth]{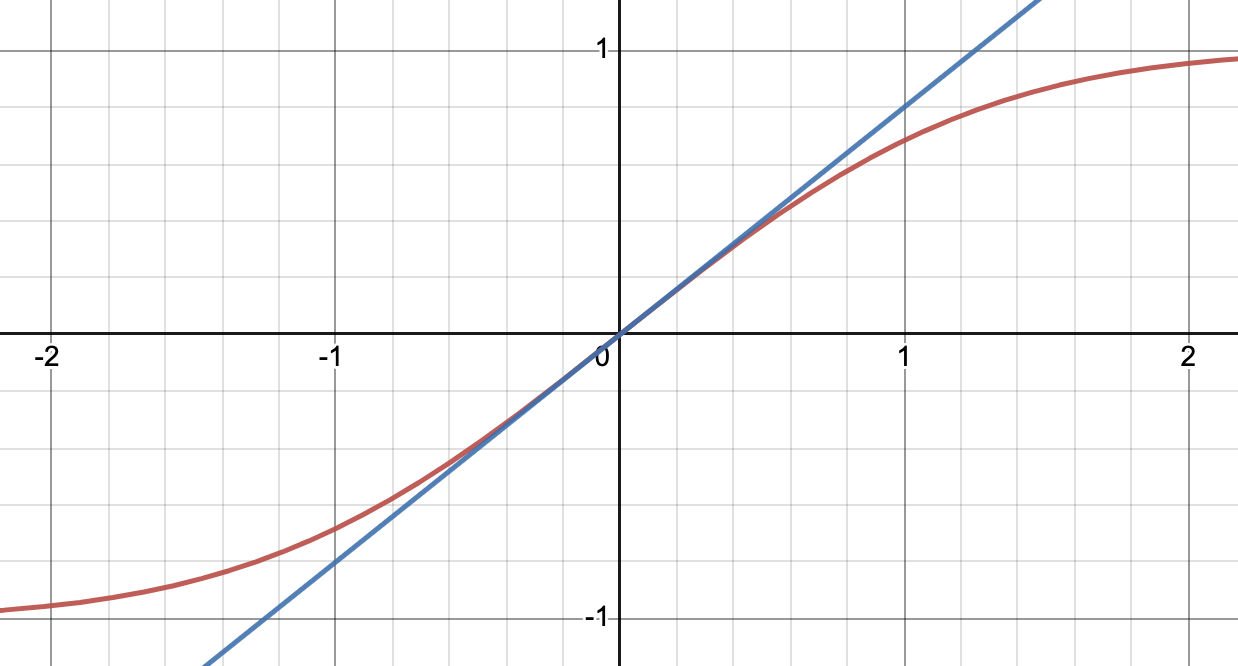}
    \caption{Plot of $f(x)=2\Phi(x)-1$ as the red line and $f(x)=2\phi(0)(x)$ as blue}
    \label{fig:gaussianplot}
\end{figure}
\end{proof}

\section{Proofs in Chapter 2}
\subsection{Proof of Lemma \ref{lemma:deg_cov}}
\textbf{Lemma }\ref{lemma:deg_cov}
\textit{ Following the definition of $i,j$ and $\mathbf{S_\Delta}$, we have:
    \begin{center}
        $deg(cov(X_i,X_j|X_{S_{\Delta}}))\leq 2(|\mathbf{V}|-|\mathbf{S_\Delta}|)$
    \end{center}
    where $\mathbf{Q} =\{i,j\}\bigcup\mathbf{S_\Delta}$.}
\begin{proof}
Recall that the polynomial $P_{ij|S_\Delta}$ of the conditional covariance $cov(X_i, X_j|\mathbf{X_{s_\Delta}})$ has a path description:
\begin{center}
    $P_{ij|S_\Delta} = det(K_{Q^cQ^c})K_{ij}-K_{iQ^c}C(K_{Q^c}K_{Q^c})K_{Q^cj}$
\end{center}
where $K_{ij}=\underset{k:i\rightarrow k\leftarrow j}{\sum}a_{ik}a_{jk}-a_{ij}$.\\ \textcolor{white}{abc}
By Ponstein's theorem, $max\{deg(det(K_{Q^c}K_{Q^c})),deg(C(K_{Q^c}K_{Q^c})_{ij})\}\leq|\mathbf{Q^c}|.$\\ \textcolor{white}{abc}
Therefore $deg(P_{ij|S_\Delta})\leq |\mathbf{Q^c}|+3=|\mathbf{V}|-|\mathbf{S_\Delta}|-2+3< 2(|\mathbf{V}|-|\mathbf{S_\Delta}|)$.
\end{proof}
\subsection{Proof of Theorem \ref{lemma:probtrainglevio}}
\textbf{Theorem} \ref{lemma:probtrainglevio}
\textit{
Let $G_{\{\Delta\}} = \langle \mathbf{V, E}\rangle$ be a DAG where each edge is in at most one triangle and density $d$.  Then,
    \begin{center}
$\mathbb{P}\left((a_{st})_{(s,j)\in\mathbf{E}}\in\mathcal{Q}_{G,K}\right)\geq (1-(1-d^3)^{\lfloor\frac{|\mathbf{V}|}{3}\rfloor}) (1-(1-\dfrac{K}{2})^4)$    
    \end{center}}
\begin{proof}
\begin{align*}
    \mathbb{P}\left((a_{st})_{(s,j)\in\mathbf{E}}\in\mathcal{Q}_{G,K}\right) &= \mathbb{P}(\text{violation of $k$-Triangle-Faithfulness in $G$ with $K$})\\ \textcolor{white}{abc}
    &\geq \mathbb{P}(\text{triangle exists in $G$}) \mathbb{P}(\text{violation of $k$-Triangle-Faithfulness exists in a triangle})
\end{align*}
Interpreting density $d$ as the probability of an edge exists between two vertices, it is obvious that:
\begin{center}
    $\mathbb{P}(\text{triangle exists in $G$})\geq(1-(1-d^3)^{\lfloor\frac{|\mathbf{V}|}{3}\rfloor})$
\end{center}
By lemma \ref{lemma:triangle}: $\mathbb{P}(\text{violation of $k$-Triangle-Faithfulness exists in a triangle})=1-(1-\dfrac{K}{2})^4.$\\ \textcolor{white}{abc}
Therefore $\mathbb{P}\left((a_{st})_{(s,j)\in\mathbf{E}}\in\mathcal{Q}_{G,K}\right)\geq (1-(1-d^3)^{\lfloor\frac{|\mathbf{V}|}{3}\rfloor}) (1-(1-\dfrac{K}{2})^4)$.    
\end{proof}
\section{Proofs in Chapter 3}
\subsection{Proof of Theorem \ref{nonlinearGIN}}
\textbf{Theorem }\ref{nonlinearGIN}\textit{ Suppose that random vectors $\mathbf{L_Y,Y,L_Z}$ and $\mathbf{Z}$ are related in the following way:}
\begin{center}
    $\mathbf{Y=}A\mathbf{L_Y+E_Y}$,
    
    $\mathbf{Z=}B\mathbf{L_Z+E_Z}$.
\end{center}
\textit{Denote by $l_Y$ the dimensionality of $\mathbf{L_Y}$ and  $l_Z$ the dimensionality of $\mathbf{L_Z}$.  Assume $A$ is of full column rank.  Then, if 1) $Dim(\mathbf{Y})>l_Y$,  2) $\mathbf{E_Y}\indep \mathbf{L_Z}$, 3) $\mathbf{E_Y}\indep \mathbf{E_Z}$, 4) the cross-covariance matrix of $\mathbf{L_Y}$ and $\mathbf{Z}$, $\Sigma_{\mathbf{L_Y Z}}=\mathbb{E}[\mathbf{L_{Y}Z}^T]$ has rank $l_Y$, then $E_{\mathbf{Y\parallel Z}}\indep \mathbf{Z}$, i.e.,$(\mathbf{Z,Y})$ satisfies the GIN condition.}

\begin{proof}
Without loss of generality, assume that each component of $\mathbf{L_Y}$ and $\mathbf{L_Z}$ has a zero mean, and that both $\mathbf{E_Y}$ and $\mathbf{E_Z}$ are zero-mean.  If we can find a non-zero vector $\omega$ such that $w^TA=0$, then $\omega^T\mathbf{Y}=\omega^TA\mathbf{L_Y}+\omega^T\mathbf{E_Y}=\omega^T\mathbf{E_Y}$, which will be independent from $\mathbf{Z}$ in light of condition 2) and 3), so the GIN condition of $\mathbf{Y}$ given $\mathbf{Z}$ holds true.\\ \textcolor{white}{abc}
If condition 2) and 3) hold, we have $\mathbb{E}[\mathbf{YZ}^T]=A\Sigma_{\mathbf{L_YZ} }$, which is determined by $(\mathbf{Y,Z} )$.   Now we want to show for any non-zero vector $\omega$, $\omega^TA=0$ if and only if $\omega^TA\Sigma_{\mathbf{L_YZ^T} }=0$.\\ \textcolor{white}{abc}
$\Rightarrow$: trivially true, since $\omega^TA=0$.\\ \textcolor{white}{abc}
$\Leftarrow$: notice that by condition 1), rank$(A\Sigma_{L_YZ})\leq l_Y$ because rank$(A\Sigma_{L_YZ})\leq min(rank(A),rank(\Sigma_{L_YZ}))$ and rank$(A)=l_Y$.  Further according to the Sylvester Rank Inequality, if $A$ is an $m\times n$ matrix and $B$ is $n\times k$ then: 
\begin{center}
    $rank(A)+rank (B)-n\leq rank(AB)$,
\end{center}
 we have rank$(A\Sigma_{L_YZ})\geq$ rank$(A)+$rank$(\Sigma_{L_YZ})-l_Y=l_Y$.  Therefore, rank$(A\Sigma_{L_YZ})=l_Y$. Because of condition 1), there exists a non-zero vector $\omega$, determined by $\mathbf{(Y,Z)}$, such that $\omega^T\mathbb{E}[\mathbf{YZ}^T]=\omega^T A\Sigma_{L_YZ}=0$, which implies $\omega^TA=0$ because $\Sigma_{L_YZ}$ has $l_Y$ rows and has rank $l_Y$.  With this $\omega$, so we have $E_{Y\parallel Z}=\omega^T\mathbf{E_Y}$ and is independent from $\mathbf{Z}$.
\end{proof}
\subsection{Proof of Lemma \ref{lemma:rankcyclic0}}
\textbf{Lemma }\ref{lemma:rankcyclic0}
\textit{
For any $X\in \mathcal{D}_{C_A\rightarrow A}\cup \mathbf{A}$ and variable $Y\in \mathcal{D}_{C_B\rightarrow B}\cup \mathbf{B}$, $\Phi_{X,Y} = 0$.}
\begin{proof}
Recall that:
    \begin{equation}
    X = L_{X^.}\mathbf{X} + \epsilon _{G'_X}
    \end{equation}
    \begin{equation}
    Y = L_{Y^.}\mathbf{X} + \epsilon_{G'_Y}
    \end{equation}
where $ L_{X^.}$ is the row of $L$ corresponding to $X$. Since $G'$ is a subgraph of $G$ and $L$ is the coefficient matrix of edges in $G'$ that are linear, we can further write:
    \begin{equation}
        \epsilon_{G'_X} = \epsilon_X + f(Pa_{G\setminus G'}(X))
    \end{equation}
    \begin{equation}
        \epsilon_{G'_Y} = \epsilon_Y + f(Pa_{G\setminus G'}(Y))
    \end{equation}
where $\epsilon_X$ is the additive noise of $X$ in the original graph $G$ and $Pa_{G\setminus G'}$ denotes the parents of $X$ in $G$ that are not parents of $X$ in $G'$.\\ \textcolor{white}{abc}
By definition $\Phi_{X,Y} = cov(\epsilon_{G'_X}, \epsilon_{G'_Y})$. Assuming that $cov(\epsilon_{G'_X},\epsilon_{G'_Y})\neq 0$, we know that the $\epsilon_{G'_X}$ and $\epsilon_{G'_Y}$ are only dependent in one of these cases:

\begin{enumerate}
    \item $f(Pa_{G\setminus G'}(X))$ and $f(Pa_{G\setminus G'}(Y))$ are dependent
    \item $\epsilon_X$ and $f(Pa_{G\setminus G'}(Y))$ are dependent 
    \item $\epsilon_Y$ and $f(Pa_{G\setminus G'}(X))$ are dependent 
\end{enumerate}
\textit{Case 1} is only possible if $X$ and $Y$ have some parents that are not $A-side$ and $B-side$ and their parents are dependent. Since the non $A-side$ parent of $X$ and non $B-side$ parent of $Y$ are dependent, we know that they are connect by a trek that is not having $(C_A;C_B)$ as a choke set.  Since $X$ and $Y$ are treks to $A$ and $B$, we now have a new trek that is not having $(C_A;C_B)$ as a choke set. Contradiction.\\ \textcolor{white}{abc} 
\textit{Case 2} and $3$ are symmetric so we just analyze \textit{Case 2}.  This case only happens if $X$ is an ancestor of some parents of $Y$ that are non $B-side$ .  This introduces a new trek that is not having $(C_A;C_B)$ as a choke set.
Contradiction.
\end{proof}
\subsection{Proof of Lemma \ref{lemma:atleast0}}

\textbf{Lemma }\ref{lemma:atleast0}
 \textit{   For all $\mathbf{R,S\subset \mathbf{X_{G'}}}$, if $det(I-L)^{-1}_{\mathbf{R,A}} \neq 0$ and $det(I-L)^{-1}_{\mathbf{S,B}}\neq 0$, then $det\Phi_{\mathbf{R,S}}=0$.}
\begin{proof}
Let $|\mathbf{A}| = n$.  Let $\mathbf{R_{C_A}}$ be the set of variables in $\mathbf{R}$ that $only$ have directed paths to $\mathbf{A}$ that intersect with $\mathbf{C_A}$ and $\mathbf{R_A}$ have directed paths to $\mathbf{A}$ that does not intersect with $\mathbf{C_A}$.  Similarly, we have $\mathbf{S_B}$  and $\mathbf{S_{C_B}}$.  By \textbf{Lemma \ref{lm:detsysdipath}} we know $\mathbf{R_{C_A}}$ contains no more variables than $\mathbf{C_A}$ since otherwise there will be a system of directed paths from $\mathbf{R}$ to $\mathbf{A}$ with intersecting vertices. \\ \textcolor{white}{abc} 
  Let $a_1 = |\mathbf{C_A}|$. Then $|\mathbf{R_A}| \geq n - a_1$.\\ \textcolor{white}{abc}
Similarly, having $b_1 = |\mathbf{C_B}|$ and $\mathbf{S_B}$ the set of variables that have directed paths to $\mathbf{B}$ that does not intersect with $\mathbf{C_B}$, we have $|\mathbf{S_B}|\geq n - b_1$.\\ \textcolor{white}{abc}
Now we have the following inequalities:
\begin{enumerate}
    \item $a_1 + b_1<n$
    \item $|\mathbf{S_B}|\geq n - b_1$
    \item $|\mathbf{R_A}| \geq n - a_1$
    \item $|\mathbf{R_A|+|R_{C_A}|} = n$
    \item $|\mathbf{S_B|+|S_{C_B}|} = n$
    \item $\mathbf{|R_{C_A}|}\leq a_1$
    \item $\mathbf{|S_{C_B}|}\leq b_1$
\end{enumerate}

Combining the inequality $1-7$ about and we get:
\begin{enumerate}
    \item $|\mathbf{R_A|} > \mathbf{|S_{C_B}|}$ 
    \item$\mathbf{|S_B}|>|\mathbf{R_{C_A}}|$
\end{enumerate}
Since singularity is preserved under the row and column permutation, we order columns and rows in $\Phi_{\mathbf{R,S}}$ as:

\[
\left(\begin{array}{@{}c c@{}}
  \bigA
  & \bigB \\ 
  \bigC 
  & \bigD \\ 
\end{array}\right)
\]where block $A=\Phi_{\mathbf{R_{A},S_{B}}}$ and $D=\Phi_{\mathbf{R_{C_A},S_{C_B}}}$.
By \textbf{Lemma \ref{lemma:rankcyclic0}}, we know that for every $X\in\mathbf{R_{C_A}}$
and $Y\in \mathbf{S_{C_B}}$,
$cov(\epsilon_{G'_X},\epsilon_{G'_Y})=0$ so $\mathbf{D}$ is a zero matrix. 
Notice that $A$ is also a zero matrix: recall that $A$ as a submatrix of $\Phi$, for any $A_{ij}$ to be nonzero it has to be the case that there are some common causes between members in $\mathbf{R_A}$ and $\mathbf{S_B}$ in the original graph $G$, which introduces a new trek that does not have $(C_A;C_B)$ as a choke set.\\ \textcolor{white}{abc}
Having top left and bottom right submatrix zero, and $|\mathbf{R_A|} > \mathbf{|S_{C_B}|}$, $\Phi_{\mathbf{R,S}}$ has linearly dependent rows, therefore has a determinant of 0.

\end{proof}

\subsection{Proof of Lemma \ref{GINdsep}}
\textbf{Lemma }\ref{GINdsep}
\textit{
         Consider two sets of variables $\mathbf{Z, Y}$ in a $L^2HCM$ model. Assume faithfulness holds for the $L^2HCM$. If there are any treks between $\mathbf{Z}$ and $\mathbf{Y}$, $(\mathbf{Z, Y})$ satisfies GIN iff there exists a $\mathbf{C_Y}$, s.t. $(\emptyset, \mathbf{C_Y})$ t-separates $(\mathbf{Z, Y})$ and $\mathbf{|C_Y|<|Y|}$.
}
\begin{proof}
    $\leftarrow:$ Since $\mathbf{C_Y}$ is a set of ancestors of $\mathbf{Y}$, we can write $\mathbf{Y}=A\mathbf{C_Y}+\epsilon$.  We further know that $\epsilon$ is independent from $\mathbf{Z}$ since otherwise there is another trek connecting $(\mathbf{Z, Y})$ that does not include $\mathbf{C_Y}$, which means $(\emptyset,\mathbf{C_Y})$ does not t-separate $(\mathbf{Z, Y})$. Since $\mathbf{|C_Y|<|Y|}$, $A$ has a shape of $\mathbf{|Y|\times|C_Y|}$ and there exists a vector $\omega$ s.t. $\omega^T A=0$.  Therefore $\omega^T\mathbf{Y}\indep\mathbf{Z}$ so $(\mathbf{Z, Y})$ satisfies GIN.\\ \textcolor{white}{abc}
    $\rightarrow:$ We pursue the proof by contrapositive. Assume that for all $\mathbf{C_Y}$, s.t. $(\emptyset, \mathbf{C_Y})$ t-separates $(\mathbf{Z, Y})$, $\mathbf{|C_Y|\geq|Y|}$.  Since $\mathbf{Y}$ is a vector of descendants of $\mathbf{C_Y}$, we can write $\mathbf{Y}$ as $\mathbf{Y}=A\mathbf{C_Y}+\epsilon$ with $A$ has a shape of $\mathbf{|Y|\times|C_Y|}$ where $\mathbf{|Y|\leq|C_Y|}$ with full column rank.  There does not exists nonzero vector $\omega$ s.t. $\omega^TA=0$.  Since there are treks connecting $\mathbf{C_Y}$ and $\mathbf{Z}$, they share commen non-Gaussian variables and by Darmois-Skitovitch Theorem we know for all nonzero $\omega$, $\omega^T\mathbf{Y}\nindep\mathbf{Z}$, so $(\mathbf{Z, Y})$ cannot satisfy GIN. 
\end{proof}

\subsection{Proof of Theorem \ref{thm:GINcollider2}}

\textbf{Theorem }\ref{thm:GINcollider2}
\textit{Consider a $G=\langle \mathbf{V, E}\rangle$ follows a linear and acyclic graph with $LiNGLaM$. Denote latents of the cluster $\mathcal{C}$ by $L_{\mathcal{C}}$ and the measured children in cluster $\mathcal{C}$ by $\{X^{\mathcal{C}}_1...X^{\mathcal{C}}_m\}$. Given clusters $\mathcal{C}_i, \mathcal{C}_a, \mathcal{C}_b$, if:
\begin{enumerate}[label=(\roman*)]
\item Let $\mathcal{C}_j$ be the union of all clusters causally ealier than $\mathcal{C}_a, \mathcal{C}_b$. \newline  Neither $(\{X^{\mathcal{C}_a}_1,...X^{\mathcal{C}_a}_{|L_{\mathcal{C}_a}|},X^{\mathcal{C}_b}_{|L_{\mathcal{C}_b}|+1},...X^{\mathcal{C}_b}_{2|L_{\mathcal{C}_b}|},X^{\mathcal{C}_j}_{|L_{\mathcal{C}_j}|+1},...X^{\mathcal{C}_j}_{2|L_{\mathcal{C}_j}|}\},\{X^{\mathcal{C}_b}_1,...X^{\mathcal{C}_b}_{|L_{\mathcal{C}_b}|}\})$ nor\newline  $(\{X^{\mathcal{C}_b}_1,...X^{\mathcal{C}_b}_{|L_{\mathcal{C}_b}|},X^{\mathcal{C}_a}_{|L_{\mathcal{C}_a}|+1},...X^{\mathcal{C}_a}_{2|L_{\mathcal{C}_a}|},X^{\mathcal{C}_j}_{|L_{\mathcal{C}_j}|+1},...X^{\mathcal{C}_j}_{2|L_{\mathcal{C}_j}|}\},\{X^{\mathcal{C}_a}_1,...X^{\mathcal{C}_a}_{|L_{\mathcal{C}_a|}}\})$ satisfies GIN condition.
\item $rank(\Sigma_{\{X^{\mathcal{C}_i}\},\{X^{\mathcal{C}_a}\}\cup\{X^{\mathcal{C}_b}\}})<min(|L_{\mathcal{C}_a}|,|L_{\mathcal{C}_b}|)$.
\item given a $\omega$ s.t. $[X^{\mathcal{C}_b}_1,...X^{\mathcal{C}_b}_{|L_{\mathcal{C}_b}|}]^T\omega\indep [X^{\mathcal{C}_i}_{|L_{\mathcal{C}_i}|+1},...X^{\mathcal{C}_i}_{2|L_{\mathcal{C}_i}|}]$, $[X^{\mathcal{C}_b}_1,...X^{\mathcal{C}_b}_{|L_{\mathcal{C}_b}|}]^T\omega\nindep [X^{\mathcal{C}_a}_1,...X^{\mathcal{C}_a}_{|L_{\mathcal{C}_a}|}]$
\item given a $\theta$ s.t. $[X^{\mathcal{C}_a}_1,...X^{\mathcal{C}_a}_{|L_{\mathcal{C}_a}|}]^T\theta\indep [X^{\mathcal{C}_i}_{|L_{\mathcal{C}_i}|+1},...X^{\mathcal{C}_i}_{2|L_{\mathcal{C}_i}|}]$, $[X^{\mathcal{C}_a}_1,...X^{\mathcal{C}_a}_{|L_{\mathcal{C}_a}|}]^T\theta\nindep [X^{\mathcal{C}_b}_1,...X^{\mathcal{C}_b}_{|L_{\mathcal{C}_b}|}]$.
\end{enumerate}
then $\mathcal{C}_a$ and $\mathcal{C}_b$ has cycles between blocks of latents.}
\begin{proof} 
$(i)$ indicates that in addition to common causes, it is neither simply $L_{\mathcal{C}_a}\leftarrow L_{\mathcal{C}_b}$ nor simply $L_{\mathcal{C}_a}\rightarrow L_{\mathcal{C}_b}$.  $(ii)$ indicates that $\mathcal{C}_i$ is d-separated by $\mathcal{C}_b$ and $\mathcal{C}_a$ by a set with size less than $min(|L_{\mathcal{C}_a}|,|L_{\mathcal{C}_b}|)$. By \textbf{Theorem \ref{GINdsep}}, both $(\{X^{\mathcal{C}_i}_{|\mathcal{C}_i|+1},...X^{\mathcal{C}_i}_{2|\mathcal{C}_i|}\},\{X^{\mathcal{C}_b}_1,...X^{\mathcal{C}_b}_{|L_{\mathcal{C}_b}|}\})$ and\newline  $(\{X^{\mathcal{C}_i}_{|\mathcal{C}_i|+1},...X^{\mathcal{C}_i}_{2|\mathcal{C}_i|}\},\{X^{\mathcal{C}_a}_1,...X^{\mathcal{C}_a}_{|L_{\mathcal{C}_a}|}\})$ satisfy GIN .   $(iv)$ shows that $[X^{\mathcal{C}_b}_1,...X^{\mathcal{C}_b}_{|L_{\mathcal{C}_b}|}]^T\omega$ functions as removing edges between $L_{\mathcal{C}_b}$ and $L_{\mathcal{C}_i}$ and there are colliders between $L_{\mathcal{C}_i}$ and $[X^{\mathcal{C}_b}_1,...X^{\mathcal{C}_b}_{|L_{\mathcal{C}_b}|}]^T\omega$.  Since $\{X^{\mathcal{C}_b}\}$ are only connected to other variables through $L_{\mathcal{C}_b}$, we know that there are colliders between $L_{\mathcal{C}_b}$ and $L_{\mathcal{C}_i}$, which can only be $L_{\mathcal{C}_a}$.  Therefore $L_{\mathcal{C}_a}\leftarrow L_{\mathcal{C}_b}$ exists between some latents in $\mathcal{C}_a$ and $\mathcal{C}_b$.  Similarly, $(v)$ indicates that some $L_{\mathcal{C}_b}$ are colliders between  $L_{\mathcal{C}_a}$ and $L_{\mathcal{C}_i}$.  Therefore there are cycles between blocks.
\end{proof}

\section{ Proofs in Chapter 4}
\subsection{Proof of Lemma \ref{lm:det0tensor}}
\textbf{Lemma }\ref{lm:det0tensor}\textit{ $\mathcal{E}^{(k)}_{G'_{u_1,\dots,u_k}} = 0$ for any $u_1,\dots,u_k\in \mathbf{R_{U_1},\dots,R_{U_k}}$
}

\begin{proof}
    We use these two properties of higher order joint cumulant:
\begin{enumerate}
    \item vanishing property 
    \item the multilinearity 
\end{enumerate}
Recall that $\mathcal{E}^{(k)}_{u_1,\dots,u_k}$ is the $k-th$ cumulant of the vector $(\epsilon_{G'_{u_1}},\dots,\epsilon_{G'_{u_k}})$. Notice that  $\epsilon_{G'_{u_i}}$ consists of two parts: the noise to $u_i$ in $G$ and the function of parents of $u_i$ that is not included in $G'$ or included in $G'$ but the edge is not linear.  In particular, $\epsilon_{G'_{u_i}} = \sum_j f_j(Pa_{G\setminus G'_L}(u_i)_j)+\epsilon_{u_i}$ where $Pa_{G\setminus G'_L}(u_i)_j$ is a parent of $u_i$ of which the edge to $u_i$ is not included in the coefficient matrix $L$.

With a little work we can see that
\begin{equation}
 \epsilon_{G'_{u_i}} = \sum_j f_j(\epsilon_{Anc_{G\setminus G'_L}(u_i)_j})+\epsilon_{u_i}
\end{equation}
where $Anc_{G\setminus G'_L}(u_i)_j$ is either an ancestor of parents of $u_i$ of which the edge to $u_i$ is not included in $L$ or a parent as $Pa_{G\setminus G'_L}(u_i)_j$.

Denote $\kappa$ as the k-th order cumulant. By the multilinearity of joint cumulant, we further have 
\begin{equation}
\mathcal{E}^{(k)}_{u_1,\dots,u_k} = \kappa(\epsilon_{G'_{u_1}},\dots,\epsilon_{G'_{u_k}}) = \sum_{\substack{w_1,\dots,w_k\in\\ \textcolor{white}{abc} \mathbf{Anc_{G\setminus G'_L}(u_1),\dots,\mathbf{Anc_{G\setminus G'_L}(u_k)}}}}\kappa(\epsilon_{w_1},\dots,\epsilon_{w_k})    
\end{equation}
Since there is no $k-trek$ on $u_1,\dots,u_k$, there is at least one $w_i$ for each $\kappa(\epsilon_{w_1},\dots,\epsilon_{w_k})$ that is different from other $w_{j\neq i}$, which means that $\epsilon_{w_i}$ is independent from all the other noise variables in the cumulant. By the vanishing property of joint cumulant, we have $\kappa(\epsilon_{w_1},\dots,\epsilon_{w_k})=0$ for every $\epsilon_{w_1},\dots,\epsilon_{w_k}$, which means that $\mathcal{E}^{(k)}_{u_1,\dots,u_k}$ is identically zero.\\ \textcolor{white}{abc}
\end{proof}

\subsection{Example of Desirable Property of Hyperdeterminant}
The example in this section comes from the paper \cite{amanov2021tensorrank} discussing the tensor rank and hyperdeterminant.\\ 
\textit{\textbf{Slices:}}  Consider a $d-$dimensional tensor $T$ with shape $n^d$.  For each direction $k\in[d]$, a \textit{parallel} $(d-1)-$dimensional \textit{slices} $T_1^{(k)},...T_n^{(k)}$ of $T$ given by fixing the $k-$th coordinate:
\begin{center}
    $T_l^{(k)}(i_1,...,i_{k-1},i_{k+1},...,i_d)=T(i_1,...,i_{k-1},i_{k+1},...,i_d)$ for $l=1,...,n$.
\end{center}
$T$ is a diagonal identity tensor if $T(i_1,...,i_d)=1$ if $i_1 = ...= i_d$ and $T(i_1,...,i_d)=0$, otherwise.\\ 
\textit{\textbf{Desirable Property of hyperdeterminant.}} Let $T$ be the a $d-$dimensional tensor with shape $n^d$.  The hyperdeterminant of $T$, $detT$, is expected to satisfy:
\begin{enumerate}
    \item Skew-symmetry:  If $T'$ is obtained from $T$ by swapping any two parallel slices. then
    \begin{center}
        $detT=-detT'$
    \end{center}
    \item Normalization: If $T$ is a diagonal identity tensor, then \begin{center}
        $detT = 1$.
    \end{center}
\end{enumerate}
If $d$ is even, the combinatorial hyperdeterminant of $T$ satisfies both properties.  If $d$ is odd, then the combinatorial hyperdeterminnat of $T$ does not satisfy both properties.
\backmatter


\renewcommand{\bibsection}{\chapter{\bibname}}
\bibliographystyle{plainnat}
\bibliography{bib} 

\begin{thebibliography}{43}
\providecommand{\natexlab}[1]{#1}
\providecommand{\url}[1]{\texttt{#1}}
\expandafter\ifx\csname urlstyle\endcsname\relax
  \providecommand{\doi}[1]{doi: #1}\else
  \providecommand{\doi}{doi: \begingroup \urlstyle{rm}\Url}\fi

\bibitem[Amanov and Yeliussizov(2021)]{amanov2021tensorrank}
Alimzhan Amanov and Damir Yeliussizov.
\newblock Tensor slice rank and cayley's first hyperdeterminant, 2021.

\bibitem[Bartholomew et~al.(2011)Bartholomew, Knott, and Moustaki]{FALV}
David~J. Bartholomew, M.~(Martin) Knott, and Irini. Moustaki.
\newblock \emph{Latent variable models and factor analysis : a unified
  approach}.
\newblock John Wiley \& Sons, Ltd., Chichester, West Sussex, 2011.
\newblock ISBN 9780470971925.

\bibitem[Cayley(1844)]{cayley1844theory}
A.~Cayley.
\newblock \emph{On the Theory of Determinants}.
\newblock Pitt Press, 1844.
\newblock URL \url{https://books.google.com/books?id=TrfQzQEACAAJ}.

\bibitem[Comon and Jutten(2010)]{bookblindsource}
Pierre Comon and Christian Jutten.
\newblock \emph{Handbook of Blind Source Separation, Independent Component
  Analysis and Applications}.
\newblock 02 2010.
\newblock ISBN 9780123747266.
\newblock \doi{10.1016/C2009-0-19334-0}.

\bibitem[Cox and Wermuth(2002)]{cox}
D.~R. Cox and Nanny Wermuth.
\newblock {On some models for multivariate binary variables parallel in
  complexity with the multivariate Gaussian distribution}.
\newblock \emph{Biometrika}, 89\penalty0 (2):\penalty0 462--469, 06 2002.
\newblock ISSN 0006-3444.
\newblock \doi{10.1093/biomet/89.2.462}.
\newblock URL \url{https://doi.org/10.1093/biomet/89.2.462}.

\bibitem[Danks and Glymour(2013)]{binaryTetrad}
David Danks and Clark Glymour.
\newblock Linearity properties of bayes nets with binary variables.
\newblock \emph{CoRR}, abs/1301.2263, 2013.
\newblock URL \url{http://arxiv.org/abs/1301.2263}.

\bibitem[{De Lathauwer}(2010)]{moment}
L.~{De Lathauwer}.
\newblock Chapter 5 - algebraic methods after prewhitening.
\newblock In P.~Comon and C.~Jutten, editors, \emph{Handbook of Blind Source
  Separation}, pages 155--177. Academic Press, Oxford, 2010.
\newblock ISBN 978-0-12-374726-6.
\newblock \doi{https://doi.org/10.1016/B978-0-12-374726-6.00010-2}.
\newblock URL
  \url{https://www.sciencedirect.com/science/article/pii/B9780123747266000102}.

\bibitem[Deary et~al.(2001)Deary, Der, and Ford]{DEARY2001389}
Ian~J Deary, Geoff Der, and Graeme Ford.
\newblock Reaction times and intelligence differences: A population-based
  cohort study.
\newblock \emph{Intelligence}, 29\penalty0 (5):\penalty0 389--399, 2001.
\newblock ISSN 0160-2896.
\newblock \doi{https://doi.org/10.1016/S0160-2896(01)00062-9}.
\newblock URL
  \url{https://www.sciencedirect.com/science/article/pii/S0160289601000629}.

\bibitem[Ding et~al.(2019)Ding, Gong, Zhang, and
  Tao]{ding2019likelihoodfreeICA}
Chenwei Ding, Mingming Gong, Kun Zhang, and Dacheng Tao.
\newblock Likelihood-free overcomplete ica and applications in causal
  discovery, 2019.

\bibitem[Glymour(1998)]{Glymour1998-GLYWWW}
Clark Glymour.
\newblock What went wrong? reflections on science by observation and the bell
  curve.
\newblock \emph{Philosophy of Science}, 65\penalty0 (1):\penalty0 1--32, 1998.
\newblock \doi{10.1086/392624}.

\bibitem[Goodfellow et~al.(2016)Goodfellow, Bengio, and
  Courville]{Goodfellow-et-al-2016}
Ian Goodfellow, Yoshua Bengio, and Aaron Courville.
\newblock \emph{Deep Learning}.
\newblock MIT Press, 2016.
\newblock http://www.deeplearningbook.org.

\bibitem[Hogan(2005)]{personality}
Robert Hogan.
\newblock In defense of personality measurement: New wine for old whiners.
\newblock \emph{Human Performance}, 18\penalty0 (4):\penalty0 331--341, 2005.
\newblock \doi{10.1207/s15327043hup1804\_1}.
\newblock URL \url{https://doi.org/10.1207/s15327043hup1804_1}.

\bibitem[Hoyer et~al.(2008)Hoyer, Shimizu, Kerminen, and
  Palviainen]{HOYER2008362}
Patrik~O. Hoyer, Shohei Shimizu, Antti~J. Kerminen, and Markus Palviainen.
\newblock Estimation of causal effects using linear non-gaussian causal models
  with hidden variables.
\newblock \emph{International Journal of Approximate Reasoning}, 49\penalty0
  (2):\penalty0 362--378, 2008.
\newblock ISSN 0888-613X.
\newblock \doi{https://doi.org/10.1016/j.ijar.2008.02.006}.
\newblock URL
  \url{https://www.sciencedirect.com/science/article/pii/S0888613X08000212}.
\newblock Special Section on Probabilistic Rough Sets and Special Section on
  PGM’06.

\bibitem[Huang et~al.(2022)Huang, Low, Xie, Glymour, and
  Zhang]{huang2022latent}
Biwei Huang, Charles Jia~Han Low, Feng Xie, Clark Glymour, and Kun Zhang.
\newblock Latent hierarchical causal structure discovery with rank constraints,
  2022.

\bibitem[Janzing et~al.(2013)Janzing, Balduzzi, Grosse-Wentrup, and
  Schölkopf]{Janzing_2013}
Dominik Janzing, David Balduzzi, Moritz Grosse-Wentrup, and Bernhard
  Schölkopf.
\newblock Quantifying causal influences.
\newblock \emph{The Annals of Statistics}, 41\penalty0 (5):\penalty0
  2324--2358, 2013.
\newblock ISSN 00905364, 21688966.
\newblock URL \url{http://www.jstor.org/stable/23566552}.

\bibitem[Julious and Mullee(1994)]{kidney}
Steven~A Julious and Mark~A Mullee.
\newblock Confounding and simpson{\textquoteright}s paradox.
\newblock \emph{BMJ}, 309\penalty0 (6967):\penalty0 1480--1481, 1994.
\newblock ISSN 0959-8138.
\newblock \doi{10.1136/bmj.309.6967.1480}.
\newblock URL \url{https://www.bmj.com/content/309/6967/1480}.

\bibitem[Kalisch and B{{\"u}}hlmann(2007)]{JMLR:v8:kalisch07a}
Markus Kalisch and Peter B{{\"u}}hlmann.
\newblock Estimating high-dimensional directed acyclic graphs with the
  pc-algorithm.
\newblock \emph{Journal of Machine Learning Research}, 8\penalty0
  (22):\penalty0 613--636, 2007.
\newblock URL \url{http://jmlr.org/papers/v8/kalisch07a.html}.

\bibitem[Kummerfeld and Ramsey(2016{\natexlab{a}})]{FOFC}
Erich Kummerfeld and Joseph Ramsey.
\newblock Causal clustering for 1-factor measurement models.
\newblock In \emph{Proceedings of the 22nd ACM SIGKDD International Conference
  on Knowledge Discovery and Data Mining}, KDD '16, page 1655–1664, New York,
  NY, USA, 2016{\natexlab{a}}. Association for Computing Machinery.
\newblock ISBN 9781450342322.
\newblock \doi{10.1145/2939672.2939838}.
\newblock URL \url{https://doi.org/10.1145/2939672.2939838}.

\bibitem[Kummerfeld and
  Ramsey(2016{\natexlab{b}})]{Kummerfeld:2016:CCM:2939672.2939838}
Erich Kummerfeld and Joseph Ramsey.
\newblock Causal clustering for 1-factor measurement models.
\newblock In \emph{Proceedings of the 22Nd ACM SIGKDD International Conference
  on Knowledge Discovery and Data Mining}, KDD '16, pages 1655--1664, New York,
  NY, USA, 2016{\natexlab{b}}. ACM.
\newblock ISBN 978-1-4503-4232-2.
\newblock \doi{10.1145/2939672.2939838}.
\newblock URL \url{http://doi.acm.org/10.1145/2939672.2939838}.

\bibitem[Kummerfeld et~al.(2014)Kummerfeld, Ramsey, Yang, Spirtes, and
  Scheines]{FTFC}
Erich Kummerfeld, Joe Ramsey, Renjie Yang, Peter Spirtes, and Richard Scheines.
\newblock Causal clustering for 2-factor measurement models.
\newblock In Toon Calders, Floriana Esposito, Eyke H{\"u}llermeier, and Rosa
  Meo, editors, \emph{Machine Learning and Knowledge Discovery in Databases},
  pages 34--49, Berlin, Heidelberg, 2014. Springer Berlin Heidelberg.
\newblock ISBN 978-3-662-44851-9.

\bibitem[McFarland(2017)]{WAIS-IV}
Dennis~J. McFarland.
\newblock Evaluation of multidimensional models of wais-iv subtest performance.
\newblock \emph{The Clinical Neuropsychologist}, 31\penalty0 (6-7):\penalty0
  1127--1140, 2017.
\newblock \doi{10.1080/13854046.2017.1320426}.
\newblock URL \url{https://doi.org/10.1080/13854046.2017.1320426}.
\newblock PMID: 28430031.

\bibitem[PEARL(1998)]{doi:10.1177/0049124198027002004SEM}
JUDEA PEARL.
\newblock Graphs, causality, and structural equation models.
\newblock \emph{Sociological Methods \& Research}, 27\penalty0 (2):\penalty0
  226--284, 1998.
\newblock \doi{10.1177/0049124198027002004}.
\newblock URL \url{https://doi.org/10.1177/0049124198027002004}.

\bibitem[Robeva and Seby(2020)]{robeva2020multitrek}
Elina Robeva and Jean-Baptiste Seby.
\newblock Multi-trek separation in linear structural equation models, 2020.

\bibitem[Robins et~al.(2003)Robins, Scheines, Spirtes, and
  Wasserman]{pointconsisitencyrobins}
James~M. Robins, Richard Scheines, Peter Spirtes, and Larry Wasserman.
\newblock Uniform consistency in causal inference.
\newblock \emph{Biometrika}, 90\penalty0 (3):\penalty0 491--515, 2003.
\newblock ISSN 00063444.
\newblock URL \url{http://www.jstor.org/stable/30042062}.

\bibitem[Shimizu et~al.(2006)Shimizu, Hoyer, Hyv\"{a}rinen, and
  Kerminen]{JMLR:v7:shimizu06a}
Shohei Shimizu, Patrik~O. Hoyer, Aapo Hyv\"{a}rinen, and Antti Kerminen.
\newblock A linear non-gaussian acyclic model for causal discovery.
\newblock \emph{Journal of Machine Learning Research}, 7\penalty0
  (72):\penalty0 2003--2030, 2006.
\newblock URL \url{http://jmlr.org/papers/v7/shimizu06a.html}.

\bibitem[Silva and Shimizu(2017)]{conditionaltetrad}
Ricardo Silva and Shohei Shimizu.
\newblock Learning instrumental variables with structural and non-gaussianity
  assumptions.
\newblock \emph{Journal of Machine Learning Research}, 18\penalty0
  (120):\penalty0 1--49, 2017.
\newblock URL \url{http://jmlr.org/papers/v18/17-014.html}.

\bibitem[Silva et~al.(2006)Silva, Scheine, Glymour, and
  Spirtes]{JMLR:v7:silva06aMIMBuild}
Ricardo Silva, Richard Scheine, Clark Glymour, and Peter Spirtes.
\newblock Learning the structure of linear latent variable models.
\newblock \emph{Journal of Machine Learning Research}, 7\penalty0 (8):\penalty0
  191--246, 2006.
\newblock URL \url{http://jmlr.org/papers/v7/silva06a.html}.

\bibitem[Simpson(1951)]{simpson}
E.~Simpson.
\newblock The interpretation of interaction in contingency tables.
\newblock \emph{Journal of the royal statistical society series
  b-methodological}, 13:\penalty0 238--241, 1951.

\bibitem[Solus et~al.(2021)Solus, Wang, and Uhler]{solus2021consistency}
Liam Solus, Yuhao Wang, and Caroline Uhler.
\newblock Consistency guarantees for greedy permutation-based causal inference
  algorithms, 2021.

\bibitem[Spearman(1904)]{SpearmanTetradEquation}
C.~Spearman.
\newblock ``general intelligence," objectively determined and measured.
\newblock \emph{The American Journal of Psychology}, 15\penalty0 (2):\penalty0
  201--292, 1904.
\newblock ISSN 00029556.
\newblock URL \url{http://www.jstor.org/stable/1412107}.

\bibitem[Spirtes et~al.(2000)Spirtes, Glymour, and
  Scheines]{CausationPredictionSearch}
P.~Spirtes, C.~Glymour, and R.~Scheines.
\newblock \emph{Causation, Prediction, and Search}.
\newblock MIT press, 2nd edition, 2000.

\bibitem[Spirtes(2013)]{LAchoke}
Peter Spirtes.
\newblock Calculation of entailed rank constraints in partially non-linear and
  cyclic models.
\newblock \emph{CoRR}, abs/1309.7004, 2013.
\newblock URL \url{http://arxiv.org/abs/1309.7004}.

\bibitem[Spirtes and Zhang(2014)]{Spirtes_2014}
Peter Spirtes and Jiji Zhang.
\newblock A uniformly consistent estimator of causal effects under the
  k{-Triangle-Faithfulness Assumption}.
\newblock \emph{Statistical Science}, 29\penalty0 (4), nov 2014.
\newblock \doi{10.1214/13-sts429}.

\bibitem[Spirtes and Zhang(2016)]{Spirtes2016CausalDA}
Peter~L. Spirtes and Kun Zhang.
\newblock Causal discovery and inference: concepts and recent methodological
  advances.
\newblock \emph{Applied Informatics}, 3, 2016.

\bibitem[Sullivant et~al.(2010)Sullivant, Talaska, and Draisma]{Sullivant_2010}
Seth Sullivant, Kelli Talaska, and Jan Draisma.
\newblock Trek separation for gaussian graphical models.
\newblock \emph{The Annals of Statistics}, 38\penalty0 (3):\penalty0
  1665–1685, Jun 2010.
\newblock ISSN 0090-5364.
\newblock \doi{10.1214/09-aos760}.
\newblock URL \url{http://dx.doi.org/10.1214/09-AOS760}.

\bibitem[Tarka(2018)]{SEM}
Piotr Tarka.
\newblock An overview of structural equation modeling: its beginnings,
  historical development, usefulness and controversies in the social sciences.
\newblock \emph{Quality \& quantity}, 52\penalty0 (1):\penalty0 313--354, 2018.
\newblock \doi{10.1007/s11135-017-0469-8}.
\newblock URL \url{https://pubmed.ncbi.nlm.nih.gov/29416184}.

\bibitem[Uhler et~al.(2013)Uhler, Raskutti, Bühlmann, and Yu]{Uhler_2013}
Caroline Uhler, Garvesh Raskutti, Peter Bühlmann, and Bin Yu.
\newblock Geometry of the faithfulness assumption in causal inference.
\newblock \emph{The Annals of Statistics}, 41\penalty0 (2), apr 2013.
\newblock \doi{10.1214/12-aos1080}.
\newblock URL \url{https://arxiv.org/abs/1207.0547}.

\bibitem[Verma and Pearl(1990)]{Verma}
Thomas Verma and Judea Pearl.
\newblock Equivalence and synthesis of causal models.
\newblock In \emph{Proceedings of the Sixth Annual Conference on Uncertainty in
  Artificial Intelligence}, UAI '90, page 255–270, USA, 1990. Elsevier
  Science Inc.
\newblock ISBN 0444892648.

\bibitem[Wang(2020)]{wang2020causal}
Shuyan Wang.
\newblock Causal clustering for 1-factor measurement models on data with
  various types, 2020.

\bibitem[Wechsler(2008)]{WAIS-IVorigin}
D.~Wechsler.
\newblock \emph{Wechsler Adult Intelligence Scale – Fourth Edition: Technical
  and interpretive manual.}
\newblock San Antonio, TX: Pearson., 2008.

\bibitem[Xie et~al.(2020)Xie, Cai, Huang, Glymour, Hao, and
  Zhang]{xie2020generalized}
Feng Xie, Ruichu Cai, Biwei Huang, Clark Glymour, Zhifeng Hao, and Kun Zhang.
\newblock Generalized independent noise condition for estimating latent
  variable causal graphs, 2020.

\bibitem[Xie et~al.(2022)Xie, Huang, Chen, He, Geng, and
  Zhang]{pmlr-v162-xie22a}
Feng Xie, Biwei Huang, Zhengming Chen, Yangbo He, Zhi Geng, and Kun Zhang.
\newblock Identification of linear non-{G}aussian latent hierarchical
  structure.
\newblock In Kamalika Chaudhuri, Stefanie Jegelka, Le~Song, Csaba Szepesvari,
  Gang Niu, and Sivan Sabato, editors, \emph{Proceedings of the 39th
  International Conference on Machine Learning}, volume 162 of
  \emph{Proceedings of Machine Learning Research}, pages 24370--24387. PMLR,
  17--23 Jul 2022.
\newblock URL \url{https://proceedings.mlr.press/v162/xie22a.html}.

\bibitem[Zhang and Tan(2021)]{zhang2021robustifyingCLRG}
Fengzhuo Zhang and Vincent Y.~F. Tan.
\newblock Robustifying algorithms of learning latent trees with vector
  variables, 2021.

\end{thebibliography}

\end{document}